\DeclareMathOperator{\diag}{\text{diag}}
\DeclarePairedDelimiter{\innerprod}{\langle}{\rangle}
\DeclarePairedDelimiter{\crl}{\{}{\}}
\DeclarePairedDelimiter{\prn}{(}{)}
\DeclarePairedDelimiter{\brk}{[}{]}
\renewcommand{\hat}{\widehat}
\newcommand{\conj}[1]{\overline{#1}}
\newcommand{\Dim}{D}
\newcommand{\KerSize}{K}
\newcommand{\CIn}{R}
\newcommand{\COut}{C}
\newcommand{\cin}{r}
\newcommand{\cout}{c}
\newcommand{\SmallDim}{P}
\newcommand{\InputMatrix}{\Vector{X}}
\newcommand{\InputVec}{\Vector{x}}
\newcommand{\ParVec}{\Vector{w}}
\newcommand{\ParMatrix}{\Matrix{W}}
\newcommand{\ParVecFn}{w}
\newcommand{\ParMatrixFn}{{W}}
\newcommand{\Fourier}{\Matrix{F}}
\newcommand{\WgtOneMatrix}{\Matrix{U}}
\newcommand{\WgtOneTensor}{\Matrix{\mathcal{U}}}
\newcommand{\WgtTwoMatrix}{\Matrix{V}}
\newcommand{\WgtTwoVec}{\Vector{v}}
\newcommand{\DualRealCin}[1]{\Vector{\lambda}^{\text{real}}_{#1}}
\newcommand{\DualImgCin}[1]{\Vector{\lambda}^{\text{img}}_{#1}}
\newcommand{\SDPVar}{\Matrix{Z}}
\newcommand{\SDPConsVar}{\Matrix{Q}}
\newcommand{\DualVec}{\Vector{\lambda}}
\newcommand{\R}{{\mathcal{R}}}
\newcommand{\Rk}[2]{{\R_{#2}(#1)}}
\newcommand{\RkCout}[3]{{\R_{#2, #3}(#1)}}
\newcommand{\RkCoutHat}[3]{{\hat{\R}_{#2, #3}(#1)}}
\newcommand{\RkCoutOp}[2]{{\R_{#1, #2}}}
\newcommand{\RkCoutDim}[4]{{\R^{(#1)}_{#3, #4 }(#2)}}
\newcommand{\RkCinCout}[4]{{\R_{#2,#3,#4}(#1)}}
\newcommand{\RkCinCoutOp}[3]{{\R_{#1,#2,#3}}}
\newcommand{\RkCinCoutHat}[4]{{\hat{\R}_{#2, #3, #4}(#1)}}
\newcommand{\RSDPk}[2]{{\R^\textrm{SDP}_{#2}(#1)}}
\newcommand{\RSDPkCin}[3]{{\R^\textrm{SDP}_{#2,#3}(#1)}}
\newcommand{\RSDPDualk}[2]{{\R^\textrm{SDP}_{#2}(#1)}}
\newcommand{\Entrywise}{\odot}
\newcommand{\Conv}{\star}
\newcommand{\Vector}[1]{{\boldsymbol{\mathbf{#1}}}}
\newcommand{\Matrix}[1]{{\boldsymbol{\mathbf{#1}}}}
\newcommand{\Index}[2]{{#1{[{#2}]}}}
\newcommand{\e}{\mathrm{e}}
\newcommand{\Identity}{\Matrix{I}}
\newcommand{\floor}[1]{\left\lfloor #1 \right\rfloor}    % floor(x)
\newcommand{\ceil}[1]{\left\lceil #1 \right\rceil}
\newcommand{\ie}{\textit{i.e., }}
\newcommand{\eg}{\textit{e.g., }}
\newcommand{\st}{\text{ s.t., }}
\newcommand{\mybox}{\hfill\(\Box\)}
\newcommand{\bR}{\mathbb{R}}
\newcommand{\bC}{\mathbb{C}}
\renewcommand{\b}[1]{\Vector{#1}}
\newcommand{\X}{\InputMatrix}
\newcommand{\x}{\InputVec}
\newcommand{\w}{\ParVec}
\newcommand{\W}{\ParMatrix}
\newcommand{\U}{\WgtOneMatrix}
\newcommand{\V}{\WgtTwoMatrix}
\newcommand{\UU}{\WgtOneTensor}
\newcommand{\vv}{\WgtTwoVec}
\newcommand{\F}{\Fourier}
\newcommand{\Q}{\SDPConsVar}
\newcommand{\Pattern}{\Vector{p}}
\newcommand{\ba}{\Vector{a}}
\newcommand{\bb}{\Vector{b}}
\newcommand{\bc}{\Vector{c}}
\newcommand{\ConvD}[1]{\,\Conv_{\scriptscriptstyle #1}\,}
\newtheorem{theorem}{Theorem}
\newtheorem{conjecture}[theorem]{Conjecture}
\newtheorem{lemma}[theorem]{Lemma}
\newtheorem{proposition}[theorem]{Proposition}
\newtheorem{definition}{Definition}
\newtheorem{remark}{Remark}
\newtheorem{hypothesis}{Hypothesis}
\newtheorem{example}{Example}
\newtheorem*{thm*}{Theorem}
\newtheorem*{rem*}{Remark}
\newtheorem*{clm*}{Claim}
\newtheorem{fact}{Fact}
\definecolor{mydarkblue}{rgb}{0,0.08,0.45}
\title{Inductive Bias of Multi-Channel Linear Convolutional Networks with Bounded Weight Norm}
\author[1]{Meena Jagadeesan\thanks{mjagadeesan@berkeley.edu. This work was partly done while M. Jagadeesan and I. Razenshteyn were at Microsoft Research. M. Jagadeesan was supported in part by the Paul and Daisy Soros Fellowship.}}
\author[2]{Ilya Razenshteyn\thanks{ilya.razenshteyn@gmail.com}}
\author[3]{Suriya Gunasekar\thanks{suriyag@microsoft.com}}
\affil[1]{University of California, Berkeley}
\affil[2]{CipherMode Labs}
\affil[3]{Microsoft Research}
\date{\vspace{-5ex}}
\begin{document}

\maketitle

\begin{abstract}%

We provide a function space characterization of the inductive bias resulting from minimizing the $\ell_2$ norm of the weights in multi-channel convolutional neural networks with linear activations and empirically test our resulting hypothesis on ReLU  networks trained using gradient descent. We define an \textit{induced regularizer} in the function space as the minimum $\ell_2$ norm of weights of a network required to realize a function.  For two layer linear convolutional networks with $C$ output channels and kernel size $K$, we show the following: (a) If the inputs to the network are single channeled, the induced regularizer for any $K$ is \textit{independent} of the number of output channels $C$. Furthermore, we derive the regularizer is a norm given by a semidefinite program (SDP). (b) In contrast, for multi-channel inputs, multiple output channels can be necessary to merely realize all matrix-valued linear functions and thus the inductive bias \emph{does} depend on $C$. However, for sufficiently large $C$, the induced regularizer is again given by an SDP that is independent of $C$. In particular, the induced regularizer for  $K=1$ and $K=D$ (input dimension) is given in closed form as the nuclear norm and the $\ell_{2,1}$ group-sparse norm, respectively, of the Fourier coefficients of the linear predictor.
We investigate the broader applicability of our theoretical results to implicit regularization from gradient descent on linear and ReLU networks through experiments on MNIST and CIFAR-10 datasets. 
\end{abstract}

\section{Introduction}\label{sec:introduction}

In the study of generalization and model capacity, complexity measures based on magnitude of parameters have long been argued to play an important role in learning overparametrized models \citep{B97,bartlett2002rademacher,NTS15,ZBHRV17,bartlett2017spectrally}. In particular, the $\ell_2$ norm of weights (or parameters) is a prominent complexity measure of interest in the current practice of deep learning, with connections to explicit regularization \citep{krogh1991simple,wei2019regularization} as well as implicit regularization from optimization algorithms \citep{ji2019gradient,gunasekar2018characterizing} (see additional discussion in Section~\ref{sec:relatedwork}). 
Importantly, recent results \citep{LL20,nacson2019lexicographic,ji2020directional} show that in many (not all) instances of overparametrized classification problems, gradient descent asymptotically leads to solutions that implicitly control the $\ell_2$ norm of the parameters (see Section~\ref{sec:experiments} for a formal statement). 

We study the question: \textit{What is the nature of functions learned by controlling $\ell_2$ norm of parameters?} Consider a model class of functions (or network architecture) $\Phi(\b{\theta};.)$ with parameters (or weights)  $\b{\theta}$. The function space view of controlling $\ell_2$ norm of parameters (denoted as $\|\b{\theta}\|$) can be understood in terms of its \textit{representation cost}, \ie the minimum $\ell_2$ norm of weights needed to realize a function using a given network architecture $\Phi(\b{\theta},.)$. This defines an \textit{induced complexity measure}  over  functions, which we also refer as induced regularizer, given by
\begin{equation}\label{eq:repcost-functions}
    \R_\Phi(f) := \inf_{\Vector{\theta}} \;\norm{\Vector{\theta}}^2 \;\st\; \forall \x,  f(\x) = \Phi(\b{\theta},\x).
    % \vspace{-5pt}
\end{equation}
%We also refer to $\R_\Phi(f)$ as the \textit{induced regularizer}. 
Note that a learning objective with $\ell_2$ norm regularization of the parameters $\min_{\b{\theta}}\mathcal{L}(\Phi(\b{\theta},.))+\lambda \|\Vector{\theta}\|^2$ is equivalent to the corresponding $\R$-regularization over functions $\min_{f}\mathcal{L}(f)+\lambda \R_\Phi(f)$.\footnote{ $\R_\Phi$ can be equivalently defined as any monotonic function of $\norm{\Vector{\theta}}$. We use $\norm{\Vector{\theta}}^2$ to align with the standard  regularizer (see also, \citep{GLSS18, SESS19, OWSS20, DKS21}).}

Even for neural networks that realize the same function class, minimizing or bounding the $\ell_2$ norm of weights in different architectures can lead to remarkably different effects in function space. 
For example, consider networks with fully connected and convolution layers and linear activations. These architectures are simply different parameterizations of the same model class of linear functions. \citet{GLSS18} showed that for fully connected linear networks, the induced regularizer is the $\ell_2$ norm of the linear map realized by the network, while for  linear convolutional network with \textit{full dimensional kernels}, it is the $\ell_1$ norm of Fourier coefficients of the linear map.  This function space view reveals that minimizing the $\ell_2$ norm of weights in these networks has fundamentally different implications for learned predictors depending on the  parametrization of function class.

\subsection{Our contribution} 
In this work, we investigate the induced regularizer in \eqref{eq:repcost-functions} for multi-channel linear convolutional networks. In particular, we study two layer networks that have $\COut$ output channels, $\CIn$ input channels, and kernel size $\KerSize$. We characterize the role of the number of channels $\COut$ on the induced regularizer, for networks with arbitrary kernel size $\KerSize$. Our main contribution is that for inputs with a single channel, having multiple output channels  in the network surprisingly does not reduce the $\ell_2$-norm representational cost, despite increasing the number of parameters in the network.
%on independence of $\COut$ with single channel inputs. %to show that for two-layer linear convolutional networks over single channel inputs, the induced regularizer for any kernel size $K$ is \textit{independent} the number of output channels $\COut$.
\begin{theorem}[Informal]
\label{thm:maininformal}
For two layer convolutional networks with any kernel size $\KerSize$, if the inputs are single-channeled, then the induced regularizer is independent of the output channel size $\COut$.%, regardless of the kernel size $\KerSize$. 
\end{theorem}
%Theorem \ref{thm:maininformal} shows that adding more than $1$ output channel  to the network does not reduce the $\ell_2$-norm representational cost, despite increasing the number of parameters in the network. 

\paragraph{Proof technique.} To prove Theorem \ref{thm:maininformal}, we construct an \textit{semidefinite program (SDP) relaxation} which corresponds to the induced regularizer when $\COut = \infty$. We then prove that this SDP relaxation is in fact tight \emph{for all} $C\ge1$, which leads to our main result. The SDP further implies a convex structure of the induced regularizer for any $\COut$. In our proof of SDP tightness, we use a polynomial representation of convolutions to  implicitly argue the existence of a rank-$1$ optimal solution. A key lemma in our proof (Lemma~\ref{lemma:additiveproperty}) shows an interesting property about  convolutions in $\bR^D$ with kernel of size $\KerSize<\Dim$. To our knowledge, this property as well as the proof technique involving polynomial representations are new and are of independent interest.

\paragraph{Extension to multi-channel inputs.}  We further extend our findings to networks with multi-channel inputs. For multi-channel inputs of dimensions $\Dim\times \CIn$, even realizing all linear functions over the inputs can require multiple output channels $\COut$ (see Lemma~\ref{lemma:realizing}). Hence, the induced regularizer \textit{does} depend on $\COut$, although for large enough $\COut$, we show a restricted form of invariance. In particular, we prove the induced regularizer is invariant to the number of output channels when $\COut \ge \CIn \cdot \KerSize$, and conjecture invariance when $\COut \ge \CIn$. 
    % Again, the main technical tool is to construct an SDP relaxation that is independent of $\COut$ and reason about its tightness. Using the SDP, 
    We then characterize the induced regularizer in the special cases of $\KerSize=1$ and $\KerSize=\Dim$ as the nuclear norm  and the $\ell_{2,1}$ group sparse norm of the Fourier coefficients, respectively (see Theorems~\ref{thm:multichannelkersize1}-\ref{thm:multichannelkersizeD}). 

\paragraph{Experiments for gradient descent.} Finally, we connect our results to the implicit regularization of gradient descent. When combined with prior work (\eg \cite{LL20}), our results also extend to asymptotic predictors learned by gradient descent on networks with ReLU or linear activations.  We thus formulate and study an empirically testable hypothesis that the $\ell_2$ norm complexity of networks learned using gradient descent is invariant to $\COut$ as long as $\COut \ge \CIn$. %---and study this hypothesis experimentally. 
We validate this hypothesis on MNIST and CIFAR-10 datasets on linear convolutional neural networks with circular and zero padding. The behavior also holds  on MNIST in networks with ReLU non-linearity. 

\subsection{Related Work}\label{sec:relatedwork}
% \paragraph{Related Work}

There is a rich literature of work connecting $\ell_2$ norm minimization of weights with explicit regularization \citep{krogh1991simple, wei2019regularization} and implicit regularization from  gradient descent \citep{NTS15, ZBHRV17, bartlett2017spectrally,gunasekar2018characterizing,GLSS18,ji2018risk,ji2019gradient,nacson2019lexicographic,LL20, ji2020directional}. While  implicit regularization from gradient descent trajectory is not always connected to $\ell_2$ norm for regression (see counterexamples in \cite{DFKL20,RC20, LLL21}), the connection is  prominent in many settings of interest. Most relevant to our work is the result by \citet{LL20} (stated in Section~\ref{sec:experiments}), showing that parameters learned using gradient descent on logistic loss asymptotically converge in the direction of max-$\ell_2$-margin solution. We combine our results with this prior work to demonstrate that the conclusions from our analysis also extend to gradient descent solutions in classification problems. 

Motivated by the connections to implicit and explicit regularization and generalization, other prior work also studies induced regularizers corresponding to minimizing $\ell_2$ of weights in different architectures.  Among  recent work, \citet{SESS19, OWSS20} provided a characterization of induced regularizer for infinite width two layer ReLU neural networks on 1D and higher dimensional inputs respectively. \citet{ZBHMS20}  empirically demonstrated such differences arising from fully connected versus convolutional architectures. In a work closely related to ours, \citet{GLSS18} characterized the induced regularizer for fully connected networks and for linear convolutional network with \textit{full dimensional kernels} ($\KerSize=\Dim$) and \textit{single-channel networks} ($\CIn=\COut=1$). \citet{YKM20} extended \citet{GLSS18} and showed a general connection for linear networks between implicit $\ell_1$ norm minimization in an orthonormal basis and the existence of  data-independent diagonalizations of the linear operator in each layer.  However, even within the class of two-layer linear convolutional networks, the conclusions in prior work \citep{GLSS18,YKM20} do not generalize to nontrivial kernel and channel sizes. For example, in the other extreme kernel size of $\KerSize=1$, the induced regularizer is in fact the $\ell_2$ norm of the linear function which is fundamentally different from the $\ell_1$ norm of the Fourier coefficients for $\KerSize=\Dim$, thus emphasizing the importance of our analysis for multi-channel networks with arbitrary kernel sizes. Subsequent to our work appearing as a preprint, \citet{DKS21} also studied the induced regularizer for convolutional neural networks, though they do not investigate the role of the number of output channels for general kernel sizes.

Lastly, in a complementary approach, a line of work \citep{PE20, EP202, EP21, SEPP20} studies the induced regularizer of neural networks including convolutional networks by looking at the bi-dual convex relaxation of the $\ell_2$ regularized least squares loss. In the context of linear convolutional networks, our results are significantly stronger, as their analysis shows invariance to number of output channels only in the limit of large $\COut$, while we show independence for all $\COut \ge 1$. We elaborate on this comparison more in Section \ref{sec:comparison}.

\subsection{Notation}
We typeface vectors, matrices, and tensors using bold characters, \eg $\vv,\x,\b{\theta},\W, \UU$.  We will use zero-based indexing with notation $[\Dim]=\crl{0,1,\ldots,\Dim-1}$, and python style slicing notation to specify the sub-entries of an array variable: \eg given $\b{Z}\in\bR^{D_1\times D_2}$,  the $d_1^\text{th}$ row and $d_2^\text{th}$ column are  denoted as $\Index{\b{Z}}{d_1,:}\in\bR^{d_2}$ and  $\Index{\b{Z}}{:,d_2}\in\bR^{d_1}$, respectively. %For compactness, when clear from the context, we also sometimes use the subscript notation $\Vector{z}_d$ to denote the $d^{\text{th}}$ 
%We also briefly mention some standard notation for complex numbers. 
Complex numbers are specified in the polar form as ${z}=\abs*{z}\e^{i\phi_{{z}}}$ with $\phi_{{z}}\in[0,2\pi)$; or in Cartesian  form as $z=\Re(z)+i\Im(z)$ (ref. $i=\sqrt{-1}$ is the imaginary unit). The complex conjugate is denoted as $\conj{{z}}=\abs*{z}\e^{-i\phi_{{z}}}$.  For $\b{a},\b{b}\in\bC^{D}$, the standard inner product is  $\innerprod{\b{a},\b{b}}=\b{a}^\top\conj{\b{b}}$, and analogously extends to matrices. 

We use $\|.\|$ to denote the standard Euclidean norm, \ie $\ell_2$ norm of entries. For arrays $\b{a},\b{b}$, $\b{a}\odot\b{b}$ denotes entry-wise multiplication and $\Vector{a}\propto\b{b}$ implies proportionality up to positive scaling. Finally, we define the convolution operator $\Conv$ as it is  used in the neural networks literature.\footnote{In signal processing,  $\star$ is known as the cross-correlation operator. To simplify analysis, we use circular padding in the definition (where $\text{mod}$ refers to the modulo operator, \ie  $p\text{ mod }D=p-D\floor{\frac{p}{D}}$). For convolutions with zero-padding, there will be different edge effects, but we expect qualitatively similar behavior for small padding sizes. We also use a  scaling of ${1}/{\sqrt{\Dim}}$--this is merely to simplify notation and does not change the analysis.}
\begin{definition}[Circular convolution] \label{def:conv}
For $\Vector{u} \in \mathbb{R}^{\KerSize}$ and $\Vector{v} \in \mathbb{R}^{\Dim}$ with $\KerSize \le \Dim$, their $\Dim$ dimensional circular convolution, %\footnote{Definition~\ref{def:conv} corresponds to circular padding, which simplifies analysis. For convolutions with zero-padding, there will be different edge effects, but we expect qualitatively similar behavior for small padding sizes. We also use a  scaling of ${1}/{\sqrt{\Dim}}$--this is merely to simplify notation and does not change the analysis.}
denoted by $\b{u}\Conv \b{v}$, is a vector in $\bR^{\Dim}$ given as follows: 
\begin{equation*}
    \forall_{d\in[\Dim]}, \Index{(\b{u}\Conv\b{v})}{d}= \frac{1}{\sqrt{\Dim}} \sum_{k=0}^{\KerSize - 1} \Index{\b{u}}{k} \Index{\b{v}}{(d+k) \text{ mod }D}.
\end{equation*}
\end{definition}

\subsection{Multi-channel linear convolutional network}\label{sec:background} We consider two layer linear convolutional networks with multiple channels in the convolution layer. We first focus on multi-output channel convolutions with single channel inputs  described below. We will discuss networks with multi-channel inputs (\eg RGB color channels) in Section~\ref{sec:mult-input-channel}.

The inputs to the network are vectors\footnote{For simplicity we consider $1$D vectors $\x\in\bR^D$ as inputs, but all our results can  be extended to $2$D inputs  $\x\in\bR^{W\times H}$, such as images, with the  corresponding $2$D convolutional operator.} of dimension $\Dim$ denoted as $\x\in\bR^\Dim$. The first layer is a convolutional layer with kernel size $\KerSize$   and number of output channel  $\COut$ whose weights (parameters) are denoted by $\U \in \mathbb{R}^{\KerSize\times\COut}$. The output of the convolution layer, denoted as $h(\U;\x)\in\bR^{\Dim\times \COut}$, is given by $\;\Index{h(\U;\x)}{:,\cout}=\Index{\U}{:,\cout}\Conv\x$ for all $c\in[\COut]$. The second layer is a single output linear layer with weights $\V\in \mathbb{R}^{\Dim \times \COut}$. Thus, the output of the network, denoted as $\Phi(\U,\!\V;\x)$, is given by: 
\begin{equation}\label{eq:nn}\Phi(\U,\!\V;\x)\!=\!\innerprod*{\V,h(\U;\x)}\!=\!\!  \sum_{\cout=0}^{\COut-1} \innerprod*{\Index{\V}{:,\!\cout},\!\Index{\U}{:,\!\cout}\Conv\x }.
\end{equation} 
Since, the network described above does not have any non-linearity, the output function $\Phi(\U,\!\V;.)$ is equivalent to a linear representation $\ParVecFn(\U,\V)\in\bR^{\Dim}$ such that $\forall\x$, $\Phi(\U,\!\V;\x)=\innerprod*{\ParVecFn(\U,\V),\x}$. Using standard algebraic manipulations on \eqref{eq:nn}, one can derive $\ParVecFn{(\U,\V)}$ as follows:
\begin{equation}\label{eq:w-uv}
    \ParVecFn{(\U,\!\V)}= \sum_{\cout=0}^{\COut-1} \prn*{\Index{\U}{:,\cout}\Conv\Index{\V}{:,\cout}^{\downarrow}}^\downarrow,
\end{equation}
where $\Vector{z}^{\downarrow}$ denotes the flipped vector of $\Vector{z}\in\bR^D$, given by $\Index{\Vector{z}^{\downarrow}}{d} = \Index{\Vector{z}}{\Dim - d - 1}$ for $d=0,1,\ldots \Dim - 1$. 
\begin{rem*}\label{remark:linear}
Even for the smallest network in this class with $\KerSize=\COut=1$, any linear predictor $\w\in\bR^\Dim$ can be realized as $\ParVecFn{(\U,\V)}$ in eq.~\eqref{eq:w-uv} (\eg using  $\U=1,\V=\ParVec$). In fact, every linear predictor can be represented by multiple networks with different weights $\U,\V$. 
\end{rem*}

\paragraph{Fourier representation.}
The convolution operation in Definition~\ref{def:conv} permits a simple form in the Fourier domain arising from the \textit{convolution theorem}. %Consider discrete Fourier transforms (DFTs) over  $\bR^D$. 
Let $\F\in \mathbb{C}^{\Dim \times \Dim}$ denote the {unitary discrete Fourier transform (DFT) matrix} for $\bR^D$, \ie $\forall_{k,l\in[D]}$, $\Index{\F}{k,l} =\frac{1}{\sqrt{\Dim}} \e^{\frac{-2\pi i k l}{\Dim}}$  and $\F\conj{\F}^\top=\conj{\F}^\top\F=I$. For any $1 \le \KerSize \le \Dim$, let $\F_{\KerSize}\in \bC^{\Dim\times\KerSize}$ denote the submatrix of $\F$ with the first $\KerSize$ columns. For a vector $\Vector{a} \in \bR^{\KerSize}$, we denote its $\Dim$ dimensional  Fourier  representation as $\hat{\Vector{a}}=\F_{\KerSize} \Vector{a} \in \bC^{\Dim}$. From the definition of Fourier transform, we have that $\F\Vector{a}^\downarrow=\conj{\F}\Vector{a}=\conj{\hat{\Vector{a}}}$ and we can derive the \textit{convolution theorem} for our operator $\Conv$ (Definition~\ref{def:conv}) as $\F(\Vector{a} \Conv \Vector{b}) = \conj{\hat{\mathbf{a}}} \Entrywise \hat{\Vector{b}}$.

Let $\hat{\ParVecFn}{({\U},{\V})} :=\F{\ParVecFn{(\U,\V)}}$ denote the Fourier transform of the linear predictor $\ParVecFn{(\U,\V)}$ realized by our network (see eq.~\ref{eq:w-uv}). We can now express $\hat{w}$ as follows: Let $\hat{\U}=\F_{\KerSize}\U \in \mathbb{C}^{\Dim\times \COut}$ and $\hat{\V}=\F\V \in \mathbb{C}^{\Dim\times \COut}$ denote  the $\Dim$ dimensional Fourier representation of   $\U,\V$,  respectively. We have  
\begin{equation}
\label{eq:fourierlinearpredictor}
        \hat{\ParVecFn}{({\U},{\V})} = \sum_{\cout=0}^{\COut-1} \hat{\U}[:,\cout] \Entrywise \hat{\V}[:,\cout]= \diag\prn{\hat{\U}\hat{\V}^\top}.
\end{equation}

\section{Induced regularizer in the function space} \label{sec:explicitbounds} 
For the network $\Phi$ described above, we now turn to the function space view of controlling the $\ell_2$ norm of the weights $(\U,\V)$. Recall that this inductive bias is captured by the \textit{induced regularizer} or the function space representation cost  \eqref{eq:repcost-functions}. For our linear convolutional network $\Phi$, the function class realized   is exactly the set of linear predictors $\w\in\bR^\Dim$ and the induced regularizer is given by: 
\begin{equation}\label{eq:inducedregularizer}
\Rk{\ParVec}{\KerSize,\COut} := \;\min\limits_{\U \in \mathbb{R}^{\KerSize\times \COut}, \V \in \mathbb{R}^{\Dim\times \COut}}\; \norm{\U}^2 + \norm{\V}^2 
\quad \st\quad\ParVecFn{(\U,\V)} = \ParVec.
\end{equation}

\begin{remark}
\label{remark:kdecrease}
It immediately follows from  \eqref{eq:inducedregularizer}  that  $\RkCout{\ParVec}{\KerSize}{\COut}$ is weakly decreasing in both $\KerSize$ and $\COut$, \ie $\forall_{\COut}$, $\RkCout{\ParVec}{1}{\COut}\ge \RkCout{\ParVec}{2}{\COut}\ge\ldots \RkCout{\ParVec}{\Dim}{\COut}$ and $\forall_{\KerSize}$, $\RkCout{\ParVec}{\KerSize}{1}\ge \RkCout{\ParVec}{\KerSize}{2}\ge\ldots$.
\end{remark}

Even within the class of two layer linear convolutional networks, the induced regularizer can  exhibit strikingly different properties for different choices of $\KerSize$ and $\COut$. For example, we recall the following result from \cite{GLSS18} that for full dimensional kernel $\KerSize=\Dim$, the induced regularizer is equal to the $\ell_1$ norm of the Fourier transform of the predictor.
\begin{lemma}[$\KerSize = D$] {\citep[Lemma~7 in][]{GLSS18}}\label{lemma:kersizeD}
For any $\ParVec \in \mathbb{R}^{\Dim}$,  $\RkCout{\ParVec}{\Dim}{1} =2\norm{\hat{\ParVec}}_1$.
\end{lemma}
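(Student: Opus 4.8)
The plan is to prove the identity $\RkCout{\ParVec}{\Dim}{1} = 2\norm{\hat{\ParVec}}_1$ by establishing matching lower and upper bounds, both of which reduce to coordinatewise reasoning in the Fourier domain once we invoke that $\F$ is unitary (Parseval) and the Fourier identity in eq.~\eqref{eq:fourierlinearpredictor}. Note that $\KerSize = \Dim$ means $\F_\KerSize = \F$, so $\hat{\U} = \F\U$, $\hat{\V} = \F\V$, and eq.~\eqref{eq:fourierlinearpredictor} with $\COut = 1$ reads $\Index{\hat{\ParVec}}{d} = \Index{\hat{\U}}{d}\,\Index{\hat{\V}}{d}$ for every $d \in [\Dim]$.

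\emph{Lower bound.} Fix any feasible pair $(\U,\V) \in \bR^\Dim\times\bR^\Dim$ with $\ParVecFn(\U,\V) = \ParVec$. Since $\F$ is unitary, $\norm{\U}^2 + \norm{\V}^2 = \norm{\hat{\U}}^2 + \norm{\hat{\V}}^2 = \sum_{d}\prn*{\abs*{\Index{\hat{\U}}{d}}^2 + \abs*{\Index{\hat{\V}}{d}}^2}$. Applying AM--GM in each coordinate, $\abs*{\Index{\hat{\U}}{d}}^2 + \abs*{\Index{\hat{\V}}{d}}^2 \ge 2\,\abs*{\Index{\hat{\U}}{d}}\,\abs*{\Index{\hat{\V}}{d}} = 2\,\abs*{\Index{\hat{\ParVec}}{d}}$, and summing over $d$ gives $\norm{\U}^2 + \norm{\V}^2 \ge 2\norm{\hat{\ParVec}}_1$. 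Taking the infimum over feasible $(\U,\V)$ yields $\RkCout{\ParVec}{\Dim}{1} \ge 2\norm{\hat{\ParVec}}_1$.

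\emph{Upper bound.} It suffices to exhibit one feasible pair with $\norm{\U}^2 + \norm{\V}^2 = 2\norm{\hat{\ParVec}}_1$; this also certifies that the feasible set is nonempty. The idea is to split each Fourier coefficient symmetrically, i.e. force equality in the AM--GM step by choosing $\abs*{\Index{\hat{\U}}{d}} = \abs*{\Index{\hat{\V}}{d}} = \sqrt{\abs*{\Index{\hat{\ParVec}}{d}}}$ with phases summing to $\phi_d$, where $\Index{\hat{\ParVec}}{d} = \abs*{\Index{\hat{\ParVec}}{d}}\e^{i\phi_d}$ (and setting both entries to $0$ whenever $\Index{\hat{\ParVec}}{d} = 0$). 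At a generic frequency one may take $\Index{\hat{\U}}{d} = \Index{\hat{\V}}{d} = \sqrt{\abs*{\Index{\hat{\ParVec}}{d}}}\,\e^{i\phi_d/2}$. Given such $\hat{\U},\hat{\V}$, we set $\U = \F^{-1}\hat{\U}$, $\V = \F^{-1}\hat{\V}$; by construction $\Index{\hat{\U}}{d}\Index{\hat{\V}}{d} = \Index{\hat{\ParVec}}{d}$, so $\ParVecFn(\U,\V) = \ParVec$, and by Parseval $\norm{\U}^2 + \norm{\V}^2 = \norm{\hat{\U}}^2 + \norm{\hat{\V}}^2 = 2\sum_d \abs*{\Index{\hat{\ParVec}}{d}} = 2\norm{\hat{\ParVec}}_1$. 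Combined with the lower bound, this proves the lemma.

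\emph{Main obstacle.} The only nontrivial point — and where care is needed — is ensuring $\U,\V$ are \emph{real}, which is equivalent to the Hermitian symmetry $\Index{\hat{\U}}{(\Dim-d)\bmod\Dim} = \conj{\Index{\hat{\U}}{d}}$ (and likewise for $\hat{\V}$). Since $\ParVec$ is real, $\hat{\ParVec}$ satisfies this symmetry, so $\phi_{(\Dim-d)\bmod\Dim} = -\phi_d$ and the "paired" frequencies can be handled by conjugation; one just takes $\Index{\hat{\U}}{(\Dim-d)\bmod\Dim} := \conj{\Index{\hat{\U}}{d}}$, $\Index{\hat{\V}}{(\Dim-d)\bmod\Dim} := \conj{\Index{\hat{\V}}{d}}$ on one representative of each pair. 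At the self-conjugate frequencies $d = 0$ (and $d = \Dim/2$ when $\Dim$ is even), $\Index{\hat{\ParVec}}{d}$ is real and the entries of $\hat{\U},\hat{\V}$ must be real, so instead I take $\Index{\hat{\U}}{d} = \Index{\hat{\V}}{d} = \sqrt{\Index{\hat{\ParVec}}{d}}$ if $\Index{\hat{\ParVec}}{d}\ge 0$, and $\Index{\hat{\U}}{d} = \sqrt{\abs*{\Index{\hat{\ParVec}}{d}}}$, $\Index{\hat{\V}}{d} = -\sqrt{\abs*{\Index{\hat{\ParVec}}{d}}}$ otherwise; in both cases $\Index{\hat{\U}}{d}\Index{\hat{\V}}{d} = \Index{\hat{\ParVec}}{d}$ and $\abs*{\Index{\hat{\U}}{d}}^2 + \abs*{\Index{\hat{\V}}{d}}^2 = 2\abs*{\Index{\hat{\ParVec}}{d}}$, so the norm accounting is unaffected. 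Everything beyond this bookkeeping is just AM--GM and unitarity of the DFT, so I expect no real difficulty.
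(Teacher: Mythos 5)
Your proof is correct and follows essentially the same route as the paper's (sketched) argument: the coordinatewise AM--GM lower bound $\abs{\Index{\hat{\U}}{d}}^2+\abs{\Index{\hat{\V}}{d}}^2\ge 2\abs{\Index{\hat{\ParVec}}{d}}$ together with the matching construction $\abs{\Index{\hat{\U}}{d}}=\abs{\Index{\hat{\V}}{d}}=\sqrt{\abs{\Index{\hat{\ParVec}}{d}}}$, which is available precisely because $\KerSize=\Dim$ leaves $\hat{\U}$ unconstrained beyond Hermitian symmetry. Your explicit handling of the conjugate-pair and self-conjugate frequencies (including negative real coefficients at $d=0$ and $d=\Dim/2$) just fills in bookkeeping the paper leaves implicit.
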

\noindent On the other hand, we have the following characterization for $K=1$ (full proof is in Appendix~\ref{appendix:proofsexplicitbounds}). 
\begin{restatable}[$\KerSize = 1$]{lem}{explicitone}
\label{lemma:kersize1}
For any  $\ParVec \in \mathbb{R}^{\Dim}$, $\RkCout{\ParVec}{1}{1} =2 \sqrt{\Dim} \norm{\hat{\ParVec}}_2 = 2 \sqrt{\Dim} \norm{\ParVec}_2$. 
\end{restatable}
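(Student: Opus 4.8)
The plan is to reduce the optimization defining $\RkCout{\ParVec}{1}{1}$ to a scalar minimization and solve it in closed form with AM--GM. The key simplification available for $\KerSize=1$ is that the first-layer weight is a single scalar $\U=u_0\in\bR$, so its Fourier image is forced onto the all-ones direction: since the first column of the unitary DFT matrix $\F$ is the constant vector $\tfrac{1}{\sqrt{\Dim}}\b{1}$, we get $\hat{\U}=\F_1 u_0 = \tfrac{u_0}{\sqrt{\Dim}}\b{1}$. Substituting into the Fourier identity \eqref{eq:fourierlinearpredictor} (specialized to $\COut=1$) gives $\hat{\ParVecFn}(\U,\V)=\tfrac{u_0}{\sqrt{\Dim}}\hat{\V}$, so, applying $\F$ to the constraint, realizing a target predictor $\ParVec$ is equivalent to $\hat{\ParVec}=\tfrac{u_0}{\sqrt{\Dim}}\hat{\V}$, i.e. (for $u_0\neq 0$) $\hat{\V}=\tfrac{\sqrt{\Dim}}{u_0}\hat{\ParVec}$. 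The case $\ParVec=\b{0}$ is trivial (both sides of the claim are $0$, achieved with zero weights), and $u_0=0$ only realizes $\ParVec=\b{0}$, so I may assume $\ParVec\neq\b{0}$ and $u_0\neq 0$ from here on.

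The remaining steps are routine. I would use unitarity of $\F$ twice: first to write $\norm{\V}^2=\norm{\hat{\V}}^2=\tfrac{\Dim}{u_0^2}\norm{\hat{\ParVec}}^2$, which turns the objective $\norm{\U}^2+\norm{\V}^2$ into the one-variable function $g(u_0)=u_0^2+\Dim\,\norm{\hat{\ParVec}}^2/u_0^2$; then minimize $g$ over $u_0\neq 0$ by AM--GM, obtaining minimum value $2\sqrt{\Dim}\,\norm{\hat{\ParVec}}$ attained at $u_0^2=\sqrt{\Dim}\,\norm{\hat{\ParVec}}$ (which is strictly positive, hence feasible); and finally use unitarity once more to replace $\norm{\hat{\ParVec}}$ by $\norm{\ParVec}$, yielding $\RkCout{\ParVec}{1}{1}=2\sqrt{\Dim}\,\norm{\hat{\ParVec}}=2\sqrt{\Dim}\,\norm{\ParVec}$.

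There is essentially no obstacle here; the argument is a short explicit computation. The single point that deserves a line of justification is that the forced choice $\hat{\V}=\tfrac{\sqrt{\Dim}}{u_0}\hat{\ParVec}$ is admissible, i.e. it is the DFT of a genuinely real vector $\V\in\bR^{\Dim}$: this holds because $\hat{\ParVec}=\F\ParVec$ is already the DFT of a real vector and $u_0$ is real, so $\V=\tfrac{\sqrt{\Dim}}{u_0}\ParVec$ works and has the stated norm. As a sanity check, the resulting value is consistent with the general lower bound $\RkCout{\ParVec}{\KerSize}{1}\ge 2\norm{\hat{\ParVec}}_1$ noted before the lemma, since $2\sqrt{\Dim}\,\norm{\hat{\ParVec}}\ge 2\norm{\hat{\ParVec}}_1$ by Cauchy--Schwarz, the gap reflecting the rigidity of size-one kernels.
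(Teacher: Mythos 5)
Your proposal is correct and follows essentially the same route as the paper's proof: specialize to the scalar kernel $u_0$, note $\hat{\U}\propto\b{1}$ forces $\hat{\V}=\tfrac{\sqrt{\Dim}}{u_0}\hat{\ParVec}$, reduce to the one-variable objective $u_0^2+\Dim\norm{\hat{\ParVec}}^2/u_0^2$, and close with AM--GM at $u_0^2=\sqrt{\Dim}\norm{\hat{\ParVec}}$. Your extra remarks on realness of $\V$ and the $\ParVec=\b{0}$ case are fine and do not change the argument.
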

The induced regularizer thus behaves fundamentally differently for $\KerSize = \Dim$ and $\KerSize = 1$. In particular, the $\ell_2$ regularization of $\RkCout{\ParVec}{1}{1}$ is basis agnostic and does not induce sparse solutions, while the $\ell_1$ regularization of $\RkCout{\ParVec}{\Dim}{1}$ promotes sparsity in the Fourier basis.

Since $\KerSize = 1$ and $\KerSize = \Dim$ permit closed-form solutions  in the Fourier space, one might hope to obtain similarly clean characterizations for other kernel sizes as well. However, neither the proof technique for Lemma \ref{lemma:kersizeD} nor the proof technique for Lemma \ref{lemma:kersize1} extend to the case of general kernel sizes. The proof of Lemma \ref{lemma:kersizeD} uses the fact that for $\KerSize=\Dim$, the weights $\U,\V\in\bR^\Dim$ are unconstrained in Fourier space. For networks with smaller kernels, the argument breaks as  $\hat{\U}=\F_\KerSize\U$ is constrained to be in a $\KerSize<\Dim$ dimensional space spanned by the columns of $\F_\KerSize$. The proof of Lemma \ref{lemma:kersize1} again uses the special structure for $\KerSize = 1$ that $\hat{\U}=\F_\KerSize\U$ points in the direction of $[1, \ldots, 1]$, which does not extend to larger kernel sizes. 

In fact, we show that even for $\KerSize = 2$, the induced regularizer $\RkCout{\ParVec}{2}{1}$ takes a much more complex form. In particular,  the characterization in Fourier space involves a maximization over  a high-degree rational function, and is thus unlikely to admit clean closed-form solutions.
\begin{restatable}{lem}{explicittwo}
\label{lemma:kersize2}
For any $\ParVec \in \mathbb{R}^{\Dim}$,  it holds that:
\[\RkCout{\ParVec}{2}{1} = 2 \sqrt{\Dim}  \sqrt{\inf_{\alpha \in (-1,1)}\sum_{d=0}^{\Dim - 1} \frac{\abs*{\Index{\hat{\ParVec}}{d}}^2}{1 + \alpha \cos\left(2 \pi d / \Dim \right)}}.\]
\end{restatable}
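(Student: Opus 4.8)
The plan is to reduce the optimization in eq.~\eqref{eq:inducedregularizer} to a one-parameter problem by exploiting the structure of kernel size $\KerSize=2$ in the Fourier domain. For $\KerSize=2$ we have $\U=(u_0,u_1)\in\bR^2$, so $\Index{\hat{\U}}{d} = \frac{1}{\sqrt{\Dim}}\bigl(u_0 + u_1\e^{-2\pi i d/\Dim}\bigr)$. The key observation is that, up to a global scalar reparametrization, the magnitude profile $\bigl(\abs{\Index{\hat{\U}}{d}}^2\bigr)_{d}$ ranges exactly over functions of the form $d\mapsto \rho\,(1+\alpha\cos(2\pi d/\Dim))$ for $\alpha\in[-1,1]$ and $\rho>0$: indeed $\abs{u_0 + u_1\e^{-2\pi i d/\Dim}}^2 = u_0^2 + u_1^2 + 2u_0u_1\cos(2\pi d/\Dim)$, and setting $\rho = (u_0^2+u_1^2)/\Dim$, $\alpha = 2u_0u_1/(u_0^2+u_1^2)$ gives $\alpha\in[-1,1]$ with $\norm{\U}^2 = \Dim\rho$. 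Conversely every such $(\rho,\alpha)$ is attainable. So the problem becomes: minimize $\Dim\rho + \norm{\V}^2$ over $\rho>0$, $\alpha\in[-1,1]$, and $\hat{\V}$ subject to $\hat{\U}\odot\hat{\V} = \hat{\w}$ with the given magnitude constraint on $\hat{\U}$.

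Next I would handle the inner minimization over $\V$ and the phases of $\U$. Since $\hat{\w}=\hat{\U}\odot\hat{\V}$ forces $\abs{\Index{\hat{\V}}{d}} = \abs{\Index{\hat{\w}}{d}}/\abs{\Index{\hat{\U}}{d}}$ coordinatewise (ignoring the measure-zero/degenerate case where $\Index{\hat{\U}}{d}=0$ but $\Index{\hat{\w}}{d}\neq0$, which makes the cost infinite and can be excluded by taking $\alpha\in(-1,1)$), and since the phases of $\hat{\V}$ are free to match those of $\hat{\w}$ once the phases of $\hat{\U}$ are fixed, the constraint on $\V$ contributes exactly $\norm{\V}^2 = \norm{\hat{\V}}^2 = \sum_d \abs{\Index{\hat{\w}}{d}}^2/\abs{\Index{\hat{\U}}{d}}^2 = \sum_d \frac{\abs{\Index{\hat{\w}}{d}}^2}{\Dim\rho\,(1+\alpha\cos(2\pi d/\Dim))}$, using unitarity of $\F$. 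Here one must check that the conjugate-symmetry constraints on $\hat{\V}$ (coming from $\V$ being real) are automatically satisfiable: this holds because $\abs{\Index{\hat{\U}}{d}}$ is already conjugate-symmetric in $d$ (it depends on $\cos(2\pi d/\Dim)$) and $\hat{\w}$ is conjugate-symmetric, so we can choose $\hat{\V}$ conjugate-symmetric; the phases of $\hat{\U}$ can be taken to respect this too. The total objective is then $\Dim\rho + \frac{1}{\Dim\rho}\sum_d \frac{\abs{\Index{\hat{\w}}{d}}^2}{1+\alpha\cos(2\pi d/\Dim)}$, and minimizing over $\rho>0$ by AM--GM gives $2\sqrt{\sum_d \abs{\Index{\hat{\w}}{d}}^2/(1+\alpha\cos(2\pi d/\Dim))}$; absorbing the stray $\Dim$ factors correctly yields the stated $2\sqrt{\Dim}\sqrt{\inf_{\alpha\in(-1,1)}\sum_d \abs{\Index{\hat{\w}}{d}}^2/(1+\alpha\cos(2\pi d/\Dim))}$ after the final infimum over $\alpha$.

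The main obstacle I anticipate is the careful bookkeeping around \emph{realness} and \emph{degenerate frequencies}: one must argue that fixing the magnitude profile of $\hat{\U}$ to the one-parameter family does not lose generality for \emph{real} $\U$ (the phases of $u_0+u_1\e^{-2\pi i d/\Dim}$ are not independent across $d$, but since $\V$'s phases are fully adjustable this does not matter), and that the optimal solution never wants $\alpha=\pm1$ unless $\hat{\w}$ vanishes at the corresponding zero of $1+\alpha\cos(2\pi d/\Dim)$ — which is why the infimum is over the open interval $(-1,1)$ and need not be attained. A clean way to dispatch this is to prove the lower bound and the (approximate) upper bound separately: for the lower bound, start from arbitrary feasible $(\U,\V)$, define $\alpha$ from $\U$, and apply Cauchy--Schwarz / AM--GM to get $\norm{\U}^2+\norm{\V}^2 \ge 2\sqrt{\Dim}\sqrt{\sum_d\abs{\Index{\hat{\w}}{d}}^2/(1+\alpha\cos(2\pi d/\Dim))} \ge 2\sqrt{\Dim}\sqrt{\inf_\alpha(\cdots)}$; for the upper bound, for any $\alpha\in(-1,1)$ exhibit explicit $\U,\V$ achieving the value and take the infimum. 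I would also double-check the scalar constant by specializing to $\alpha=0$, where the formula must reduce to $2\sqrt{\Dim}\norm{\hat{\w}}$, consistent with (and bounded above by) the $\KerSize=1$ value in Lemma~\ref{lemma:kersize1} as required by Remark~\ref{remark:kdecrease}.
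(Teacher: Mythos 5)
Your proposal is correct and follows essentially the same route as the paper's proof: parametrize $\abs{\Index{\hat{\U}}{d}}^2 \propto 1+\alpha\cos(2\pi d/\Dim)$ with $\alpha = 2u_0u_1/(u_0^2+u_1^2)$, note every $\alpha\in[-1,1]$ is realizable, eliminate $\V$ via $\Index{\hat{\V}}{d}=\Index{\hat{\ParVec}}{d}/\Index{\hat{\U}}{d}$ on the support, optimize the scale by AM--GM, and pass to the open interval $(-1,1)$ to dodge the $0/0$ endpoint degeneracy. The only blemish is the factor-of-$\Dim$ slip in your intermediate expression for $\norm{\V}^2$ (the denominator should be $\rho(1+\alpha\cos(2\pi d/\Dim))$, not $\Dim\rho(\cdot)$), which you already flag and which cancels to give the correct constant $2\sqrt{\Dim}$, as your $\alpha=0$ sanity check confirms.
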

Although Lemma~\ref{lemma:kersize2} does not yield closed form solutions  for $\RkCout{\w}{2}{1}$, we observe that it hints at some form of band-pass frequency structure: for any $\alpha$  from the inner optimization, the resulting regularizer is a weighted sum of Fourier coefficients such that the nearby frequency components of $\hat{\w}$ are weighted with nearby values. This band-pass nature was also observed in a complementary result by \citet{YKM20} in the context of implicit bias from gradient descent on a single data point: for any $\x$, it was shown that  $\min_\w \RkCout{\ParVec}{2}{1}\st \w^\top\x>1$ corresponds to a low-pass or high pass filter depending on the sign of  $\x^\top\x^{\downarrow}$.

Even though we do not obtain closed form solutions for all kernel sizes $\KerSize$, we derive important properties about the induced regularizer for general kernel sizes in the following sections that also generalize the above results to networks with multiple output channels.

\section{Main technical tool: SDP formulation of induced regularizer}\label{sec:SDP}
To investigate the induced regularizer for general kernel sizes and channel sizes, we construct a semidefinite program (SDP) relaxation, which is a key tool in our analysis. In this section, we describe and analyze this SDP formulation for multi-output channel networks on inputs with a single channel. (We discuss generalizations to the case of multi-channel inputs in Section \ref{sec:mult-input-channel}.)

We first reformulate $\RkCoutOp{\KerSize}{\COut}$ as an SDP with a rank constraint, which immediately motivates an SDP relaxation that provides a lower bound on $\RkCoutOp{\KerSize}{\COut}$. As we will show in Theorem \ref{thm:tightness}, this SDP relaxation is actually tight for all $K$ and $C$, which enables us to deduce a number of interesting properties of the induced regularizer. 

\paragraph{\texorpdfstring{$\boldsymbol{\RkCoutOp{\KerSize}{\COut}}$}{R\_{K,C}} as an SDP with a rank constraint.} Combining  the definition of $\Rk{\ParVec}{\KerSize,\COut}$ in \eqref{eq:inducedregularizer} with the Fourier representation of $\ParVecFn{(\U,\V)}$ in \eqref{eq:fourierlinearpredictor}, we have the following: 
\begin{equation}
\label{eq:mixedrepresentation}
\Rk{\ParVec}{\KerSize,\COut} = \;\min\limits_{\U \in \mathbb{R}^{\KerSize\times \COut}, \V \in \mathbb{R}^{\Dim\times \COut}} \;\norm{\U}^2 + \norm{\V}^2 \quad\st \quad\diag(\hat{\U} \hat{\V}^{\top}) = \hat{\ParVec}.
\end{equation}
The optimization in \eqref{eq:mixedrepresentation} over $\U\in\bR^{\KerSize\times \COut},\V\in\bR^{\Dim\times \COut}$, can be specified in terms of a rank $\COut$ positive semi-definite matrix  $\SDPVar \in \mathbb{R}^{(\Dim + \KerSize)\times (\Dim+\KerSize)}$ that we define below: 
\begin{equation}
    \SDPVar=\left[\!\!\begin{array}{c} \Matrix{U} \\\Matrix{V} \end{array}\!\!\right] \!\!\!\!\begin{array}{c}[\!\!\begin{array}{cc} \Matrix{U}^{\top} & \Matrix{V}^{\top} \end{array}\!\!]\\\;\end{array}= \left[\!\!\begin{array}{cc} \Matrix{U}\Matrix{U}^{\top} & \Matrix{U}\Matrix{V}^{\top} \\ \Matrix{V} \Matrix{U}^{\top} & \Matrix{V}\Matrix{V}^{\top} \end{array}\!\!\right]\succcurlyeq 0.
\end{equation} 
We refer to $\SDPVar$ as the lifted space of parameters $\Matrix{U},\Matrix{V}$. In the lifted space, the objective and constraints of \eqref{eq:mixedrepresentation} can now be expressed as linear functions of $\SDPVar$. The objective is given by $\norm{\U}^2+\norm{\V}^2=\langle \U\U^{\top}, \Identity \rangle + \langle \V\V^{\top}, \Identity \rangle=\langle \SDPVar, \Identity \rangle$. The constraints of \eqref{eq:mixedrepresentation} are give by $\forall_{d\in[\Dim]}$, $\langle \hat{\U} \hat{\V}^{\top},\Vector{e}_d \Vector{e}_d^{\top} \rangle = \Index{\hat{\ParVec}}{d}$, where $\crl*{\b{e}_d}_{d\in[\Dim]}$ denotes the standard basis. 
%or alternatively, as  $\forall_{d\in[\Dim]}, \langle \U \V^{\top}, \SDPConsVar_d\rangle = \Index{\hat{\ParVec}}{d}$, where
%$\SDPConsVar_d := \conj{\F}_K^{\top} \Vector{e}_d \Vector{e}_d^{\top} \conj{\F}\in\mathbb{C}^{\KerSize\times \Dim}$. This in turn is equivalent to   $\forall_{d\in[\Dim]}$, $\langle \SDPVar,  \Matrix{A}_d^\text{real}\rangle = 2 \cdot \Re(\Index{\hat{\ParVec}}{d})$ and $\langle \SDPVar,  \Matrix{A}_d^\text{img}\rangle = 2\cdot \Im(\Index{\hat{\ParVec}}{d})$, respectively, where we define $\prn{\Matrix{A}_d^\text{real},\Matrix{A}_d^\text{img}}$ as follows: 
%\begin{align*}
%     \Matrix{A}_d^\text{real} =\left[\!\!\begin{array}{cc} 0_K& \Q_d \\ \conj{\Q}_d^{\top} & 0_D\end{array}\!\!\right]\;\;\text{and}\;\;
%     \Matrix{A}_d^\text{img} =\left[\!\!\begin{array}{cc} 0_K& i \cdot \Q_d \\  -i \cdot \conj{\Q}_d^{\top}  & 0_D\end{array}\!\!\right].
%\end{align*}
Alternatively, using $\hat{\U}=\F_K\U$ and $\hat{\V}=\F\V$, the constraints are given by $\forall_{d\in[\Dim]}, \langle \U \V^{\top}, \conj{\F}_K^{\top} \Vector{e}_d \Vector{e}_d^{\top} \conj{\F}\rangle = \Index{\hat{\ParVec}}{d}$, which in the lifted space  is given by $\langle \SDPVar,  \Matrix{A}_d^\text{real}\rangle = 2 \cdot \Re(\Index{\hat{\ParVec}}{d})$ and $\langle \SDPVar,  \Matrix{A}_d^\text{img}\rangle = 2\cdot \Im(\Index{\hat{\ParVec}}{d})$, where we define $\prn{\Matrix{A}_d^\text{real},\Matrix{A}_d^\text{img}}$ as follows: 
\begin{align*}
     \Matrix{A}_d^\text{real} =\left[\!\!\begin{array}{cc} 0_K& \conj{\F}_K^{\top} \Vector{e}_d \Vector{e}_d^{\top} \conj{\F} \\ \conj{\F}^{\top} \Vector{e}_d \Vector{e}_d^{\top} \conj{\F}_K & 0_D\end{array}\!\!\right]\;\;\text{and}\;\;
     \Matrix{A}_d^\text{img} =\left[\!\!\begin{array}{cc} 0_K& i \cdot \conj{\F}_K^{\top} \Vector{e}_d \Vector{e}_d^{\top} \conj{\F} \\  -i \conj{\F}^{\top} \Vector{e}_d \Vector{e}_d^{\top} \conj{\F}_K  & 0_D\end{array}\!\!\right].
\end{align*}\newline

\noindent Now, we can formulate $\RkCout{\ParVec}{\KerSize}{\COut}$ as follows:
\begin{equation}
\label{opt:SDPrankconstraint}
\begin{split}
    \RkCout{\ParVec}{\KerSize}{\COut}\;=\;\min\limits_{\SDPVar \succcurlyeq 0}\;  \langle \SDPVar, \Matrix{I} \rangle 
    \quad\st \quad & \forall_{d \in [\Dim]},  \langle \SDPVar, \Matrix{A}_d^\text{real} \rangle = 2  \Re (\hat{\ParVec}[d])\\[-6pt]
     &\forall_{d \in [\Dim]},  \langle \SDPVar, \Matrix{A}_d^\text{img} \rangle = 2  \Im(\hat{\ParVec}[d])\\[1pt]
     & \rank(\SDPVar) \le \COut.
\end{split}
\end{equation}

The formulation in eq. \eqref{opt:SDPrankconstraint} is non-convex due to the rank constraint. We obtain a natural convex relaxation by dropping the rank constraint, leading to the following SDP: 
\begin{equation}\label{opt:SDP}
\begin{split}
     \RSDPk{\ParVec}{\KerSize}\;=\;\min\limits_{\SDPVar \succcurlyeq 0} \; \langle \SDPVar, \Matrix{I} \rangle  
    \quad\st \quad &  \forall_{d \in [\Dim]},  \langle \SDPVar, \Matrix{A}_d^\text{real} \rangle = 2  \Re (\hat{\ParVec}[d]) \\[-6pt]
     &\forall_{d \in [\Dim]},  \langle \SDPVar, \Matrix{A}_d^\text{img} \rangle = 2  \Im(\hat{\ParVec}[d]).
\end{split}
\end{equation}
\begin{rem*}
\label{remark:lowerbound} By construction, the relaxation provides lower bounds on the induced regularizer: 
for any $\KerSize \le \Dim$, any $\COut$, and any $\ParVec \in \mathbb{R}^{\Dim}$, it holds that
$\RkCout{\ParVec}{\KerSize}{\COut} \ge \RSDPk{\ParVec}{\KerSize}$.
\end{rem*}
\begin{rem*}The symmetry properties of Fourier coefficients of real signals gives us that for any $\Dim$ and any $\w\in\bR^\Dim$,  $\Index{\hat{\w}}{p}=\conj{\Index{\hat{\w}}{\Dim-p}}$ for $p\in[\Dim]$. Thus, although the optimization problems in \eqref{opt:SDPrankconstraint} and \eqref{opt:SDP} are specified with $2\cdot\Dim$ constraints for simplicity, only $\Dim$ of them are unique. 
\end{rem*}

\subsection{Tightness of the SDP Relaxation}\label{subsec:tightness}
Our main technical result is that for any kernel size $\KerSize$, the SDP relaxation is tight (in the case of networks with single-channel inputs). Thus, the induced regularizer $\RkCoutOp{\KerSize}{\COut}$ is equivalent to an SDP that only depends upon on the kernel size $\KerSize$.

\begin{restatable}{theorem}{maintheorem}[SDP tightness]
\label{thm:tightness}
For any  $\KerSize$, $\COut$, and $\ParVec \in \mathbb{R}^{\Dim}$, it holds that $\RkCout{\ParVec}{\KerSize}{\COut} = \RSDPk{\ParVec}{\KerSize}$. 
\end{restatable}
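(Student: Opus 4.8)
The lower bound $\RkCout{\ParVec}{\KerSize}{\COut}\ge\RSDPk{\ParVec}{\KerSize}$ noted just above holds for every $\COut$, and $\RkCoutOp{\KerSize}{\COut}$ is weakly decreasing in $\COut$ (Remark~\ref{remark:kdecrease}). Hence it suffices to prove the \emph{single-channel} inequality $\RkCout{\ParVec}{\KerSize}{1}\le\RSDPk{\ParVec}{\KerSize}$: then for every $\COut$ we get $\RSDPk{\ParVec}{\KerSize}\le\RkCout{\ParVec}{\KerSize}{\COut}\le\RkCout{\ParVec}{\KerSize}{1}\le\RSDPk{\ParVec}{\KerSize}$, which is the theorem. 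To prove the single-channel inequality I would fix $\epsilon>0$, take a feasible point $\SDPVar$ of the SDP~\eqref{opt:SDP} with $\langle\SDPVar,\Identity\rangle\le\RSDPk{\ParVec}{\KerSize}+\epsilon$, and factor it as $\SDPVar=\Matrix{M}\Matrix{M}^\top$ with $\Matrix{M}\in\bR^{(\KerSize+\Dim)\times \COut_0}$; writing $\U\in\bR^{\KerSize\times \COut_0}$ and $\V\in\bR^{\Dim\times \COut_0}$ for the top $\KerSize$ and bottom $\Dim$ rows of $\Matrix{M}$, feasibility of $\SDPVar$ is exactly the statement $\diag(\hat\U\hat\V^\top)=\hat\ParVec$ (reversing the derivation of~\eqref{opt:SDPrankconstraint} and using~\eqref{eq:fourierlinearpredictor}), so $(\U,\V)$ realizes $\ParVecFn(\U,\V)=\ParVec$ with $\|\U\|^2+\|\V\|^2=\langle\SDPVar,\Identity\rangle$. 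It then remains to \emph{compress} $(\U,\V)$ to one output channel without increasing the cost, and let $\epsilon\to0$.

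For the compression, write $\hat\U=\F_\KerSize\U$, $\hat\V=\F\V$ and set, for $d\in[\Dim]$ (with $\cdot^{*}$ the conjugate transpose),
\[ \zeta_d:=\|\Index{\hat\U}{d,:}\|^2=\big(\F_\KerSize\,\U\U^\top\,\F_\KerSize^{*}\big)_{dd}\ge 0,\qquad \eta_d:=\|\Index{\hat\V}{d,:}\|^2=\big(\F\,\V\V^\top\,\F^{*}\big)_{dd}\ge 0. \]
Since $\F_\KerSize$ has orthonormal columns and $\F$ is unitary, $\sum_d\zeta_d=\|\U\|^2$ and $\sum_d\eta_d=\|\V\|^2$. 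Also $\Index{\hat\ParVec}{d}=\diag(\hat\U\hat\V^\top)[d]=\sum_{\cout}\Index{\hat\U}{d,\cout}\Index{\hat\V}{d,\cout}$, so Cauchy--Schwarz gives the termwise bound $|\Index{\hat\ParVec}{d}|^2\le\zeta_d\,\eta_d$. Suppose for the moment we can find a single real kernel $a\in\bR^\KerSize$ with $|\Index{\hat a}{d}|^2=\zeta_d$ for all $d$, where $\hat a=\F_\KerSize a$. Define $\Index{\hat b}{d}:=\Index{\hat\ParVec}{d}/\Index{\hat a}{d}$ when $\Index{\hat a}{d}\ne0$ and $\Index{\hat b}{d}:=0$ otherwise; the bad cases $\zeta_d=0$ or $\Index{\hat a}{d}=0$ force $\Index{\hat\ParVec}{d}=0$ by $|\Index{\hat\ParVec}{d}|^2\le\zeta_d\eta_d$, so this is consistent with the convention $0/0:=0$. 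Since $\hat\ParVec$ and $\hat a$ are conjugate-symmetric, so is $\hat b$, hence $b:=\F^{-1}\hat b\in\bR^\Dim$; moreover $\hat a\odot\hat b=\hat\ParVec$, i.e. $\ParVecFn(a,b)=\ParVec$. The cost of this single-channel network is
\[ \begin{aligned} \|a\|^2+\|b\|^2 &= \sum_{d}|\Index{\hat a}{d}|^2+\sum_{d}\frac{|\Index{\hat\ParVec}{d}|^2}{|\Index{\hat a}{d}|^2}= \sum_{d}\Big(\zeta_d+\frac{|\Index{\hat\ParVec}{d}|^2}{\zeta_d}\Big) \\ &\le \sum_{d}\big(\zeta_d+\eta_d\big)=\|\U\|^2+\|\V\|^2\le\RSDPk{\ParVec}{\KerSize}+\epsilon, \end{aligned} \]
using $|\Index{\hat\ParVec}{d}|^2\le\zeta_d\eta_d$ termwise. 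Letting $\epsilon\to0$ gives $\RkCout{\ParVec}{\KerSize}{1}\le\RSDPk{\ParVec}{\KerSize}$.

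The crux is therefore the existence of the kernel $a\in\bR^\KerSize$ with prescribed Fourier magnitudes $\zeta_d$. Here $A:=\U\U^\top$ is real symmetric PSD, and for $\Vector{v}(\theta):=\big(1,\e^{i\theta},\dots,\e^{i(\KerSize-1)\theta}\big)^\top$ the entry $\big(\F_\KerSize A\F_\KerSize^{*}\big)_{dd}$ equals, up to the fixed normalization, the value at $\theta=2\pi d/\Dim$ of the trigonometric polynomial $p(\theta):=\Vector{v}(\theta)^{*}A\,\Vector{v}(\theta)=\sum_{k,l}\Index{A}{k,l}\e^{i(l-k)\theta}$. This $p$ has degree $\le\KerSize-1$, its coefficients $c_j=\sum_{l-k=j}\Index{A}{k,l}$ are real with $c_{-j}=c_j$ (as $A$ is real symmetric), and $p\ge0$ on the whole circle (as $A\succcurlyeq0$). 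By the real version of the Fej\'er--Riesz spectral factorization, every such nonnegative cosine polynomial of degree $\le\KerSize-1$ equals $|a(\e^{i\theta})|^2$ for a real-coefficient polynomial $a(z)=\sum_{k=0}^{\KerSize-1}a_kz^k$, i.e.\ for some $a\in\bR^\KerSize$; evaluating at the $\Dim$-th roots of unity yields $|\Index{\hat a}{d}|^2=\zeta_d$ for all $d$. (Equivalently, this says that the rank-constrained program~\eqref{opt:SDPrankconstraint} with $\COut=1$ already attains the value of the unconstrained SDP~\eqref{opt:SDP}.)

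I expect this Fej\'er--Riesz step to be the only substantive obstacle. Generic rank bounds for SDP extreme points would only bound the rank of an optimal $\SDPVar$ by $O(\sqrt{\Dim})$, not by $1$, so tightness against a \emph{single} output channel genuinely relies on the special structure of~\eqref{opt:SDP}: its constraints only see the off-diagonal block of $\SDPVar$ and read off a diagonal, i.e.\ a trigonometric polynomial sampled on the root-of-unity grid. The remaining ingredients (unitarity of $\F$ and $\F_\KerSize$ for the trace identities, Cauchy--Schwarz for $|\Index{\hat\ParVec}{d}|^2\le\zeta_d\eta_d$, conjugate symmetry for $b\in\bR^\Dim$, and the bookkeeping at frequencies with $\Index{\hat\ParVec}{d}=0$) are routine.
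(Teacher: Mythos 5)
Your proposal is correct, and it reaches the result by a genuinely different route than the paper. The paper starts from the KKT conditions of the SDP \eqref{opt:SDP}: using the dual variable $\DualVec$ it reduces tightness to finding a single $\uu\in\bR^{\KerSize}$ with $\uu\Conv\uu=\sum_l \uu_l\Conv\uu_l$ (the second layer is then recovered through the dual relation $\hat{\vv}=\DualVec\Entrywise\conj{\hat\uu}$), and it proves this combination lemma (Lemma~\ref{lemma:additiveproperty}, via Lemma~\ref{lemma:additivepropertyspecialcase}) from scratch by a pairwise root-pairing construction. You avoid duality and optimality altogether: factoring any ($\epsilon$-optimal) feasible $\SDPVar$ as a multi-channel pair $(\U,\V)$, you compress it to one output channel by a per-frequency Cauchy--Schwarz bound $\abs{\Index{\hat\ParVec}{d}}^2\le\zeta_d\eta_d$ together with $\|a\|^2+\|b\|^2=\sum_d(\zeta_d+\abs{\Index{\hat\ParVec}{d}}^2/\zeta_d)\le\sum_d(\zeta_d+\eta_d)$, defining the second layer by division $\hat b=\hat\ParVec\oslash\hat a$ rather than via a dual certificate; this in fact yields the slightly stronger statement that \emph{every} feasible point of \eqref{opt:SDP}, not just an optimizer, can be turned into a rank-one network realization of $\ParVec$ with no larger cost. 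The substantive ingredient is the same in both arguments: your invocation of the real Fej\'er--Riesz factorization of the nonnegative cosine polynomial $\Vector{v}(\theta)^{*}\U\U^\top\Vector{v}(\theta)=\sum_l\abs{p_{\uu_l}(\e^{i\theta})}^2$ is mathematically identical to the paper's Lemma~\ref{lemma:additiveproperty} (a $\KerSize$-tap real spectral factor whose squared Fourier magnitudes match $\zeta_d$ on the root-of-unity grid), and the paper's root-pairing proof, including the even-multiplicity argument on the unit circle (Lemma~\ref{lemma:multiplicity}), is essentially a self-contained proof of that classical theorem in the needed form. What the two approaches buy: yours is shorter, needs no KKT/strong-duality discussion and no recursion over channels, at the price of citing Fej\'er--Riesz as a black box; the paper's is fully self-contained but routes through the dual and a pairwise reduction. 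Minor points to watch in a write-up: keep the $1/\sqrt{\Dim}$ normalizations consistent when identifying $\zeta_d$ with the sampled trigonometric polynomial (the constants cancel, as you note), and state explicitly the positive rescaling of the monic spectral factor so that $\abs{\hat a[d]}^2=\zeta_d$ exactly; both are routine.
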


\paragraph{Proof sketch.} We can show directly from the KKT conditions that \textit{any} minimizer $\SDPVar$ of the SDP  must have rank at most $\KerSize$. However, to prove Theorem \ref{thm:tightness}, we need to show that there exists a rank $1$ solution that has the same objective value as $\SDPVar$ and satisfies the SDP constraints---this does not follow directly from the KKT conditions. 

% \textit{Rank-1 construction: }
Constructing this rank-$1$ solution is the main technical hurdle in the proof of Theorem \ref{thm:tightness}. 
In particular, the following lemma is a key intermediate result about the convolutional operation and is of independent interest beyond this paper. 
\begin{restatable}{lem}{mainlemma}
\label{lemma:additiveproperty}
For any $1\le \KerSize \le D$, and for any vectors $\Vector{a}, \Vector{b} \in \mathbb{R}^{\KerSize}$, there exists a vector $\Vector{c} \in \mathbb{R}^{\KerSize}$ such that $\Vector{a} \Conv \Vector{a} + \Vector{b} \Conv \Vector{b} = \Vector{c} \Conv \Vector{c}$, where convolutions are  w.r.t. dimension $\Dim$. 
\end{restatable}
\noindent For $\KerSize=\Dim$, Lemma~\ref{lemma:additiveproperty} follows easily from the Fourier space representation (using $\Vector{z}\Conv\Vector{z}=|\hat{\Vector{z}}|^2$), since in this case $\hat{\Vector{c}}$ is unconstrained and can be explicitly constructed as the square root of $|\hat{\Vector{a}}|^2+|\hat{\Vector{b}}|^2$. % the Fourier transform of $\Vector{a} \Conv \Vector{a} + \Vector{b} \Conv \Vector{b}$. 
However, this construction does not generalize to kernel sizes $\KerSize<\Dim$. In fact, $\Vector{c}$ does not appear to have an explicit closed-form for general $\KerSize<\Dim$. At the core of our proof, we first provide an argument for the existence of $\Vector{c}$ in the special case of $D=2K-1$ and then show that the case of general $1\le K\le D$ follows from this special case.  

\vspace{3pt}
\noindent \textit{Polynomial representation: }In our proof, we show existence of $\Vector{c}$ in Lemma~\ref{lemma:additiveproperty} by using the representation of convolutions as polynomial multiplication. This representation relies on the isomorphism between $\bR^D$ and polynomials of degree $\le D-1$ with real coefficients, \ie $\Vector{a}\in\bR^D \equiv p_\Vector{a}(x)=\Vector{a}[0]+\Vector{a}[1]x+\Vector{a}[2]x^2+\ldots+\Vector{a}[D-1]x^{D-1}$.  Lemma \ref{lemma:additiveproperty} can be written in terms of polynomials as follows: for any real-coefficient polynomials $p_{\Vector{a}}, p_{\Vector{b}}$ of degree at most $\KerSize - 1$, there exists a real-coefficient polynomial $p_{\Vector{c}}$ of degree at most $\KerSize - 1$ such that:

\[x^{\KerSize - 1} p_{\Vector{c}}(x) p_{\Vector{c}}(1/x) = x^{\KerSize - 1} p_{\Vector{a}}(x)  p_{\Vector{a}}(1/x) + x^{\KerSize - 1} p_{\Vector{b}}(x)  p_{\Vector{b}}(1/x).\]
The remainder of the proof involves implicitly constructing $p_{\Vector{c}}$ in terms of its roots (and leading coefficient). To do so, we show that the roots of the polynomial $x^{\KerSize - 1} p_{\Vector{a}}(x)  p_{\Vector{a}}(1/x) + x^{\KerSize - 1} p_{\Vector{b}}(x)  p_{\Vector{b}}(1/x)$ satisfy certain structural properties which allow us to establish the existence of the desired real-coefficient polynomial $p_{\Vector{c}}$. 
The full proof is in Appendix~\ref{appendix:tightness-main} with additional details on the proof technique in Appendix \ref{appendix:prooftechnique}.

\subsection{Implications of SDP Tightness in Theorem \ref{thm:tightness}}
The first implication of Theorem \ref{thm:tightness} is that although the optimization in \eqref{eq:inducedregularizer} is non-convex, the SDP formulation allows us to efficiently compute $\RkCout{\ParVec}{\KerSize}{\COut}$ exactly. In the remainder of the section, we discuss a number of other interesting properties that we can deduce from Theorem \ref{thm:tightness}. 

\subsubsection{$\RkCoutOp{\KerSize}{\COut}$ is independent of number of output channels $\COut$}
Theorem \ref{thm:tightness} directly implies that $\RkCoutOp{\KerSize}{\COut}$ is independent of $\COut$. This means that the linear predictors obtained by fitting training data and minimizing $\RkCout{\ParVec}{\KerSize}{\COut}$ will be invariant to $\COut$ (apart from $\RkCout{\ParVec}{\KerSize}{\COut}$ having multiple minimizers). Based on previous work (e.g. \cite{LL20}), this has implications for the asymptotic behavior of gradient descent. In particular, we can hypothesize that for networks with single channel input, the number of output channels does not influence the asymptotic predictor learned from gradient descent. 

We provide a detailed empirical evaluation of this hypothesis (along with a generalization of this hypothesis for networks with multi-channel inputs) in Section \ref{sec:experiments}. As a preview, in Figure \ref{fig:kersize3} we show the predictors learned by gradient descent on an MNIST task on a two-layer linear convolutional network with kernel size $K = 3$. Furthermore, our experiments also suggest that our theoretical results might also extend to some cases of networks with ReLU non linearity and bias (see Figure  \ref{tab:RELUbiasex}). In both cases, we see that the induced regularizer is invariant to the number of output channels. We defer a more extensive empirical evaluation to Section \ref{sec:experiments}.

\begin{figure}%
\centering
\begin{minipage}[c]{0.54\textwidth}%
\centering
    \includegraphics[width=\linewidth]{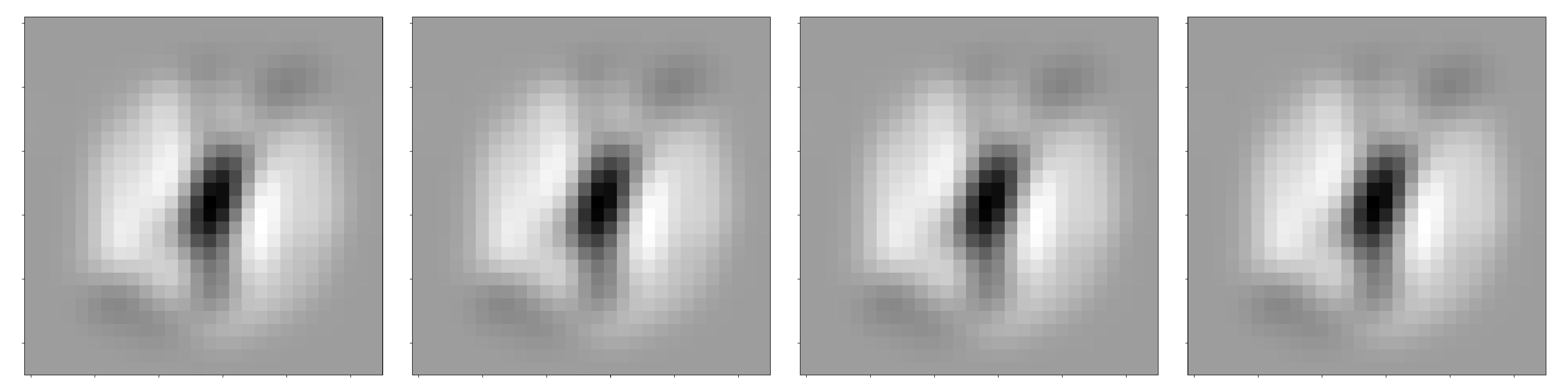}
\caption{Linear predictors learned by  two layer linear convolutional network for the task of classifying digits $0$ and $1$ in MNIST. The sub-figures depict predictors learned by using gradient descent on the  exponential loss for overparameterized networks with kernel size $\KerSize = (3, 3)$ and number of output channels $\COut \in \left\{1,2,4, 8\right\}$(left to right). }
    \label{fig:kersize3}
    \hspace{0.1cm}
\end{minipage}
~
\begin{minipage}[c]{0.42\textwidth}%
\centering\small 
\hspace{0.1cm}
    \begin{tabular}{c|c | c | c}
$\COut$ & $\KerSize: (1,1)$ &  $\KerSize: (3,3)$ & $\KerSize: (8,8)$ \\ \hline
1 & 10.581 &4.948 & 3.875  \\
2 & 10.571 & 4.945 & 3.910   \\
4 & 10.578 & 4.945 & 3.912  \\ 
8 & 10.576  &4.946  & 3.881   
\end{tabular}
    \caption{\small$\RkCoutHat{f_{\mathrm{GD}}}{\KerSize}{\COut} = \|\U\|^2+\|\V\|^2$ of the MNIST $0$-vs-$1$ classifiers learned by gradient descent on two-layer ReLU convolutional networks with bias parameters. %For different combinations of  $\COut$ and $\KerSize$, we 
    We show the median  values over 5 trials. }
    \label{tab:RELUbiasex}
\end{minipage}
\end{figure}

\subsubsection{$\boldsymbol{\RkCoutOp{\KerSize}{\COut}}$ is a norm}
Another interesting corollary of Theorem \ref{thm:tightness} is that the induced regularizer is a norm for any $\KerSize$.  

\begin{restatable}{cor}{cornorm}
\label{cor:norm}
For any $\KerSize$ and any $\COut,$
$\RkCout{\ParVec}{\KerSize}{\COut}$ is a norm. 
\end{restatable}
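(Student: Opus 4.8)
The plan is to verify the three norm axioms for $\R:=\RkCoutOp{\KerSize}{\COut}$ on $\bR^\Dim$, with the key observation that every axiom \emph{except} the triangle inequality is already transparent from the non-convex definition in eq.~\eqref{eq:inducedregularizer}, whereas the triangle inequality is precisely where I would invoke the convex SDP reformulation and its tightness, Theorem~\ref{thm:tightness}. For the easy axioms: since the network realizes every linear predictor (as noted below eq.~\eqref{eq:w-uv}), $\R(\w)<\infty$ for all $\w$, and $\R(\w)\ge 0$ because it is an infimum of squared norms. Absolute homogeneity I would read off eq.~\eqref{eq:w-uv}: the map $\ParVecFn$ is bilinear in $(\U,\V)$ with $\ParVecFn(t\U,t\V)=t^2\ParVecFn(\U,\V)$ and $\ParVecFn(-\U,\V)=-\ParVecFn(\U,\V)$, so from a realization $(\U,\V)$ of $\w$ the pair $(\sqrt{|\alpha|}\,\U,\ \mathrm{sgn}(\alpha)\sqrt{|\alpha|}\,\V)$ realizes $\alpha\w$ at cost $|\alpha|(\norm{\U}^2+\norm{\V}^2)$; this gives $\R(\alpha\w)\le|\alpha|\R(\w)$, and applying it to $1/\alpha$ for $\alpha\neq0$ yields $\R(\alpha\w)=|\alpha|\R(\w)$, the case $\alpha=0$ being trivial. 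For definiteness, $\R(0)=0$ via $(\U,\V)=(0,0)$, and conversely if $\R(\w)=0$ then realizations $(\U_n,\V_n)$ of $\w$ with $\norm{\U_n}^2+\norm{\V_n}^2\to0$ satisfy $\U_n,\V_n\to0$, so by continuity of $\ParVecFn$ we get $\w=\ParVecFn(\U_n,\V_n)\to\ParVecFn(0,0)=0$.

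For the triangle inequality I would pass through the SDP. By Theorem~\ref{thm:tightness}, $\R(\w)=\RSDPk{\w}{\KerSize}$ is the value of eq.~\eqref{opt:SDP}, whose feasible set is $\{\SDPVar\succcurlyeq0\}$ cut by the affine constraints $\langle\SDPVar,\Matrix{A}_d^\text{real}\rangle=2\Re(\hat{\w}[d])$ and $\langle\SDPVar,\Matrix{A}_d^\text{img}\rangle=2\Im(\hat{\w}[d])$ --- crucially, these right-hand sides are \emph{linear} in $\w$, since $\hat{\w}=\F\w$ --- and whose objective $\langle\SDPVar,\I\rangle$ is linear. Given $\w_1,\w_2$, take near-optimal feasible $\SDPVar_1,\SDPVar_2$ for $\w_1,\w_2$ (within $\varepsilon$ of $\R(\w_1),\R(\w_2)$). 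Then $\SDPVar_1+\SDPVar_2\succcurlyeq0$, and by linearity of the right-hand sides $\langle\SDPVar_1+\SDPVar_2,\Matrix{A}_d^\text{real}\rangle=2\Re(\widehat{\w_1+\w_2}[d])$, and similarly for $\Matrix{A}_d^\text{img}$, so $\SDPVar_1+\SDPVar_2$ is feasible for $\w_1+\w_2$. Hence $\R(\w_1+\w_2)=\RSDPk{\w_1+\w_2}{\KerSize}\le\langle\SDPVar_1+\SDPVar_2,\I\rangle\le\R(\w_1)+\R(\w_2)+2\varepsilon$, and $\varepsilon\to0$ finishes it. Combining all properties shows $\R$ is a norm, for every $\COut$ (all with the same value, by Theorem~\ref{thm:tightness}).

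As for the main obstacle: there is essentially none once Theorem~\ref{thm:tightness} is in hand --- extracting this corollary is exactly why the SDP is worth proving tight. The one axiom genuinely not visible from eq.~\eqref{eq:inducedregularizer} is the triangle inequality: the constraint $\ParVecFn(\U,\V)=\w$ runs through the bilinear form $\U\V^\top$, so one cannot simply concatenate realizations of $\w_1$ and $\w_2$; it is the positive-semidefinite lift together with its tightness that convexifies the problem and legitimizes adding feasible points. The only minor technical point is that the relevant infima need not a priori be attained, which I handled above via the $\varepsilon$-argument (alternatively, $\{\SDPVar\succcurlyeq0:\langle\SDPVar,\I\rangle\le M\}$ is compact because $\norm{\SDPVar}_F\le\langle\SDPVar,\I\rangle$ on the PSD cone, so the minima are in fact attained and one can argue with exact minimizers).
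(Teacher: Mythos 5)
Your proposal is correct and follows essentially the same route as the paper: the triangle inequality is obtained exactly as in the paper's proof, by adding PSD-feasible lifts for $\ParVec_1$ and $\ParVec_2$, using linearity of the constraints $\langle\SDPVar,\Matrix{A}_d^\text{real}\rangle$, $\langle\SDPVar,\Matrix{A}_d^\text{img}\rangle$ in $\hat{\ParVec}$, and invoking Theorem~\ref{thm:tightness} to transfer the bound from $\RSDPk{\cdot}{\KerSize}$ back to $\RkCoutOp{\KerSize}{\COut}$, while homogeneity and definiteness are routine. The only (harmless) deviations are that you prove absolute homogeneity directly by rescaling $(\U,\V)$ rather than via the SDP scaling argument --- which in fact sidesteps the paper's slightly imprecise claim that $\gamma\SDPVar$ is feasible for negative $\gamma$ --- and that you add an $\varepsilon$/compactness remark about attainment that the paper implicitly assumes.
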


For the end cases of $\KerSize=1$ and $\KerSize=\Dim$, this norm can be explicitly specified: $\RkCout{\ParVec}{1}{\COut} =2\sqrt{\Dim}\norm{\hat{\ParVec}}$ (Lemma~\ref{lemma:kersize1}) and $\RkCout{\ParVec}{\Dim}{\COut}=2\norm{\hat{\ParVec}}_1$ (Lemma~\ref{lemma:kersizeD}), respectively. For intermediate kernel sizes,  $\RkCout{\ParVec}{\KerSize}{\COut}$ interpolates between the $\ell_2$ norm and the $\ell_1$ norm of the Fourier coefficients of the linear predictor.
We further use the SDP in \eqref{opt:SDP} to bound  $\RkCout{\ParVec}{\KerSize}{\COut}$ in terms of the $\ell_1$ and $\ell_2$ norms. 
\begin{restatable}{lem}{bounds}
\label{lemma:bounds}
For any $\KerSize$, $\COut$, and $\ParVec \in \mathbb{R}^{\Dim}$: 
\[{
    2\sqrt{\frac{\Dim}{\KerSize}} \norm{\hat{\ParVec}}_2\le \RkCout{\ParVec}{\KerSize}{\COut}\le 2\sqrt{D}\norm{\hat{\ParVec}}_2%\]
     \quad\text{and}\quad
 %\[   
 2 \norm{\hat{\ParVec}}_1 \le 
    \RkCout{\ParVec}{\KerSize}{\COut} \le 2 \sqrt{\ceil{\frac{\Dim}{\KerSize}}} \norm{\hat{\ParVec}}_1.
}\]
\end{restatable}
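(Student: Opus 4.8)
\emph{Plan and structure.} The lemma packages four inequalities, and they come from three different, essentially independent, ideas. Two follow for free from the monotonicity of $\RkCoutOp{\KerSize}{\COut}$; the lower $\ell_2$ bound follows from a one‑line ``smoothness'' estimate on $\hat{\U}$ plus AM--GM; the upper $\ell_1$ bound requires an explicit multi‑channel construction. Throughout I invoke Theorem~\ref{thm:tightness} whenever I need to move between different values of $\COut$.

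\emph{The bounds $\RkCout{\ParVec}{\KerSize}{\COut}\le 2\sqrt{\Dim}\norm{\hat{\ParVec}}_2$ and $\RkCout{\ParVec}{\KerSize}{\COut}\ge 2\norm{\hat{\ParVec}}_1$.} Since $\RkCoutOp{\KerSize}{\COut}$ is weakly decreasing in $\KerSize$ (Remark~\ref{remark:kdecrease}) and $1\le\KerSize\le\Dim$, we have $\RkCout{\ParVec}{\Dim}{\COut}\le\RkCout{\ParVec}{\KerSize}{\COut}\le\RkCout{\ParVec}{1}{\COut}$. Now substitute the closed forms $\RkCout{\ParVec}{1}{\COut}=2\sqrt{\Dim}\norm{\hat{\ParVec}}_2$ (Lemma~\ref{lemma:kersize1}, valid for all $\COut$ by Theorem~\ref{thm:tightness}) and $\RkCout{\ParVec}{\Dim}{\COut}=2\norm{\hat{\ParVec}}_1$ (Lemma~\ref{lemma:kersizeD}).

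\emph{The lower bound $\RkCout{\ParVec}{\KerSize}{\COut}\ge 2\sqrt{\Dim/\KerSize}\,\norm{\hat{\ParVec}}_2$.} Take any feasible $(\U,\V)$ for eq.~\eqref{eq:inducedregularizer} and pass to Fourier coordinates $\hat{\U}=\F_\KerSize\U$, $\hat{\V}=\F\V$; since $\F_\KerSize$ has orthonormal columns and $\F$ is unitary, $\norm{\U}^2=\norm{\hat{\U}}^2$ and $\norm{\V}^2=\norm{\hat{\V}}^2$. The kernel‑size constraint confines $\hat{\U}$ to a $\KerSize$‑dimensional subspace, which makes its rows small: every entry of $\F_\KerSize$ has modulus $1/\sqrt{\Dim}$, so $\norm{\Index{\hat{\U}}{d,:}}\le\norm{\Index{\F_\KerSize}{d,:}}\,\norm{\U}=\sqrt{\KerSize/\Dim}\,\norm{\U}$ (with $\norm{\U}$ the Frobenius norm, which dominates the operator norm). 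By eq.~\eqref{eq:fourierlinearpredictor} and Cauchy--Schwarz over the $\COut$ channels, $\abs{\Index{\hat{\ParVec}}{d}}\le\norm{\Index{\hat{\U}}{d,:}}\,\norm{\Index{\hat{\V}}{d,:}}$, so $\norm{\Index{\hat{\V}}{d,:}}\ge\sqrt{\Dim/\KerSize}\,\abs{\Index{\hat{\ParVec}}{d}}/\norm{\U}$ (indices with $\Index{\hat{\U}}{d,:}=\Zero$ force $\Index{\hat{\ParVec}}{d}=0$ and are dropped). Summing over $d$ gives $\norm{\V}^2=\norm{\hat{\V}}^2\ge (\Dim/\KerSize)\,\norm{\hat{\ParVec}}_2^2/\norm{\U}^2$, hence $\norm{\U}^2+\norm{\V}^2\ge 2\sqrt{\Dim/\KerSize}\,\norm{\hat{\ParVec}}_2$ by AM--GM; minimizing over feasible $(\U,\V)$ gives the claim.

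\emph{The upper bound $\RkCout{\ParVec}{\KerSize}{\COut}\le 2\sqrt{\ceil{\Dim/\KerSize}}\,\norm{\hat{\ParVec}}_1$, and the main obstacle.} Put $\M=\ceil{\Dim/\KerSize}$. By Theorem~\ref{thm:tightness} it suffices to exhibit a network with $\COut=\M$ output channels of this cost, and this construction is the only real work in the proof. Start from a ``full‑kernel'' factorization $\hat{\ParVec}=\hat{\U}^\ast\Entrywise\hat{\V}^\ast$ with $\abs{\Index{\hat{\U}^\ast}{d}}=\M^{1/4}\abs{\Index{\hat{\ParVec}}{d}}^{1/2}$ and $\abs{\Index{\hat{\V}^\ast}{d}}=\M^{-1/4}\abs{\Index{\hat{\ParVec}}{d}}^{1/2}$, choosing magnitudes and phases so that $\hat{\U}^\ast,\hat{\V}^\ast$ are conjugate‑symmetric — possible because $\abs{\Index{\hat{\ParVec}}{d}}=\abs{\Index{\hat{\ParVec}}{\Dim-d}}$ — so that $u^\ast=\F^\ast\hat{\U}^\ast$ and $v^\ast=\F^\ast\hat{\V}^\ast$ are real. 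Partition $[\Dim]=\crl{0,\dots,\Dim-1}$ into $\M$ consecutive blocks of size at most $\KerSize$, the $j$‑th being $\crl{j\KerSize,\dots,\min((j{+}1)\KerSize,\Dim)-1}$. For channel $j$ let $\Index{\U}{:,j}=\uu^{(j)}\in\bR^\KerSize$ be $u^\ast$ restricted to the $j$‑th block and shifted back to start at index $0$ (zero‑padded to length $\KerSize$), and let $\Index{\V}{:,j}=\vv^{(j)}\in\bR^\Dim$ be the cyclic shift of $v^\ast$ by $j\KerSize$. The DFT shift--modulation identity (placing $\uu^{(j)}$ at offset $j\KerSize$, respectively cyclically shifting $v^\ast$ by $j\KerSize$, multiplies the transform pointwise by $\prn*{\e^{-2\pi i d j\KerSize/\Dim}}_{d\in[\Dim]}$) gives $\sum_j \prn*{\e^{-2\pi i d j\KerSize/\Dim}}_d\Entrywise\prn*{\F_\KerSize\uu^{(j)}}=\F u^\ast=\hat{\U}^\ast$ and $\F\vv^{(j)}=\prn*{\e^{-2\pi i d j\KerSize/\Dim}}_d\Entrywise\hat{\V}^\ast$, so by eq.~\eqref{eq:fourierlinearpredictor} the network realizes $\sum_j\prn*{\F_\KerSize\uu^{(j)}}\Entrywise\prn*{\F\vv^{(j)}}=\hat{\V}^\ast\Entrywise\hat{\U}^\ast=\hat{\ParVec}$, i.e.\ $\ParVecFn(\U,\V)=\ParVec$. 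Its squared weight norm is
\[
\sum_{j}\norm{\uu^{(j)}}^2+\sum_{j}\norm{\vv^{(j)}}^2=\norm{u^\ast}^2+\M\norm{v^\ast}^2=\norm{\hat{\U}^\ast}^2+\M\norm{\hat{\V}^\ast}^2=\sum_{d=0}^{\Dim-1}2\sqrt{\M}\,\abs{\Index{\hat{\ParVec}}{d}}=2\sqrt{\M}\,\norm{\hat{\ParVec}}_1 ,
\]
using that the blocks partition $[\Dim]$ (so $\sum_j\norm{\uu^{(j)}}^2=\norm{u^\ast}^2$), that cyclic shifts preserve the $\ell_2$ norm, and that $\F$ is unitary. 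The crux — and the only nontrivial point — is exactly this windowing of the full‑kernel weights: it lets the $\M$ short kernels $\uu^{(j)}$ jointly reconstruct the length‑$\Dim$ vector $u^\ast$ while all second‑layer vectors $\vv^{(j)}$ are shifts of one vector, making the second‑layer cost precisely $\M\norm{v^\ast}^2$, which is then balanced against $\norm{u^\ast}^2$. The minor bookkeeping — taking $\hat{\U}^\ast,\hat{\V}^\ast$ conjugate‑symmetric, and handling the last block when $\KerSize\nmid\Dim$ (which only makes $\norm{\uu^{(\M-1)}}$ smaller) — causes no difficulty.
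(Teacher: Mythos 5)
Your proof is correct, but two of the four bounds are obtained by routes genuinely different from the paper's. For the lower bound $2\sqrt{\Dim/\KerSize}\,\norm{\hat{\ParVec}}_2$ the paper works in the dual of the SDP \eqref{eqn:dualfreqspace}: it exhibits the dual-feasible certificate $\Vector{\lambda}=\sqrt{\Dim/\KerSize}\,\hat{\ParVec}/\norm{\hat{\ParVec}}$ and checks feasibility via H\"older together with $\max_d\abs{\hat{\Vector{x}}[d]}\le\sqrt{\KerSize/\Dim}$; you prove the same bound purely in the primal, bounding $\norm{\Index{\hat{\U}}{d,:}}\le\sqrt{\KerSize/\Dim}\,\norm{\U}$ from the rows of $\F_\KerSize$, applying Cauchy--Schwarz across channels and AM--GM. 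The two arguments encode the same ``flatness of the $\KerSize$-dimensional Fourier subspace'' fact, but yours avoids the dual formulation (and hence any duality discussion) entirely, which is a nice simplification. For the upper bound $2\sqrt{\ceil{\Dim/\KerSize}}\,\norm{\hat{\ParVec}}_1$ both proofs rest on the same windowing idea — chop a full-kernel square-root factorization of $\hat{\ParVec}$ into $\ceil{\Dim/\KerSize}$ blocks of length $\KerSize$, pairing each block with a shifted copy of the other factor — but the assembly differs: the paper writes $\ParVec=\sum_t\ParVec_t$ with $\ParVec_t=\Vector{r}_t^{\downarrow}\Conv\Vector{q}$, bounds each $\RkCout{\ParVec_t}{\KerSize}{\COut}\le2\norm{\Vector{r}_t}\norm{\Vector{q}}$ by a single-channel network, and sums via the triangle inequality of Corollary~\ref{cor:norm} plus Cauchy--Schwarz, whereas you stack the blocks as the $\ceil{\Dim/\KerSize}$ output channels of one explicit network with a global $\M^{1/4}$ balancing and compute the total cost directly. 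Both assemblies need Theorem~\ref{thm:tightness} to make the conclusion hold for every $\COut$ (the paper through Corollary~\ref{cor:norm}, you to transfer from $\COut=\M$ to arbitrary $\COut$), and the two easy bounds are handled identically via Remark~\ref{remark:kdecrease} and Lemmas~\ref{lemma:kersizeD}--\ref{lemma:kersize1}. The only points to tighten in your write-up are bookkeeping you already flag: the cyclic-shift direction of $\Vector{v}^\ast$ must match the sign in the modulation identity, and at the self-conjugate frequencies ($d=0$, and $d=\Dim/2$ for even $\Dim$) a negative real $\Index{\hat{\ParVec}}{d}$ forces you to put the sign into one factor rather than splitting the phase — the same care the paper silently needs for $\hat{\Vector{q}}=\sqrt{\hat{\ParVec}}$.
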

\begin{rem*} For the lower bounds,  $2\norm{\hat{\ParVec}}_1$ is tight for $\ParVec = [1, 0,  \ldots, 0]$ and $2\sqrt{\frac{\Dim}{\KerSize}}\norm{\ParVec}$ is tight for $\ParVec = [1, 1, \ldots, 1]$. For the upper bounds, $2\sqrt{\ceil{\frac{\Dim}{\KerSize}}} \norm{\hat{\ParVec}}_1$ is tight when $\KerSize \mid \Dim$ for patterned vectors (see Lemma \ref{lemma:patterned}), and $2\sqrt{\Dim}\norm{\hat{\ParVec}}_2$ is tight for $[1, 0, \ldots, 0]$. 
\end{rem*}
Lemma \ref{lemma:bounds} demonstrates that when $\KerSize$ is a small constant, $\RkCout{\ParVec}{\KerSize}{\COut}$ is close to $\RkCout{\ParVec}{1}{\COut}= 2\sqrt{\Dim}\norm{\hat{\w}}_2$, whereas once $\KerSize$ is comparable to $\Dim$, $\RkCout{\ParVec}{\KerSize}{\COut}$ is close to $\RkCout{\ParVec}{\Dim}{\COut}=2 \norm{\hat{\w}}_1$.

\subsection{$\boldsymbol{\RkCoutOp{\KerSize}{\COut}}$  for patterned vectors}

Aside from general bounds on $\RkCoutOp{\KerSize}{\COut}$, the SDP formulation can also be used to analyze the behavior of the induced regularizer of special classes of vectors. One interesting  case is of \textit{patterned vectors} described as follows:
Consider vectors of the form $\ParVec(\Pattern) = [\Pattern, \Pattern, \ldots, \Pattern]\in\bR^\Dim$ consisting of repetitions of a $\SmallDim$ dimensional pattern $\Pattern\in\bR^\SmallDim$. A useful property of linear predictors of this form is that they incorporate invariance to periodic translations. 

 We show a relation between the representation cost  $\RkCout{\ParVec(\Pattern)}{\KerSize}{\COut}$ of realizing  patterned vectors in $\bR^\Dim$ and the analogous cost (denoted as $\RkCoutDim{\SmallDim}{\Pattern}{\KerSize}{\COut}$) of realizing $\Pattern$ as a linear predictor in $\bR^\SmallDim$ using a network with the same values of $\KerSize$ and $\COut$. 
\begin{restatable}{lem}{patterned}
\label{lemma:patterned} 
Consider vectors  $\ParVec(\Pattern) = [\Pattern, \Pattern, \ldots, \Pattern]\in\bR^\Dim$ specified by $\Pattern \in \mathbb{R}^{\SmallDim}$  \st $\SmallDim$ divides $\Dim$.
\begin{compactenum}[(a)]
\item For  $\KerSize \le \SmallDim$, it holds that $\forall \COut$: 
$\RkCout{\ParVec(\Pattern)}{\KerSize}{\COut} = \frac{\Dim}{\SmallDim} \cdot \RkCoutDim{\SmallDim}{\Pattern}{\KerSize}{1}.$
\item For $\KerSize\ge\SmallDim$ if $\KerSize  = \SmallDim \cdot T$ for integer $T$, then $\forall \COut$: 
$\RkCout{\ParVec(\Pattern)}{\KerSize}{\COut} = \frac{2\Dim}{\sqrt{T} \SmallDim} \norm{\hat{\Pattern}}_1 = 2 \sqrt{\frac{\Dim}{\KerSize}} \norm{\hat{\ParVec}}_1. 
$ 
\end{compactenum}
\end{restatable}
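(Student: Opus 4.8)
The plan is to exploit the block structure that patterned vectors induce in the Fourier domain, and then reduce each case to one of the already-established results. Recall that if $\ParVec(\Pattern) = [\Pattern,\dots,\Pattern]\in\bR^\Dim$ with $\Dim = \SmallDim\cdot M$ (here $M = \Dim/\SmallDim$), then the $\Dim$-point DFT $\hat{\ParVec(\Pattern)}$ is supported only on the frequencies that are multiples of $M$, and on those frequencies it agrees (up to a fixed scaling factor depending on the normalization in Definition~\ref{def:conv}) with the $\SmallDim$-point DFT $\hat{\Pattern}$. Concretely $\Index{\hat{\ParVec(\Pattern)}}{jM} = \sqrt{M}\,\Index{\hat{\Pattern}}{j}$ for $j\in[\SmallDim]$ and all other entries vanish; I would first record this identity carefully, since both parts hinge on it, and also note the matching statement $\norm{\hat{\ParVec(\Pattern)}}_1 = \sqrt{M}\norm{\hat{\Pattern}}_1$ and $\norm{\hat{\ParVec(\Pattern)}}_2 = \norm{\hat{\Pattern}}_2$.

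For part (a), the key observation is that the constraint matrices $\Q_d$ (equivalently $\Matrix{A}_d^\text{real},\Matrix{A}_d^\text{img}$) in the SDP~\eqref{opt:SDP} only ``see'' the nonzero Fourier coefficients, i.e.\ those indexed by $d = jM$. I would argue that because $\KerSize \le \SmallDim$, the relevant submatrix of $\conj\F$ restricted to these frequencies is, after rescaling, exactly the $\SmallDim$-point DFT submatrix $\conj\F_\KerSize$ appearing in the $\bR^\SmallDim$ problem. This lets me show the $\Dim$-dimensional SDP decouples: an optimal $\SDPVar$ can be taken to have the $\V\V^\top$ block supported (in frequency) on the same multiples of $M$, and the whole program collapses to $M$ copies of (a rescaling of) the $\SmallDim$-dimensional SDP. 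Invoking Theorem~\ref{thm:tightness} on both sides — so $\RkCout{\ParVec(\Pattern)}{\KerSize}{\COut} = \RSDPk{\ParVec(\Pattern)}{\KerSize}$ and $\RkCoutDim{\SmallDim}{\Pattern}{\KerSize}{1} = (\RSDPk{\Pattern}{\KerSize})^{(\SmallDim)}$ — and tracking the $1/\sqrt{\Dim}$ versus $1/\sqrt{\SmallDim}$ scalings from Definition~\ref{def:conv}, I get the claimed factor $\Dim/\SmallDim = M$. (The cleanest bookkeeping route is probably to exhibit a weight construction $\U,\V$ directly from the optimal $\SmallDim$-dimensional weights for the upper bound, and use the SDP relaxation lower bound of the Remark after~\eqref{opt:SDP} for the matching lower bound, rather than manipulating $\SDPVar$ abstractly.)

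For part (b), since $\KerSize = \SmallDim\cdot T \ge \SmallDim$, the first-layer kernel is long enough to place its nonzero Fourier mass freely on the $\SmallDim$ frequencies $\{jM\}$ that matter. The strategy is: the lower bound $\RkCout{\ParVec(\Pattern)}{\KerSize}{\COut}\ge 2\sqrt{\Dim/\KerSize}\,\norm{\hat{\ParVec}}_1$ is exactly the general upper-range lower bound from Lemma~\ref{lemma:bounds} specialized here, so I only need a matching \emph{upper} bound, realized by an explicit construction. I would build $\U,\V$ (with $\COut=1$, and then appeal to monotonicity in $\COut$ from Remark~\ref{remark:kdecrease} plus the trivial direction for the other inequality — actually since the value is $\COut$-independent by Theorem~\ref{thm:tightness} it suffices to do $\COut=1$) so that $\hat\U$ is supported on $\{jM : j\in[\SmallDim]\}$ with $|\Index{\hat\U}{jM}|$ proportional to $\sqrt{|\Index{\hat\Pattern}{j}|}$, and $\hat\V$ likewise, arranged so that $\hat\U\odot\hat\V = \hat{\ParVec(\Pattern)}$. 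A $\KerSize$-sparse-in-a-coset kernel supported on multiples of $M$ in the time domain gives a $\KerSize$-dimensional kernel whose DFT lives on exactly those frequencies; the norm $\norm{\U}^2 = \norm{\hat\U}^2$ then accrues a $\KerSize$-dependent factor, and optimizing the split between $\norm{\U}^2$ and $\norm{\V}^2$ (as in the proof of Lemma~\ref{lemma:kersizeD}) yields $2\frac{\Dim}{\sqrt T\SmallDim}\norm{\hat\Pattern}_1$. Rewriting via $\norm{\hat{\ParVec}}_1 = \sqrt M\norm{\hat\Pattern}_1 = \sqrt{\Dim/\SmallDim}\,\norm{\hat\Pattern}_1$ and $T = \KerSize/\SmallDim$ turns $\frac{\Dim}{\sqrt T\SmallDim}\sqrt M\,^{-1}\cdots$ — the algebra — into $\sqrt{\Dim/\KerSize}\,\norm{\hat{\ParVec}}_1$, matching the lower bound.

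The main obstacle I expect is part (a): making rigorous the claim that the $\Dim$-dimensional SDP truly decouples into $M$ independent $\SmallDim$-dimensional SDPs without loss of optimality. One has to show an arbitrary optimal $\SDPVar$ can be symmetrized/projected onto the block-diagonal-in-frequency form while preserving both feasibility and objective — this is where the structure of the constraint matrices $\Q_d = \conj\F_\KerSize^\top\b e_d\b e_d^\top\conj\F$ and the hypothesis $\KerSize\le\SmallDim$ must be used essentially (for $\KerSize>\SmallDim$ the kernel's Fourier support cannot be confined to the coset $M\bZ$, which is exactly why part (b) needs a different regime and a different formula). Packaging this cleanly — likely via an averaging argument over the cyclic group of translations by $\SmallDim$, under which $\ParVec(\Pattern)$ is invariant — is the technical heart; everything else is Fourier bookkeeping and reuse of Lemmas~\ref{lemma:kersizeD}--\ref{lemma:kersize1} and Theorem~\ref{thm:tightness}.
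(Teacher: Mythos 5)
There is a genuine gap, concentrated in the lower-bound halves of both parts (the upper-bound constructions you sketch are essentially the paper's).

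For part (b), your claimed lower bound does not follow from Lemma~\ref{lemma:bounds}: that lemma only gives $\RkCout{\ParVec}{\KerSize}{\COut}\ge 2\norm{\hat{\ParVec}}_1$ and $\RkCout{\ParVec}{\KerSize}{\COut}\ge 2\sqrt{\Dim/\KerSize}\,\norm{\hat{\ParVec}}_2$, whereas you need $2\sqrt{\Dim/\KerSize}\,\norm{\hat{\ParVec}}_1$, which is strictly stronger than both in general (indeed, the remark after Lemma~\ref{lemma:bounds} cites the patterned case as the witness that the \emph{upper} bound $2\sqrt{\lceil \Dim/\KerSize\rceil}\norm{\hat{\ParVec}}_1$ is tight, so the statement cannot be recovered from that lemma's lower bounds; note also $\KerSize\mid\Dim$ is not even assumed in part (b), only $T\le \Dim/\SmallDim$). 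The paper proves this lower bound with a separate dual argument: take $\DualVec$ in eq.~\eqref{eqn:dualfreqspace} with $|\DualVec[(\Dim/\SmallDim)p]|=\sqrt{\Dim/\KerSize}$ on the supported frequencies and $0$ elsewhere, and verify feasibility by splitting any $\Vector{x}\in\bC^{\KerSize}$ into $T$ blocks of length $\SmallDim$, using $\hat{\Vector{x}}[(\Dim/\SmallDim)p]=\sqrt{\SmallDim/\Dim}\sum_t \hat{\Vector{x}}_t^{(\SmallDim)}[p]$ and Cauchy--Schwarz. Your proposal contains no substitute for this step. (A smaller slip in the same part: a kernel supported on multiples of $\Dim/\SmallDim$ in time does \emph{not} have DFT supported on those frequencies---frequency support on multiples of $\Dim/\SmallDim$ corresponds to time-periodicity with period $\SmallDim$; this is harmless for the upper bound since only $\hat{\V}$ needs that frequency support, as in the paper's repeated-copies construction.)

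For part (a), your plan replaces the paper's dual argument (lifting the $\SmallDim$-dimensional dual optimum to a $\Dim$-dimensional dual-feasible certificate, then invoking Theorem~\ref{thm:tightness} and strong duality, Proposition~\ref{prop:strongduality}) with a primal symmetrization of the SDP over shifts by $\SmallDim$ and a claimed collapse to the $\SmallDim$-dimensional SDP. This route is plausible---shift-invariance of $\ParVec(\Pattern)$ does let one average an optimal $\SDPVar$, force the cross block to be frequency-supported on multiples of $\Dim/\SmallDim$, and restrict the $\V$-block accordingly---but you explicitly leave it as ``the technical heart,'' and it is precisely the step that carries the content of the lemma; as written the proposal establishes only the upper bounds $\le$ in (a) and (b), not the equalities.
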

\noindent We see that the induced regularizer of repeated patterned vectors is closely related to that of the pattern itself. In particular, for $\KerSize\le\SmallDim$, we have $\RkCout{\ParVec(\Pattern)}{\KerSize}{\COut} \propto \RkCoutDim{\SmallDim}{\Pattern}{\KerSize}{1}$.

\subsubsection{Connection to previous work}\label{sec:comparison}

\paragraph{Comparison to \citet{PE20, EP202, EP21, SEPP20}.} These works study the induced regularizer of neural networks by looking at the bi-dual convex relaxation of the $\ell_2$ regularized least squares loss. In comparison to our SDP, this method is a complementary approach to derive lower bounds on the induced regularizer when minimizing convex losses over datasets. In both cases, the relaxations are trivially tight in the limit of \textit{infinitely many} output channels. However, in our work, we use the SDP formulation to show a significantly stronger result than those in prior works.
%; thus, the optimal objective of this bidual is equivalent to the optimal objective of our SDP relaxation $\RSDPk{\ParVec}{\KerSize}$.  

Phrased in the terminology of our work,  the results in \citet{PE20} on linear convolutional networks show that for networks with a single input channel, \textit{if the number of output channels (or width) $\COut$ is larger than a  data-dependent threshold}, then $ \RkCout{\ParVec}{\KerSize}{\COut}=\RSDPk{\ParVec}{\KerSize}$. That is, they show that the induced regularizer is independent of $\COut$ \textit{after} $\COut$ is above a certain large finite value that can be large as the dataset size. In contrast, we show the induced regularizer independent of $\COut$ for any $\COut \ge 1$. Further, our analysis holds regardless of the dataset and training loss.

\paragraph{Comparison to \citet{GLSS18}.}  \citet{GLSS18} characterized the induced regularizer for single-channel networks with full-dimensional kernels. As we discussed in Section \ref{sec:explicitbounds}, the conclusions and closed-form solution for $\KerSize = \Dim$ do not extend to the full class of two-layer linear convolutional networks. In contrast, our result (Theorem \ref{thm:tightness}) implies properties of the induced regularizer for networks with arbitrary kernel size.

Moreover, these prior works focus on networks with single-channel inputs, whereas we extend our results to networks with multi-channel inputs in the next section.

\section{Networks with multi-channel inputs}\label{sec:mult-input-channel}
While we focused on networks with single-channel inputs in the previous sections, we now expand our results to networks with multiple input channels (\eg RGB color channels). How do these conclusions change for multiple input channels? How does the induced regularizer,  now denoted as $\RkCinCoutOp{\KerSize}{\COut}{\CIn}$,  depend on the number input channels $\CIn$?

We again consider two layer convolutional networks akin to  Section~\ref{sec:background}. We first introduce additional notation: 
 The multi-channel inputs are denotes as $\X\in\bR^{\Dim\times \CIn}$, where $\CIn$ denotes the number of input channels. The convolutional first layer now has kernel size $\KerSize$, output channel size  $\COut$ and input channel size $\CIn$ with weights  denoted by 
a set of $\CIn$ matrices $\UU=\crl*{\U_\cin}_{\cin\in[\CIn]}$ with  $\U_{\cin}\in\bR^{\KerSize\times\COut}$. The output of this convolution layer $h(\UU;\X)\in\bR^{\Dim\times\COut}$ is  given as follows:
\begin{equation}
   \forall_{\cout\in\COut},\; \Index{h(\UU;\X)}{:,\cout}=\sum_{\cin=0}^{\CIn-1}\Index{\U_\cin}{:,\cout}\Conv \Index{\X}{:,\cin}.
\end{equation}
The second layer is the same as before: a single output linear layer with weights $\V\in \mathbb{R}^{\Dim \times \COut}$. We denote the equivalent linear predictor for this network by $\ParMatrixFn{(\UU,\V)}\in\bR^{\Dim\times\CIn}$.
Following similar calculations as for single input channels, $\ParMatrixFn{(\UU,\V)}$ in signal and Fourier domain (denoted as $\hat{\ParMatrixFn}{(\UU,\V)}=\F{\ParMatrixFn}{(\UU,\V)}$) are given as follows:
\begin{equation}\label{eq:w-uv-multiinput}
 \forall_{\cin\in[\CIn]},\; \ParMatrixFn{(\UU,\V)}[:,\cin]=\sum_{\cout=0}^{\COut-1} \prn*{\U_\cin[:,\cout]\Conv\V[:,\cout]^{\downarrow }}^\downarrow,\;\;\text{and}\;\;
    \hat{\ParMatrixFn}{(\UU,\V)}[:,\cin]=\diag(\hat{\U}_\cin\hat{\V}^\top).
\end{equation}

For multi-channel inputs, the set of all linear predictors is the space of matrices  $\W\in\bR^{\Dim\times\CIn}$, and we define the induced complexity measure over this matrix space as follows:
\begin{equation}\label{eq:R-multiinput}
\RkCinCout{\W}{\KerSize}{\COut}{\CIn} := \inf\limits_{\UU, \V} \sum\limits_{\cin\in[\CIn]}\norm*{\U_\cin}^2 + \norm*{\V}^2\;
\quad\st \quad\ParMatrixFn{(\UU,\V)} = \ParMatrix.
\end{equation}

\subsection{Role of output channel size \texorpdfstring{$\boldsymbol{\COut}$}{C}} 

For multi-channel inputs, we first observe that multiple output channels can be necessary to realize all linear maps. To see this, we show that the sub-network corresponding to each output channel can realize a matrix in $\bR^{\Dim\times\CIn}$ of rank at most $\KerSize$, which places an upper bound on the total rank achievable by the full network (see a proof in Appendix \ref{appendix:proofsmulti-input-realize}). This implies the following lemma: 
\begin{restatable}{lem}{realizability}
\label{lemma:realizing}
For any $\KerSize,\COut$ and $\CIn$, in order for the 
the network represented by $\ParMatrixFn{(\UU,\V)}$ in eq.~\eqref{eq:w-uv-multiinput} to realize all linear maps in $\bR^{\Dim\times\CIn}$  it is necessary that $\KerSize\cdot\COut\ge \min\crl{\CIn,\Dim}$.
\end{restatable}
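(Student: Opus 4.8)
The plan is to pass to the Fourier domain via eq.~\eqref{eq:w-uv-multiinput} and show that every linear map realizable by such a network has rank at most $\KerSize\cdot\COut$; the stated necessary condition $\KerSize\cdot\COut\ge\min\{\CIn,\Dim\}$ then follows because $\bR^{\Dim\times\CIn}$ contains matrices of rank $\min\{\CIn,\Dim\}$. Writing $\hat{\U}_\cin=\F_\KerSize\U_\cin$ and $\hat{\V}=\F\V$, eq.~\eqref{eq:w-uv-multiinput} expands the Fourier-domain predictor as a sum over output channels, $\hat{\ParMatrixFn}{(\UU,\V)}=\sum_{\cout=0}^{\COut-1}\Matrix{G}_\cout$, where the $\cout$-th summand $\Matrix{G}_\cout\in\bC^{\Dim\times\CIn}$ is the contribution of the $\cout$-th sub-network and has $\cin$-th column equal to $\hat{\U}_\cin[:,\cout]\Entrywise\hat{\V}[:,\cout]$.

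The crux of the argument is the estimate $\rank(\Matrix{G}_\cout)\le\KerSize$. I would factor $\Matrix{G}_\cout=\diag\prn*{\hat{\V}[:,\cout]}\,\Matrix{A}_\cout$, where $\Matrix{A}_\cout\in\bC^{\Dim\times\CIn}$ is the matrix whose $\cin$-th column is $\hat{\U}_\cin[:,\cout]=\F_\KerSize\U_\cin[:,\cout]$. Every column of $\Matrix{A}_\cout$ therefore lies in the $\KerSize$-dimensional column space of $\F_\KerSize$; equivalently $\Matrix{A}_\cout=\F_\KerSize\Matrix{B}_\cout$ where $\Matrix{B}_\cout\in\bR^{\KerSize\times\CIn}$ has $\cin$-th column $\U_\cin[:,\cout]$. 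Hence $\rank(\Matrix{A}_\cout)\le\KerSize$, and since left multiplication by the diagonal matrix $\diag\prn*{\hat{\V}[:,\cout]}$ cannot increase rank, $\rank(\Matrix{G}_\cout)\le\KerSize$.

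It remains to assemble the pieces. Subadditivity of rank gives $\rank\prn*{\hat{\ParMatrixFn}{(\UU,\V)}}\le\sum_{\cout}\rank(\Matrix{G}_\cout)\le\COut\KerSize$, and since the unitary DFT matrix $\F$ is invertible we get $\rank\prn*{\ParMatrixFn{(\UU,\V)}}=\rank\prn*{\hat{\ParMatrixFn}{(\UU,\V)}}\le\COut\KerSize$. Consequently, if the network realizes all linear maps in $\bR^{\Dim\times\CIn}$ then it realizes one of full rank $\min\{\CIn,\Dim\}$, which forces $\KerSize\cdot\COut\ge\min\{\CIn,\Dim\}$, as claimed. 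I do not expect a genuine obstacle: the only step requiring care is recognizing the factorization $\Matrix{G}_\cout=\diag\prn*{\hat{\V}[:,\cout]}\F_\KerSize\Matrix{B}_\cout$, which exposes the rank-$\KerSize$ structure of each output channel; everything else reduces to elementary properties of matrix rank and the invertibility of $\F$.
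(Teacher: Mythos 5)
Your proof is correct and follows essentially the same route as the paper: both arguments decompose the predictor as a sum over output channels, show each channel's contribution has rank at most $\KerSize$ (the paper via the factorization $\widetilde{\V}_\cout\,\underline{\U}_\cout$ with the $\Dim\times\KerSize$ circulant-slice matrix in the signal domain, you via $\diag\prn*{\hat{\V}[:,\cout]}\F_\KerSize\Matrix{B}_\cout$ in the Fourier domain), and conclude by subadditivity of rank and the existence of full-rank targets. The paper itself remarks that the same argument can be carried out with the Fourier representation, which is exactly what you have done.
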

In contrast to single input channels, Lemma \ref{lemma:realizing} demonstrates that, the model class realized by linear convolutional networks over  multi-channel inputs, and consequently the induced regularizer, \textit{does}  depend on number of output channels $\COut$. Nonetheless, similar to single input channel networks, we can again obtain an SDP relaxation $\RSDPkCin{\W}{\KerSize}{\CIn}$ for $\RkCinCout{\W}{\KerSize}{\COut}{\CIn}$ that is independent of $\COut$.

\subsection{SDP relaxation for multi-channel input networks}\label{appendix:proofsmulti-input-SDP}
The SDP relaxation for $\RkCinCoutOp{\KerSize}{\COut}{\CIn}$ is derived  similarly for networks with a single input channel (see additional details in Appendix \ref{appendix:proofsmulti-input}). For  $\UU=\{\U_\cin\in\bR^{\KerSize\times\COut}\}_{\cin\in[\CIn]},\V\in\bR^{\Dim\times\COut}$, we can specify the objective and constraints of \eqref{eq:R-multiinput} as linear functions of a rank $\COut$ positive semidefinite matrix $\SDPVar\in\bR^{(\Dim+\KerSize\cdot\CIn)\times(\Dim+\KerSize\cdot\CIn)}$ that represents:
 \begin{align*}
    \SDPVar=\left[\!\!\begin{array}{c} \Matrix{U}_0 \\  \ldots \\ \Matrix{U}_{\CIn} \\ \Matrix{V} \end{array}\!\!\right]\!\!\!\begin{array}{c}\begin{array}{ccccc}[\Matrix{U}_0^{\top} &  \ldots & \Matrix{U}_{\CIn}^{\top} & \Matrix{V}^{\top}\! ]\\&&&&\\&&&&\\&&&&\end{array}\end{array} \succcurlyeq 0.
\end{align*} 
We also define Hermitian matrices $\Matrix{A}_{d,\cin}^\text{real},\Matrix{A}_{d,\cin}^\text{img}\in\bR^{(\Dim+\KerSize\cdot\CIn)\times(\Dim+\KerSize\cdot\CIn)}$ for $d\in[\Dim], \cin\in[\CIn]$ as follows. Let  $\Q_d=\conj{\F}_K^\top\e_d\e_d^\top\conj{\F}$; these matrices are given by
\footnotesize{
 \begin{align*}
     \Matrix{A}_{d,0}^\text{real} &= \!\!\!\!\begin{array}{c}\left[\!\!\begin{array}{cccc;{2pt/2pt}c}  &  & & &  \Q_d \\  &  & \Matrix{0}_{(\CIn \cdot \KerSize) \times (\CIn \cdot \KerSize)}& & \Matrix{0} \\  &  &  & & \vdots \\
     &  & & & \Matrix{0} \\ 
    \hdashline[2pt/2pt] \rule{0pt}{10pt}
    \conj{\Q}_d & \Matrix{0} & \ldots & \Matrix{0} & \Matrix{0}_{\Dim \times \Dim}  \end{array}\!\!\right]\\\;\end{array}, 
    &\Matrix{A}_{d,0}^\text{img} &= \!\!\!\!\begin{array}{c}\left[\!\!\begin{array}{cccc;{2pt/2pt}c}  &  & & &  i\cdot\Q_d \\  &  & \Matrix{0}_{(\CIn \cdot \KerSize) \times (\CIn \cdot \KerSize)} & & \Matrix{0} \\  &  & & & \vdots \\ 
    &  & & & \Matrix{0} \\ 
    \hdashline[2pt/2pt] \rule{0pt}{10pt}
    -i\cdot\conj{\Q}_d & \Matrix{0} & \ldots & \Matrix{0} & \Matrix{0}_{\Dim \times \Dim}  \end{array}\!\!\right]\\\;\end{array}, \\
    %  \Matrix{A}_{d,1}^\text{real} &= \!\!\!\!\begin{array}{c}\left[\!\!\begin{array}{cccc;{2pt/2pt}c}  &  & & & \Matrix{0}   \\  &  & \Matrix{0}_{(\CIn \cdot \KerSize) \times (\CIn \cdot \KerSize)}& & \Q_d \\  
    %  &  & & & \vdots \\ &  & & & \Matrix{0} \\ 
    % \hdashline[2pt/2pt] \rule{0pt}{10pt}
    % \Matrix{0} & \conj{\Q}_d & \ldots & \Matrix{0} & \Matrix{0}_{\Dim \times \Dim}  \end{array}\!\!\right]\\\;\end{array},
    % & \Matrix{A}_{d,1}^\text{img} &= \!\!\!\!\begin{array}{c}\left[\!\!\begin{array}{cccc;{2pt/2pt}c}  &  & & & \Matrix{0}   \\  &  & \Matrix{0}_{(\CIn \cdot \KerSize) \times (\CIn \cdot \KerSize)}& & i\cdot\Q_d \\  \Matrix{0} 
    % \\ &  & & & \vdots \\ &  & & & \Matrix{0} \\ 
    % \hdashline[2pt/2pt] \rule{0pt}{10pt}
    % \Matrix{0} & -i\cdot\conj{\Q}_d & \ldots & \Matrix{0} & \Matrix{0}_{\Dim \times \Dim}  \end{array}\!\!\right]\\\;\end{array},\\
    &\vdots&\vdots&\\
    \Matrix{A}_{d,\CIn}^\text{real} &= \!\!\!\!\begin{array}{c}\left[\!\!\begin{array}{cccc;{2pt/2pt}c}  &  & & &  \Matrix{0} \\  &  &\Matrix{0}_{(\CIn \cdot \KerSize) \times (\CIn \cdot \KerSize)}  & & \Matrix{0} \\  &  & & & \vdots  \\
    &  & & & \Q_d  \\ 
    \hdashline[2pt/2pt] \rule{0pt}{10pt}
    \Matrix{0} & \Matrix{0} & \ldots & \conj{\Q}_d & \Matrix{0}_{\Dim \times \Dim}  \end{array}\!\!\right]\\\;\end{array},
    &\Matrix{A}_{d,\CIn}^\text{img} &= \!\!\!\!\begin{array}{c}\left[\!\!\begin{array}{cccc;{2pt/2pt}c}  &  & & &  \Matrix{0} \\  &  &\Matrix{0}_{(\CIn \cdot \KerSize) \times (\CIn \cdot \KerSize)} & & \Matrix{0} \\  &  &  & & \vdots  \\
    &  & & & i\cdot\Q_d  \\ 
    \hdashline[2pt/2pt] \rule{0pt}{10pt}
    \Matrix{0} & \Matrix{0} & \ldots & -i\cdot\conj{\Q}_d & \Matrix{0}_{\Dim \times \Dim}  \end{array}\!\!\right]\\\;\end{array}.
 \end{align*}}
 \normalsize{}
 %We also provide a more formal description of these matrices in Appendix \ref{appendix:proofsmulti-input}. 

\noindent We now have the following relaxation of $\RkCinCout{\ParMatrix}{\KerSize}{\COut}{\CIn}$ in terms of Fourier coefficients  $\hat{\W}=\F\W$:
 \begin{equation}
 \label{opt:SDPmulti-channel}
 \begin{aligned}
     \RSDPkCin{\ParMatrix}{\KerSize}{\CIn}\; = \; \min_{\SDPVar \succcurlyeq 0} \quad & \langle \SDPVar, \Matrix{I} \rangle  \\
     \textrm{s.t.} \quad & \forall d \in [\Dim], \cin\in[\CIn]  \; \; \langle \SDPVar, \Matrix{A}_{d,\cin}^\text{real} \rangle = 2  \text{Re}(\hat{\ParMatrix}[d,r]) \\
     & \forall d \in [\Dim], \cin\in[\CIn] \; \; \langle \SDPVar, \Matrix{A}_{d, \cin}^\text{img} \rangle = 2  \text{Im}(\hat{\ParMatrix}[d,r]).
 \end{aligned}
 \end{equation}
 We can check that the SDP in \eqref{opt:SDPmulti-channel} with a rank constraint of $\rank{(\SDPVar)}\le\COut$ is equivalent $\RkCinCout{\ParMatrix}{\KerSize}{\COut}{\CIn}$ and the SDP thus provides a lower bound: \ie $\forall_{\W},\; \RkCinCout{\ParMatrix}{\KerSize}{\COut}{\CIn}\ge \RSDPkCin{\ParMatrix}{\KerSize}{\CIn}$.

% Due to space constraints, we defer the SDP relaxation to Appendix \ref{appendix:proofsmulti-input-SDP}. 

 \subsection{Tightness of SDP Relaxation}
 
Unlike networks with single channel input, the SDP relaxation here is not always tight when $\CIn>1$, since a sufficiently large $\COut$ is required to merely realize all matrix-valued linear function over the input space. We can however show a weaker form SDP tightness from the KKT conditions when there are sufficiently many output channels: 
\begin{restatable}{lem}{multichanneltightness}
\label{lemma:multichanneltightness}
For any $\W\in\bR^{\Dim\times\CIn}$, and any $\COut\ge\CIn \KerSize$, it holds that $\RkCinCout{\ParMatrix}{\KerSize}{\COut}{\CIn} = \RSDPkCin{\W}{\KerSize}{\CIn}$.
\end{restatable}

Note that the above bound on $\COut$ for SDP tightness is not sharp, as we showed for $\CIn = 1$ in Theorem~\ref{thm:tightness}. Based on our insights from  the proof of single-channel SDP tightness in Theorem~\ref{thm:tightness} and additional empirical evidence in Appendix~\ref{appendix:experiments}, we conjecture that SDP tightness  holds when $\COut \ge \CIn$: 
\begin{conjecture}
\label{conj:multichanneltightness} 
 For any $\W\in\bR^{\Dim\times\CIn}$, and any $\COut\ge\CIn$, it holds that $\RkCinCout{\ParMatrix}{\KerSize}{\COut}{\CIn} = \RSDPkCin{\W}{\KerSize}{\CIn}$.
\end{conjecture}

In the next subsection, we prove Conjecture \ref{conj:multichanneltightness} in the special cases of $\KerSize=1$ and $\KerSize=\Dim$. As a consequence, we show that once $\COut$ is large enough to realize all linear maps, $\RkCinCout{\W}{\KerSize}{\COut}{\CIn}$ can be expressed as interesting closed form norms independent of  $\COut$ in these special cases. 

\subsection{Induced regularizer when \texorpdfstring{$\boldsymbol{\KerSize=1}$}{K=1} and \texorpdfstring{$\boldsymbol{\KerSize=D}$}{K=D}}\label{sec:multi-input-explicit}
\begin{restatable}{theorem}{multiinputone}\label{thm:multichannelkersize1}
For any $\W\in\bR^{\Dim\times\CIn}$, and any $\COut\ge\min\crl{\CIn,\Dim}$, the induced regularizer for $\KerSize=1$ is given by the scaled nuclear norm $\|.\|_*$: \[\RkCinCout{\ParMatrix}{1}{\COut}{\CIn}=2\sqrt{\Dim}\norm*{{\ParMatrix}}_*=2\sqrt{\Dim}\norm*{\hat{\ParMatrix}}_*.\] 
\end{restatable}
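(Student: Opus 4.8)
The plan is to exploit the fact that with $\KerSize=1$ the convolution collapses to scalar multiplication, which turns the representation-cost optimization in eq.~\eqref{eq:R-multiinput} into a plain \emph{real} matrix-factorization problem; the statement then follows from the classical identity relating $\tfrac12(\|\cdot\|_F^2+\|\cdot\|_F^2)$-minimizing factorizations to the nuclear norm, together with unitarity of $\F$.

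\textbf{Step 1 (reduction to matrix factorization).} When $\KerSize=1$, each $\U_\cin[:,\cout]\in\bR$ is a scalar, say $\M[\cout,\cin]$, and Definition~\ref{def:conv} gives $u\Conv\x=\tfrac{1}{\sqrt{\Dim}}u\,\x$. Substituting into eq.~\eqref{eq:w-uv-multiinput} (the two flips $\cdot^\downarrow$ cancel) yields $\ParMatrixFn(\UU,\V)[:,\cin]=\tfrac{1}{\sqrt{\Dim}}\sum_{\cout}\M[\cout,\cin]\,\V[:,\cout]$, i.e.\ $\ParMatrixFn(\UU,\V)=\tfrac{1}{\sqrt{\Dim}}\,\V\M$ with $\M\in\bR^{\COut\times\CIn}$, while the objective is exactly $\sum_\cin\|\U_\cin\|^2+\|\V\|^2=\|\M\|_F^2+\|\V\|_F^2$. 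Hence
\[
\RkCinCout{\W}{1}{\COut}{\CIn}=\min\Big\{\,\|\V\|_F^2+\|\M\|_F^2\;:\;\V\in\bR^{\Dim\times\COut},\ \M\in\bR^{\COut\times\CIn},\ \tfrac{1}{\sqrt{\Dim}}\V\M=\W\,\Big\}.
\]
Writing $\V=\sqrt{\Dim}\,\Matrix{A}$, the constraint becomes $\Matrix{A}\M=\W$ and the objective $\Dim\|\Matrix{A}\|_F^2+\|\M\|_F^2$.

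\textbf{Step 2 (lower bound, all $\COut$).} For any feasible $(\Matrix{A},\M)$, AM--GM gives $\Dim\|\Matrix{A}\|_F^2+\|\M\|_F^2\ge 2\sqrt{\Dim}\,\|\Matrix{A}\|_F\|\M\|_F$, and the standard inequality $\|\Matrix{A}\M\|_*\le\|\Matrix{A}\|_F\|\M\|_F$ (which follows from $\|\Matrix{A}\M\|_*=\max_{\|\Matrix{N}\|_{\mathrm{op}}\le 1}\langle\M,\Matrix{A}^\top\Matrix{N}\rangle$ and two applications of Cauchy--Schwarz, using $\|\Matrix{A}^\top\Matrix{N}\|_F\le\|\Matrix{A}\|_F$) yields $\RkCinCout{\W}{1}{\COut}{\CIn}\ge 2\sqrt{\Dim}\,\|\W\|_*$ for \emph{every} $\COut$. \textbf{Step 3 (upper bound, $\COut\ge\min\{\CIn,\Dim\}$).} Let $\W=\Matrix{U}_W\boldsymbol{\Sigma}_W\Matrix{V}_W^\top$ be a thin SVD with $\rho:=\operatorname{rank}(\W)\le\min\{\Dim,\CIn\}\le\COut$. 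Take $\Matrix{A}=\Dim^{-1/4}\Matrix{U}_W\boldsymbol{\Sigma}_W^{1/2}$ and $\M=\Dim^{1/4}\boldsymbol{\Sigma}_W^{1/2}\Matrix{V}_W^\top$, padded with zero columns/rows to widths $\COut$; then $\Matrix{A}\M=\W$ and $\Dim\|\Matrix{A}\|_F^2=\sqrt{\Dim}\,\|\W\|_*=\|\M\|_F^2$, so the objective equals $2\sqrt{\Dim}\,\|\W\|_*$. Combined with Step~2 this gives $\RkCinCout{\W}{1}{\COut}{\CIn}=2\sqrt{\Dim}\,\|\W\|_*$. \textbf{Step 4 (Fourier form).} Since $\F$ is unitary, $\hat{\W}=\F\W$ has the same singular values as $\W$, hence $\|\hat{\W}\|_*=\|\W\|_*$, giving the second equality.

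The argument is essentially routine once Step~1 is in place; the only care needed is bookkeeping of the $\sqrt{\Dim}$ factors and the observation that, in contrast to general $\KerSize$ where the $\COut$-independent SDP relaxation need not be tight (cf.\ Lemma~\ref{lemma:realizing}), at $\KerSize=1$ the problem becomes an \emph{unconstrained real} matrix factorization --- in particular no conjugate-symmetry constraint on $\hat{\V}$ ever enters, since we never leave the signal domain. As a byproduct one also gets that the $\COut$-independent SDP relaxation of Appendix~\ref{appendix:proofsmulti-input-SDP} is tight at $\KerSize=1$ whenever $\COut\ge\min\{\CIn,\Dim\}$.
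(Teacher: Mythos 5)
Your proposal is correct and takes essentially the same route as the paper: reduce the $\KerSize=1$ case to the real factorization $\W=\tfrac{1}{\sqrt{\Dim}}\V\M$ with objective $\norm{\V}^2+\norm{\M}^2$, identify the optimum with $2\sqrt{\Dim}\norm{\W}_*$ via the variational characterization of the nuclear norm (using the SVD to show $\COut\ge\min\crl{\CIn,\Dim}$ suffices), and pass to $\hat{\W}$ by unitarity of $\F$. The only difference is that you prove the factorization identity inline (AM--GM plus $\norm{\Matrix{A}\M}_*\le\norm{\Matrix{A}}_F\norm{\M}_F$) where the paper cites it.
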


\begin{restatable}{theorem}{multiinputD}\label{thm:multichannelkersizeD}
For  any $\W\in\bR^{\Dim\times\CIn}$, and any $\COut\ge1$, the induced regularizer for $\KerSize=\Dim$ is given by
\[\RkCinCout{\ParMatrix}{\Dim}{\COut}{\CIn}=2\norm*{\hat{\W}}_{2,1}:=\sum_{d=0}^{\Dim - 1} \sqrt{\sum_{\cin=0}^{\CIn-1}\big|\Index{\hat{\ParMatrix}}{d,\cin}\big|^2}.\]
\end{restatable}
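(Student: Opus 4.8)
The plan is to compute $\RkCinCout{\W}{\Dim}{\COut}{\CIn}$ directly, exploiting the fact that for $\KerSize=\Dim$ the optimization in eq.~\eqref{eq:R-multiinput} decouples completely across Fourier frequencies; this is the multi-channel analogue of the argument behind Lemma~\ref{lemma:kersizeD}. First I would pass to the Fourier domain: since $\F_\Dim=\F$ is unitary, Parseval gives $\sum_\cin\norm{\U_\cin}^2+\norm{\V}^2=\sum_\cin\norm{\hat{\U}_\cin}^2+\norm{\hat{\V}}^2$, where $\hat{\U}_\cin=\F\U_\cin$ and $\hat{\V}=\F\V$ now range over all complex $\Dim\times\COut$ matrices subject only to the conjugate-symmetry constraint that characterizes DFTs of real arrays, \ie $\Index{\hat{\U}_\cin}{d,:}=\conj{\Index{\hat{\U}_\cin}{\Dim-d,:}}$ and $\Index{\hat{\V}}{d,:}=\conj{\Index{\hat{\V}}{\Dim-d,:}}$ for all $d$. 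By eq.~\eqref{eq:w-uv-multiinput}, the constraint $\ParMatrixFn{(\UU,\V)}=\W$ reads $\sum_{\cout}\Index{\hat{\U}_\cin}{d,\cout}\Index{\hat{\V}}{d,\cout}=\Index{\hat{\W}}{d,\cin}$ for every $d$ and $\cin$. Both the objective $\sum_\cin\sum_d\norm{\Index{\hat{\U}_\cin}{d,:}}^2+\sum_d\norm{\Index{\hat{\V}}{d,:}}^2$ and these constraints separate over $d$, while conjugate symmetry only couples $d$ with $\Dim-d$; moreover, the constraint at $\Dim-d$ is the entrywise complex conjugate of the one at $d$, hence automatically satisfied once the constraint at $d$ and conjugate symmetry are imposed. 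So the problem splits into independent subproblems, one per orbit of $d\mapsto\Dim-d$.

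Second, I would solve the single-frequency subproblem: given a target $\Vector{t}\in\bC^\CIn$ (a row of $\hat{\W}$), minimize $\sum_\cin\norm{\Vector{a}_\cin}^2+\norm{\Vector{b}}^2$ over $\Vector{a}_\cin,\Vector{b}\in\bC^\COut$ subject to $\Vector{a}_\cin^\top\Vector{b}=\Index{\Vector{t}}{\cin}$ for all $\cin$. For the lower bound, Cauchy--Schwarz gives $\abs{\Index{\Vector{t}}{\cin}}\le\norm{\Vector{a}_\cin}\norm{\Vector{b}}$, so $\sum_\cin\norm{\Vector{a}_\cin}^2+\norm{\Vector{b}}^2\ge\norm{\Vector{t}}^2/\norm{\Vector{b}}^2+\norm{\Vector{b}}^2\ge2\norm{\Vector{t}}$ by AM--GM (the case $\Vector{t}=\Vector{0}$ being trivial). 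For the matching upper bound, I would pick any unit $\Vector{e}\in\bC^\COut$ and set $\Vector{b}=\sqrt{\norm{\Vector{t}}}\,\Vector{e}$ and $\Vector{a}_\cin=(\Index{\Vector{t}}{\cin}/\sqrt{\norm{\Vector{t}}})\,\conj{\Vector{e}}$, which is feasible and attains $2\norm{\Vector{t}}$; since this uses a single shared direction, $\COut=1$ already suffices (consistent with Lemma~\ref{lemma:realizing}, which here only demands $\KerSize\COut\ge\min\{\CIn,\Dim\}$). The identical computation over real vectors handles the self-conjugate frequencies $d=0$ (and $d=\Dim/2$ when $\Dim$ is even), where $\Vector{t}$ and the corresponding weight rows are real.

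Third, I would reassemble. An orbit $\{d,\Dim-d\}$ with $d\ne\Dim-d$ contributes twice the single-frequency value, $2\cdot2\norm{\Index{\hat{\W}}{d,:}}=2\norm{\Index{\hat{\W}}{d,:}}+2\norm{\Index{\hat{\W}}{\Dim-d,:}}$ using $\norm{\Index{\hat{\W}}{d,:}}=\norm{\Index{\hat{\W}}{\Dim-d,:}}$, while each self-conjugate frequency contributes $2\norm{\Index{\hat{\W}}{d,:}}$. Summing the orbit optima, $\RkCinCout{\W}{\Dim}{\COut}{\CIn}=2\sum_{d=0}^{\Dim-1}\norm{\Index{\hat{\W}}{d,:}}=2\norm{\hat{\W}}_{2,1}$. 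To close the argument, I would check that gluing the orbitwise optimizers while enforcing $\Index{\hat{\U}_\cin}{\Dim-d,:}=\conj{\Index{\hat{\U}_\cin}{d,:}}$ (and similarly for $\hat{\V}$) yields conjugate-symmetric $\hat{\U}_\cin,\hat{\V}$, so the inverse DFTs $\U_\cin,\V$ are real, and by Parseval the stated value is attained; thus the infimum in eq.~\eqref{eq:R-multiinput} is in fact a minimum.

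The only non-routine part is the conjugate-symmetry bookkeeping: verifying that the constraints at $d$ and $\Dim-d$ are complex conjugates so the problem genuinely decouples into orbits, treating the self-conjugate frequencies as real subproblems with the same optimal value $2\norm{\Vector{t}}$, and confirming that the orbitwise optimizers glue into real-valued weights without changing the cost. Everything else reduces to Parseval, Cauchy--Schwarz, and AM--GM. As a sanity check, specializing to $\CIn=1$ collapses $\norm{\hat{\W}}_{2,1}$ to $\norm{\hat{\w}}_1$ and recovers $\RkCout{\w}{\Dim}{1}=2\norm{\hat{\w}}_1$ from Lemma~\ref{lemma:kersizeD}.
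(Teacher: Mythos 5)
Your proposal is correct, and its upper-bound half coincides with the paper's: both construct, for $\COut=1$, Fourier-domain weights $\hat{\V}[d]=\sqrt{\norm{\hat{\W}[d,:]}}$ and $\hat{\U}_\cin[d]=\hat{\W}[d,\cin]/\sqrt{\norm{\hat{\W}[d,:]}}$, which are conjugate-symmetric and hence come from real weights, giving $\RkCinCout{\W}{\Dim}{\COut}{\CIn}\le 2\norm{\hat{\W}}_{2,1}$. Where you genuinely diverge is the lower bound. The paper certifies it through its multi-channel SDP relaxation: it writes down the dual of $\RSDPkCin{\W}{\Dim}{\CIn}$, picks the dual matrix with rows $\hat{\W}[d,:]/\norm{\hat{\W}[d,:]}$, and verifies the semidefinite feasibility constraint via an operator-norm computation, so the value $2\norm{\hat{\W}}_{2,1}$ is obtained by weak duality. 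You instead bound any feasible real $(\UU,\V)$ directly: by Parseval the cost splits over frequencies, the constraint $\sum_{\cout}\hat{\U}_\cin[d,\cout]\hat{\V}[d,\cout]=\hat{\W}[d,\cin]$ holds separately at each $d$, and Cauchy--Schwarz plus AM--GM give $\sum_\cin\norm{\hat{\U}_\cin[d,:]}^2+\norm{\hat{\V}[d,:]}^2\ge 2\norm{\hat{\W}[d,:]}$ frequency by frequency (indeed, for the lower bound you do not even need the conjugate-symmetry orbit bookkeeping, since summing the per-frequency bound over all $d$ already suffices; the orbit/gluing care is only needed, and correctly carried out, on the upper-bound side to keep the weights real). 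Your route is more elementary and self-contained, makes transparent why a single output channel suffices at $\KerSize=\Dim$, and avoids setting up the block matrices $\Matrix{A}_{d,\cin}^{\text{real}},\Matrix{A}_{d,\cin}^{\text{img}}$ altogether; the paper's route costs that machinery but buys something extra it wants elsewhere, namely that the same dual certificate proves tightness of the SDP relaxation $\RSDPkCin{\W}{\Dim}{\CIn}$ for $\KerSize=\Dim$, keeping the multi-input case within the paper's unified SDP framework.
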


From Theorems~\ref{thm:multichannelkersize1}-\ref{thm:multichannelkersizeD} it is evident that the number of input channels $\CIn$ fundamentally changes the nature of induced complexity measure
in the function space and introduces additional structures along the input channels. Even in the simplest setting of scalar convolution kernels with $\KerSize=1$, the induced regularizer is no longer a Euclidean or RKHS norm, and is instead a richer nuclear norm that encourages low-rank properties. For the case of $\KerSize=\Dim$,  the induced regularizer is  group-sparse norm on the Fourier coefficients that encourages similar weighting across channels, while promoting sparsity across frequency components. In comparison to the $\ell_1$ norm of all Fourier coefficients, this group-sparse norm is a more structured inductive bias for multi-channel inputs. Additionally, like with the single input channel case, we also observe that the induced bias has a more intuitive and interesting interpretation in Fourier domain which is not directly observed in the signal domain.

\section{Experiments}\label{sec:experiments}

We now explicitly connect our findings to the implicit regularization of gradient descent. We formally state the following result paraphrased from \citet{LL20} that relate the asymptotic implicit bias of gradient descent to $\ell_2$ norm minimization of the parameters: 
\begin{thm*}{\citep[Paraphrased from ][]{LL20}} 
 Assume that $\Phi$ is  locally Lipschitz and positive homogeneous with order $L > 0$, \ie $\forall_{\bf{\theta},\alpha>0}$, $\Phi(\alpha\boldsymbol{\theta};.) = \alpha^L \Phi(\boldsymbol{\theta};.)$. Consider minimization of logistic or exponential loss over a separable binary classification dataset $\left\{(\Vector{x}_n, y_n)\right\}_{n=1}^N$ using a gradient flow trajectory denoted as $\b{\theta}(t)$. Under the assumption that the training data points are correctly classified in finite time (\ie mean logistic/exponential loss $\mathcal
 {L}(\b{\theta}(t_0))<1$ at finite $t_0$), the limit points of the direction of parameters of gradient flow correspond to a first order stationary point (KKT point) of the following  max--$\ell_2$ margin problem in parameter space:
\begin{equation}\label{eq:max-margin} 
\min\limits_{\Vector{\theta}}\norm{\Vector{\theta}}_2^2\;\st\; \forall_{n} y_n \Phi(\Vector{\theta}; \Vector{x}_n) \ge 1.
% \vspace{-5pt}
\end{equation} 
\end{thm*}
\begin{rem*}
With additional assumptions, \citet{LL20} also prove an analogous result for gradient descent. This result was further refined in \citet{ji2020directional}, which proves the directional convergence of gradient descent under additional structure on the model class. 
\end{rem*}

\noindent Based on these results, we thus expect the implicit bias from gradient descent to be related to the \textit{max}--$\R$--margin problem 
\[\min\limits_{f}\R_\Phi(f)\;\st\; \forall_{n} y_n f(\Vector{x}_n) \ge 1.\]
However, theoretically speaking, there is an important caveat: the theorem by \citet{LL20} shows convergence of the gradient flow direction to a \textit{stationary point} of the optimization problem in  eq.~\eqref{eq:max-margin}.

Nonetheless, if we overlook these caveats, our  findings would then have important implications for predictors learned from gradient descent. In particular, our result regarding the invariance of the induced regularizer with respect to the number of output channels suggests that the asymptotic behavior of gradient descent is similarly invariant to the number of output channels. We formalize this as a testable hypothesis.  
\begin{hypothesis}
\label{hyp:outputchannels}
For a separable binary classification task with $\CIn$ input channels, let $\w_{\text{GD}}$ be the predictor learned using stochastic gradient descent on a two-layer convolutional network with kernel size $\KerSize$, $\COut$ output channels, and $\CIn$ input channels (where $\w_{\text{GD}}$ is normalized to have unit margin on the training data). Then, as long as $\COut \ge \CIn$, the induced regularizer $\RkCinCout{\w_{\text{GD}}}{\KerSize}{\COut}{\CIn}$ is invariant in the number of output channels $\COut$.
\end{hypothesis}
We show experimental support for the hypothesis on small linearly separable subsets of MNIST (with $128$ images of size $28 \times 28$ balanced across 2 classes) and CIFAR-10 (with $512$ images of size $32 \times 32$ balanced across 2 classes) datasets.  Most of our experiments are for multi-channel linear convolutional networks trained using stochastic gradient descent. We also provide some experiments on ReLU networks, where we see support of our hypothesis well beyond our theoretical study. 

Throughout the experiments sections, since we cannot always compute $\RkCinCout{\w_{\text{GD}}}{\KerSize}{\COut}{\CIn}$, we approximate it using the weight norms of the trained network $\RkCinCoutHat{\w_{\text{GD}}}{\KerSize}{\COut}{\CIn}=\sum_{\UU,\V}\norm{\UU}^2+\norm{\V}^2$, where $\UU,\V$ here denote the weights of the trained network.\footnote{In theory $\hat{\R}$ only provides an upper bound on $\R$---but in case of predictors learned by SGD, upon checking instances where $\R$ has a closed form solution, we found that the approximation is quite accurate.} The experiments are deferred to Appendix~\ref{appendix:experiments} and we summarize the findings below. 
{\begin{compactenum}
\item \textit{Single input channel binary classification on MNIST.} In linear networks, we compare  the predictors learned by gradient descent for 
$\KerSize \in \left\{1, 3, 8, 16,  28\right\}$ and $\COut \in \left\{1, 2, 4, 8\right\}$ 
across $10$ runs with random initialization. We see that both the values of estimated regularizer
$\hat{\R}$ as well as the visualization of linear predictors in signal and frequency domain are nearly invariant to $\COut$ (the values overlap within one standard deviation across runs). 
\item \textit{$3$--input channel binary classification on CIFAR-10.} In a similar setup to MNIST, we compare $\KerSize \in \left\{1, 3, 8, 20\right\}$ and $\COut \in \left\{1,2, 3, 4, 8\right\}$.
As expected from our theory, we see differences in the induced regularizer $\hat{\R}$ for $\COut<3$, but observe invariance to $\COut$ once $\COut\ge 3$. 
\item \textit{ReLU networks for binary classification on MNIST.} Although our theory is only for linear networks, our hypothesis as stated above can also be tested on networks with non-linearity. We repeat our MNIST experiments on networks with ReLU non-linearity (with and without bias parameters). Interestingly, we  observe that the estimated induced regularizer $\hat{\R}$ is invariant to $\COut$ suggesting a broader scope for our hypothesis. 
% \item We also provide support for our theoretical findings about the role of kernel size. On both MNIST and CIFAR-10, we show that larger kernel sizes favor sparsity in the frequency domain and that the learned predictors experience similarity across input channels for smaller kernel sizes.  
\end{compactenum}}

\noindent These findings  support  Hypothesis \ref{hyp:outputchannels} beyond the scope of our theoretical results.

\section{Discussion}\label{sec:discussion}

% Towards understanding neural networks, the representation cost or the induced regularizer viewpoint provides an abstraction to separate capacity control in the parameter space from the function space implications of the resulting inductive bias. 
We showed that when minimizing $\ell_2$ norm of weights, the two basic architectural components of convolutional networks---number of output channels (width) and kernel size---have interesting effects even in the simple case of two-layer linear networks. Our results also inspire a broader hypothesis about the impact of number of output channels for networks learned with gradient descent, which we tested and provided support for in our experiments. 

Interesting directions for future work include proving tightness of the SDP relaxation for multiple input channels (formalized in Conjecture \ref{conj:multichanneltightness}); formally establishing the limiting behavior of gradient descent; and exploring architectural features such as pooling or multiple layers. Furthermore, it would be interesting to conduct an in-depth empirical investigation of the impact of non-linearity.

\bibliographystyle{plainnat}
\bibliography{bibliography-final.bib}

\clearpage
\appendix

\section{Proof of Theorem~\ref{thm:tightness}: SDP tightness} \label{appendix:tightness-main}
\maintheorem*

The high-level idea of the proof of Theorem \ref{thm:tightness} is to take an optimal solution $\SDPVar$ of the $\RSDPk{\w}{\KerSize}$ problem in \eqref{opt:SDP}, and  construct a rank 1 solution that obtains the same objective and satisfies the constraints. We reiterate the SDP formulation for easy reference:
\begin{equation}
\begin{array}{r@{\ }c@{\ }l}
    \!\!\!\!\!\RSDPk{\ParVec}{\KerSize}= & \min\limits_{\SDPVar \succcurlyeq 0}  & \langle \SDPVar, \Matrix{I} \rangle  \\
    &\st  & \forall_{d \in [\Dim]},  \langle \SDPVar, \Matrix{A}_d^\text{real} \rangle = 2  \Re (\hat{\ParVec}[d]) \\
    & &\forall_{d \in [\Dim]},  \langle \SDPVar, \Matrix{A}_d^\text{img} \rangle = 2  \Im(\hat{\ParVec}[d]).
\end{array}\tag{SDP}\label{eq:sdprepeat}
\end{equation}

\noindent The outline of the proof is detailed below:
\begin{compactenum}
\item In \ref{subsec:kkt-main}, we look at the KKT conditions  of \eqref{eq:sdprepeat} and show that Theorem~\ref{thm:tightness} follows from Lemma~\ref{lemma:additiveproperty}, which states that for all $\ba,\bb\in\bR^\KerSize$, there exists $\bc\in\bR^\KerSize$ such that $\ba\Conv\ba + \bb\Conv\bb=\bc\Conv\bc$, where the convolutions are w.r.t. $\Dim$.
\item In \ref{subsec:proofadditiveproperty} we provide the proof of Lemma~\ref{lemma:additiveproperty}, which is further subdivided as follows:
\begin{compactenum}
    \item \ref{subsec:proofspecialcasereduction} reduces Lemma~\ref{lemma:additiveproperty} to the special case where $\Dim=2\KerSize-1$.
    \item \ref{subsec:polyrep} introduces a polynomial representation of convolutions which leads to a reformulation of Lemma~\ref{lemma:additiveproperty} with $\Dim=2\KerSize-1$.
    \item \ref{subsec:proofinpolyrep} contains the core argument for existence of the desired $\bc\in\bR^\KerSize$ via reasoning about the roots of polynomial reformulation.
\end{compactenum}
\end{compactenum}

\subsection{KKT conditions for  \eqref{eq:sdprepeat}}\label{subsec:kkt-main}
The $\Dim$ constraints involving $\Re(\ParVec)$ correspond to a dual vector $\Vector{\lambda^{\text{real}}} \in \mathbb{R}^{\Dim}$; the $\Dim$ constraints involving $\Im(\ParVec)$ correspond to a dual vector $\Vector{\lambda^{\text{img}}} \in \mathbb{R}^{\Dim}$. To simplify these conditions, we take $\Vector{\lambda} \in \mathbb{C}^{\Dim}$ to be $\Vector{\lambda^{\text{real}}} + i \cdot \Vector{\lambda^{\text{img}}}$ (when $\Vector{\lambda}$ is a dual-optimal solution, $\Vector{\lambda}$ is also a Fourier transform of a real vector). The dual variable for the PSD constraint corresponds to a matrix $\Matrix{\Gamma} \succcurlyeq 0$. In this notation, the KKT conditions are primal feasibility, along with the following constraints:
\begin{align*}
    \Matrix{\Gamma} &= \Identity -  \left[\begin{array}{cc} \Matrix{0}_K & \conj{\Matrix{F}}_{\KerSize}^{\top} \Matrix{\Lambda} \conj{\Matrix{F}}  \\ \Matrix{F} \conj{\Matrix{\Lambda}} \Matrix{F}_{\KerSize}  & \Matrix{0}_{\Dim} \end{array}\right]\succcurlyeq 0 \\
    \Matrix{\SDPVar} \conj{\Matrix{\Gamma}} &= 0.
\end{align*}
Now, to simplify these conditions, suppose that $\SDPVar$ is rank $L$, in which case we can express it as $\SDPVar = \left[\begin{array}{c} \Matrix{U} \\\Matrix{V} \end{array}\right] \left[\begin{array}{cc} \Matrix{U}^{\top} & \Matrix{V}^{\top} \end{array}\right],$
where $\Matrix{U} \in \mathbb{R}^{\KerSize \times L}$ and $\Matrix{V} \in \mathbb{R}^{\Dim \times L}$. Using that $\left[\begin{array}{c} \Matrix{U} \\\Matrix{V} \end{array}\right]$ is full rank along with some algebraic manipulations, we obtain the following formulation of the KKT conditions: 
\begin{align*}
  \sum_{l=0}^L \hat{\Matrix{U}}[:, l] \Entrywise \hat{\Matrix{V}}[:, l] &= \hat{\ParVec} \label{eq:kkt1} \tag{KKT $1$}\\ 
  \left[\begin{array}{cc} \Matrix{0}_K & \conj{\Matrix{F}}_{\KerSize}^{\top} \Matrix{\Lambda} \conj{\Matrix{F}}  
  \\ \Matrix{F} \conj{\Matrix{\Lambda}} \Matrix{F}_{\KerSize}  & \Matrix{0}_{\Dim} \end{array}\right] &\preccurlyeq  \Identity_{\Dim + \KerSize}   \label{eq:kkt2}\tag{KKT $2$}\\
  \conj{\hat{\Matrix{V}}} &=  \conj{\Matrix{\Lambda}} \hat{\Matrix{U}} \label{eq:kkt3} \tag{KKT $3$}\\
  \hat{\Matrix{U}} &=  \Matrix{F}_K\conj{\Matrix{F}}_K^{\top}  \Matrix{\Lambda}  \conj{\hat{\Matrix{V}}}.\tag{KKT $4$}\label{eq:kkt4}
\end{align*}

The KKT conditions give useful properties of the solution $\SDPVar$. For $0 \le l \le L - 1$, we let $\Vector{u}_l = \Matrix{U}[:, l]$ and $\Vector{v}_l = \Matrix{V}[:, l]$. From \eqref{eq:kkt3}, we see that 
\[\forall_{l\in[L]},\;\;\hat{\Vector{v}}_l = \Vector{\lambda} \Entrywise \conj{\hat{\Vector{u}}_l}.\]  
Combining the above with \eqref{eq:kkt1}, we obtain:
\[\hat{\ParVec} = \sum_{l=0}^{L-1} \Vector{\lambda} \Entrywise \conj{\hat{\Vector{u}}_l} \Entrywise \hat{\Vector{u}}_l.\]

\noindent We now make the following assertion.
\begin{clm*} We claim that in order to prove Theorem~\ref{thm:tightness}, it suffices to find a vector $\Vector{u} \in \mathbb{R}^{\KerSize}$ such that: 
\begin{equation}
\label{eq:desired}
\conj{\hat{\Vector{u}}} \Entrywise  \hat{\Vector{u}}  = \sum_{l=0}^{L-1} \conj{\hat{\Vector{u}}_l} \Entrywise \hat{\Vector{u}}_l.\tag{CONV-REDUCTION}%onvolution Reduction}
\end{equation}
\end{clm*}
The most technical part of the proof is to show \eqref{eq:desired} which we show follows from Lemma~\ref{lemma:additiveproperty}. But before that we will first justify our above claim. 

\begin{proof}[Proof of claim] Assume \eqref{eq:desired} holds for $\Vector{u}\in\bR^\KerSize$. Let $\Vector{v} = \F^{-1}(\Vector{\lambda} \Entrywise \conj{\hat{\Vector{u}}})$ such that we have $\hat{\Vector{v}}=\Vector{\lambda}\Entrywise\conj{\hat{\Vector{u}}}$. We can take $\SDPVar^*$ to be $\left[\!\!\begin{array}{c} \Vector{u} \\ \Vector{v}  \end{array}\!\!\right] \!\!\!\!\begin{array}{c}[\!\!\begin{array}{cc} \Vector{u}^{\top} & \Vector{v}^{\top} \end{array}\!\!]\\\;\end{array}$. 

We can see that $(\Vector{u},\Vector{v})$ satisfies the following:
\begin{equation}
\hat{\Vector{u}} \Entrywise \hat{\Vector{v}} = \Vector{\lambda} \Entrywise \conj{\hat{\Vector{u}}} \Entrywise \hat{\Vector{u}} = \sum_{l=0}^{L-1} \Vector{\lambda} \Entrywise \conj{\hat{\Vector{u}}_l} \Entrywise \hat{\Vector{u}}_l = \hat{\ParVec}.
\end{equation}

Thus, $\SDPVar^*$ satisfies the feasibility condition for $\RSDPk{\ParVec}{\KerSize}$. Moreover, we show that the solution also achieves the optimum objective value for $\RkCout{\ParVec}{\KerSize}{\COut}$ as follows:
\begin{equation}
\begin{split}
  \langle\SDPVar^*,\Identity\rangle & = \norm{{\Vector{u}}}_2^2 + \norm{{\Vector{v}}}_2^2 = \norm{\hat{\Vector{u}}}_2^2 + \norm{\hat{\Vector{v}}}_2^2 \\
  & = \sum_{d=0}^{\Dim - 1} |\hat{\Vector{u}}_d|^2 + \sum_{d=0}^{\Dim - 1} |\hat{\Vector{v}}_d|^2 \\
  &\overset{(a)}=  \sum_{d=0}^{\Dim - 1} \sum_{l=0}^{L-1} |(\hat{\Vector{u}}_l)_d|^2 + \sum_{d=0}^{\Dim - 1} \sum_{l=0}^{L-1} |\Vector{\lambda}_d|^2 |(\hat{\Vector{u}}_l)_d|^2 \\
  &\overset{(b)}= \sum_{l=0}^{L-1}  \sum_{d=0}^{\Dim - 1} |(\hat{\Vector{u}}_l)_d|^2 + \sum_{l=0}^{L-1} \sum_{d=0}^{\Dim - 1}  |(\hat{\Vector{v}}_l)_d|^2 \\
  &= \norm{\U}^2 + \norm{\V}^2 \\
  &= \langle \SDPVar, \Identity \rangle=\RSDPk{\ParVec}{K}\\
  \end{split}
\end{equation}
where $(a)$ follows from \eqref{eq:desired} and using $\hat{\Vector{v}}=\Vector{\lambda}\Entrywise\conj{\hat{\Vector{u}}}$, which follows by definition of $\Vector{v}$, and $(b)$ follows from \eqref{eq:kkt3} that $\hat{\Vector{v}}_l=\Vector{\lambda}\Entrywise\conj{\hat{\Vector{u}}}_l$.

Thus, we have show that \eqref{eq:desired} implies that there exists a rank $1$ solution $\SDPVar^*$ that achieves the same objective value as $\SDPVar$ and satisfies the constraints, and hence is minimizer of the SDP as desired. 
\end{proof} 

\paragraph{Proof of \eqref{eq:desired} from Lemma~\ref{lemma:additiveproperty}} 
It is more convenient to write  \eqref{eq:desired} in signal space.  Taking  inverse Fourier transforms of \eqref{eq:desired}, we need to show that  given $\Vector{u}_l=\F_\KerSize^{\top}\hat{\Vector{u}}_l\in\bR^\KerSize$ there exists $\Vector{u}\in\bR^\KerSize$ such that the following  holds: 
\begin{equation}
\label{eq:mainingredient}
  \Vector{u} \Conv \Vector{u} = \sum_{l=0}^{L-1} \Vector{u}_l \Conv \Vector{u}_l,
\end{equation}
where the convolutions are taken in $\Dim$ dimensional space, \ie $\{\Vector{u}_l\}_l$ are padded with $\Dim-\KerSize$ zeros so that $\Vector{u}_l \Conv \Vector{u}_l\in\bR^D$.  \newline

\noindent We can now see that \ref{eq:mainingredient} (and hence \eqref{eq:desired}) indeed holds by recursively applying Lemma~\ref{lemma:additiveproperty}, which was  stated earlier in the main text and is  reiterated below. 
 \mainlemma*
 
%\noindent The proof of Lemma~\ref{lemma:additiveproperty} is provided in Section~\ref{subsec:proofadditiveproperty}. 
 
 We have thus far shown that Theorem \ref{thm:tightness} follows from Lemma \ref{lemma:additiveproperty}: we first established that it suffices to find  $\b{u} \in \mathbb{R}^{\KerSize}$ satisfying \eqref{eq:mainingredient} (or equivalently \eqref{eq:desired}), which in turn  holds from recursively applying Lemma \ref{lemma:additiveproperty}. 
  Now, it suffices to prove Lemma~\ref{lemma:additiveproperty}; we prove this lemma in the following subsections.

\subsection{Proof of Lemma~\ref{lemma:additiveproperty}: the convolutional property}\label{subsec:proofadditiveproperty}
% \mainlemma*
For $\KerSize=\Dim$, Lemma~\ref{lemma:additiveproperty} follows easily from the Fourier space representation, since the Fourier space representation of the vector $\Vector{c}$ can be explicitly constructed as the square root of the Fourier representation of $\Vector{a} \Conv \Vector{a} + \Vector{b} \Conv \Vector{b}$. However, this construction does not generalize to kernel sizes $\KerSize<\Dim$, and Lemma \ref{lemma:additiveproperty} is thus non-trivial in general. In fact, the vector $\Vector{c}$ does not even appear to have a clean closed-form characterization for general $\KerSize$. To sidestep this issue, we use a proof technique that enables us to implicitly construct the vector $\Vector{c}$. We believe that this proof technique could be of independent interest. 
\subsubsection{Reducing Lemma \ref{lemma:additiveproperty} to $\Dim = 2\KerSize - 1$ Case}\label{subsec:proofspecialcasereduction}

The first step in the proof of Lemma \ref{lemma:additiveproperty} is to show that Lemma~\ref{lemma:additiveproperty} for general $\KerSize,\Dim$ follows from the special case where $\Dim = 2\KerSize - 1$.
\begin{lemma}
\label{lemma:additivepropertyspecialcase}
For any $\KerSize \ge 1$, for $\Dim = 2\KerSize - 1$, and for any vectors $\Vector{a}, \Vector{b} \in \mathbb{R}^{\KerSize}$, there exists a vector $\Vector{c} \in \mathbb{R}^{\KerSize}$ such that $\Vector{a} \Conv \Vector{a} + \Vector{b} \Conv \Vector{b} = \Vector{c} \Conv \Vector{c}$, where convolutions are taken in dimension $\Dim$. 
\end{lemma}

%\noindent From Lemma \ref{lemma:additivepropertyspecialcase}, it is not difficult to conclude Lemma \ref{lemma:additiveproperty}. 
\paragraph{Proof of Lemma \ref{lemma:additiveproperty} from Lemma \ref{lemma:additivepropertyspecialcase}}
%\begin{proof}[Proof of Lemma \ref{lemma:additiveproperty} from Lemma \ref{lemma:additivepropertyspecialcase}]
By Lemma \ref{lemma:additivepropertyspecialcase}, we know that Lemma \ref{lemma:additiveproperty} holds when $\Dim = 2\KerSize - 1$. We now show that this implies the statement for a general value of $\Dim$. Exclusively in this proof, let  $\ConvD{\Dim}$ denote the convolutional operator w.r.t dimension $D$. For $\ba,\bb\in\bR^{\KerSize}$, let $\bc\in\bR^\KerSize$ be the vector from Lemma~\ref{lemma:additivepropertyspecialcase} such that  $\Vector{a} \ConvD{2\KerSize-1} \Vector{a} + \Vector{b} \ConvD{2\KerSize-1} \Vector{b} = \Vector{c} \ConvD{2\KerSize-1} \Vector{c}$. We will now show that the same $\bc$ also satisfies $\Vector{a} \ConvD{\Dim} \Vector{a} + \Vector{b} \ConvD{\Dim} \Vector{b} = \Vector{c} \ConvD{\Dim} \Vector{c}$ for all $\Dim$. %Regardless of $\Dim$, we can take the coordinates in $[-\KerSize +1, \KerSize - 1]$ of $\Vector{a} \Conv \Vector{a} + \Vector{b} \Conv \Vector{b}$ and apply  Lemma \ref{lemma:additivepropertyspecialcase} to obtain a vector $\Vector{c} \in \mathbb{R}^{\KerSize}$. We now show that $\Vector{c}$ satisfies $\Vector{a} \Conv \Vector{a} + \Vector{b} \Conv \Vector{b} = \Vector{c} \Conv \Vector{c}$, regardless of the value of base dimension $\Dim$. 

\paragraph{Case 1: $\Dim \ge 2 \KerSize - 1$.} First, notice that for $K$ dimensional vectors $\ba,\bb\in\bR^{\KerSize}$ and $\bc\in\bR^K$, their  convolutions for $\Dim\ge2\KerSize-1$ will have at most $2\KerSize - 1$ nonzero entries, located at indices $[-\KerSize + 1, \KerSize - 1] \mod  \Dim$. Thus, for $d' \not\in [-\KerSize + 1, \KerSize - 1]$ 
\begin{equation}
    (\Vector{a} \ConvD{\Dim} \Vector{a} + \Vector{b} \ConvD{\Dim} \Vector{b})_{d'\!\!\!\!\mod \Dim} = 0 =(\Vector{c} \ConvD{\Dim} \Vector{c})_{d'\!\!\!\!\mod \Dim}.
\end{equation} 
At the same time, for $d \in [-\KerSize + 1, \KerSize - 1]$, we have that:
\begin{equation}
    \begin{split}
        (\Vector{a} \ConvD{\Dim} \Vector{a} + \Vector{b} \ConvD{\Dim} \Vector{b})_{d\!\!\!\!\mod \Dim}  &=(\Vector{a} \ConvD{2\KerSize-1} \Vector{a} + \Vector{b} \ConvD{2\KerSize-1} \Vector{b})_{d\!\!\!\!\mod (2\KerSize-1)} \\
        &=(\Vector{c} \ConvD{2\KerSize-1} \Vector{c})_{d\!\!\!\!\mod{(2\KerSize-1)}}\\
        &=(\Vector{c} \ConvD{2\KerSize-1} \Vector{c})_{d\!\!\!\!\mod{(\Dim)}}.
    \end{split}
\end{equation}
%for $d \in [-\KerSize + 1, \KerSize - 1] \mod \Dim$. 

\paragraph{Case 2: $\Dim < 2 \KerSize - 1$.} %By definition, we also know that $\KerSize \le \Dim$. Let $(\Vector{c} \Conv \Vector{c})^D$ be the convolution where the base dimension is $\Dim$, similarly let $(\Vector{c} \Conv \Vector{c})^{2\KerSize - 1}$ be the convolution where the base dimension is $2 \KerSize - 1$. 
Let $\KerSize\le \Dim<2\KerSize-1$. For any vector $\b{z}\in\bR^\KerSize$, we can check the following:
\begin{equation}
\label{eq:c}
\forall_{d\in[D]},\; (\Vector{z} \ConvD{\Dim} \Vector{z})_d = \sum_{\substack{0\le d' < 2\KerSize - 1\\ \text{s.t., } d \equiv d'\!\!\!\!\mod D}} (\Vector{z} \ConvD{{2\KerSize - 1}} \Vector{z})_{d'}.
\end{equation}
Thus, for $\ba,\bb\in\bR^\KerSize$ and $\bc\in\bR^\KerSize$ satisfying  Lemma~\ref{lemma:additivepropertyspecialcase}, we get the desired result as follows 
%we let $(\Vector{a} \Conv \Vector{a} + \Vector{b} \Conv \Vector{b})^D$ be the convolution where the base dimension is $\Dim$, similarly let $(\Vector{a} \Conv \Vector{a} + \Vector{b} \Conv \Vector{b})^{2\KerSize - 1}$ be the convolution where the base dimension is $2 \KerSize - 1$. We see that:
\begin{equation}
\label{eq:ab}
\begin{split}
\forall_{d\in[D]},\; (\Vector{a} \ConvD{\Dim} \Vector{a} + \Vector{b} \ConvD{\Dim} \Vector{b})_d &= \sum_{\substack{0 \le d' < 2\KerSize - 1 \\\text{s.t., } d \equiv d' \mod D}} (\Vector{a} \ConvD{2\KerSize-1} \Vector{a} + \Vector{b} \ConvD{2\KerSize-1} \Vector{b})_{d'}\\
&=\sum_{\substack{0 \le d' < 2\KerSize - 1 \\\text{s.t., } d \equiv d' \mod D}} (\Vector{c} \ConvD{2\KerSize-1} \Vector{c})_{d'}\\
&= (\Vector{c} \Conv_D \Vector{c})_d.
\end{split}
\end{equation}
This concludes the proof of showing Lemma~\ref{lemma:additiveproperty} from Lemma~\ref{lemma:additivepropertyspecialcase}. 
%By Lemma \ref{lemma:additivepropertyspecialcase}, we know that $(\Vector{c} \Conv \Vector{c})^{2\KerSize - 1}_{d'} = (\Vector{a} \Conv \Vector{a} + \Vector{b} \Conv \Vector{b})^{2 \KerSize - 1}_{d'}$ for all $d'$. Combining this with eq. \eqref{eq:c} and eq. \eqref{eq:ab}, we see that $(\Vector{c} \Conv \Vector{c})^D_d = (\Vector{a} \Conv \Vector{a} + \Vector{b} \Conv \Vector{b})^D_d$ for all $d$ as desired. 
\mybox%\end{proof}

The remainder of the section of devoting to proving Lemma \ref{lemma:additivepropertyspecialcase}. This statement trivially holds with $\Vector{c}$ as the zero vector if $\Vector{a} \Conv \Vector{a} + \Vector{b} \Conv \Vector{b} = 0$, so for the remainder of the proof, we assume that $\Vector{a} \Conv \Vector{a} + \Vector{b} \Conv \Vector{b}$ is nonzero.

\subsubsection{Introducing the polynomial representation} \label{subsec:polyrep}
The key idea for the proof of Lemma \ref{lemma:additivepropertyspecialcase} is to use the polynomial formulation of convolutions.\footnote{E.g., see \url{https://en.wikipedia.org/wiki/Convolution}} We then use factorization of the polynomials to implicitly construct $\Vector{c}$. We use the following notation. Let $\mathcal{P}_k \subseteq \mathbb{R}[x]$ denote the set of degree $\le \KerSize - 1$ polynomials with \textit{real coefficients}. For a vector $\Vector{z} \in \mathbb{R}^\KerSize$, we define the \textit{polynomial representation} $p_{\Vector{z}}(x) \in \mathcal{P}_{\KerSize -1}$ to be the polynomial $\Vector{z}[0] + \Vector{z}[1] x + \ldots + \Vector{z}[{\KerSize - 1}] x^{\KerSize - 1}$. Using the polynomial representations, convolutions can be expressed as polynomial multiplication:
\begin{fact}
\label{fact:polymultiplication}
Let $\Vector{a} \in \mathbb{R}^\KerSize$. The $\Dim=2\KerSize-1$ dimensional convolution $\Vector{a} \Conv \Vector{a}$ has polynomial representation $p_{\Vector{a} \Conv \Vector{a}}(x)$ that is equivalent to the polynomial $x^{\KerSize - 1} p_{\Vector{a}}(x) \cdot p_{\Vector{a}}(1/x) \in \mathcal{P}_{2 \KerSize - 2}$ up to permuting the coefficients appropriately.
\end{fact}
The polynomial representation enables us to construct a vector $\Vector{c}$ in terms of the roots of the relevant polynomials. We now reformulate Lemma~\ref{lemma:additivepropertyspecialcase}  using the polynomial representation. Recall that we wish to show that there exists a vector $\Vector{c}\in\bR^{\KerSize}$ such that $\Vector{c} \Conv \Vector{c} = \Vector{a} \Conv \Vector{a} + \Vector{b} \Conv \Vector{b}.$

In the polynomial representation, we equivalently want $p_{\Vector{c} \Conv \Vector{c}}(x) = p_{\Vector{a} \Conv \Vector{a}}(x) + p_{\Vector{b} \Conv \Vector{b}}(x).$ Now, applying Fact \ref{fact:polymultiplication}, we see that the polynomial formulation of the Lemma~\ref{lemma:additivepropertyspecialcase} is the following:
\[\exists \bc\in\bR^\KerSize\;\st \;x^{\KerSize - 1} p_{\Vector{c}}(x) \cdot p_{\Vector{c}}(1/x) = x^{\KerSize - 1} p_{\Vector{a}}(x) p_{\Vector{a}}(1/x) + x^{\KerSize - 1} p_{\Vector{b}}(x) p_{\Vector{b}}(1/x).\]
To simplify notation, we denote the right-hand-side of the previous equation by $Q(x)$: 
\begin{equation}
Q(x) := x^{\KerSize - 1} p_{\Vector{a}}(x) p_{\Vector{a}}(1/x) + x^{\KerSize - 1}  p_{\Vector{b}}(x) p_{\Vector{b}}(1/x) \in \mathcal{P}_{2 \KerSize - 2}.
\label{eq:q}
\end{equation}
Since there is a 1-to-1 correspondence between polynomials in $\mathcal{P}_{\KerSize - 1}$ and vectors in $\mathbb{R}^{\KerSize}$. In this notation, our goal is to show that there exists a polynomial $p \in \mathcal{P}_{\KerSize - 1}$ such that:
\begin{equation}
\label{eq:main}
\text{(to show)}\quad\quad Q(x) = x^{\KerSize - 1} p(x) p(1/x).\quad\quad\quad\quad
\end{equation}
The coefficients of such a $p$ would then give us the desired $\bc\in\bR^\KerSize$ in Lemma~\ref{lemma:additivepropertyspecialcase}. 
%Since there is a 1-to-1 correspondence between polynomials in $\mathcal{P}_{\KerSize - 1}$ and vectors in $\mathbb{R}^{\KerSize}$, it suffices to show that there exists a polynomial $p \in \mathcal{P}_{\KerSize - 1}$ such that:
%\begin{equation}
%\label{eq:main}
% Q(x) = x^{\KerSize - 1} p(x)\cdot p(1/x).   
%\end{equation}
\noindent The remainder of the proof boils down to constructing $p \in \mathcal{P}_{\KerSize - 1}$ such that \eqref{eq:main} is satisfied. 

\subsubsection{Proving the polynomial representation version of the lemma statement} \label{subsec:proofinpolyrep}

The first property of $Q(x)$ that we  leverage is that $Q(x)$ is a \textit{palindromic polynomial} of order $2\KerSize-2$ (\ie its coefficients zero padded to degree $2\KerSize-2$ forms a palindrome) with  real coefficients. To see that our $Q(x)$ is palindromic, note any polynomial $Q(x)\in\mathcal{P}_{2\KerSize-2}$ is a palindromic polynomial if and only if $Q(x)=x^{2\KerSize-2}Q(1/x)$, which is satisfied by our definition in  \eqref{eq:q}. %notice that for all $d\in[D]$, we have $(\Vector{a} \Conv \Vector{a})_d = (\Vector{a} \Conv \Vector{a})_{\Dim -d-1}$ and $(\Vector{b} \Conv \Vector{b})_d = (\Vector{b} \Conv \Vector{b})_{\Dim -d-1}$, and so  $(\Vector{a} \Conv \Vector{a} + \Vector{b} \Conv \Vector{b})_d = (\Vector{a} \Conv \Vector{a} + \Vector{b} \Conv \Vector{b})_{\Dim -d-1}$. Now, we can use Fact \ref{fact:polymultiplication} to conclude that the  coefficient of $x^d$ in $Q(x)$ is exactly the $(d - \KerSize + 1) \mod \Dim$ coefficient of $\Vector{a} \Conv \Vector{a} + \Vector{b} \Conv \Vector{b}$. Thus, the $x^d$ coefficient of $Q(x)$ is equal to the $x^{2 \KerSize - 2 - d}$ coefficient of $Q(x)$, as desired. 

At first glance, it would appear that \eqref{eq:main} follows immediately from standard properties of palindromic polynomials with real coefficients whose complex roots are known to come in reciprocal pairs as $(\alpha,1/\alpha)$.\footnote{This is a standard fact: e.g., see \url{https://en.wikipedia.org/wiki/Reciprocal_polynomial}.} However, we cannot obtain eq. \eqref{eq:main} from the palindromic property alone. To see this, consider the following example:
\begin{example}
\label{ex:pal}
Consider the palindromic polynomial with real coefficients $x^2 + 1 = i x (x -i)(1/x - i)$. This polynomial is not expressible as $x p'(x) p'(1/x)$ for any \textbf{real} polynomial $p'$.
\end{example}

The proof of \eqref{eq:main} thus must leverage further structure of $Q(x)$, which we will ultimately extract through examining the roots of $Q(x)$. Using that $\mathbb{C}$ is algebraically closed, we can factor $Q(x)$ into a polynomial $c_Q \prod_{i} (x - \alpha_i)$ with exactly $2 \KerSize - 2$ roots where the $\alpha_i$ need not be distinct. To show \eqref{eq:main}, it suffices to show that $x^{\KerSize - 1} p(x)\cdot p(1/x)$ has roots (with multiplicities) given by the multi-set $S_Q = \left\{\alpha_i\right\}$ and has the same leading coefficient $c_Q$. Drawing upon this formulation, we will construct $p$ implicitly by the multi-set $S_p$ of its roots (with multiplicities) and its leading  coefficient $c_p \neq 0$: that is, so that $p = c_p \prod_{\alpha \in S_p} (x - \alpha)$. \newline

\noindent The remainder of the proof is structured as follows:
\begin{compactenum}[\bf Step 1.]
\item We establish key properties of the multi-set $S_Q$ of $2\KerSize-1$ complex roots of $Q(x)$.
\item We then construct the roots $S_p$ of $p$ from exactly half of elements in $S_Q$.  
\item We show that $S_p$ is the ``right" set of roots in the sense that with any nonzero choice of leading coefficient $c_p$, the resulting polynomial $p$ is of degree at most $\KerSize-1$, and the multi-set of roots of $x^{K-1}p(x)p(1/x)$ match $S_Q$.
\item Finally, we choose a real leading coefficient $c_p$ of $p(x)$ to get the  desired property in \eqref{eq:main}.
\end{compactenum}

\subsection*{Step 1. Properties of the complex roots of $Q(x)$ }
Before we construct $S_p$ and $c_p$, it is helpful to establish properties of the multi-set of roots $S_Q$.  
\begin{enumerate}
    \item{(P1)} For every root $\alpha$ with multiplicity $m$, $\conj{\alpha}$ is a root and has multiplicity $m$. 
    \item{(P2)} If $\alpha \neq 0$ is a root with multiplicity $m$, then $1 / \alpha$ is a distinct root with multiplicity $m$. 
    \item{(P3)}  If $\alpha$ such that $|\alpha| = 1$ is a root, then $\alpha$ has even multiplicity. 
\end{enumerate}
The first two properties (P1) and (P2) follow from the fact that $Q(x)$ is a palindromic polynomial with real coefficients. In particular, we see that (P1) follows from the fact that $Q(x)$ has real coefficients so that the roots come in conjugate pairs, and (P2) follows from standard properties of palindromic polynomials.\footnote{This is a standard fact: e.g., see \url{https://en.wikipedia.org/wiki/Reciprocal_polynomial}.}

%\paragraph{Establishing the critical property of $Q(x)$.} 
The last property, (P3), uses deeper aspects of the structure of $Q(x)$. In particular, it uses that $Q(x)$ is the sum of polynomials of the form $x^{\KerSize - 1} p'(x) p'(1/x)$ where $p' \in \mathcal{P}_{\KerSize - 1}$ has real coefficients, rather than just an arbitrary palindromic polynomial. To see this, let's return to Example \ref{ex:pal} and observe that $x^2 + 1$ does not satisfy (P3) since its roots are $i$ and $-i$, each with odd-multiplicity of $1$. Hence, we use further structure of $Q(x)$ and we show: 
\begin{lemma}
\label{lemma:multiplicity}
Consider vectors $\Vector{a}, \Vector{b} \in \mathbb{R}^{\KerSize}$, and let $p_{\Vector{a}}, p_{\Vector{b}} \in \mathcal{P}_{\KerSize -1}$ be their polynomial representation. If $\alpha \in \mathbb{C}$ such that $|\alpha| = 1$ is a root of $p_{\Vector{a}}(x) (x^{\KerSize - 1} p_{\Vector{a}}(1/x)) + p_{\Vector{b}}(x) (x^{\KerSize - 1} p_{\Vector{b}}(1/x))$, then $\alpha$ has even multiplicity.  
\end{lemma}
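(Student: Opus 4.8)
The plan is to analyze the polynomial $q(x) := p_{\Vector{a}}(x)\,\bigl(x^{\KerSize-1}p_{\Vector{a}}(1/x)\bigr) + p_{\Vector{b}}(x)\,\bigl(x^{\KerSize-1}p_{\Vector{b}}(1/x)\bigr)$ and to read off the multiplicity of a unit-modulus root $\alpha$ term by term. Write $\tilde p_{\Vector{z}}(x) := x^{\KerSize-1}p_{\Vector{z}}(1/x)$ for the reversal of $p_{\Vector{z}}$, so that $q = p_{\Vector{a}}\tilde p_{\Vector{a}} + p_{\Vector{b}}\tilde p_{\Vector{b}}$. First I would record the identity on the unit circle: if $|x| = 1$ then $1/x = \conj{x}$, and since $\Vector{a},\Vector{b}$ are real, $p_{\Vector{a}}(1/x) = \conj{p_{\Vector{a}}(x)}$, whence
\[
q(x) = x^{\KerSize-1}\bigl(|p_{\Vector{a}}(x)|^2 + |p_{\Vector{b}}(x)|^2\bigr), \qquad |x| = 1.
\]
In particular $q(\alpha) = 0$ together with $\alpha \neq 0$ forces $p_{\Vector{a}}(\alpha) = p_{\Vector{b}}(\alpha) = 0$; let $m_a, m_b \ge 1$ be the corresponding root multiplicities.

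Next I would pin down the contribution of $\alpha$ to each summand. Since $p_{\Vector{a}}$ has real coefficients and $|\alpha| = 1$, the point $\conj{\alpha} = 1/\alpha$ is also a root of $p_{\Vector{a}}$ of multiplicity $m_a$, so Fact~\ref{fact:inversepolynomial} gives that $\alpha$ is a root of $\tilde p_{\Vector{a}}$ of multiplicity $m_a$; hence $p_{\Vector{a}}\tilde p_{\Vector{a}}$ vanishes at $\alpha$ to order exactly $2m_a$, and similarly $p_{\Vector{b}}\tilde p_{\Vector{b}}$ to order exactly $2m_b$. The key step is to also identify the leading coefficient: writing $p_{\Vector{a}}(x) = (x-\alpha)^{m_a}r_a(x)$ with $r_a(\alpha) \neq 0$, and noting that reality of $p_{\Vector{a}}$ gives $p_{\Vector{a}}(x) = (x-\conj{\alpha})^{m_a}s_a(x)$ with $s_a(\conj{\alpha}) = \conj{r_a(\alpha)}$, a short computation using $1/x - \conj{\alpha} = -\conj{\alpha}(x-\alpha)/x$ yields
\[
\lim_{x\to\alpha}\frac{p_{\Vector{a}}(x)\,\tilde p_{\Vector{a}}(x)}{(x-\alpha)^{2m_a}} = |r_a(\alpha)|^2\,\omega_{m_a}, \qquad \omega_j := \alpha^{\KerSize-1-j}(-\conj{\alpha})^{j}.
\]
The crucial observation is that the ``phase'' $\omega_j$ depends only on $j$ (and on $\alpha, \KerSize$), \emph{not} on the vector $\Vector{a}$, while the vector-dependent factor $|r_a(\alpha)|^2$ is a positive real number; the same holds for the $\Vector{b}$-summand with the identical constants $\omega_j$.

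Finally, set $n := \min(m_a, m_b) \ge 1$. Both summands vanish at $\alpha$ to order $\ge 2n$, so $q$ vanishes to order $\ge 2n$; to show the order is exactly $2n$ I would evaluate $\lim_{x\to\alpha} q(x)/(x-\alpha)^{2n}$. If $m_a \neq m_b$, say $m_a = n < m_b$, then the $\Vector{b}$-term contributes $0$ and the limit equals $|r_a(\alpha)|^2 \omega_n \neq 0$; if $m_a = m_b = n$, both terms contribute, but with the \emph{same} phase $\omega_n$, so the limit equals $\bigl(|r_a(\alpha)|^2 + |r_b(\alpha)|^2\bigr)\omega_n \neq 0$. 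In either case $\alpha$ has multiplicity exactly $2n$ in $q$, which is even. (The degenerate case $q \equiv 0$ arises only when $\Vector{a} = \Vector{b} = \Vector{0}$ and may be set aside.) I expect the case $m_a = m_b$ to be the main obstacle, since a priori the order of vanishing could jump strictly above $2n$ through cancellation; the resolution is precisely that the reversal operation conjugates coefficients, so the two leading coefficients share the phase $\omega_n$ and their positive amplitudes add rather than cancel. An equivalent conceptual shortcut worth mentioning: the unit-circle identity says $q(e^{i\theta}) = e^{i(\KerSize-1)\theta}g(\theta)$ with $g(\theta) := |p_{\Vector{a}}(e^{i\theta})|^2 + |p_{\Vector{b}}(e^{i\theta})|^2 \ge 0$ a real-analytic trigonometric polynomial, and a nonnegative real-analytic function vanishes to even order at each of its zeros — which transfers directly to even multiplicity of the polynomial $q$ at $\alpha = e^{i\theta}$.
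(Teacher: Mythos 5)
Your proof is correct, and its first half runs along the same lines as the paper's: evaluate the sum on the unit circle, use $1/\alpha=\conj{\alpha}$ and realness of $\Vector{a},\Vector{b}$ to get $Q(\alpha)=\alpha^{\KerSize-1}\bigl(\abs{p_{\Vector{a}}(\alpha)}^2+\abs{p_{\Vector{b}}(\alpha)}^2\bigr)$, conclude $p_{\Vector{a}}(\alpha)=p_{\Vector{b}}(\alpha)=0$, and then use the conjugate--reciprocal symmetry (realness plus Fact~\ref{fact:inversepolynomial}) to see that $\alpha$ has multiplicity exactly $2m_a$ and $2m_b$ in the two summands. Where you genuinely diverge is the last step. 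The paper simply asserts that it suffices to show even multiplicity in each summand separately; taken literally this is incomplete, since when $m_a=m_b$ the leading terms of the two summands could in principle cancel and push the order of vanishing of the sum to an odd value. Your leading-coefficient computation closes exactly this gap: the factor $\omega_j=\alpha^{\KerSize-1-j}(-\conj{\alpha})^{j}$ is the same for both summands (it does not depend on the vector), the vector-dependent factors $\abs{r_a(\alpha)}^2,\abs{r_b(\alpha)}^2$ are strictly positive, so the contributions add rather than cancel and the multiplicity is exactly $2\min(m_a,m_b)$. I checked the computation (using $1/x-\conj{\alpha}=-\conj{\alpha}(x-\alpha)/x$ and $s_a(\conj{\alpha})=\conj{r_a(\alpha)}$) and it is right, including the case $\alpha=\pm 1$. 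Your closing alternative --- that $Q(\e^{i\theta})=\e^{i(\KerSize-1)\theta}g(\theta)$ with $g\ge 0$ a real-analytic function, and nonnegative real-analytic functions vanish to even order --- is an even shorter conceptual route to the same conclusion. So: same skeleton as the paper, but your version is more complete on the one point the paper elides, at the cost of a slightly longer local computation; the paper's version buys brevity by treating the summands independently.
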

\begin{proof}
Suppose that $\alpha$ is a root of $Q(x)$ and $|\alpha| = 1$. We see that 
\begin{align*}
    0 &= Q(\alpha) \\
    &= p_{\Vector{a}}(\alpha) (\alpha^{\KerSize - 1} p_{\Vector{a}}(1/\alpha)) + p_{\Vector{b}}(\alpha) (\alpha^{\KerSize - 1} p_{\Vector{b}}(1/\alpha)) \\
    &=  p_{\Vector{a}}(\alpha) (\alpha^{\KerSize - 1} p_{\Vector{a}}(\conj{\alpha})) + p_{\Vector{b}}(\alpha) (\alpha^{\KerSize - 1} p_{\Vector{b}}(\conj{\alpha})) \\
    &= \alpha^{\KerSize - 1} \left( p_{\Vector{a}}(\alpha) \conj{p_{\Vector{a}}(\alpha))} +p_{\Vector{b}}(\alpha) \conj{p_{\Vector{b}}(\alpha))}\right) \\
    &=  \alpha^{\KerSize - 1} \left( |p_{\Vector{a}}(\alpha)|^2 + |p_{\Vector{b}}(\alpha)|^2 \right).
\end{align*}
Since $\alpha \neq 0$, this means that $|p_{\Vector{a}}(\alpha)|^2 + |p_{\Vector{b}}(\alpha)|^2 = 0$. Thus, $p_{\Vector{a}}(\alpha) = 0$ and $p_{\Vector{b}}(\alpha) = 0$. Now, it suffices to show that $\alpha$ is a root with even multiplicity in  $p_{\Vector{a}}(x) (x^{\KerSize - 1} p_{\Vector{a}}(1/x))$ and in $p_{\Vector{b}}(x) (x^{\KerSize - 1} p_{\Vector{b}}(1/x))$. 

We show that $\alpha$ has even multiplicity in $p_{\Vector{a}}(x) (x^{\KerSize - 1} p_{\Vector{a}}(1/x))$ (an analogous argument shows this for $p_{\Vector{a}}(x) (x^{\KerSize - 1} p_{\Vector{a}}(1/x))$).
Suppose that $\alpha$ has  multiplicity $m$ in $p_{\Vector{a}}(x)$. 
Since $p_{\Vector{a}}(x)$ has real coefficients, we know that $\conj{\alpha}$ is a root of $p_{\Vector{a}}(x)$ with multiplicity $m$. We also know that $1/\alpha = \conj{\alpha}$ is a root with multiplicity $m$ of $x^{k - 1} p_{\Vector{a}}(1/x))$. Since $x^{\KerSize - 1} p_{\Vector{a}}(1/x))$ has real coefficients, we know that $\conj{\conj{\alpha}} = \alpha$ is a root with multiplicity $m$ of $x^{k - 1} p_{\Vector{a}}(1/x))$. This means that $\alpha$ has multiplicity $2m$ in $p_{\Vector{a}}(x) (x^{\KerSize - 1} p_{\Vector{a}}(1/x))$ as desired. 
\end{proof}
We note that (P3) follows immediately from Lemma \ref{lemma:multiplicity}.

\subsection*{Step 2: Constructing the roots of $S_p$ }
We now construct the multi-set of roots $S_p$. To do this, we begin by constructing the \textit{nonzero} roots in $S_p$, and then we add in the zero roots with the appropriate multiplicities at the end. 

\vspace{3pt}
\noindent\textit{\underline{Constructing the nonzero roots.}}
The high-level intuition for the construction is that we ultimately need the nonzero roots of $x^{\KerSize - 1} p(x) p(1/x)$ for a degree $\KerSize-1$ polynomial $p\in\mathcal{P}_{K-1}$ to exactly match the nonzero roots in $S_Q$. Since the nonzero roots of $x^{K-1}p(1/x)$ are the reciprocals of the roots of $p(x)$ (with multiplicity preserved), and since polynomials with real coefficients have roots in conjugate pairs, we wish to divide the real nonzero roots of $Q(x)$ into disjoint pairs $(\alpha, 1/\alpha)$ (so that $p(x)$ has root $\alpha$ and $x^{K-1}p(1/x)$ has root $1/\alpha$) and the complex roots into disjoint quadruples $(\alpha, \conj{\alpha}, 1/\alpha, 1/\conj{\alpha})$ (so that $p(x)$ has roots $\alpha$ and $\conj{\alpha}$ and $x^{K-1}p(1/x)$ has roots $1/\alpha$ and $1/\conj{\alpha}$).

Let's formalize this high level argument by constructing an undirected graph $G$ where the vertices are the {nonzero} roots in $S_Q$ (a root with multiplicity $m$ corresponds to $m$ separate vertices). We will now add edges such that $G$ forms a bipartite graph with following properties : (a) each edge is of form $(\alpha,1/\alpha)$ without any self-loop; and (b) each vertex is connected to one and exactly one other vertex (\ie all edges are disjoint). 

The edges are defined as follows. 
% \begin{compactitem}
% \item 
When $\alpha \neq \pm 1$ is a nonzero root of $Q(x)$, we have ${1}/{\alpha}\neq \alpha$ and it follows from (P2) that $\alpha$ and ${1}/{\alpha}$ will have same multiplicity, say $m$. Thus, we can create $m$ disjoint edges by connecting each vertex with value $\alpha$ to exactly one vertex with value $1/\alpha$. This ensures that the subgraph of $G$ with values $\alpha$ or $1/\alpha$ and respects our desired construction. 
% \item 
Using (P3), we can also handle roots $\alpha=\pm1$ in a similar construction. In this case, we will have have even number of vertices with value $\alpha$ and hence can form non-self-loop edges of the form $(\alpha, \alpha)$ such that all the edges are disjoint.
%\item Finally, in the case of non-real roots with $|\alpha|=1$, we have $\frac{1}{\alpha}=\bar{\alpha}$. But  using (P1) and (P3) we have that both $\alpha$ and $\bar{\alpha}$ appear in $S_Q$ with same even multiplicity, say $2m$. In this case, we again due to even multiplicity, we can pair the vertices as disjoint edges $(\alpha,\bar{\alpha})$. 
% \end{compactitem}

Having constructed $G$ as above, let $G^{\text{real}}$ be the subgraph of $G$ consisting of vertices corresponding to real roots, and let $G^{\text{complex}}$ be the subgraph of $G$ consisting of vertices corresponding to roots with nonzero imaginary part. It is easy to see that $G = G^{\text{real}} \cup G^{\text{complex}}$ and these graphs are disjoint. 
%to a vertex corresponding to $1/\alpha$ such that subgraph formed by nodes of $G$ with values $\alpha$ or $1/\alpha$ is bipartite with each vertex with value $\alpha$ being connected to one and exactly one other vertex with value $1/\alpha$
%their corresponding vertex in $G$ to a vertex with $1 / \alpha$ value, so that the graph forms a bipartite graph (it is possible to do this because of (P2)). By (P3), we can handle $\alpha = 1$, and form non-self-loop edges of the form $(1, 1)$. Let $G^{\text{real}}$ be the subgraph of $G$ consisting of vertices corresponding to real roots, and let $G^{\text{complex}}$ be the subgraph of $G$ consisting of vertices corresponding to roots with nonzero imaginary part. It is easy to see that $G = G^{\text{real}} \cup G^{\text{complex}}$ and these graphs are disjoint. 

Let us now use these graphs to construct the set of nonzero roots in $S_p$, which will contain half of the vertices in $G$.  
For edges of the form $(\alpha,{1}/{\alpha})$ in $G^{\text{real}}$, we add one vertex from each edge in $G^{\text{real}}$ to $S_p$. For $G^{\text{complex}}$, we first pair up the edges in the subgraph $G^{\text{complex}}$ as follows.
%\begin{compactitem}
%\item  
When $|\alpha| \neq 1$, we can pair up the edges $(\alpha, 1/\alpha)$ and $(\conj{\alpha}, 1/\conj{\alpha})$ so that the pairs are disjoint (follows from (P1)) and further  no edge is paired with itself (since $1/\alpha \neq \conj{\alpha}$ when $|\alpha\neq1$).  
%\item 
When $|\alpha| = 1$, we have $1/\alpha=\bar{\alpha}$. Here we use the fact that $\alpha$ and $\bar{\alpha}$ have same even multiplicity (see (P3)), say $2m$. Thus, we can again pair up $m$ edges $(\alpha,\bar{\alpha})\equiv(\alpha, 1/\alpha)$ with $m$ distinct edges $(\alpha,\bar{\alpha})\equiv(\conj{\alpha}, 1/\conj{\alpha})$ in such a way that all the pairs continue to be disjoint  and no edge is paired with itself. 
%\end{compactitem} 
We have thus paired all edges in $G^{\text{complex}}$ as $(\alpha, 1/\alpha)$ and $(\conj{\alpha}, 1/\conj{\alpha})$ such that no edge is paired with itself and all pairings are disjoint. 
Now, from each distint pair of edges $(\alpha, 1/\alpha)$ and $(\conj{\alpha}, 1/\conj{\alpha})$, we add $\alpha$ from first edge and $\bar{\alpha}$ from second edge to $S_p$. % and we add $\conj{\alpha}$ to $S_p$. 
%For $G^{\text{complex}}$, we can pair up the edges in the subgraph $G^{\text{complex}}$ as follows. When $|\alpha| \neq 1$, we can pair up the edges $(\alpha, 1/\alpha)$ and $(\conj{\alpha}, 1/\conj{\alpha})$ so that the pairs are disjoint and no edge is paired with itself (it is possible to do this because of (P1), coupled with the fact that $1/\alpha \neq \conj{\alpha}$). When $|\alpha| = 1$, we use the fact that $\alpha$ has even multiplicity (see (P3)). Thus, we can pair up each edge $(\alpha, 1/\alpha)$ with an edge of the form $(\alpha, 1/\alpha) = (\conj{\alpha}, 1/\conj{\alpha})$, so that pairs continue to be disjoint and no edge is paired with itself. Now, for each pair  $(\alpha, 1/\alpha)$ and $(\conj{\alpha}, 1/\conj{\alpha})$, we add $\alpha$ to $S_p$ and we add $\conj{\alpha}$ to $S_p$. 
\vspace{3pt}

\noindent\textit{\underline{{Adding the zero roots.}}}
To construct $S_p$, all that remains is to determine the multiplicity of the zero roots. Let $m$ be the multiplicity of $0$ in $S_Q$. We then simply add $m$ copies of $0$ to $S_p$. %We argue below that $S_p$ contains the correct number of roots to correspond to a degree $\KerSize - 1$ polynomial. 

\vspace{3pt}
\noindent In summary, all the real nonzero roots of $S_Q$ were partitioned into disjoint pairs $(\alpha, 1/\alpha)$ (with $\alpha$ from each pair included in $S_p$) and the complex nonzero roots of $S_Q$ were partitioned into disjoint quadruples $(\alpha, \conj{\alpha}, 1/\alpha, 1/\conj{\alpha})$ (with $\alpha,\conj{\alpha}$ from each quadruple included in  $S_p$). We then included all zero entries of $S_Q$ in $S_p$. 

\subsection*{Step 3. Proof that $S_p$ is the correct multi-set of roots}
We will derive the leading coefficient $c_p$ of our desired polynomial shortly, but first show that $S_p$ is the right multi-set of roots for $p$.  We first prove that $S_p$ corresponds to the roots of a degree $\KerSize-1$ polynomial with real coefficients; we then show that with any nonzero leading coefficient $c_p$, the resulting polynomial $p(x)$ is such that the multiset of roots of $x^{\KerSize -1} p(x) p(1/x)$ is equal to $S_Q$. %{\color{red} how is p defined here?}
\newline

\noindent{\textit{\underline{Proof that $S_p$ corresponds to the roots of a polynomial in $\mathcal{P}_{\KerSize - 1}$.}}} For $S_p$ to be a valid multi-set of roots for a polynomial $p \in \mathcal{P}_{\KerSize - 1}$, we need to ensure that $S_p$ consists of {at most} $\KerSize - 1$ elements and that the complex roots come in conjugate pairs. %Equivalently, we want the monic polynomial defined as  $p_{\text{monic}}=\prod_{\alpha\in S_p} (x-\alpha)$ to be of degree atmost $\KerSize-1$ and . 

The fact that the complex roots (with nonzero imaginary components) come in conjugate pairs follows from the construction of the graphs above. Recall the complex roots of $S_Q$ were partitioned into disjoint quadruples $(\alpha, \conj{\alpha}, 1/\alpha, 1/\conj{\alpha})$ so that $S_p$ has roots $\alpha$ and $\conj{\alpha}$ from each quadruple. This ensures that the roots comes up in conjugate pairs as desired.

We now show that $S_p$ consists of exactly $\KerSize - 1$ elements using the following root counting argument. Recall that we defined $Q(x)=x^{\KerSize-1}p_{\ba}(x)p_{\ba}(1/x)+x^{\KerSize-1}p_{\bb}(x)p_{\bb}(1/x)$ to be a palindromic polynomial of order $2\KerSize-2$, \ie for $d=0,1,\ldots 2\KerSize-2$, coefficient of $x^{d}$ is the same as coefficient of $x^{2\KerSize-2-d}$. Thus, if $Q(x)$ has a zero root with multiplicity $m$, \ie the coefficients of $x^0,x^1,x^2,\ldots x^{m-1}$ are zero, then the palindromic property would ensure that the coefficients of $x^{2\KerSize-2},x^{2\KerSize-3},\ldots, x^{2\KerSize-1-m}$ are also zero. This in turn implies that the max degree of $Q(x)$ is of degree $2 \KerSize - 2 - m$. 
%{\color{red} we have not shown how the coefficients of Q relate to $\Vector{a} \Conv \Vector{a} + \Vector{b} \Conv \Vector{b})_{d}$ beyond fact 1, so the following argument is not very clear. either make the connection explicit in fact 1 or give more details here} Notice that if $(\Vector{a} \Conv \Vector{a} + \Vector{b} \Conv \Vector{b})_{d} = 0$, this means that $(\Vector{a} \Conv \Vector{a} + \Vector{b} \Conv \Vector{b})_{\Dim - d} = 0$. In terms of the polynomial representation, this means that if $Q(x)$ has a zero root with multiplicity $m$, then the degree of $Q(x)$ is $2 \KerSize - 2 - m$ (since $Q(x) \neq 0$){\color{red} why is $Q(x)\neq 0$ important}. 
Thus, we see that $Q(x)$ has $2 \KerSize - 2 -m$ roots  including multiplicity, of which $m$ are zero roots and $2 \KerSize - 2 -2m$ nonzero roots. By the construction of $S_p$, we included exactly half of nonzero roots of $S_Q$ and exactly $m$ zero roots in $S_p$. Thus $S_p$ has $\KerSize-1-m$ nonzero entries and exactly $m$ zero entries leading to a total of $\KerSize-1$ roots.
%see that $p_{\text{monic}}(x)$ has $\KerSize - 1 - m$ nonzero roots including multiplicities, and $m$ zero roots, which amounts to $\KerSize - 1$ roots in total with multiplicity, as desired. 
 \newline 

\noindent{\textit{\underline{Proof that the multi-set of roots of $x^{\KerSize - 1} p(x) p(1/x)$ equals the multi-set $S_Q$.}}} We denote the monic polynomial given by the roots of $S_p$ as  $p_{\text{monic}}(x)=\prod_{\alpha\in S_p}(x-\alpha)$ and consider 
\[Q_1(x) := p_{\text{monic}}(x) x^{\KerSize - 1} p_{\text{monic}}(1/x).\] It suffices to show that the multi-set of roots of $Q_1(x)$ with multiplicities is equal to the multi-set $S_Q$. 

For the nonzero roots, we use the construction of $S_p$. Recall that the real roots of $S_Q$ were partitioned into disjoint pairs $(\alpha, 1/\alpha)$ (so that $p_{\text{monic}}(x)$ has root $\alpha$) and the complex roots of $S_Q$ were partitioned into disjoint quadruples $(\alpha, \conj{\alpha}, 1/\alpha, 1/\conj{\alpha})$ (so that $p_{\text{monic}}(x)$ has roots $\alpha$ and $\conj{\alpha}$). This, coupled with the fact that the nonzero roots of $x^{\KerSize-1}p_{\text{monic}}(1/x)$ are the inverses of the nonzero roots of $p_{\text{monic}}(x)$, means that the multi-set of nonzero roots of $Q_1(x)$ which is the union of the multi-set of nonzero roots in $p_{\text{monic}}(x)$ with the multi-set of nonzero roots of $x^{\KerSize-1}p_{\text{monic}}(1/x)$ is exactly equal to the multi-set of nonzero roots of $Q(x)$. %This means that the multi-set of nonzero roots of $Q_1(x)$ with multiplicities is equal to the nonzero entries in the multi-set $S_Q$. {\color{red} again p is used without definition}

For the zero roots, we simply need to show that $x^{\KerSize - 1} p_{\text{monic}}(1/x)$ has no roots that are $0$. Using the fact that the constant term of $x^{\KerSize - 1} p_{\text{monic}}(1/x)$ is equal to the coefficient of $x^{\KerSize - 1}$ in $p_{\text{monic}}(x)$, it suffices to show that the coefficient of $x^{\KerSize - 1}$ in $p_{\text{monic}}(x)$ is nonzero. The latter condition follows from the argument in the previous paragraph where we showed $S_p$ to have exactly $K-1$ roots (with multiplicity) and hence the degree of $p_{\text{monic}}(x)$ must be $\KerSize - 1$ with a nonzero coefficient of $x^{\KerSize - 1}$. We are now done as the zero roots of $Q(x)$ matches those of $p_{\text{monic}}(x)$ with $x^{\KerSize - 1} p_{\text{monic}}(1/x)$ adding no additional zero roots. 
%. Furthermore, in the argument above we showed that $S_p$ has exactly $\KerSize-1$ roots and hence $p_{\text{monic}}$ its degree is $K-1$ with its coefficient of $x^{K-1}$ as nonzero. %TT coefficient of $x^{\KerSize-1}$ in $p_{\text{monic}}(x)$ is nonzero in turn follows from the above argument where we showed that $S_p$ has exactly $\KerSize-1$ roots and hence $p_{\text{monic}}$ its degree is $K-1$. %, it suffices to show that the coefficient of $x^{\KerSize - 1}$ in $p_{\text{monic}}(x)$ is nonzero. To see this, we use the fact that $p_{\text{monic}}(x)$ has $\KerSize - 1$ roots in total with multiplicity by the argument from the previous paragraph, and so the degree of $p_{\text{monic}}(x)$ must be $\KerSize - 1$. This means 
 
This concludes the argument that the roots of $Q_1(x)$ with multiplicities matches those  of $Q(x)$.

\subsection*{Step 4. Constructing the leading coefficient of $p$}
Now, we need to just construct the leading coefficient of $p$. As above, let $p_{\text{monic}}(x) = \prod_{\alpha \in S_p} (x - \alpha)$ be the monic polynomial given by the roots of $S_p$, and consider $Q_1(x) = p_{\text{monic}}(x) x^{\KerSize - 1} p_{\text{monic}}(1/x)$. Since $Q_1(x)$ and $Q(x)$ have the same set of roots with multiplicities, we know that $Q(x)= \gamma \cdot Q_1(x)$ for some $\gamma \neq 0$. Let's take $c_p = \sqrt{\gamma}$. With this choice of $c_p$, we define $p(x) = c_p \prod_{\alpha \in S_p} (x - \alpha)$ and have established by construction that $Q(x)=x^{\KerSize-1}p(x)p(1/x)$. The final component is to show that $c_p$ is real so that $p(x) = c_p \prod_{\alpha \in S_p} (x - \alpha)$ is a polynomial with real coefficients. 

%To show that $c_p$ is real, it suffices to show that $\gamma$ is positive. This can be seen as follows. 
%From Fact~\ref{fact:polymultiplication}, we know that $Q(x)=x^{\KerSize-1}p_{\ba}(x)p_{\ba}(1/x)+x^{\KerSize-1}p_{\pb}(x)p_{\pb}(1/x)$ is equivalent to a polynomial representation of $\ba\Conv\ba +\bb\Conv\bb$ up to circular permutation of coefficients. Analogously, 
To show that $c_p$ is real, it suffices to show that $\gamma$ is positive. We show this as follows. 
Let  $\Vector{c}_{\text{monic}} \in \mathbb{R}^{\KerSize}$ denote the vector corresponding to the polynomial representation $p_{\text{monic}}(x)$ such that $Q_1(x) = p_{\text{monic}}(x) x^{\KerSize - 1} p_{\text{monic}}(1/x)$ is equivalent to the polynomial representation of $\Vector{c}_{\text{monic}} \star \Vector{c}_{\text{monic}}$ (using Fact~\ref{fact:polymultiplication}). Recall that $Q(x)$ is analogously equivalent to the polynomial representation of $\ba\Conv\ba +\bb\Conv\bb$. Further, note that the component at index $0$ of self-convolution operations satisfy, $(\ba\Conv\ba +\bb\Conv\bb)_0=\|\ba\|^2+\|\bb\|^2$ and $(\Vector{c}_{\text{monic}} \star \Vector{c}_{\text{monic}})_0=\|\Vector{c}_{\text{monic}}\|^2$. One can check that, these $0$-index components in turn appear as the coefficients of $x^{\KerSize-1}$ in $Q(x)$ and $Q_1(x)$, respectively. This along with $Q(x)=\gamma Q_1(x)$ implies that $\|\ba\|^2+\|\bb\|^2=\gamma \|\bc_{\text{monic}}\|^2$. Since the LHS is strictly positive (as $Q(x)\neq 0$ without loss of generality), we must have $\gamma>0$ as desired. 

\paragraph{Concluding Lemma \ref{lemma:additivepropertyspecialcase}.}
We have shown that $p(x) = c_p \prod_{\alpha \in S_p} (x - \alpha)$ is a polynomial of degree $\KerSize - 1$ with real coefficients. We have shown that $Q(x)$ and $x^{\KerSize - 1} p(x)\cdot p(1/x)$ have the same multi-set of roots and the same leading coefficients, thus proving the polynomial formulation of Lemma \ref{lemma:additivepropertyspecialcase} in  \eqref{eq:main}. This concludes the proof of Lemma \ref{lemma:additivepropertyspecialcase}. 

\subsection{Discussion of the proof technique}\label{appendix:prooftechnique}

We conclude with a discussion of the analysis and highlight the main parts of the proof. At the beginning of the section, we used the KKT conditions to show that it suffices to prove \ref{eq:desired}, an additive property about convolutions of for kernel size. We then showed that it suffices to prove a version of this statement for the sum of two such convolutions, i.e. Lemma \ref{lemma:additiveproperty}. We believe that this property could be of independent interest. 

The bulk of the proof boils down to proving Lemma \ref{lemma:additiveproperty} in the special case of $\Dim = 2 \KerSize - 1$, \ie Lemma \ref{lemma:additivepropertyspecialcase}. Proving Lemma \ref{lemma:additivepropertyspecialcase} was the core technical contribution in this section. Since $\Vector{c}$ does not necessarily always a clean closed-form solution as a function of $\Vector{a}$ and $\Vector{b}$, we needed to construct $\Vector{c}$ \textit{implicitly}. The polynomial representation of convolutions enabled us to implicitly construct $\Vector{c}$ via its roots. To construct $\Vector{c}$ and ensure that the corresponding polynomial representation $p$ had real coefficients, we needed to leverage the structure of the polynomial representation $Q(x)$ of $\Vector{a} \Conv \Vector{a} + \Vector{b} \Conv \Vector{b}$ beyond its palindromic structure. (This additional property was proven in Lemma \ref{lemma:multiplicity}) With this structure, we can factor $Q(x)$ and partition its roots in order to construct the roots of $p$.

\section{Proofs for Section~\ref{sec:explicitbounds}: Induced regularizer in special cases}\label{appendix:proofsexplicitbounds}
For a single channel convolutional network (\ie $\COut=1$), we denote the weights in the first and second layer as $\U\in\bR^\KerSize$ and $\V\in\bR^\Dim$, respectively. Recall that the $\Dim$ dimensional discrete Fourier transform of the weights $\U,\V$ and the linear predictor $\w\in\bR^D$ are denoted as $\hat{\U}=\F_\KerSize\U, \hat{\V}=\F\V, \hat{\w}=\F\w$, respectively. Moreover, the Fourier transform is normalized to be unitary such that for any $\b{z}\in\bR^{\KerSize}$, $\|\b{z}\|=\|\hat{\b{z}}\|$ and $\F\conj{\F}^\top=\conj{\F}{\F^\top}=\Identity$. 
Thus, for all the quantities of interest, we use the $\ell_2$ norm in signal domain interchangeably with $\ell_2$ norm in the Fourier domain, \eg $\|\U\|=\|\hat{\U}\|$, $\|\V\|=\|\hat{\V}\|$, and $\|\w\|=\|\hat{\w}\|$. 

In the  following proofs, we use the formulation of $\RkCout{\ParVec}{\KerSize}{\COut}$ in eq. \eqref{eq:mixedrepresentation} for the case of $\COut=1$ as:
\begin{equation*}
\RkCout{\ParVec}{\KerSize}{1} = \min\limits_{\U \in \mathbb{R}^{\KerSize}, \V \in \mathbb{R}^{\Dim}} \|\hat{\U}\|^2 + \|\hat{\V}\|^2 \quad\st\quad  \hat{\U}\odot \hat{\V} = \hat{\ParVec}.
\end{equation*}

\subsection{Proof of Lemma \ref{lemma:kersize1}}
\explicitone*
\begin{proof}
This statement is trivially true for $\ParVec = 0$, so it suffices to show this for $\ParVec \neq 0$.  When the $\KerSize=1$, the first layer weight $\U\in \mathbb{R}^{1 \times 1}$ is a scalar. Let this scalar be $\U=u \neq 0$. We then have $\hat{\U} = \frac{1}{\sqrt{\Dim}} [u, u, \ldots, u]$. Since $\hat{\U} \Entrywise \hat{\V} = \hat{\ParVec}$, we  have $\hat{\V} = \frac{\sqrt{\Dim}}{u} \hat{\ParVec}$. This means that $\RkCout{\w}{1}{1}=\min_{u} \|\hat{\U}\|^2 + \|{\hat{\V}}\|^2 = \min_u u^2 + \frac{\Dim}{u^2} \norm{\hat{\ParVec}}_2^2$. By using the  AM-GM inequality ($a^2+b^2\ge2ab$),   this is at most $2 \sqrt{\Dim}\norm{\hat{\ParVec}}_2$. Moreover, we can pick $u^2=\sqrt{\Dim}\norm{\hat{\w}}_2$ to achieve equality. 
\end{proof}

\subsection{Induced Regularizer for $\KerSize = 2$}\label{appendix:kersize2}

\explicittwo*
\begin{proof}[Proof of Lemma \ref{lemma:kersize2}]
We first note that for any $\U\in\bR^\KerSize,\V\in\bR^\Dim$ we can re-scale the norms so that $\|\U\|=\|\V\|$ while satisfying the constraints of $\hat{\U}\odot\hat{\V}=\hat{\w}$  in the definition of $\RkCout{\ParVec}{\KerSize}{1}$. Further, such a scaling would be optimal for minimizing the $\ell_2$ norm of weights based on AM-GM inequality that $\|\hat{\U}\|^2 + \|{\hat{\V}}\|^2\ge 2\|\hat{\U}\|\cdot\|\hat{\V}\|$. Thus, in the rest of the proof, we consider the following equivalent formulation of $\RkCout{\ParVec}{\KerSize}{1}$ as: \begin{equation}\label{eq:prodR}\RkCout{\ParVec}{\KerSize}{1} = \min\limits_{\U \in \mathbb{R}^{\KerSize}, \V \in \mathbb{R}^{\Dim}} 2\|{\U}\|\cdot\|\hat{\V}\| \quad \st \hat{\U}\odot \hat{\V} = \hat{\ParVec}\end{equation}

We see that for any $\U,\V$ satisfying the constraint in the above equation, we have: $\forall_{d \in \text{supp}(\hat{\ParVec})}$, it holds that $\Index{\hat{\V}}{d} = \frac{\Index{\hat{\ParVec}}{d}}{\Index{\hat{\U}}{d}}$ (where $\text{supp}(\hat{\ParVec})=\crl*{d\in[\Dim]:|\hat{\w}|\neq0}$). Moreover, at an optimal solution, it is easy to see that $\Index{\hat{\V}}{d} = 0\iff\Index{\hat{\w}}{d} = 0$. 

Let $\U = [c_0, c_1]$. This means that the objective can be written as
\begin{equation}\label{eq:lemmaK2-obj}2 \|\U\| \|{\hat{\V}}\| = 2 \norm{\U} \sqrt{\sum_{d=0}^{\Dim -1} |\Index{\hat{\V}}{d}|^2} = 2 \norm{\U} \sqrt{\sum_{d \in \text{supp}(\hat{\ParVec})} |\hat{\V}[d] |^2} =  2 \sqrt{c_0^2 + c_1^2}\sqrt{\sum_{d \in \text{supp}(\hat{\ParVec})} \frac{|\hat{\ParVec}[d]|^2}{|\hat{\U}[d]|^2}}.\end{equation}
We write the second term in terms of the signal domain representation of $c_0$ and $c_1$. We see that the Fourier transform of $\U$ is given by $\hat{\U}[d]=\frac{1}{\sqrt{\Dim}} \left(c_0 + c_1 \e^{-2 \pi i d/ \Dim}\right) = \left(c_0 + c_1 \cos(-2 \pi i d/ \Dim )\right) + i \left(c_1 \sin(-2 \pi i d/ \Dim )\right)$. We thus have that:
\begin{equation}\label{eq:lemmak2-fu}
    \begin{split}
  |\hat{\U}[d]|^2 &= \frac{1}{\Dim} \left[\left(c_0 + c_1 \cos(-2 \pi i d/ \Dim )\right)^2 + \left(c_1 \sin(-2 \pi i d/ \Dim )\right)^2\right]\\
  &= \frac{1}{\Dim} \left(c_0^2 + c_1^2 + 2 c_0 c_1 \cos(-2 \pi i k/ \Dim )\right). \\
  &= \frac{1}{\Dim} (c_0^2 + c_1^2) \left(1 + \frac{2 c_0 c_1}{c_0^2 + c_1^2} \cos(-2 \pi i d/ \Dim )\right). 
\end{split}
\end{equation}
Let $\alpha = \frac{2 c_0 c_1}{c_0^2 + c_1^2}$. Plugging eq. \eqref{eq:lemmak2-fu} back into the objective in eq. \eqref{eq:lemmaK2-obj} and using $\cos(z)=\cos(-z)$, we get that for any $\U,\V$ satisfying the constraints in the computation of $\RkCout{\w}{2}{1}$, the objective is in the desired formulation:
\begin{equation}\label{eq:lemmaK2-obj2}2 \norm{\U} \norm*{\hat{\V}} = 2 \sqrt{D} \sqrt{\sum_{d \in \text{supp}(\hat{\ParVec})} \frac{\left|\hat{\ParVec}[d]\right|^2}{1 + \alpha \cos(2 \pi i d/ \Dim )}}. \end{equation}

Let us now consider the domain of $\alpha$, which is the only unknown in the above equation. Observe that for any $c_0,c_1$, $\frac{2c_0 c_1}{c_0^2 + c_1^2} \in [-1,1]$. Moreover, any $\alpha \in [-1,1]$ be realized by some values of $c_0$ and $c_1$. Thus all $\alpha \in [-1,1]$ are valid. Here we further remark that the denominator in eq.~\eqref{eq:lemmaK2-obj2} is zero if and only if $\hat{\U}[d]=0$ for any $d\in\Dim$. However, this can only happen if $\hat{\w}[d]=0$ as otherwise the constraints $\hat{\U}\odot\hat{\V}=\w$ is not satisfied for any $\V$.  We can thus, minimize the RHS of eq.~\eqref{eq:lemmaK2-obj2} over $\alpha \in [-1,1]$ to obtain $\RkCout{\w}{2}{1}$. 

If we include the terms corresponding to $d\notin\text{supp}(\hat{\w})$ in the summation in eq.~\eqref{eq:lemmaK2-obj2}, there is a technical condition than can lead to $0/0$ terms in end cases of $\alpha=1$ (when $\hat{\w}[0]=0$) and $\alpha=-1$ (when $\hat{\w}[D/2]=0$). To avoid this technicality, we consider the infimum over $\alpha\in(-1,1)$ rather than minimum over $\alpha\in[-1,1]$. This is equivalent because the expression is continuous on the set of $\alpha$ on which it is well-defined. 
This completes the proof.
\end{proof}

\section{Remaining proofs of results in Section~\ref{sec:SDP}}\label{appendix:sdp-other-proofs}

\subsection{Proof of Corollary \ref{cor:norm}}
\cornorm*
\noindent Corollary \ref{cor:norm} follows from Theorem \ref{thm:tightness}. 
\begin{proof}[Proof of Corollary \ref{cor:norm}]
It suffices to establish the scalar multiplication property, the triangle inequality, and point separation. 

\paragraph{Scalar multiplication.} Let $\gamma \in \mathbb{R}$. By definition, we see that  
\[\RkCout{\gamma \ParVec}{\KerSize}{\COut}=\min_{\U,\V}\|\U\|^2+\|\V\|^2\quad\st\quad\diag(\hat{\U}\hat{\V}^\top)=\gamma\hat{\w}.\]
Let's do a change of variables ${\U}\gets \frac{1}{\sqrt{|\gamma|}}{\U}, {\V}\gets \frac{1}{\text{sign}(\gamma)\sqrt{|\gamma|}}{\V}$ to see that 
\[\RkCout{\gamma \ParVec}{\KerSize}{\COut}= |\gamma| \left[\min_{\U,\V}\|\U\|^2+\|\V\|^2\quad\st\quad\diag(\hat{\U}\hat{\V}^\top)=\hat{\w}\right] = |\gamma| \RkCout{\ParVec}{\KerSize}{\COut}\]
as desired.

\paragraph{Triangle inequality.} It suffices to show that if $\ParVec = \ParVec_1 + \ParVec_2$, then $\RkCout{\ParVec}{\KerSize}{\COut} \le \RkCout{\ParVec_1}{\KerSize}{\COut} + \RkCout{\ParVec_2}{\KerSize}{\COut}$. By Theorem \ref{thm:tightness}, it suffices to show that  $\RSDPk{\ParVec}{\KerSize} \le \RkCout{\ParVec_1}{\KerSize}{\COut} + \RkCout{\ParVec_2}{\KerSize}{\COut}$. Suppose that $\U_1$ and $\V_1$ are optimal solutions to $\RkCout{\ParVec_1}{\KerSize}{\COut}$, and suppose that $\U_2$ and $\V_2$ are optimal solutions to $\RkCout{\ParVec_2}{\KerSize}{\COut}$. If we define:
\begin{align*}
  \SDPVar_1 &= \left[\begin{array}{cc} \Matrix{U}_1\Matrix{U}_1^{\top} & \Matrix{U}_1\Matrix{V}_1^{\top} \\ \Matrix{V}_1 \Matrix{U}_1^{\top} & \Matrix{V}_1\Matrix{V}_1^{\top} \end{array}\right]  \\
    \SDPVar_2 &= \left[\begin{array}{cc} \Matrix{U}_2\Matrix{U}_2^{\top} & \Matrix{U}_2\Matrix{V}_2^{\top} \\ \Matrix{V}_2 \Matrix{U}_2^{\top} & \Matrix{V}_2\Matrix{V}_2^{\top} \end{array}\right]  \\
    \SDPVar &= \SDPVar_1 + \SDPVar_2,
\end{align*}
then we see that the SDP objective $\langle \SDPVar, \Identity \rangle = \langle  \SDPVar_1, \Identity\rangle + \langle  \SDPVar_2, \Identity\rangle = \RkCout{\ParVec_1}{\KerSize}{\COut} + \RkCout{\ParVec_2}{\KerSize}{\COut}$. Moreover, $\SDPVar$ is a feasible solution to \eqref{opt:SDP} for $\ParVec$. This shows $\RSDPk{\ParVec}{k} \le \RkCout{\ParVec_1}{\KerSize}{\COut} + \RkCout{\ParVec_2}{\KerSize}{\COut}$ as desired. 

\paragraph{Point separation.} Notice that $\RkCout{\ParVec}{\KerSize}{\COut} = 0$, then there exist $\U$ and $\V$ such that $\norm{\U}^2 + \norm{\V}^2 = 0$. This means that $\U = 0$ and $\V = 0$, which means that $\ParVec = 0$ as desired. Moreover, if $\ParVec = 0$, then it's clear that $\RkCout{\ParVec}{\KerSize}{\COut} = 0$.  
\end{proof}

\subsection{The dual formulation of SDP}

In order to analyze the SDP formulation, we consider the dual. We use the formulation of the dual variable in \ref{appendix:tightness-main} as $\Vector{\lambda} \in \mathbb{C}^{\Dim}$. In this form, the dual can be expressed as:
\[
\begin{aligned}
    \max_{\Vector{\lambda} \in \mathbb{C}^\Dim} \quad & \Re\left(\langle \Vector{\lambda}, \hat{\ParVec} \rangle\right)   \\
    \textrm{s.t.} \quad & \left[\begin{array}{cc} \Matrix{0}_K & \Matrix{F}_{\KerSize}^{\top} \conj{\Matrix{\Lambda}} \Matrix{F}  \\ \conj{\Matrix{F}} \Matrix{\Lambda} \conj{\Matrix{F}}_{\KerSize}   & \Matrix{0}_{\Dim} \end{array}\right] \preccurlyeq \Identity.
\end{aligned}
\]
To simplify the objective $\Re\left(\langle \Vector{\lambda}, \hat{\ParVec}\right)$, notice that 
the phases of $\Vector{\lambda}$ can be set to align with $\hat{\ParVec}$ without affecting the constraint. Thus, we can set the objective to be $\left|\langle \Vector{\lambda}, \hat{\ParVec} \rangle\right| $. For convenience, we also expand out the conic constraint in vector form, and this reformulation incurs a factor of $2$ on the objective. We thus obtain the following equivalent formulation of the dual:
\begin{equation}
\label{eqn:dualfreqspace}
\begin{aligned}
    \max_{\Vector{\lambda} \in \mathbb{C}^\Dim} \quad & 2 \sum_{d=0}^{\Dim - 1} |\Vector{\lambda}[d]| \cdot |\ParVec[d]|    \\
    \textrm{s.t.} \quad& \forall \Vector{x} \in \mathbb{C}^{\KerSize}, \; \; \sum_{d=0}^{\Dim- 1} |\hat{\Vector{x}}[d]|^2 \cdot |\Vector{\lambda}[d]|^2 \le 1.
\end{aligned}
\end{equation}

We now show that strong duality holds for this SDP.

\begin{proposition}
\label{prop:strongduality}
The SDP in \eqref{opt:SDP} satisfies strong duality. 
\end{proposition}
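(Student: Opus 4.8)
The plan is to establish zero duality gap by verifying Slater's condition, for which it suffices to exhibit a strictly feasible point of the dual program and to note that the primal is feasible. Recall that the primal \eqref{opt:SDP} is $\min_{\SDPVar\succcurlyeq 0}\innerprod{\SDPVar,\Identity}$ subject to the $2\Dim$ affine equalities involving $\Matrix{A}_d^{\text{real}},\Matrix{A}_d^{\text{img}}$, and that its Lagrangian dual is the maximization over $\Vector{\lambda}\in\bC^\Dim$ derived just above (equivalently \eqref{eqn:dualfreqspace}); the dual's single conic constraint has the form ``$\Identity_{\Dim+\KerSize}-\Matrix{M}(\Vector{\lambda})\succcurlyeq 0$'' for the affine matrix map $\Matrix{M}(\Vector{\lambda})=\left[\begin{smallmatrix}\Matrix{0}_K&\Matrix{F}_{\KerSize}^{\top}\conj{\Matrix{\Lambda}}\Matrix{F}\\\conj{\Matrix{F}}\Matrix{\Lambda}\conj{\Matrix{F}}_{\KerSize}&\Matrix{0}_{\Dim}\end{smallmatrix}\right]$, where $\Vector{\lambda}=\Vector{\lambda}^{\text{real}}+i\Vector{\lambda}^{\text{img}}$ repackages the $2\Dim$ real dual variables.

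First I would observe that the primal is feasible with finite value: by the remark following eq.~\eqref{eq:w-uv}, every $\ParVec\in\bR^\Dim$ is realizable by some weights $\U,\V$, so the rank-one matrix obtained by stacking $\U$ and $\V$ (as in the reformulation leading to \eqref{opt:SDPrankconstraint}) is a feasible point of \eqref{opt:SDP}, and $\innerprod{\SDPVar,\Identity}\ge 0$ on the feasible set since $\SDPVar\succcurlyeq 0$; hence the primal optimum is finite.

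Next I would exhibit a strictly feasible dual point by taking $\Vector{\lambda}=\Zero$ (i.e.\ $\Vector{\lambda}^{\text{real}}=\Vector{\lambda}^{\text{img}}=\Zero$): then $\Matrix{\Lambda}=\Zero$, the matrix $\Matrix{M}(\Zero)$ in the conic constraint is the zero matrix, and the slack $\Identity_{\Dim+\KerSize}-\Matrix{M}(\Zero)=\Identity_{\Dim+\KerSize}\succ 0$ is strictly positive definite. Equivalently, in the form \eqref{eqn:dualfreqspace} the constraint $\sum_{d}|\hat{\Vector{x}}[d]|^2\,|\Vector{\lambda}[d]|^2\le 1$ holds with strict inequality $0<1$ for every $\Vector{x}\in\bC^\KerSize$. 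Thus $\Vector{\lambda}=\Zero$ lies in the interior of the dual feasible set.

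Finally I would invoke the standard conic duality (Slater) theorem: a semidefinite program and its dual have no duality gap whenever one of the two is strictly feasible, and strict feasibility of the dual additionally guarantees that the primal minimum in \eqref{opt:SDP} is attained. The hard part, such as it is, is purely bookkeeping rather than a genuine obstacle: one must confirm that the program stated immediately before the proposition really is the Lagrangian dual of \eqref{opt:SDP} (this is exactly the content of the derivation preceding it) and that the complex $\Vector{\lambda}$ reparametrization does not alter the feasible set; once those are granted, the Slater check above is immediate.
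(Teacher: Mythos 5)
Your proposal is correct and follows essentially the same route as the paper: both verify Slater's condition by exhibiting a strictly feasible point of the dual and then invoke standard conic duality. The only difference is cosmetic---you take $\Vector{\lambda}=\Zero$ (giving slack $\Identity\succ 0$) while the paper uses $\Vector{\lambda}=[1/2,0,\ldots,0]$, and your extra remark on primal feasibility/finiteness is a harmless (indeed slightly more careful) addition.
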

\begin{proof}
 To show strong duality, it suffices to show Slater's condition. We just need to find a solution $\Vector{\lambda} \in \mathbb{C}^d$ where the inequality constraint is not tight. That is, we need to find $\Vector{\lambda}$ such that $\forall_{\Vector{x} \in \mathbb{C}^{\KerSize}}, \; \; \sum_{d=0}^{\Dim- 1} |\hat{\Vector{x}}[d]|^2 \cdot |\Vector{\lambda}[d]|^2 < 1$. Let's take $\Vector{\lambda} = [1/2, 0, 0, \ldots, 0]$. Notice that $\sum_{d=0}^{\Dim- 1} |\hat{\Vector{x}}[d]|^2 \cdot |\Vector{\lambda}[d]|^2 = 0.5 |\hat{\Vector{x}}[0]|^2 \le 0.5 < 1$, as desired.
\end{proof}

With the dual, along the fact that $\RSDPDualk{\ParVec}{\KerSize}$ is a norm, we are equipped to prove general upper and lower bounds on the induced regularizer as well as sharper bounds for patterned vectors. 

\subsection{Proof of Lemma \ref{lemma:bounds}}
\bounds*
% \begin{restatable}{lemma}{bounds}
% \label{lemma:bounds}
% For any $\KerSize$, $\COut$, and $\ParVec \in \mathbb{R}^{\Dim}$: 
% \begin{align*}
%     2\sqrt{\frac{\Dim}{\KerSize}} \norm{\hat{\ParVec}}_2&\le \RkCout{\ParVec}{\KerSize}{\COut}\le 2\sqrt{D}\norm{\hat{\ParVec}}_2
%     \\
%     2 \norm{\hat{\ParVec}}_1 &\le 
%     \RkCout{\ParVec}{\KerSize}{\COut} \le 2 \sqrt{\ceil{\frac{\Dim}{\KerSize}}} \norm{\hat{\ParVec}}_1.
% \end{align*}
% \end{restatable}
% \begin{rem*} For the lower bounds,  $2\norm{\hat{\ParVec}}_1$ is tight for $\ParVec = [1, 0,  \ldots, 0]$ and $2\sqrt{\frac{\Dim}{\KerSize}}\norm{\ParVec}$ is tight for $\ParVec = [1, 1, \ldots, 1]$. For the upper bounds, $2\sqrt{\ceil{\frac{\Dim}{\KerSize}}} \norm{\hat{\ParVec}}_1$ is tight when $\KerSize \mid \Dim$ for patterned vectors (see Lemma \ref{lemma:patterned}), and $2\sqrt{\Dim}\norm{\hat{\ParVec}}_2$ is tight for $[1, 0, \ldots, 0]$. 

% \end{rem*}
% Lemma \ref{lemma:bounds} demonstrates that when $\KerSize$ is a small constant, $\RkCout{\ParVec}{\KerSize}{\COut}$ is multiplicatively close to $\RkCout{\ParVec}{1}{\COut}= 2\sqrt{\Dim}\norm{\hat{\w}}_2$. On the other hand, once $\KerSize$ is comparable to $\Dim$, $\RkCout{\ParVec}{\KerSize}{\COut}$ is within a constant factor of $\RkCout{\ParVec}{\Dim}{\COut}=2 \norm{\hat{\w}}_1$.

\begin{proof}%[Proof of Lemma \ref{lemma:bounds}]
The bounds of $2 \norm{\hat{\ParVec}}_1$ and $2 \sqrt{\Dim} \norm{\hat{\ParVec}}_2$ follow in a straightforward way from Lemma \ref{lemma:kersizeD} and Lemma~\ref{lemma:kersize1} coupled with Theorem \ref{thm:tightness}. 

The lower bound of $2 \norm{\hat{\ParVec}}_1$ follows as:
$2 \norm{\hat{\ParVec}}_1 \overset{(a)}= \RkCout{\ParVec}{\Dim}{1} \overset{(b)}= \RkCout{\ParVec}{\Dim}{\COut} \overset{(c)}\le \RkCout{\ParVec}{\KerSize}{\COut},$ 
where (a) follows from Lemma \ref{lemma:kersize1}, (b) follows from Theorem \ref{thm:tightness}, and (c) follows from Remark \ref{remark:kdecrease}.
Similarly, the upper bound of $2 \sqrt{\Dim} \norm{\hat{\ParVec}}_2$ follows as: $2 \sqrt{\Dim} \norm{\hat{\ParVec}}_2 \overset{(a)}= \RkCout{\ParVec}{1}{1} \overset{(b)}=  \RkCout{\ParVec}{1}{\COut} \ge \RkCout{\ParVec}{\KerSize}{\COut},$ 
where (a) follows from Lemma \ref{lemma:kersize1} and (b) follows from Theorem \ref{thm:tightness}, and (c) follows from Remark \ref{remark:kdecrease}

The bulk of the proof lies in showing the lower bound of $2\sqrt{\frac{\Dim}{\KerSize}} \norm{\hat{\ParVec}}_2$, and an upper bound of $2 \sqrt{\ceil{\frac{\Dim}{\KerSize}}} \norm{\hat{\ParVec}}_1$. We first prove the lower bound, and then we prove the upper bound. 

\paragraph{\bf Proof of the lower bound $2\sqrt{\frac{\Dim}{\KerSize}} \norm{\hat{\ParVec}}_2$.} We prove that $\RkCout{\ParVec}{\KerSize}{\COut} \ge 2\sqrt{\frac{\Dim}{\KerSize}} \norm{\hat{\ParVec}}_2$. It suffices to consider a dual feasible vector to eq. \eqref{eqn:dualfreqspace} that achieves an objective $2\sqrt{ \frac{\Dim}{\KerSize}} \norm{\ParVec}_2$. We consider 
\[\Vector{\lambda} = \frac{\ParVec}{\norm{\ParVec}}\sqrt{ \frac{\Dim}{\KerSize}}.\] 
We see that the objective is equal to 
\[2 \sum_{d=0}^{\Dim - 1} |\Vector{\lambda}[d]| |\ParVec[d]| = 2\sqrt{ \frac{\Dim}{\KerSize}} \norm{\ParVec}_2, \] 
as desired. It thus suffices to show that $\Vector{\lambda}$ satisfies $\sum_{d=0}^{\Dim- 1} |\hat{\Vector{x}}[d]|^2 \cdot |\Vector{\lambda}[d]|^2 \le 1$ for all $\Vector{x} \in \mathbb{C}^d$ such that  $\norm{\Vector{x}}_2 \le 1$. Using Holder's inequality, we can bound: \[\sum_{d=0}^{\Dim- 1} |\hat{\Vector{x}}[d]|^2 \cdot |\Vector{\lambda}[d]|^2 \le \left(\max_{0 \le d \le \Dim - 1} |\hat{\Vector{x}}[d]|^2\right) \left(\sum_{d=0}^{\Dim- 1} |\Vector{\lambda}[d]|^2 \right) = \left(\max_{0 \le d \le \Dim - 1} |\hat{\Vector{x}}[d]|^2\right) \norm{\Vector{\lambda}}_2^2.\] We can bound the first term by: 
\[|\hat{\Vector{x}}[d]| = \frac{1}{\sqrt{\Dim}} \left|\sum_{k=0}^{\KerSize -1} \Vector{x}[k] \e^{-2\pi i k d/\Dim}\right| \le \frac{1}{\sqrt{\Dim}} \sum_{k=0}^{\KerSize -1} |\Vector{x}[k] \e^{-2\pi i k d/\Dim}| = \frac{\norm{x}_1}{\sqrt{\Dim}} \le \frac{\sqrt{\KerSize}}{\sqrt{\Dim}}.\] Moreover, we see that $\norm{\Vector{\lambda}} = \sqrt{ \frac{\Dim}{\KerSize}}$. This means that 
$\left(\max_{0 \le d \le \Dim - 1} |\hat{\Vector{x}}[d]|^2\right) \norm{\Vector{\lambda}}_2^2 \le 1$, as desired. 

\paragraph{\bf Proof of the upper bound $2 \sqrt{\ceil{\frac{\Dim}{\KerSize}}} \norm{\hat{\ParVec}}_1$.} We prove that $\RkCout{\ParVec}{\KerSize}{\COut} \le 2 \sqrt{\ceil{\frac{\Dim}{\KerSize}}} \norm{\hat{\ParVec}}_1$. Our main ingredient is Corollary \ref{cor:norm} which tells us that $\RkCout{\ParVec}{\KerSize}{\COut}$ is a norm. We define $T = \ceil{\Dim / \KerSize}$ vectors $\ParVec_0, \ldots, \ParVec_{T-1} \in \mathbb{R}^{\Dim}$ where $\ParVec = \sum_{t=0}^{T-1} \ParVec_t$, and apply Corollary \ref{cor:norm} to obtain that:
\[\RkCout{\ParVec}{\KerSize}{\COut} \le \sum_{t=0}^{T-1} \RkCout{\ParVec_t}{\KerSize}{\COut}.\] 
These vectors are chosen that each $\RkCout{\ParVec_t}{\KerSize}{\COut}$ takes on a simple closed-form solution. 

In order to construct the vectors $\ParVec_t$, we consider $\hat{\Vector{q}} = \sqrt{\hat{\ParVec}}$, defined so that $\ParVec = \Vector{q}^{\downarrow} \Conv \Vector{q}$ and $\norm{\hat{\Vector{q}}}^2=\norm{{\Vector{q}}}^2=\norm{\hat{\w}}_1$. We define vectors $\Vector{r}_0, \ldots, \Vector{r}_{T-1} \in \mathbb{R}^{\Dim}$ such that $\sum_{t=0}^{T-1} \Vector{r}_t = \Vector{q}$ as follows. Roughly speaking these vectors consist of the disjoint subsets of the coordinates of $\Vector{q}$ corresponding to the $t^{\text{th}}$ block of size $\KerSize$. More formally, for $0 \le t \le T-1$, let $\Vector{r}_t$ be defined so that $\Vector{r}_t[l] = \Vector{q}[l]$ for $l \in [t \cdot \KerSize, \min((t+1) \cdot \KerSize - 1, \Dim - 1)]$, and $\Vector{r}_t[l] = 0$ otherwise.  Let $\ParVec_t =  \Vector{r}_t^{\downarrow} \Conv \Vector{q}$ for $0 \le t \le T - 1$. It is evident that $\sum_{t}\Vector{r}_t=\Vector{q}_t$ and hence $\sum_t\w_t=\w$. 

We now show that $\RkCout{\ParVec_t}{\KerSize}{\COut} \le 2 \norm{\Vector{r}_t} \norm{\Vector{q}}$. We show this by explicitly constructing solutions to \eqref{eq:inducedregularizer}, taking advantage of the fact that $\Vector{r}_t$ is effectively a vector in $\mathbb{R}^{\KerSize}$ that is zero-padded appropriately. This $\KerSize$ dimensional vector $\Vector{r}'_t \in \mathbb{R}^{\KerSize}$ is given by 
$\Vector{r}'_t[k] = \Vector{r}_t[(t \cdot \KerSize + k) \mod \Dim]$ for $0 \le k \le \KerSize-1$. Now, we wish to write $\ParVec_t$ as a convolution   $(\Vector{r}'_t)^{\downarrow} \Conv  \Vector{q}_t$, for some suitably chosen vector $\Vector{q}_t$. Since $\ParVec_t = \Vector{r}_t^{\downarrow} \Conv \Vector{q}$ and $\Vector{r}_t'$ is merely a circular shifted version of $\Vector{r}_t$, we can take $\Vector{q}_t \in \mathbb{R}^{\Dim}$ to be $\Vector{q}$ with the coordinates shifted appropriately. Now, we can rescale $\Vector{r}'_t$ and $\Vector{q}_t$ so that they have equal $\ell_2$ norms, and obtain the following vectors: $\left( \sqrt{\frac{\norm{\Vector{q}}_t}{\norm{\Vector{r}'_t}}} \right)\Vector{r}'_t$ and $\left(\sqrt{\frac{\norm{\Vector{r}'_t}}{\norm{\Vector{q}}_t}} \right)\Vector{{q}}_t$. These vectors are a feasible solution to eq. \eqref{eq:inducedregularizer} for $\RkCout{\ParVec_t}{\KerSize}{\COut}$ and achieve an objective of $ 2 \norm{\Vector{r}'_t} \norm{\Vector{q}_t} = 2 \norm{\Vector{r}_t} \norm{\Vector{q}}$, as desired. 

Using that $\RkCout{\ParVec_t}{\KerSize}{\COut} \le 2 \norm{\Vector{r}_t} \norm{\Vector{q}}$ for $0 \le t \le T-1$, we obtain the following bound on $\RkCout{\ParVec}{\KerSize}{\COut}$:
\[\RkCout{\ParVec}{\KerSize}{\COut}  \le 2 \norm{\Vector{q}} \sum_{t=0}^{T-1} \norm{\Vector{r}_t}.\]Now, notice that since the supports of $\Vector{r}_t$ for $0 \le t \le T-1$ are disjoint, $\sum_{t=0}^{T-1} \norm{\Vector{r}_t}^2 = \norm{\Vector{q}}^2 $. Applying AM-GM, this means that 
\[(\sum_{t=0}^{T-1} \norm{\Vector{r}_t})^2 \le T (\sum_{t=0}^{T-1} \norm{\Vector{r}_t}^2) = T \norm{\Vector{q}}^2.\] 
Thus, we have that 
\[\RkCout{\ParVec}{\KerSize}{\COut}  \le 2 \sqrt{T}  \norm{\Vector{q}}^2 = 2 \sqrt{T} \norm{\hat{\ParVec}}_1 = 2 \sqrt{\ceil{\frac{\Dim}{\KerSize}}} \norm{\hat{\ParVec}}_1 .\] 

\end{proof}

\subsection{Proof of Lemma \ref{lemma:patterned}}

\patterned*
\begin{proof}[Proof of Lemma \ref{lemma:patterned}]

We first prove an upper bound on $\RkCout{\ParVec}{\KerSize}{\COut}$, and then we prove a matching lower bound. In these proofs, let $\ParVec = \ParVec(\Pattern)$. Both of these proofs use the standard fact that $\hat{\ParVec}[(\Dim / P) \cdot p] = \sqrt{\frac{\Dim}{P}} \hat{\Pattern}[p]$ for $0 \le p \le P - 1$, and $\hat{\ParVec}[d] = 0$ if $(\Dim / P) \nmid d$. 

% \begin{lemma}[Upper bound]
% \label{lemma:upperboundpatterned}
% Consider vectors  $\ParVec(\Pattern) = [\Pattern, \Pattern, \ldots, \Pattern]\in\bR^\Dim$ specified by $\Pattern \in \mathbb{R}^{\SmallDim}$ (such that $\Dim$ is a multiple of $\SmallDim$). 
% \begin{compactenum}[(a)]
% \item For any  $\KerSize \le \SmallDim$, it holds that $\forall \COut$: 
% \[\RkCout{\ParVec(\Pattern)}{\KerSize}{\COut} \le \frac{\Dim}{\SmallDim} \cdot \RkCoutDim{\SmallDim}{\Pattern}{\KerSize}{1}.\] 
% \item For $\SmallDim\le \KerSize\le\Dim$ if $\KerSize  = \SmallDim \cdot T$ for integer $T$, then $\forall \COut$: 
% \[\RkCout{\ParVec(\Pattern)}{\KerSize}{\COut} \le 2\frac{\Dim}{\sqrt{T} \SmallDim} \norm{\hat{\Pattern}}_1. 
% \] 
% \end{compactenum}
% \end{lemma}
% \begin{proof}[Proof of Lemma \ref{lemma:upperboundpatterned}]
\paragraph{\textbf{Upper bound.}} For the upper bound, it suffices to consider the case of a single output channel. We explicitly construct a pair $(\Vector{u}, \Vector{v})$ where $\norm{\Vector{u}}^2 + \norm{\Vector{v}}^2$ achieves the desired objective. 

First, we consider the case where $\KerSize \le \SmallDim$. We construct $(\Vector{u}, \Vector{v})$  using an optimal solution $\Vector{u}_{\Pattern}$ and $\Vector{v}_{\Pattern}$ to eq. \eqref{eq:inducedregularizer} for $\RkCoutDim{\SmallDim}{\Pattern}{\KerSize}{1}$. We let $\Vector{u}$ be defined so that $\Vector{u}[k] = \sqrt{\frac{\Dim}{P}} \Vector{u}_{\Pattern}[k]$ for $ 0 \le k \le \KerSize -1$. We let $\Vector{v}$ be defined to be $\Vector{v} = [\Vector{v}_{\Pattern}, \ldots, \Vector{v}_{\Pattern}]$. Notice that: \[\norm{\Vector{u}}^2 + \norm{\Vector{v}}^2 = \frac{\Dim}{P} \norm{\Vector{u}_{\Pattern}}^2 +  \frac{\Dim}{P} \norm{\Vector{v}_{\Pattern}}^2 = \frac{\Dim}{P} \RkCoutDim{\SmallDim}{\Pattern}{\KerSize}{1},\] as desired. To see that $\Vector{u}^{\downarrow} \Conv \Vector{v} = \ParVec$, it suffices to show $\hat{\Vector{u}} \Entrywise \hat{\Vector{v}} = \ParVec$. Notice that $\hat{\Vector{v}}[(\Dim / P) \cdot p] = \frac{\sqrt{\Dim}}{\sqrt{P}} \hat{\Vector{v}_{\Pattern}}[p]$ for $0 \le p \le P - 1$, and $\hat{\Vector{v}}[d] = 0$ if $(\Dim / P) \nmid d$. This means that $(\hat{\Vector{u}} \Entrywise \hat{\Vector{v}})[d] = 0 = \hat{\ParVec}[d] $ if $(\Dim / P) \nmid d$ as desired. Thus it suffices to handle $(\hat{\Vector{u}} \Entrywise \hat{\Vector{v}})[(\Dim / P) \cdot p]$. Notice that $\hat{\Vector{v}}[(\Dim / P) \cdot p] = \frac{\sqrt{\Dim}}{\sqrt{P}} \hat{\Vector{v}}_{\Pattern}[p]$, and $\hat{\Vector{u}}[(\Dim / P) \cdot p] = \frac{\sqrt{P}}{\sqrt{\Dim}} \frac{\sqrt{\Dim}}{\sqrt{P}} \hat{\Vector{u}}_{\Pattern}[p] = \hat{\Vector{u}}_{\Pattern}[p]$. This means that:
\[(\hat{\Vector{u}} \Entrywise \hat{\Vector{v}})[(\Dim / P) \cdot p] = \frac{\sqrt{\Dim}}{\sqrt{P}}  \hat{\Vector{u}}_{\Pattern}[p] \hat{\Vector{v}}_{\Pattern}[p] = \frac{\sqrt{\Dim}}{\sqrt{P}} \hat{\Pattern}[p] = \hat{\ParVec}[(\Dim / P) \cdot p],\]
as desired. 

Next, we consider the case where $\KerSize  = T \cdot P$.  We construct $(\Vector{u}, \Vector{v})$  using an optimal solution $\Vector{u}_{\Pattern}$ and $\Vector{v}_{\Pattern}$ to eq. \eqref{eq:inducedregularizer} for $\RkCoutDim{\SmallDim}{\Pattern}{\SmallDim}{1}$. We let $\Vector{u} = \frac{\sqrt{\Dim}}{T^{3/4} \sqrt{P}}  [\Vector{u}_{\Pattern}, \ldots, \Vector{u}_{\Pattern}]$ be a scaled version of $T$ repeated copies of $\Vector{u}_{\Pattern}$. We let $\Vector{v} = \frac{1}{T^{1/4}} [\Vector{v}_{\Pattern}, \ldots, \Vector{v}_{\Pattern}]$ be a scaled version of $\frac{\Dim}{P}$ repeated copies of $\Vector{v}_{\Pattern}$. Then,
\[\norm{\Vector{u}}^2 + \norm{\Vector{v}}^2 = \frac{\Dim}{\sqrt{T} P} \norm{\Vector{u}_{\Pattern}}^2 +  \frac{\Dim}{\sqrt{T} P} \norm{\Vector{v}_{\Pattern}}^2 =  \frac{\Dim}{\sqrt{T} P} \RkCout{\Pattern}{\SmallDim}{1} = 2 \frac{\Dim}{\sqrt{T} P} \norm{\hat{\Pattern}}_1 ,\] as desired. To see that $\Vector{u}^{\downarrow} \Conv \Vector{v} = \ParVec$, it suffices to show $\hat{\Vector{u}} \Entrywise \hat{\Vector{v}} = \hat{\ParVec}$. Notice that $\hat{\Vector{v}}[(\Dim / P) \cdot p] = \frac{\sqrt{\Dim}}{T^{1/4} \sqrt{P}} \hat{\Vector{v}}_{\Pattern}[p]$ for $0 \le p \le P - 1$, and $\hat{\Vector{v}}[d] = 0$ if $(\Dim / P) \nmid d$. This means that $(\hat{\Vector{u}} \Entrywise \hat{\Vector{v}})[d] = 0 = \hat{\ParVec}[d] $ if $(\Dim / P) \nmid d$ as desired. Thus it suffices to handle $(\hat{\Vector{u}} \Entrywise \hat{\Vector{v}})[(\Dim / P) \cdot p]$. Notice that $\hat{\Vector{v}}[(\Dim / P) \cdot p] = \frac{\sqrt{\Dim}}{T^{1/4} \sqrt{P}} \hat{\Vector{v}}_{\Pattern}[p]$, and $\hat{\Vector{u}}[(\Dim / P) \cdot p] = \frac{T \sqrt{P}}{\sqrt{\Dim}} \frac{\sqrt{\Dim}}{\sqrt{P} T^{3/4}} \hat{\Vector{u}}_{\Pattern}[p] = T^{1/4} \hat{\Vector{u}}_{\Pattern}[p]$. This gives 
\[(\hat{\Vector{u}} \Entrywise \hat{\Vector{v}})[(\Dim / P) \cdot p] = \frac{\sqrt{\Dim}}{\sqrt{P}}  \hat{\Vector{u}}_{\Pattern}[p] \hat{\Vector{v}}_{\Pattern}[p] = \frac{\sqrt{\Dim}}{\sqrt{P}} \hat{\Pattern}[p] = \hat{\ParVec}[(\Dim / P) \cdot p],\]
as desired.

\paragraph{\textbf{Lower bound.} } For the lower bound, we use the dual formulation in eq. \eqref{eqn:dualfreqspace}. Our approach is to construct a dual vector $\Vector{\lambda} \in \mathbb{C}^{\Dim}$ so that eq. \eqref{eqn:dualfreqspace} achieves the desired objective.

Again, first we consider the case where $\KerSize \le \SmallDim$. Let $\Vector{\lambda}_{\Pattern} \in \mathbb{C}^{P}$ be the dual optimal solution for $\RSDPk{\Pattern}{\KerSize}$. Now, let $\Vector{\lambda}[(\Dim / P) \cdot p] = \frac{\sqrt{\Dim}}{\sqrt{P}} \Vector{\lambda}_{\Pattern}[p]$ for $0 \le p \le P - 1$, and $\Vector{\lambda}[d] = 0$ if $(\Dim / P) \nmid d$. Notice that the objective becomes:
\begin{align*}
  2 \sum_{d=0}^{\Dim} |\Vector{\lambda}[d]| \cdot |\hat{\ParVec}[d]| &= 2 \sum_{p=0}^{P-1} |\Vector{\lambda}[(\Dim / P) \cdot p]| \cdot |\hat{\ParVec}[(\Dim / P) \cdot p]| \\
  &=   2 \frac{\Dim}{P} \sum_{p=0}^{P-1} |\Vector{\lambda}_{\Pattern} [p]| \cdot |\hat{\Pattern}[p]|  \\
  &=  2 \frac{\Dim}{P} \sum_{p=0}^{P-1} |\Vector{\lambda}_{\Pattern} [p]| \cdot |\hat{\Pattern}[p]|  \\
  &= \frac{\Dim}{P} \RSDPk{\Pattern}{\KerSize} \\
  &= \frac{\Dim}{P} \RkCout{\Pattern}{\KerSize}{\COut},
\end{align*}
where the last equality follows from tightness of the SDP (Theorem \ref{thm:tightness}). It thus suffices to show that $\Vector{\lambda}$ is dual feasible. For $\Vector{x} \in \mathbb{C}^{\KerSize}$ such that $\norm{\Vector{x}} \le 1$, consider:
\begin{align*}
    \sum_{d=0}^{\Dim- 1} |\hat{\Vector{x}}[d]|^2 \cdot |\Vector{\lambda}[d]|^2 &= \sum_{p=0}^{P- 1} |\hat{\Vector{x}}[(\Dim / P) \cdot p]|^2 \cdot |\Vector{\lambda}[(\Dim / P) \cdot p]|^2 \\
    &= \frac{\Dim}{P} \sum_{p=0}^{P- 1} |\hat{\Vector{x}}[(\Dim / P) \cdot p]|^2 \cdot |\Vector{\lambda}_{\Pattern}[p]|^2.
\end{align*}
Now,  let $\hat{x}^{(P)}\in \mathbb{C}^{P}$ be the Fourier representation of $x$ when the base dimension is $P$. Observe that $\hat{\Vector{x}}[(\Dim / P) \cdot p]$ is equal to $\sqrt{\frac{P}{\Dim}} \hat{\Vector{x}}^{(P)}[p]$. Thus the above expression is equal to: 
\[\frac{\Dim}{P} \frac{P}{\Dim}  \sum_{p=0}^{P- 1} | \hat{\Vector{x}}^{(P)}[p]|^2 \cdot |\Vector{\lambda}_{\Pattern}[p]|^2 = \sum_{p=0}^{P- 1} | \hat{\Vector{x}}^{(P)}[p]|^2 \cdot |\Vector{\lambda}_{\Pattern}[p]|^2 .\]
Since $\Vector{\lambda}_{\Pattern}[p]$ is dual feasible for the $P$ dimensional problem, we see that this is at most $1$, as desired. 

Next, we consider the case where $\KerSize  = T \cdot P$. We consider $\Vector{\lambda}[(\Dim / P) \cdot p] = \frac{\sqrt{\Dim}}{\sqrt{K}}$ for $0 \le p \le P - 1$, and $\Vector{\lambda}[d] = 0$ if $(\Dim / P) \nmid d$. Notice that the objective in eq. \eqref{eqn:dualfreqspace} is equal to:
\[2 \sum_{d=0}^{\Dim} |\Vector{\lambda}[d]| \cdot |\hat{\ParVec}[d]| = 2 \frac{\sqrt{\Dim}}{\sqrt{\KerSize}} \sum_{p=0}^{P-1} |\hat{\ParVec}[(\Dim / P) \cdot p]| =  2 \frac{\sqrt{\Dim}}{\sqrt{\KerSize}} \norm{\hat{\ParVec}}_1 ,\] as desired. It thus suffices to show that $\Vector{\lambda}$ is dual feasible. For $\Vector{x} \in \mathbb{C}^{\KerSize}$ such that $\norm{\Vector{x}} \le 1$, we consider
\[\sum_{d=0}^{\Dim- 1} |\hat{\Vector{x}}[d]|^2 \cdot |\Vector{\lambda}[d]|^2 = \frac{\Dim}{\KerSize} \sum_{p=0}^{P- 1} |\hat{\Vector{x}}[(\Dim / P) \cdot p]|^2. \] Let $\Vector{x}_0, \ldots, \Vector{x}_{T-1} \in \mathbb{C}^{P}$ be defined so that $\Vector{x}_t[p] = \Vector{x}[t \cdot \KerSize + p]$ for $0 \le p \le P - 1$ and $0 \le t \le T - 1$. Now, let $\hat{\Vector{x}}_t^{(P)}$ denote the Fourier representation of $\Vector{x}$ when the base dimension is $P$, and observe that $\hat{\Vector{x}}[(\Dim / P) \cdot p] =  \sqrt{\frac{P}{\Dim}} \sum_{t=0}^{T-1}  \hat{\Vector{x}}^{(P)}_t[p]$. 
Thus we can rewrite the above expression as:
\begin{align*}
    \frac{\Dim}{\KerSize} \frac{P}{\Dim} \sum_{p=0}^{P- 1}  \left|\sum_{t=0}^{T-1}  \hat{\Vector{x}}^{(P)}_t[p]\right|^2 &= \frac{1}{T} \sum_{p=0}^{P- 1} \sum_{t=0}^{T-1}  T \left|\hat{\Vector{x}}^{(P)}_t[p]\right|^2 \\
    &=  \sum_{t=0}^{T-1}  \norm{\hat{\Vector{x}}^{(P)}_t}^2 \\
    &= \sum_{t=0}^{T-1}  \norm{\Vector{x}_t}^2 \\
    &= \norm{\Vector{x}}^2 \le 1,
\end{align*}
as desired. This completes the proof.
\end{proof}

% From these matching upper and lower bounds, we can easily conclude Lemma \ref{lemma:patterned}.
% \begin{proof}[Proof of Lemma \ref{lemma:patterned}]
% Lemma \ref{lemma:patterned} follows directly from Lemma \ref{lemma:upperboundpatterned} and Lemma \ref{lemma:lowerboundpatterned}.
% \end{proof}

\section{Appendix for Section~\ref{sec:mult-input-channel}: Networks with multi-channel inputs}\label{appendix:proofsmulti-input}
For all the results in this appendix, we recall that the weights of the first and second layer are denoted as $\UU=\{\U_\cin\}_{\cin\in[\CIn]}$ with $\U_\cin\in\bR^{\KerSize\times\COut} \forall_{\cin\in[\CIn]}$ and $\V\in\bR^{\Dim\times\COut}$, respectively. 

First, we provide a more formal descriptions of the matrices in the SDP.  We provide a block-wise description of only the upper diagonal blocks, with lower diagonal blocks filled to satisfy the Hermitian matrix property. Additionally, for matrices $\crl{\Matrix{A}_{d,\cin}^\text{real},\Matrix{A}_{d,\cin}^\text{img}}_{d\in[D],\cin\in[\CIn]}$ any unspecified block is by default treated as zero matrix $\Matrix{0}$ of appropriate dimension: 
 \begin{itemize}
 \item For $\cin_1,\cin_2\in[\CIn]$ with $\cin_2\ge\cin_1$, the $\KerSize\times\KerSize$ block with indices $(\cin_1:(\cin_1+1)\KerSize)$ along rows and $(\cin_2:(\cin_2+1)\KerSize)$ along columns is given as \[\Index{\SDPVar}{\cin_1:(\cin_1+1)\KerSize, \cin_2:(\cin_2+1)\KerSize}=\U_{\cin_1}\U_{\cin_2}^\top;\] 
 \item For $\cin\in[\CIn]$, the $\KerSize\times\Dim$ blocks with indices $(\cin:(\cin+1)\KerSize)$ along rows and $(\CIn\KerSize:(\Dim+\CIn\KerSize))$ along column, are given as follows: 
 \[\Index{\SDPVar}{\cin:(\cin+1)\KerSize, \CIn\KerSize:(\Dim+\CIn\KerSize)}=\U_{\cin}\V^\top.\]
 
 Further,  
\begin{equation*}
\begin{split}
    \forall_{d\in[\Dim]},\; &\Index{\Matrix{A}_{d,\cin}^\text{real}}{\cin:(\cin+1)\KerSize, \CIn\KerSize:(\Dim+\CIn\KerSize)}=\Q_{d},\text{ and  }\\&\Index{\Matrix{A}_{d,\cin}^\text{img}}{\cin:(\cin+1)\KerSize, \CIn\KerSize:(\Dim+\CIn\KerSize)}=i\cdot \Q_d.
\end{split}
\end{equation*}

Note that for $\cin^\prime\neq \cin$, the corresponding blocks in  ${\Matrix{A}_{d,\cin^\prime}^\text{real}},{\Matrix{A}_{d,\cin^\prime}^\text{img}}$ remain the default zero.
 \item Finally, the lower-right $\Dim\times \Dim$ block is given as  \[\Index{\SDPVar}{\CIn\KerSize:(\Dim+\CIn\KerSize),\CIn\KerSize:(\Dim+\CIn\KerSize)}=\V\V^\top.\] 
 \end{itemize} 

\subsection{Realizability of linear functions}\label{appendix:proofsmulti-input-realize}
We show that multiple output channels can be needed to merely realize all linear maps for multi-input-channel networks. 
\realizability*

We prove this lemma by showing that the sub-network corresponding to each output channel can realize a matrix in $\bR^{\Dim\times\CIn}$ of rank at most $\KerSize$. \newline 

\begin{proof}
We first reiterate the expressions for the linear predictor $\ParMatrixFn(\UU,\V)$ in terms of $\UU,\V$:
\begin{equation}\label{eq:conv-multi-input-proof}
 \forall_{\cin\in[\CIn]},\; \ParMatrixFn{(\UU,\V)}[:,\cin]= \sum_{\cout=0}^{\COut-1} \prn*{\U_\cin[:,\cout]\Conv\V[:,\cout]^{\downarrow }}^\downarrow.
\end{equation}
We will now express the above formulation as matrix multiplication using the following new notation: $\forall_{\cout\in[\COut]}$ let $\underline{\U}_\cout\in\bR^{\KerSize\times\CIn}$ denote the representation of first layer weights corresponding to each output channel such that \[\forall_{\cin\in[\CIn]}\forall_{\cout\in[\COut]},\;\underline{\U}_\cout[:,\cin]=\U_\cin[:,\cout].\]

For $\cout\in[\COut]$, consider the following matrix which consists of first $\KerSize$ columns of the circulant matrix formed by $\V[:,\cout]$:
\[ \forall_{\cout\in[\COut]},\;\widetilde{\V}_\cout=\frac{1}{\sqrt{\Dim}}\brk*{\begin{array}{cccc}\V[0,\cout]&\V[\Dim-1,\cout]&\cdots&\V[\Dim-\KerSize+1,\cout]\\\V[1,\cout]&\V[0,\cout]&\cdots&\V[\Dim-\KerSize+2,\cout]\\\vdots&\vdots&&\vdots\\\V[\Dim-1,\cout]&\V[\Dim-2,\cout]&\cdots&\V[\Dim-\KerSize,\cout]\end{array}}\in\bR^{\Dim\times\KerSize}\]
Based on this notation, we can check by following the definitions that for all $\cout$,
\[\prn*{\U_\cin[:,\cout]\Conv\V[:,\cout]^{\downarrow }}^\downarrow=\widetilde{\V}_\cout\U_\cin[:,\cout]=\widetilde{\V}_\cout\,{\underline{\U}}_\cout[:,\cin].\] We can thus write  $\ParMatrixFn(\UU,\V)$ as follows:
\begin{equation}\label{eq:conv-matmul}
\ParMatrixFn{(\UU,\V)}= \sum_{\cout=0}^{\COut-1} \widetilde{V}_\cout\,\underline{\U}_\cout.
\end{equation}

We now observe that each term in the summation $\widetilde{V}_\cout\,\underline{\U}_\cout$ is of rank utmost $\KerSize$ as $\widetilde{V}_\cout\in\bR^{\Dim\times\KerSize}$ and $\underline{\U}_\cout\in\bR^{\Dim\times\KerSize}$. Thus, for any $\UU,\V$, $\rank(\ParMatrixFn{(\UU,\V)})\le\KerSize\cdot\COut$. From this we conclude that in order to realize all linear maps in the multi-channel input space of  $\bR^{\Dim\times\CIn}$, we necessarily need $\KerSize\cdot\COut\ge\min\crl{\CIn,\Dim}$. \newline

Additionally, in eq. \eqref{eq:conv-matmul} we see that since $\underline{\U}_\cout$ are unconstrained, each term in the sum can realize any rank $1$ matrix. This implies that $\COut\ge\min\crl{\CIn,\Dim}$ is a sufficient condition for $\ParMatrixFn{(\UU,\V)}$ to realize any $\W\in\bR^{\Dim\times\CIn}$. However, from Theorem~\ref{thm:multichannelkersizeD}  we know that this condition is not necessary. It is an open question to derive the tightest necessary and sufficient conditions.
\end{proof}
 
 We note that a similar proof as above can be shown using the Fourier representation in eq. \eqref{eq:w-uv-multiinput}. 
 In the special cases of $\KerSize=1$ and $\KerSize=\Dim$, in Theorems~\ref{thm:multichannelkersize1}-\ref{thm:multichannelkersizeD} we show that the SDP is tight once $\COut$ is large enough to realize all linear functions, which in these cases is $\COut\ge\CIn/\KerSize$. In these end cases, we further derive interesting closed form expressions of $\RkCinCout{\ParMatrix}{\KerSize}{\COut}{\CIn}$. 
 
 \subsection{Proof of Lemma \ref{lemma:multichanneltightness}}
 \multichanneltightness*
 \begin{proof}
 We show a stronger statement: \textit{any} optimal solution to the SDP has rank at most $\CIn \KerSize$. This implies the desired result, because $\RkCinCout{\ParMatrix}{\KerSize}{\COut}{\CIn}$ is equivalent to the SDP with a rank constraint of $\COut$.
 
 For the remainder of the proof, we let $\SDPVar$ be an optimal solution to the SDP and we prove that $\text{rank}(\SDPVar) \le \CIn \KerSize$. Let $\text{rank}(\SDPVar) = L$. We can write:
 \[ \SDPVar=\left[\!\!\begin{array}{c} \Matrix{U}_0 \\ \Matrix{U}_1 \\ \ldots \\ \Matrix{U}_{\CIn-1} \\ \Matrix{V} \end{array}\!\!\right] \!\!\!\!\begin{array}{c}\begin{array}{ccccc}[\Matrix{U}_0^{\top} & \Matrix{U}_1^{\top} & \ldots & \Matrix{U}_{\CIn-1}^{\top} & \Matrix{V}^{\top}\! ]\end{array}\end{array}\]
 where the matrices $\U_r \in \mathbb{R}^{\KerSize \times L}$  for $0 \le \cin \le \CIn - 1$ correspond to the weights in the convolution layer, the matrix $\V \in \mathbb{R}^{\Dim \times L}$ corresponds to the weights in the linear layer. 
 
 It suffices to show that there exists a spanning set of the column space of $\left[\Matrix{U}_0^\top ,\Matrix{U}_1^\top, \ldots,\Matrix{U}_{\CIn}^\top,\Matrix{V}^\top\right]^\top$
with  at most $\CIn \KerSize$ elements.
 The key ingredient of the proof is the KKT conditions. Using similar logic to Appendix \ref{appendix:tightness-main}, we can write the KKT conditions in the following form: 
 \begin{align*}
    \text{ for all } 0 \le \cin \le \CIn - 1: \hat{\ParMatrix}[:, r] &= \hat{\U}_\cin \odot \hat{\V}  \label{eq:kkt1multi} \tag{KKT $1$} \\
    \!\!\!\!\begin{array}{c}\left[\!\!\begin{array}{cccc;{2pt/2pt}c}  &  & & &   \conj{\F}_K^T \Matrix{\Lambda}_0 \conj{\F} \\  &  & \Matrix{0}_{(\CIn \cdot \KerSize) \times (\CIn \cdot \KerSize)}& &  \conj{\F}_K^T \Matrix{\Lambda}_1 \conj{\F}  \\  &  &  & & \vdots \\
     &  & & &  \conj{\F}_K^T \Matrix{\Lambda}_{\CIn - 1} \conj{\F}  \\ 
    \hdashline[2pt/2pt] \rule{0pt}{10pt}
   \F \conj{\Matrix{\Lambda}}_0 \F_K &    \F \conj{\Matrix{\Lambda}}_1 \F_K & \ldots &  \F \conj{\Matrix{\Lambda}}_{\CIn - 1} \F_K & \Matrix{0}_{\Dim \times \Dim}  \end{array}\!\!\right]\\\;\end{array} &\preceq \Matrix{I}_{\Dim + \KerSize \CIn}  \tag{KKT $2$} \\
    \conj{\hat{\V}} &= \sum_{\cin = 0}^{\CIn - 1} \conj{\Matrix{\Lambda}}_\cin \hat{\U}_\cin  \label{eq:kkt3multi} \tag{KKT $3$} \\
   \text{ for all } 0 \le \cin \le \CIn - 1:  \hat{\U}_\cin &= \F_K \conj{\F}_K^T \Matrix{\Lambda}_\cin \conj{\hat{\V}} \label{eq:kkt4multi} \tag{KKT $4$}.
 \end{align*}
 where the matrices $\Matrix{\Lambda}_r \in \mathbb{C}^{\Dim \times \Dim}$  for $0 \le \cin \le \CIn - 1$ are diagonal and correspond to the relevant dual variables.  Let's use \eqref{eq:kkt4multi} to construct a spanning set of the column space of this matrix. Let $S_{\cin} =\left\{\Vector{u}_{\cin} \in \mathbb{R}^{\KerSize}\right\}$ be a basis for the column space of $\U_{\cin}$ for each $0 \le \cin \le \CIn$; since $\U_{\cin}$ is $\KerSize \times L$ dimensional, we see that $S_{\cin}$ has at most $\KerSize$ elements. We see by \eqref{eq:kkt4multi} that the set of vectors given by the concatenation of $[\Vector{u}_0, \Vector{u}_1, \ldots, \Vector{u}_{\CIn - 1}, \conj{F} \left(\sum_{\cin = 0}^{\CIn - 1} \conj{\Matrix{\Lambda}}_\cin \hat{\Vector{u}}_\cin]\right)]$ for $\Vector{u}_{\cin} \in S_{\cin}$ for each $0 \le \cin \le \CIn$ spans the column space of the desired matrix. By construction, this spanning set has at most $\CIn \KerSize$ elements, as desired. 
 
 \end{proof}

 \subsection{Proofs of Theorems~\ref{thm:multichannelkersize1}-\ref{thm:multichannelkersizeD}: induced regularizer for \texorpdfstring{$\KerSize=1$}{K=1} and \texorpdfstring{$\KerSize=\Dim$}{K=D}}\label{appendix:proofsmulti-input-specialcases}
 \multiinputone*
 
 \begin{proof}
 For $\KerSize=1$, we have that $\forall_{\cin\in[\CIn]}$, $\U_r\in\bR^{1\times\COut}$. For this proof, we stack the vectors $\U_r$ to obtain $\widetilde{\U}\in\bR^{\CIn\times\COut}$ such that $\forall_{\cin\in[\CIn]},\, \widetilde{\U}[\cin,:]=\U_\cin$.\newline

\noindent We work with the definition of the linear predictor realized by network as: $\forall_{\cin\in[\CIn]},\; \ParMatrixFn{(\UU,\V)}[:,\cin]= \sum_{\cout=0}^{\COut-1} \prn*{\U_\cin[:,\cout]\Conv\V[:,\cout]^{\downarrow }}^\downarrow$  (from eq.~\eqref{eq:w-uv-multiinput}). 
 For kernel size of $1$, we notice that from the definition of convolution in Definition~\ref{def:conv}, we have the following:
 \begin{equation}
     \prn*{\U_\cin[:,\cout]\Conv\V[:,\cout]^{\downarrow}}^{\downarrow}=\frac{\tilde{\U}[\cin,\cout]}{\sqrt{\Dim}}\V[:,\cout]\propto \V[:,\cout].
 \end{equation}
 Plugging this back into the expression of $\ParMatrixFn{(\UU,\V)}$, we have the following:
 \begin{equation}
     \ParMatrixFn{(\UU,\V)}[:,\cin]=\frac{1}{\sqrt{\Dim}}\V\widetilde{\U}[\cin,:]\implies \ParMatrixFn{(\UU,\V)}=\frac{1}{\sqrt{\Dim}}\V\widetilde{\U}^\top.
 \end{equation}
In the above formulation $\V\in\bR^{\Dim\times \COut},\widetilde{\U}\in\bR^{\CIn\times\COut}$ are of rank $\COut$, but otherwise completely unconstrained. Thus, they can realize any rank $\KerSize$ matrix. So as long as $\COut\ge \min\crl{\CIn,\Dim}$, the network can realize any linear predictor $\W\in\bR^{\Dim\times\CIn}$.

The rest of the proof follows from connecting the above expression into the variational characterization of the nuclear norm. 
The induced regularizer $\RkCinCout{\ParMatrix}{1}{\COut}{\CIn}$ from eq. \eqref{eq:R-multiinput} for  $\COut\ge \min\crl{\CIn,\Dim}$ can now be expressed as follows:
\begin{equation}\label{eq:R-1CR-varchar}
\begin{array}{r@{\ }c@{\ }l}
\RkCinCout{\W}{1}{\COut}{\CIn} = &\min\limits_{\widetilde{\U}\in\bR^{\CIn\times\COut}, \V\in\bR^{\Dim\times\COut}} &\|\widetilde{\U}\|^2 + \|\V\|^2\\
 &\st & \sqrt{\Dim}\ParMatrix=\V\widetilde{\U}^\top.
\end{array}
\end{equation}
For $\COut\ge \min\crl{\CIn,\Dim}$ eq.~\eqref{eq:R-1CR-varchar} is exactly the variational definition of nuclear norm (see \citet[Lemma~1][]{rennie2005fast}) and thus $\RkCinCout{\W}{1}{\COut}{\CIn}=2\sqrt{\Dim}\|\W\|_*$ for even unbounded $\COut$. The fact that $\COut=\min\crl{\CIn,\Dim}$ is sufficient can be seen by obtaining the optimum nuclear norm as upper bound from using ${\V}=\Matrix{L}\sqrt{\Matrix{\Sigma}}$ and $\widetilde{\U}=\Matrix{R}\sqrt{\Matrix{\Sigma}}$, where $\sqrt{\Dim}\W=\Matrix{L}\Matrix{\Sigma}\Matrix{R}^\top$ is the singular value decomposition of $\sqrt{\Dim}\W$. Finally, we note that based on our normalization of Fourier transform, we have $\|\W\|_*=\|\hat{\W}\|_*$. This completes our proof. \newline

\noindent \textit{Note: } For $\KerSize=1$ we provided the proof of $\RkCinCout{\W}{1}{\COut}{\CIn}$ in the signal space of $\W$, but the Theorem can also be proved in the Fourier domain (similar to the proof of Theorem~\ref{thm:multichannelkersizeD} given below) by first showing that the SDP relaxation evaluates to the nuclear norm and combining this with the matching upper bound for $\RkCinCout{\W}{1}{\COut}{\CIn}$ shown above. 
  \end{proof}

\multiinputD*

\begin{proof} We begin by expressing induced regularizer for full dimensional kernels $\RkCinCout{\ParMatrix}{\Dim}{\COut}{\CIn}$ from eq. \eqref{eq:R-multiinput} in terms of the Fourier representation of the linear predictor realized by the network $\ParMatrixFn(\UU,\V)$:
\begin{equation}\label{eq:R-DCR-fourier}
\begin{array}{r@{\ }c@{\ }l}
\RkCinCout{\W}{\Dim}{\COut}{\CIn} = &\inf\limits_{\UU,\V}&\|\hat{\U}_\cin\|^2 + \|\hat{\V}\|^2\\
 &\st & \forall_{\cin\in[\CIn]}\ParMatrix[:,\cin]=\diag(\hat{\U}_\cin\hat{\V}^\top),
\end{array}
\end{equation} 
where $\UU,\V$ are of dimensions $\UU=\{\U_\cin\in\bR^{\Dim\times\COut}\}_{\cin\in[\CIn]}, \V\in\bR^{\Dim\times\COut}$.\newline 

\noindent Our proof for networks with multi-channel inputs with $\KerSize=\Dim$  follows the following structure:
\begin{compactenum}[\bf Step 1.]
\item We first show an upper bound on the induced regularizer for single output channel $\COut=1$ with full dimensional kernel  $\KerSize=\Dim$ as $\RkCinCout{\W}{\Dim}{1}{\CIn}\le 2\|\hat{\W}\|_{2,1}$ by providing a construction of $\UU,\V$. It immediately follows from the monotonicity of $\RkCinCoutOp{\Dim}{\COut}{\CIn}$ that for all $\COut$,  $\RkCinCout{\W}{\Dim}{\COut}{\CIn}\le \RkCinCout{\W}{\Dim}{1}{\CIn}\le 2\|\hat{\W}\|_{2,1}$. This step also consequently shows that when $\KerSize=\Dim$, every linear predictor over the multi-channel input space of $\bR^{\Dim\times\CIn}$ is realizable by a network with even a single output channel. 
\item The bulk of our proof lies in matching the upper bound with a lower bound on the dual problem of the SDP in eq. \eqref{opt:SDPmulti-channel} as $\RSDPkCin{\W}{\Dim}{\CIn}\ge2\|\hat{\W}\|_{2,1} $. This gives us that for all $\COut\ge 1$, $\RkCinCout{\W}{\Dim}{\COut}{\CIn}\ge\RSDPkCin{\W}{\Dim}{\CIn}\ge2\|\hat{\W}\|_{2,1}$. 
\end{compactenum}

\paragraph{\textbf{Step }1. Upper bound on the induced regularizer $\boldsymbol{\RkCinCoutOp{\Dim}{\COut}{\CIn}}$:} We first show that $\RkCinCout{\W}{\Dim}{1}{\CIn}\le 2\|\hat{\W}\|_{2,1}=2\sum_{d\in[\Dim]}\|\hat{\W}[d,:]\|$. 
Since $\RkCinCout{\W}{\Dim}{\CIn}{\COut}$ is decreasing in $\COut$, it suffices to show this for $\COut = 1$. For $\COut=1$ and $\KerSize=\Dim$, we have $\V\in\bR^\Dim$ and $\forall_{\cin\in[\CIn]}$ $\U_\cin\in\bR^\Dim$. For full dimensional kernels, the Fourier domain representations $\hat{\U}_0,\hat{\U}_1,\ldots,\hat{\U}_\CIn,\hat{\V}\in\bC^\Dim$ are all unconstrained beyond the symmetry properties of Fourier transform of real  matrices. Thus consider the following $\UU,\V$:
\begin{equation}
        \forall_{d\in[\Dim]}\forall_{\cin\in[\CIn]},\hat{\U}_\cin[d]=\frac{\hat{\W}[d,\cin]}{\sqrt{\|\hat{\W}[d,:]\|}},\quad\text{and}\quad\hat{\V}[d]={\sqrt{\|\hat{\W}[d,:]\|}}. 
\end{equation}
It is easy to see that the above $\UU,\V$ satisfy the constraints of $\RkCinCout{\W}{\Dim}{1}{\CIn}$ in eq.~\eqref{eq:R-DCR-fourier} that is  $\forall_{\cin\in[\CIn]}\ParMatrix[:,\cin]=\diag(\hat{\U}_\cin\hat{\V}^\top)$. Further $\hat{\U}_0,\hat{\U}_1,\ldots,\hat{\U}_\CIn,\hat{\V}\in\bC^\Dim$ satisfy the required symmetry properties since $\W$ is real.\newline 

\noindent Now computing the objective, we immediately have that $\|\hat{\V}\|^2=\sum_{d\in[\Dim]}\|\hat{\W}[d,:]\|$, and further, 
\[\sum_{\cin\in[\CIn]}\|\hat{\U}_\cin\|^2=\sum_{d\in[\Dim]}\brk*{\sum_{\cin\in[\CIn]}\frac{|\hat{\W}[d,\cin]|^2}{{\|\hat{\W}[d,:]\|}}}=\sum_{d\in[\Dim]}\|\hat{\W}[d,:]\|.
\]
This construction thus gives us the desired upper bound 
\begin{equation}\label{eq:multi-input-kd-ub}
\RkCinCout{\W}{\Dim}{\COut}{\CIn}\le \RkCinCout{\W}{\Dim}{1}{\CIn}\le 2\|\hat{\W}\|_{2,1}.
\end{equation}
\vspace{5pt}
\paragraph{Step 2: Lower bound on the induced regularizer $\boldsymbol{\RkCinCoutOp{\Dim}{\COut}{\CIn}}$}
We show the lower bound by lower bounding the dual problem of the SDP in eq. \eqref{opt:SDPmulti-channel}. 
For $\cin \in[\CIn]$, let $\DualRealCin{\cin}\in\bR^\Dim$ and $\DualImgCin{\cin}\in\bR^\Dim$ denote the dual variables corresponding to the constraints in the SDP in eq. \eqref{opt:SDPmulti-channel} for the real and imaginary parts, respectively, of $\hat{\W}[:,\cin]$. Similar to single input channel proofs, we define $\DualVec_\cin=\DualRealCin{\cin}+i\cdot\DualImgCin{\cin}$ and $\Matrix{\Lambda}_\cin=\diag(\DualVec_\cin)$. Additionally, we introduce the notation for the matrix obtained by taking $\{\DualVec_\cin\}_\cin$ as columns: $\Xi=[\DualVec_0,\DualVec_1,\ldots,\DualVec_\CIn]\in\bC^{\Dim\times\CIn}$ such that  $\forall_\cin,\,\Xi[:,\cin]=\DualVec_\cin$.\newline

\noindent Based on weak duality for the SDP in eq. \eqref{opt:SDPmulti-channel}, we have the following:
\begin{equation}\label{eq:weakduality}
\begin{array}{r@{\ }c@{\ }l}
\RSDPkCin{\W}{\Dim}{\CIn}\ge &\max\limits_{\Xi=[\DualVec_0,\DualVec_1,\ldots,\DualVec_\CIn]} &2\cdot\Re(\innerprod{\hat{\W},\Xi})\\
 &\st & \sum\limits_{d,\cin}(\DualRealCin{\cin}[d] \cdot \Matrix{A}_{d,\cin}^\text{real}+\DualImgCin{\cin}[d] \cdot \Matrix{A}_{d,\cin}^\text{img})\preccurlyeq \Matrix{I}.
\end{array}
\end{equation}
Our rest of the proof obtains the lower bound by constructing an appropriate $\Xi$ satisfying the constraints: 
From the definitions of $\crl{\Matrix{A}_{d,\cin}^\text{real},\Matrix{A}_{d,\cin}^\text{img}}_{d\in[D],\cin\in\CIn}$ in Appendix~\ref{appendix:proofsmulti-input-SDP} and using $\Q_d=\conj{\F}\e_d\e_d^\top \conj{\F}$ (note that for $\KerSize=\Dim$ and $\F_\KerSize=\F=\F^\top$), we have the following:

\begin{equation}
    \sum_{d,\cin}(\DualRealCin{\cin}[d] \cdot \Matrix{A}_{d,\cin}^\text{real}+\DualImgCin{\cin}[d] \cdot \Matrix{A}_{d,\cin}^\text{img})=
    \begin{array}{c}\left[\!\!\begin{array}{cccc;{2pt/2pt}c}  &  & & &  \conj{\F}\Matrix{\Lambda}_0\conj{\F} \\  &  & \Matrix{0}_{(\CIn \cdot \Dim) \times (\CIn \cdot \Dim)}& & \conj{\F}\Matrix{\Lambda}_1\conj{\F} \\  &  &  & & \vdots  \\  &  & & & \conj{\F}\Matrix{\Lambda}_\CIn\conj{\F}  \\
    \hdashline[2pt/2pt]\rule{0pt}{10pt}
    \F\conj{\Matrix{\Lambda}}_0\F_K & \F\conj{\Matrix{\Lambda}}_1\F_K & \ldots & \F\conj{\Matrix{\Lambda}}_\CIn\F_K & \Matrix{0}_{\Dim \times \Dim}  \end{array}\!\!\right]\end{array}
\end{equation}

\noindent We state and prove the following claim:
\begin{clm*}$\Xi$ satisfies the constraints of the dual problem in the RHS of eq. \eqref{eq:weakduality} if $\max\limits_{d\in[\Dim]}\|\Xi[d,:]\|\le1$
\end{clm*}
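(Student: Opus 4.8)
The plan is to reduce the matrix inequality in the dual constraint of \eqref{eq:weakduality} to a scalar bound on an operator norm, and then to evaluate that operator norm in closed form. Write $\Matrix{M}$ for the Hermitian matrix $\sum_{d,\cin}\bigl(\DualRealCin{\cin}[d]\,\Matrix{A}_{d,\cin}^{\text{real}}+\DualImgCin{\cin}[d]\,\Matrix{A}_{d,\cin}^{\text{img}}\bigr)$ displayed just before the claim. As the display shows, $\Matrix{M}$ is block anti-diagonal: its two diagonal blocks vanish, its top-right block is the matrix $\Matrix{B}\in\bC^{(\CIn\Dim)\times\Dim}$ formed by stacking $\conj{\F}\Matrix{\Lambda}_\cin\conj{\F}$ over $\cin\in[\CIn]$ vertically, and its bottom-left block equals $\Matrix{B}^{*}$ (indeed $(\conj{\F}\Matrix{\Lambda}_\cin\conj{\F})^{*}=\F\conj{\Matrix{\Lambda}}_\cin\F$, using that the unitary DFT matrix $\F$ is symmetric, so $\F^{*}=\conj{\F}$). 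The first step is the standard fact that a Hermitian matrix with vanishing diagonal blocks and off-diagonal block $\Matrix{B}$ has nonzero eigenvalues exactly $\pm\sigma_j(\Matrix{B})$, the $\sigma_j(\Matrix{B})$ being the singular values of $\Matrix{B}$; hence $\Matrix{M}\preccurlyeq\Matrix{I}$ if and only if $\|\Matrix{B}\|_{\mathrm{op}}\le 1$.

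The second step is to compute $\|\Matrix{B}\|_{\mathrm{op}}$. Since $\Matrix{B}$ is a vertical stack, $\|\Matrix{B}\|_{\mathrm{op}}^{2}=\|\Matrix{B}^{*}\Matrix{B}\|_{\mathrm{op}}=\bigl\|\sum_{\cin\in[\CIn]}(\conj{\F}\Matrix{\Lambda}_\cin\conj{\F})^{*}(\conj{\F}\Matrix{\Lambda}_\cin\conj{\F})\bigr\|_{\mathrm{op}}$. Using $\F^{*}=\conj{\F}$ and the unitarity identity $\F\conj{\F}=\Matrix{I}$, each summand collapses to $\F\,\conj{\Matrix{\Lambda}}_\cin\Matrix{\Lambda}_\cin\,\conj{\F}=\F\,\diag\!\bigl(|\DualVec_\cin[d]|^{2}\bigr)_{d}\,\conj{\F}$, so that $\Matrix{B}^{*}\Matrix{B}=\F\,\diag\!\bigl(\sum_{\cin}|\DualVec_\cin[d]|^{2}\bigr)_{d}\,\conj{\F}=\F\,\diag\!\bigl(\|\Xi[d,:]\|^{2}\bigr)_{d}\,\F^{*}$, a unitary conjugation of a nonnegative diagonal matrix. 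Hence $\|\Matrix{B}^{*}\Matrix{B}\|_{\mathrm{op}}=\max_{d\in[\Dim]}\|\Xi[d,:]\|^{2}$, \ie $\|\Matrix{B}\|_{\mathrm{op}}=\max_{d\in[\Dim]}\|\Xi[d,:]\|$. Combined with the first step, $\Matrix{M}\preccurlyeq\Matrix{I}\iff\max_{d}\|\Xi[d,:]\|\le 1$, which proves the claim (and, in fact, its converse as well).

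I do not anticipate a genuine obstacle: both ingredients — the eigenvalue structure of block anti-diagonal Hermitian matrices, and the DFT identities $\F=\F^{\top}$, $\F^{-1}=\F^{*}=\conj{\F}$, $\F\conj{\F}=\Matrix{I}$ — are elementary, and the only delicate point is keeping conjugates and transposes of $\F$ consistent throughout. Once the claim is available, the lower bound $\RSDPkCin{\W}{\Dim}{\CIn}\ge 2\|\hat{\W}\|_{2,1}$ follows by feeding into \eqref{eq:weakduality} the dual point whose $d$-th row is $\Xi[d,:]=\hat{\W}[d,:]/\|\hat{\W}[d,:]\|$ (and $\Matrix{0}$ wherever $\hat{\W}[d,:]=\Matrix{0}$): every row of this $\Xi$ has norm at most $1$, so it is feasible by the claim, and it attains objective $2\sum_{d}\|\hat{\W}[d,:]\|=2\|\hat{\W}\|_{2,1}$, matching the upper bound from Step~1 and completing the proof of Theorem~\ref{thm:multichannelkersizeD}.
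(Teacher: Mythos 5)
Your proposal is correct and follows essentially the same route as the paper: both reduce the semidefinite constraint to a norm bound on the stacked off-diagonal block $\bigl(\conj{\F}\Matrix{\Lambda}_\cin\conj{\F}\bigr)_{\cin\in[\CIn]}$ and then use unitarity of $\F$ together with the diagonal structure of the $\Matrix{\Lambda}_\cin$ to identify that bound with $\max_{d\in[\Dim]}\nrm{\Xi[d,:]}$. The only cosmetic difference is that you diagonalize $\Matrix{B}^{*}\Matrix{B}$ exactly (thereby also obtaining the converse implication), whereas the paper bounds the quadratic form $\sum_{\cin}\nrm{\conj{\F}\Matrix{\Lambda}_\cin\conj{\F}\b{y}}^2$ directly for unit $\b{y}$.
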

\begin{proof}[Proof of claim.]
The relevant constraint in eq. \eqref{eq:weakduality} is $\sum\limits_{d,\cin}(\DualRealCin{\cin}[d] \cdot \Matrix{A}_{d,\cin}^\text{real}+\DualImgCin{\cin}[d] \cdot \Matrix{A}_{d,\cin}^\text{img})\preccurlyeq \Matrix{I}$.
It thus suffices to show that for all $\Vector{y} \in \mathbb{R}^\Dim$ such that $\|\b{y}\|=1$, it holds that $\sum_{\cin=0}^{\CIn}\|\conj{\F}\Matrix{\Lambda}_\cin\conj{\F}\b{y}\|^2 \le 1$. We have the following set of inequalities that prove the claim $\forall_{\b{y}:\|\b{y}\|=1}$:
\begin{align*}
    \sum_{\cin=0}^{\CIn}\|\conj{\F}\Matrix{\Lambda}_\cin\conj{\F}\b{y}\|^2 &\overset{(a)}=\sum_{\cin=0}^{\CIn}\|\Matrix{\Lambda}_\cin(\conj{\F}\b{y})\|^2\\
    &=\sum_{d=0}^\Dim\sum_{\cin=0}^{\CIn}|\DualVec_\cin[d]|^2\cdot|(\conj{\F}\b{y})[d]|^2\overset{(b)}=\sum_{d=0}^\Dim \|\Xi[d,:]\|^2\cdot|(\conj{\F}\b{y})[d]|^2\\
    &\le\max_{d\in[\Dim]} \|\Xi[d,:]\|^2\|\conj{\F}\b{y}\|^2\\
    &\overset{(c)}=\max_{d\in[\Dim]} \|\Xi[d,:]\|^2,
\end{align*}
where $(a)$ follows from $\conj{\F}$ being unitary, $(b)$ from definition of $\Xi$, and $(c)$ from $\|\F\b{y}\|=\|\b{y}\|=1$. 
\end{proof}
Consider $\Xi$ defined as follows:
\[\forall_{d\in[\Dim]},\quad\Xi[d,:]=\frac{\hat{\W}[d,:]}{\|\hat{\W}[d,:]\|}\]
It is easy to check that $\max\limits_{d\in[\Dim]}\|\Xi[d,:]\|=1$ and thus based on the claim we proved above $\Xi$ satisfies the constraints of the dual optimization problem in the RHS of eq. \eqref{eq:weakduality}. Additionally, the objective evaluates to the desired bound of  $\Re(\innerprod{\hat{\W},\Xi})=\sum_d\|\hat{\W}[d,:]\|=\|\hat{\W}\|_{2,1}$. We thus have the following lower bound:
\begin{equation}\label{eq:multi-input-kd-lb}
\RkCinCout{\W}{\Dim}{\COut}{\CIn}\ge  \RSDPkCin{\W}{\Dim}{\CIn}\ge 2\Re(\innerprod{\hat{\W},\Xi})=2\|\hat{\W}\|_{2,1}.
\end{equation}
\paragraph{Conclusion of the proof. } The proof of the Theorem follows from combining the matching upper and lower bounds in eq.~\eqref{eq:multi-input-kd-ub} and eq.~\eqref{eq:multi-input-kd-lb}, respectively, to obtain $\RkCinCout{\W}{\Dim}{\COut}{\CIn}=2\|\hat{\W}\|_{2,1}$.
\end{proof}

\section{Experiments for gradient descent}\label{appendix:experiments}

We run our experiments on  two layer linear  convolutional networks on a subset of MNIST dataset \citep{MNIST} as well a subset of the CIFAR-10 dataset \citep{CIFAR10}. The input images in MNIST are of size $28\times 28$ and have a single input channel.  The input images in MNIST are of size $32 \times 32$ and have a $3$ input channels. We apply $2$D convolutions with kernel sizes $\KerSize=(\KerSize_1,\KerSize_2)$ and circular padding for image inputs. We consider binary classification task for both datasets. For MNIST, we predict digits $0$ and $1$ in  MNIST using a balanced sub-sampling of $128$ samples as training data, which ensures linear separability. For CIFAR-10, we predict classes ``automobile'' and ``dog'' using a balanced sub-sampling of $256$ samples as training data, which ensures linear separability. The initialization scale was taken to be $0.001$. 

We train our network using gradient descent on exponential loss and run gradient  descent until the training loss is $10^{-6}$. The initialization scale is taken to be $0.001$ in order to reduce the variance arising from the randomness of initialization. In order to compare the predictors across different architectures, we normalize   the weights learned by gradient descent $\U,\V$ such that the  linear predictor $\ParVecFn(\Matrix{U}, \Matrix{V})$ realized by the trained networks has unit margin on the training dataset (\ie $y\innerprod{\ParVecFn(\U,\V),\x}\ge1$ for all training samples $(\x,y)$). Note that for homogeneous models, such positive scaling of weights \textit{does not} change the classification boundary of the learned model.\footnote{The code is available at \url{https://github.com/mjagadeesan/inductive-bias-multi-channel-CNN}.}

\subsection{Impact of the number of channels on MNIST}\label{appendix:numoutputMNIST}

For networks with a single-input channel, Hypothesis \ref{hyp:outputchannels} would imply that $\RkCinCout{\ParVecFn(\Matrix{U}, \Matrix{V})}{\KerSize}{\COut}{\CIn}$ is invariant in the number of output channels $\COut$ regardless of $\COut$. To demonstrate  this, we repeat the experimental setup on $28 \times 28$ MNIST images on networks with multiple output channels $\COut \in \left\{1, 2, 4, 8\right\}$ and across different kernel sizes. As described earlier, we scale the weights learned by gradient descent $\U,\V$ such that the linear predictors $\ParVecFn(\Matrix{U}, \Matrix{V}) = \ParVec_{\mathrm{GD}}$ have unit margin on training data. Since it is difficult to directly compute $\RkCinCout{\ParVec_{\mathrm{GD}}}{\KerSize}{\COut}{\CIn}$, we turn to an approximation. In particular, we compute $\RkCoutHat{\ParVecFn(\U,\V)}{\KerSize}{\COut} := \|\U\|^2+\|\V\|^2$. Strictly speaking, this is only an \textit{upper bound} on the induced regularizer $\RkCinCout{\ParVecFn(\U,\V)}{\KerSize}{\COut}{\CIn}$.\footnote{For $\KerSize = 1$ and $\KerSize = \Dim$, we verified that the estimate is close to tight by computing the $\ell_2$ and $\ell_1$ norms of Fourier transform of the predictor, respectively.} 

Table \ref{tab:outputchannelsMNIST} shows $\RkCoutHat{\ParVecFn(\U,\V)}{\KerSize}{\COut}$ across different values of $\KerSize$ and $\COut$. We see that for each kernel size, 
the differences in $\RkCoutHat{\ParVecFn(\U,\V)}{\KerSize}{\COut}$  across different settings of $\COut$ are minimal and are usually smaller than the standard deviation for fixed settings of $\COut$. This suggests that the induced regularizer is indeed invariant to the number of output channels, thus providing evidence for Hypothesis \ref{hyp:outputchannels} in the case of a single input channel.

\begin{figure}
\centering
\begin{subfigure}[b]{0.49\textwidth}
\centering
\includegraphics[width=\textwidth]{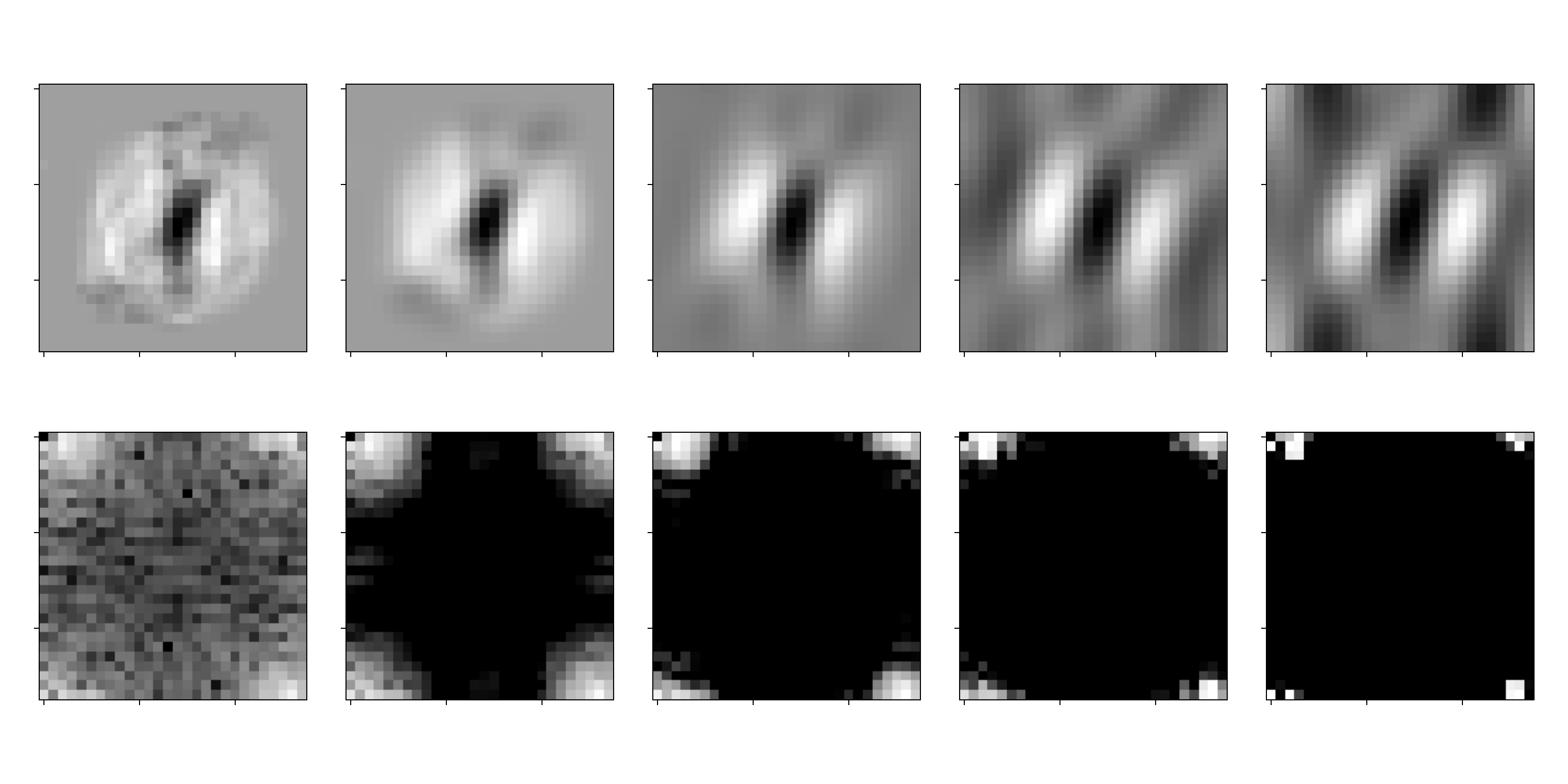}
\caption{\small $\COut = 1$}
\end{subfigure}
\begin{subfigure}[b]{0.49\textwidth}
\centering
\includegraphics[width=\textwidth]{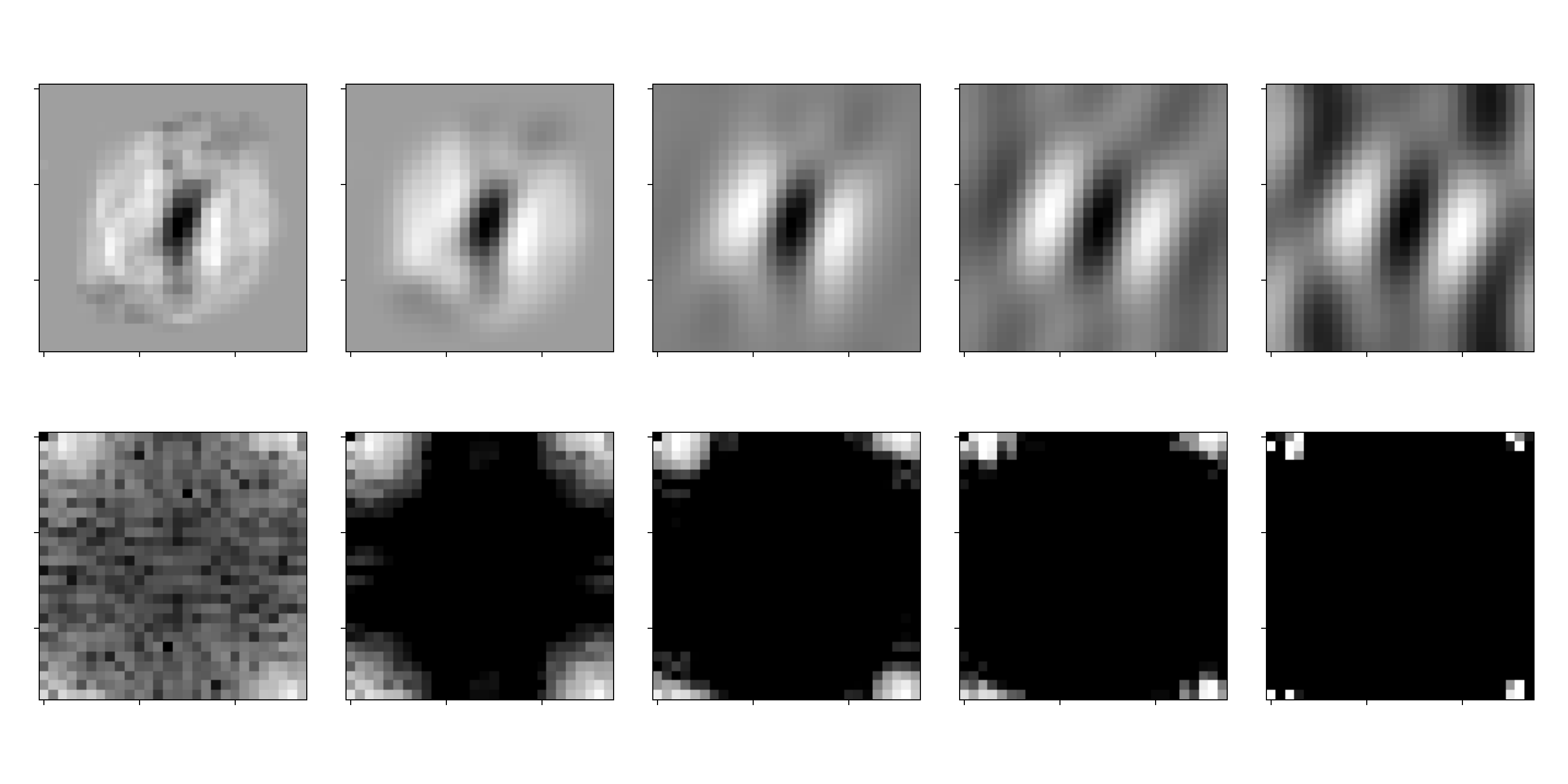}
\caption{\small $\COut = 2$}
\end{subfigure}
\begin{subfigure}[b]{0.49\textwidth}
\centering
\includegraphics[width=\textwidth]{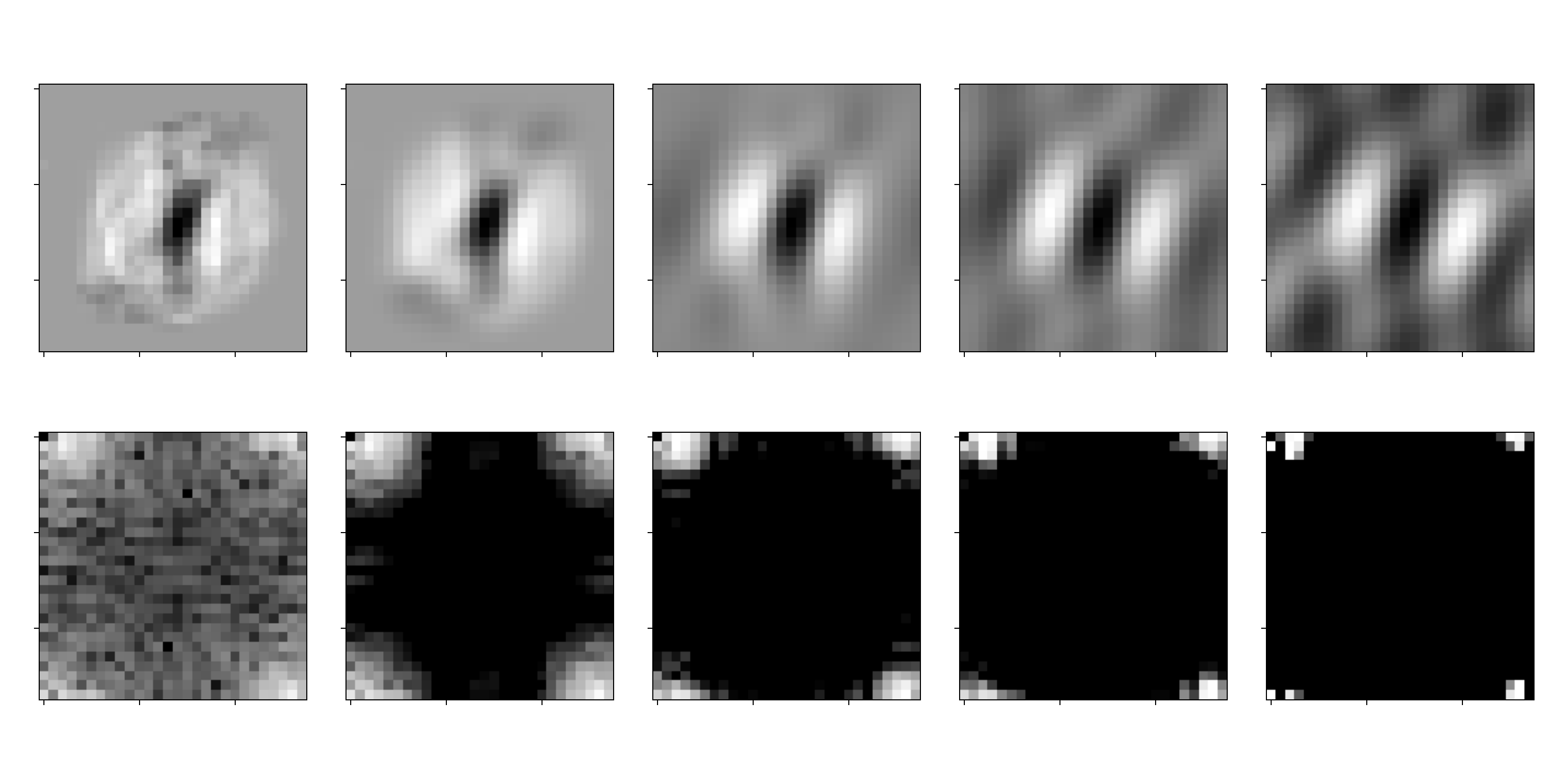}
\caption{\small $\COut = 4$}
\end{subfigure}
\begin{subfigure}[b]{0.49\textwidth}
\centering
\includegraphics[width=\textwidth]{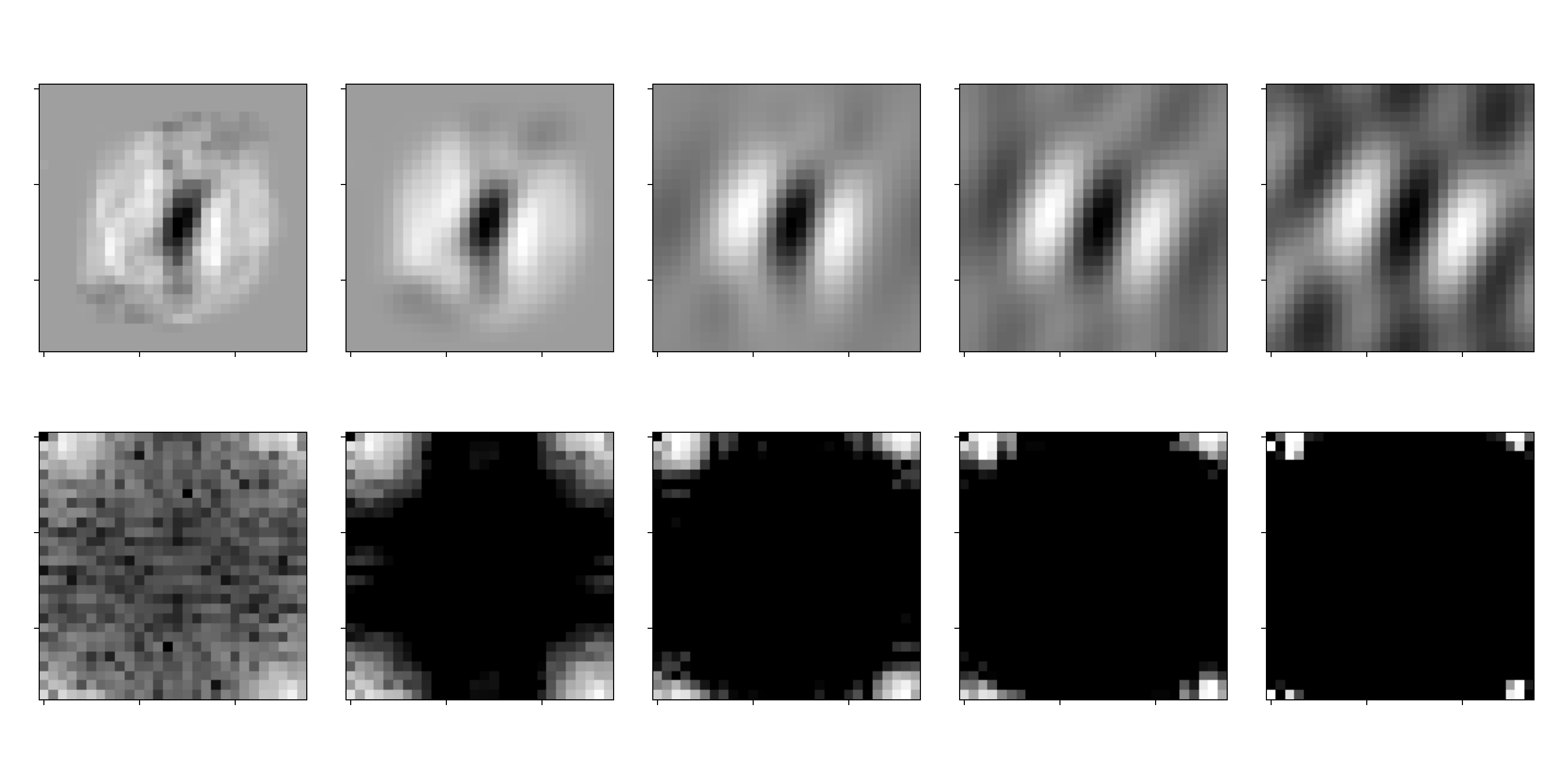}
\caption{\small $\COut = 8$}
\end{subfigure}
\caption{Linear predictors learned by  two layer linear convolutional network for the task of classifying digits $0$ and $1$ in MNIST. The sub-figures depict predictors learned by using gradient descent on the  exponential loss for overparameterized networks with $\COut=1,2,4$ and  kernel sizes $\KerSize\in \left\{(1, 1), (3, 3), (8, 8), (16, 16), (28, 28)\right\}$ (left to right). The top row in each sub--figure  is the signal domain representation $\ParVecFn{(\U,\V)}$, and the bottom row is the Fourier domain representation $\hat{\ParVecFn}{(\U,\V)}$.}
\label{fig:channels}
\end{figure}

\begin{figure}
    \centering
    \includegraphics[scale=0.2]{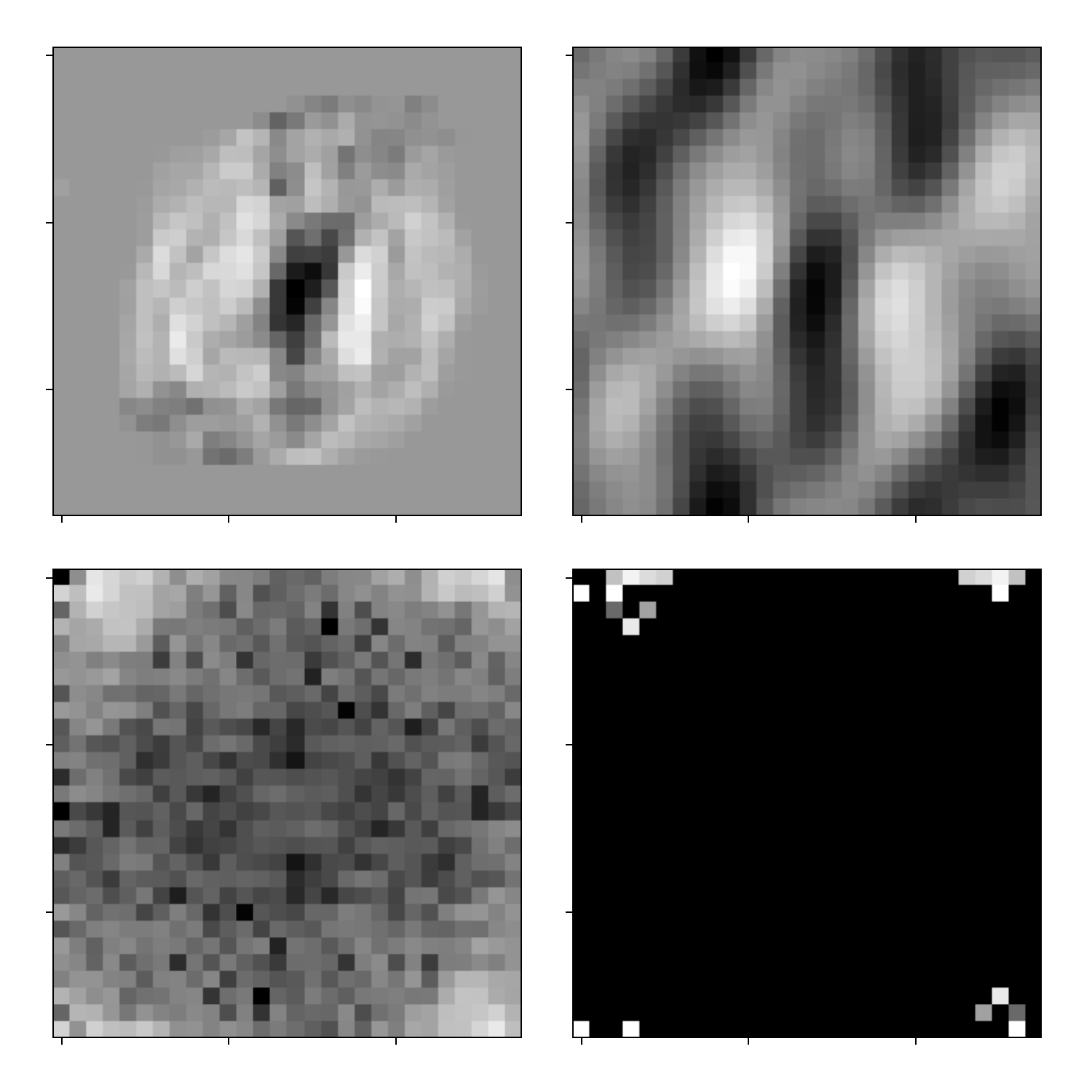}
    \caption{Explicit $\RkCoutOp{\KerSize}{\COut}$ margin predictor on sampled MNIST dataset for kernel sizes $\KerSize\in \left\{(1, 1), (28, 28)\right\}$ (left to right). The top row in each sub--figure  is the signal domain representation $\ParVecFn{(\U,\V)}$, and the bottom row is the Fourier domain representation $\hat{\ParVecFn}{(\U,\V)}$. }
    \label{fig:mnistcvx}
\end{figure}

\begin{table}[ht]
    \centering\footnotesize
 \begin{tabular}{c|c | c | c | c | c}
$\COut$ & $\KerSize = (1,1)$ &   $\KerSize = (3,3)$ &   $\KerSize = (8,8)$  &  $\KerSize = (16,16)$ &  $\KerSize = (28,28)$ \\ \hline
1 & $10.28\pm 2.34 \times 10^{-5}$ & $4.50\pm 1.51 \times 10^{-3}$ & $3.32\pm 5.35 \times 10^{-2}$ &  $3.15\pm 5.81 \times 10^{-2}$  &$2.84 \pm 1.16\times 10^{-1}$\\
2 &$10.28\pm 2.00  \times 10^{-5}$ & $4.50\pm 1.06\times 10^{-3}$  & $3.30\pm 3.13 \times 10^{-2}$ & $3.10\pm 2.63\times 10^{-2}$ & $2.79 \pm 1.27\times 10^{-1}$  \\
4 &  $10.28\pm 1.00 \times 10^{-5}$& $4.50\pm 7.48\times 10^{-4}$ & $3.30\pm 2.25 \times 10^{-2}$ &$3.10\pm 3.33 \times 10^{-2}$  & $2.77 \pm 8.20 \times 10^{-2}$ \\
8 & $10.28\pm 7.83 \times 10^{-5}$ &$4.50\pm 6.25\times 10^{-4}$ & $3.29\pm 1.68 \times 10^{-2}$ &$3.11\pm 3.27 \times 10^{-2}$ & $2.72 \pm 7.31 \times 10^{-2}$    \\
    \end{tabular}
    \caption{$\RkCoutHat{\ParVecFn(\U,\V)}{\KerSize}{\COut} = \|\U\|^2+\|\V\|^2$ of the predictor learned by gradient descent  on linear convolutional networks with different number of output channels $\COut$ and kernel sizes $\KerSize$ on the MNIST task. We show the mean over 10 trials as well as the standard deviations are also shown.}
    \label{tab:outputchannelsMNIST}
\end{table}

\paragraph{Invariance of learned predictors to $\COut$.}
While Hypothesis \ref{hyp:outputchannels} primarily pertains to the behavior of the induced regularizer, it also suggests that the predictor $\ParVec_{\mathrm{GD}}$ will also be independent of the number of channels so long as $\KerSize$ is strictly less than $\Dim$. If we overlook the caveats, we expect  gradient descent to implicitly learn a max--$\RkCoutOp{\KerSize}{\COut}$ margin predictor: $\min_{\w}\RkCout{\w}{\KerSize}{\COut}\st \forall_n y_n\innerprod{\w,\x_n}\ge1$. For $\KerSize < \Dim$, our theoretical findings suggest that the induced regularizer is a norm interpolating between the $\ell_2$ and $\ell_1$ norms. This would mean that there is a \textit{unique} global minimizer, and thus we would expect that $\ParVec_{\mathrm{GD}}$ to be invariant to $\COut$. 

To empirically validate this, we show the learned linear predictors for $\COut = \crl{1, 2, 4}$ in Figure \ref{fig:channels}. We observe that the linear predictors indeed visually appears to be invariant across different settings of $\COut$ for all for kernel sizes $\KerSize < \Dim$. For $\KerSize = (28, 28)$, there appear to be differences in the predictors---this likely arises from the fact that there are multiple linear predictors that minimize the $\ell_1$ norm on the dataset. We nonetheless emphasize that the \textit{induced regularizer} still appears to be invariant in this case, although the predictors are not. 

\subsubsection{Non-linear networks with ReLU activation}

 Although our theoretical results are restricted to networks with linear activations, it is nevertheless interesting to evaluate if our conclusions lead to useful heuristics for networks with non-linearity. As a simple demonstration, we repeat our experiment on MNIST on two-layer convolutional networks with ReLU non-linearity with and without bias parameters (\ie networks with a convolution layer, followed by ReLU layer, followed by linear layer).\footnote{The initialization scale was taken to be $0.005$ for networks without bias parameters and $0.01$ for networks with bias parameters.} As before, we first scale the weights learned by gradient descent such that the resulting predictor $f_{\mathrm{GD}}$ has unit margin on training data. We then consider the representation cost $\mathcal{R}_{\Phi_{\KerSize, \COut}}(f_{\mathrm{GD}})$, as per equation \eqref{eq:repcost-functions}, given by the minimum $\ell_2$ norm of the weights needed to realize $f$. We consider the approximation of $\mathcal{R}_{\Phi_{\KerSize, \COut}}(f_{\mathrm{GD}})$ given by  $\hat{\mathcal{R}}_{\Phi_{\KerSize, \COut}}(f_{\mathrm{GD}}) := \|\U\|_2^2+\|\V\|^2_2$ where $\U$ and $\V$ are the weights learned by gradient descent. (As before, strictly speaking, this is only an \textit{upper bound} on the representation cost $\mathcal{R}_{\Phi_{\KerSize, \COut}}(f_{\mathrm{GD}})$.)

 Table \ref{tab:RELUnobias} and  Table \ref{tab:RELUbias} show $\hat{\mathcal{R}}_{\Phi_{\KerSize, \COut}}(f_{\mathrm{GD}}) := \|\U\|_2^2+\|\V\|^2_2$ across different settings of $\COut$ and $\KerSize$, for networks with no bias as well as networks with bias parameters on both the convolution layer and the fully connected layer.\footnote{We note that the representation cost includes the magnitude of the \textit{weights} but not the magnitude of the \textit{biases}.} Like in the case of linear convolutional neural networks, $\hat{\mathcal{R}}_{\Phi_{\KerSize, \COut}}(f_{\mathrm{GD}})$ is consistent across different settings of $\COut$. This suggests that the implicit bias from gradient might result in predictors that are independent of the number of output channels, even when there is a ReLU layer, and Hypothesis \ref{hyp:outputchannels} might hold in much more generality than the scope of theoretical findings.

 \begin{table}[ht]
    \centering\small
    \begin{tabular}{c|c | c | c | c | c}
$\COut$ & $\KerSize: (1,1)$ &  $\KerSize: (3,3)$ & $\KerSize: (8,8)$ &  $\KerSize:  (16, 16)$  &  $\KerSize: (28, 28)$ \\ \hline
1 & 11.412 &5.160  & 3.998  & 3.785 & 3.520 \\
2 & 11.413 & 5.155  &  3.964 & 3.721  & 3.539 \\
4 & 11.414 & 5.153  & 3.966 & 3.719  &3.448  \\ 
8 & 11.415 & 5.156  & 3.971  & 3.738  & 3.498                 
    \end{tabular}
    \caption{\small$\RkCoutHat{f_{\mathrm{GD}}}{\KerSize}{\COut} = \|\U\|^2+\|\V\|^2$ of the predictor learned by gradient descent  on ReLU convolutional networks without bias parameters, with different number of output channels $\COut$ and kernel sizes $\KerSize$, on the MNIST task. The values shown are the medians taken over 5 trials. }
    \label{tab:RELUnobias}
\end{table}

\begin{table}[ht]
    \centering\small
    \begin{tabular}{c|c | c | c | c | c}
$\COut$ & $\KerSize: (1,1)$ &  $\KerSize: (3,3)$ & $\KerSize: (8,8)$ &  $\KerSize:  (16, 16)$  &  $\KerSize: (28, 28)$ \\ \hline
1 & 10.581 &4.948 & 3.875 & 3.714 & 3.519 \\
2 & 10.571 & 4.945 & 3.910  & 3.698 &3.413  \\
4 & 10.578 & 4.945 & 3.912 &  3.712 & 3.399 \\ 
8 & 10.576  &4.946  & 3.881 &  3.697 &   3.437                          
    \end{tabular}
    \caption{\small$\RkCoutHat{f_{\mathrm{GD}}}{\KerSize}{\COut} = \|\U\|^2+\|\V\|^2$ of the predictor learned by gradient descent  on ReLU convolutional networks with bias on both layers, with different number of output channels $\COut$ and kernel sizes $\KerSize$, on the MNIST task. The values shown are the medians taken over 5 trials. }
    \label{tab:RELUbias}
\end{table}

 We note that we observed that gradient descent sometimes leads to outliers where $\mathcal{R}_{\Phi_{\KerSize, \COut}}(f_{\mathrm{GD}})$ is very large. For example, when $\KerSize = 1$, the values of $\mathcal{R}_{\Phi_{\KerSize, \COut}}(f)$ are $[11.412, 11.412, 109.471, 11.412, 11.412]$, where $109.471$ appears to be an outlier. We anticipate that this outlier arises because gradient descent converges to a stationary point, rather than a local minima, of the max-$\ell_2$ margin problem in parameter space (see the discussion in Section \ref{sec:relatedwork}). Since our goal is to investigate the behavior of gradient descent when it does lead to global minima of the $max-\mathcal{R}_{\Phi}$ margin problem, we compute the median so that these data points do not affect our estimate.

\subsection{Impact of the number of channels on CIFAR-10}
We carry out a similar investigation of Hypothesis \ref{hyp:outputchannels} on the CIFAR-10 dataset for networks with $3$-channel inputs. As discussed in Section \ref{sec:mult-input-channel}, we expect that the induced regularizer is \textit{not} independent of the number of output channels for $\COut < \CIn$, but begins to exhibit invariance once $\COut \ge \CIn$. On the CIFAR-10 dataset, where there are 3 input channels, we would expect to see invariance once $\COut \ge 3$.

To demonstrate  this, we repeat the experimental setup on $32 \times 32$ CIFAR-10 images on networks with multiple output channels $\COut \in \left\{1, 2, 4, 8\right\}$ and across different kernel sizes. As described earlier, we scale the weights learned by gradient descent $\U,\V$ such that the linear predictor $\ParVecFn(\Matrix{U}, \Matrix{V}) = \ParVec_{\mathrm{GD}}$ has unit margin on training data. Since it is difficult to directly compute $\RkCinCout{\ParVec_{\mathrm{GD}}}{\KerSize}{\COut}{\CIn}$, we turn to an approximation, as we did in the case of single-input channels. We compute  $\RkCinCoutHat{\ParVecFn(\U,\V)}{\KerSize}{\COut}{\CIn} := \|\U\|^2+\|\V\|^2$ which strictly speaking, this is only an \textit{upper bound} on the induced regularizer $\RkCinCout{\ParVecFn(\U,\V)}{\KerSize}{\COut}{\CIn}$. 

Table \ref{tab:outputchannelsCIFAR} shows $\RkCinCoutHat{\ParVecFn(\U,\V)}{\KerSize}{\COut}{3}$ across different values of $\KerSize$ and $\COut$. We see that for each kernel size, 
the differences in $\RkCinCoutHat{\ParVecFn(\U,\V)}{\KerSize}{\COut}{3}$  across different settings of $\COut$ are minimal, as long as $\COut \ge 3$ (and often, even when $\COut \ge 2$). This suggests that the induced regularizer is indeed invariant to the number of output channels when $\COut \ge 3$, thus providing evidence for Hypothesis \ref{hyp:outputchannels} in the case of multiple input channels. Moreover, there are non-trivial differences in $\RkCinCoutHat{\ParVecFn(\U,\V)}{\KerSize}{\COut}{3}$ for $\COut = 1$ and larger $\COut$---this aligns with our theoretical findings in Section \ref{sec:mult-input-channel} that the induced regularizer does depend on $\COut$ when it is below $\CIn$. 

While Hypothesis \ref{hyp:outputchannels} primarily pertains to the behavior of the induced regularizer, we would also expected that there is a unique global minimizer in most cases, for reasons similar to for the single input channel case. Thus, we would expect $\ParVec_{\mathrm{GD}}$ to be invariant to $\COut$ as long as $\COut \ge \CIn = 3$. To empirically validate this, we show the learned linear predictors for $\COut = \crl{1, 2, 3, 4, 8}$ in Figures \ref{fig:outputchannelCIFAR1}-\ref{fig:outputchannelCIFAR8}. We observe that the linear predictors indeed visually appears to be invariant across different settings of $\COut$. 

\begin{table}[ht]
    \centering\small
 \begin{tabular}{c|c | c | c | c}
$\COut$ & $\KerSize = (1,1)$ &   $\KerSize = (3,3)$ &   $\KerSize = (8,8)$  &  $\KerSize = (20,20)$  \\ \hline
1 & 246.04 & 215.21 & 202.27 & 131.16  \\
2 & 246.26 & 182.77 & 168.40 & 124.66   \\
3 & 245.98 & 182.80 & 165.32 & 123.56  \\
4 & 246.29 & 182.83 & 164.50 & 123.37    \\
8 &245.58 & 182.82 & 164.86 &  123.59  \\
    \end{tabular}
    \caption{$\RkCinCoutHat{\ParVecFn(\U,\V)}{\KerSize}{\COut}{3} = \|\U\|^2+\|\V\|^2$ of the predictor learned by gradient descent on linear convolutional networks with different number of output channels $\COut$ and kernel sizes $\KerSize$ on the CIFAR-10 task. } 
    \label{tab:outputchannelsCIFAR}
\end{table}

\begin{figure}
\centering
\begin{subfigure}[b]{0.49 \textwidth}
\centering
\includegraphics[width=\textwidth]{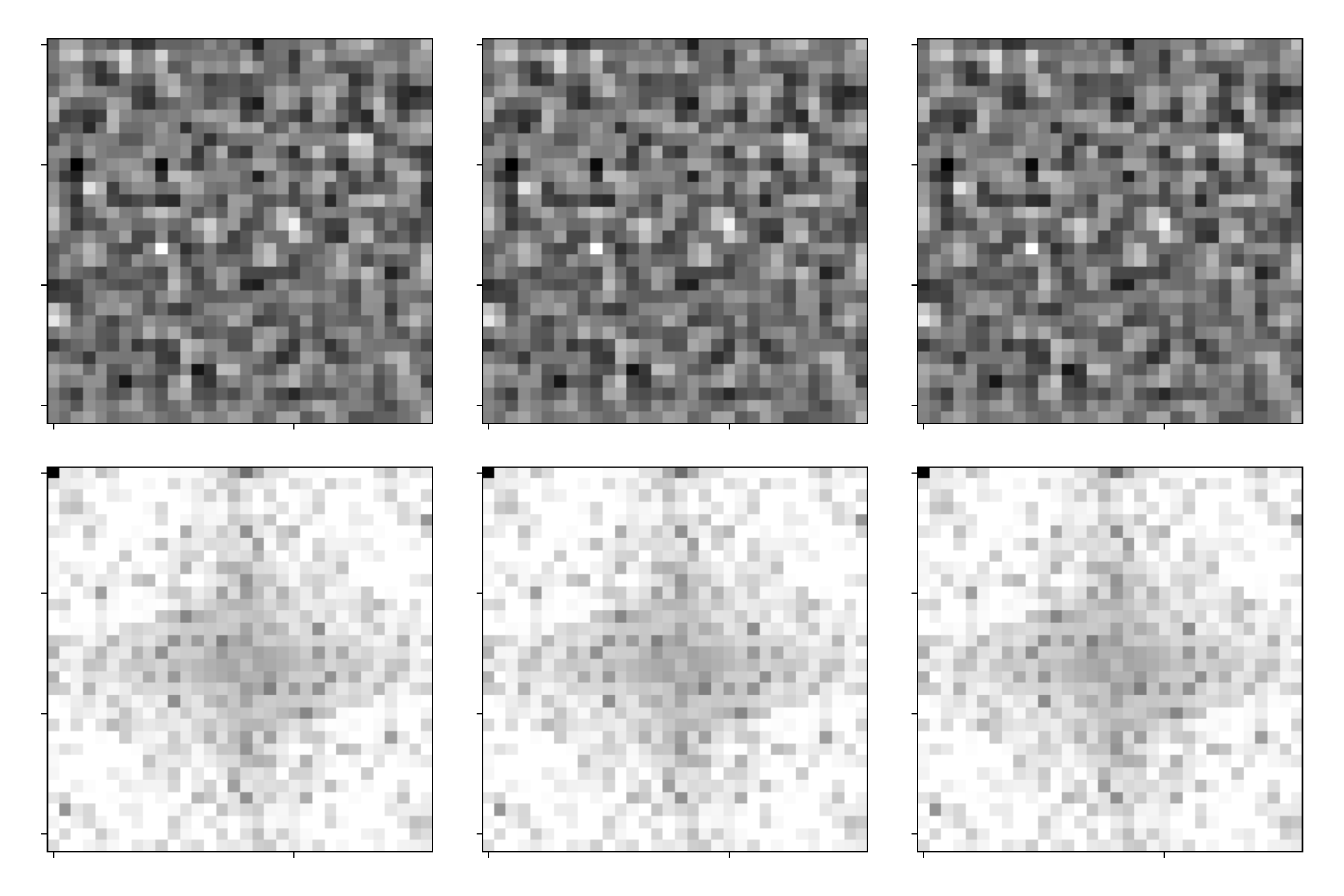}
\caption{\small $\COut = 1$}
\end{subfigure}
\begin{subfigure}[b]{0.49  \textwidth}
\centering
\includegraphics[width=\textwidth]{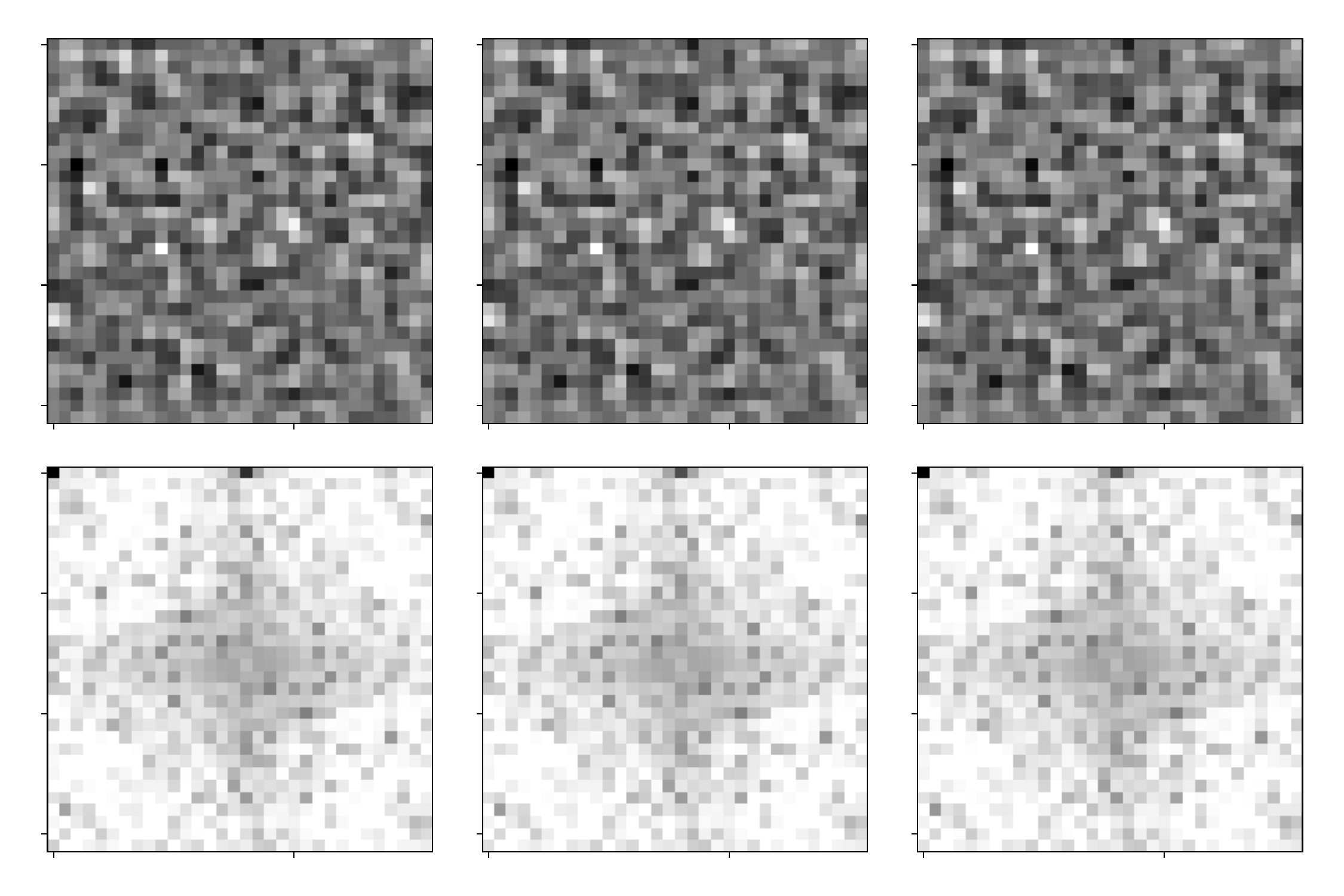}
\caption{\small $\COut = 2$}
\end{subfigure}
\begin{subfigure}[b]{0.49  \textwidth}
\centering
\includegraphics[width=\textwidth]{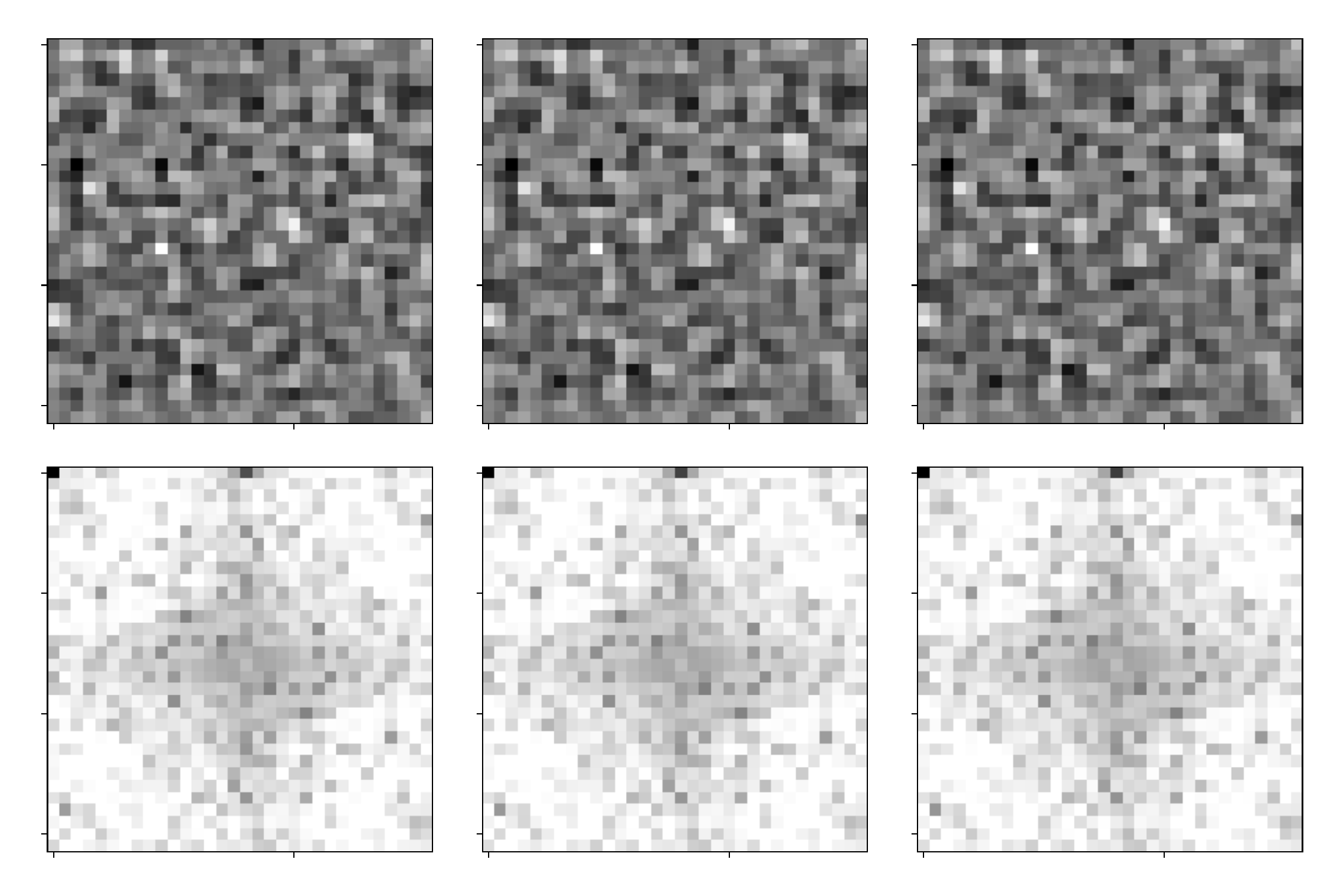}
\caption{\small $\COut = 3$}
\end{subfigure}
\begin{subfigure}[b]{0.49  \textwidth}
\centering
\includegraphics[width=\textwidth]{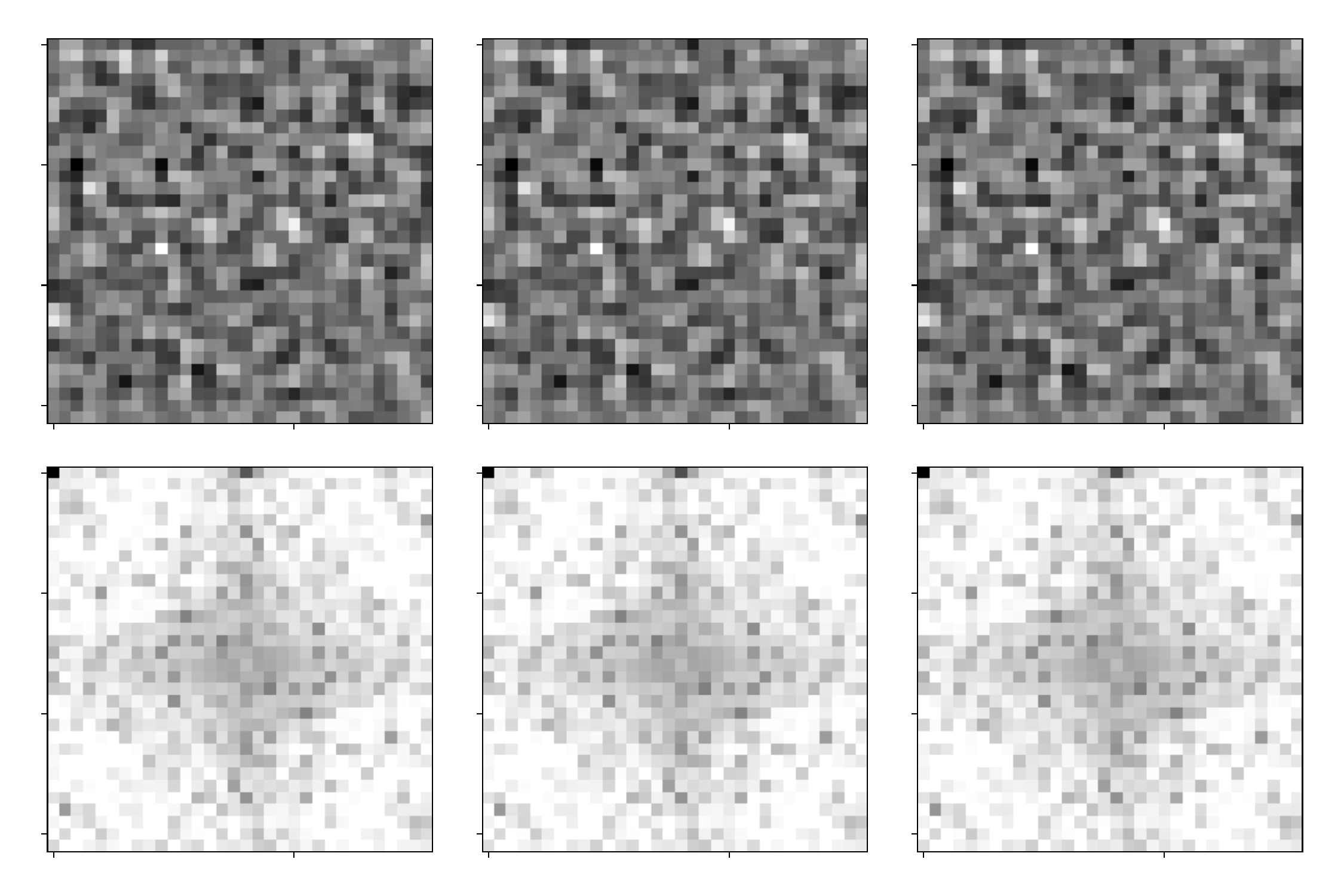}
\caption{\small $\COut = 4$}
\end{subfigure}
\begin{subfigure}[b]{ \textwidth}
\centering
\includegraphics[width=0.49 \textwidth]{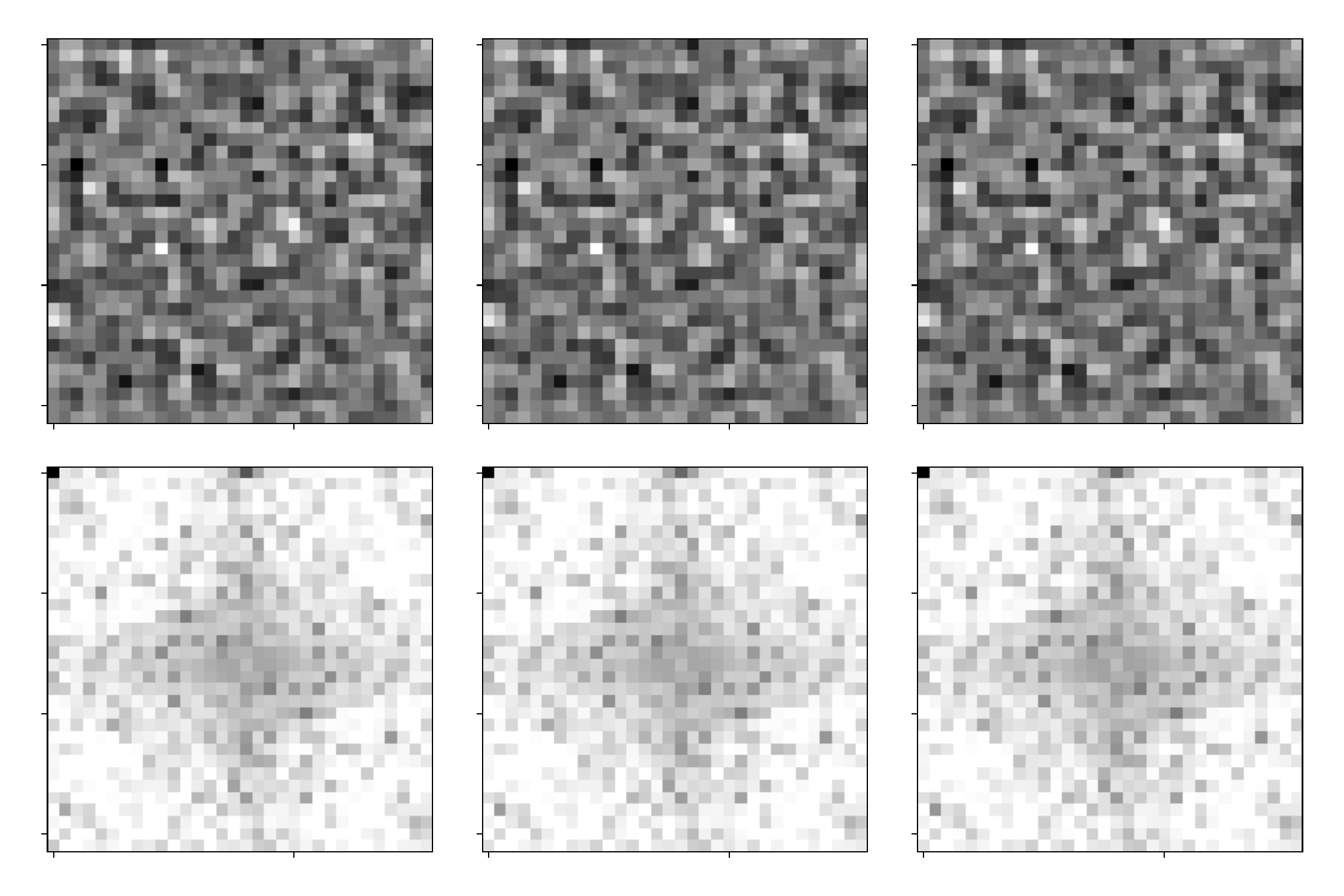}
\caption{\small $\COut = 8$}
\end{subfigure}
\caption{Linear predictors learned by  two layer linear convolutional network on CIFAR-10 task. The sub-figures depict predictors learned by using gradient descent on the  exponential loss for overparameterized networks with $\COut \in \left\{1, 2, 3, 4, 8\right\}$ and kernel size $\KerSize = (1, 1)$. The top row in each sub--figure  is the signal domain representation $\ParVecFn{(\U,\V)}$, and the bottom row is the Fourier domain representation $\hat{\ParVecFn}{(\U,\V)}$. }
\label{fig:outputchannelCIFAR1}
\end{figure}

\begin{figure}
\centering
\begin{subfigure}[b]{0.49 \textwidth}
\centering
\includegraphics[width=\textwidth]{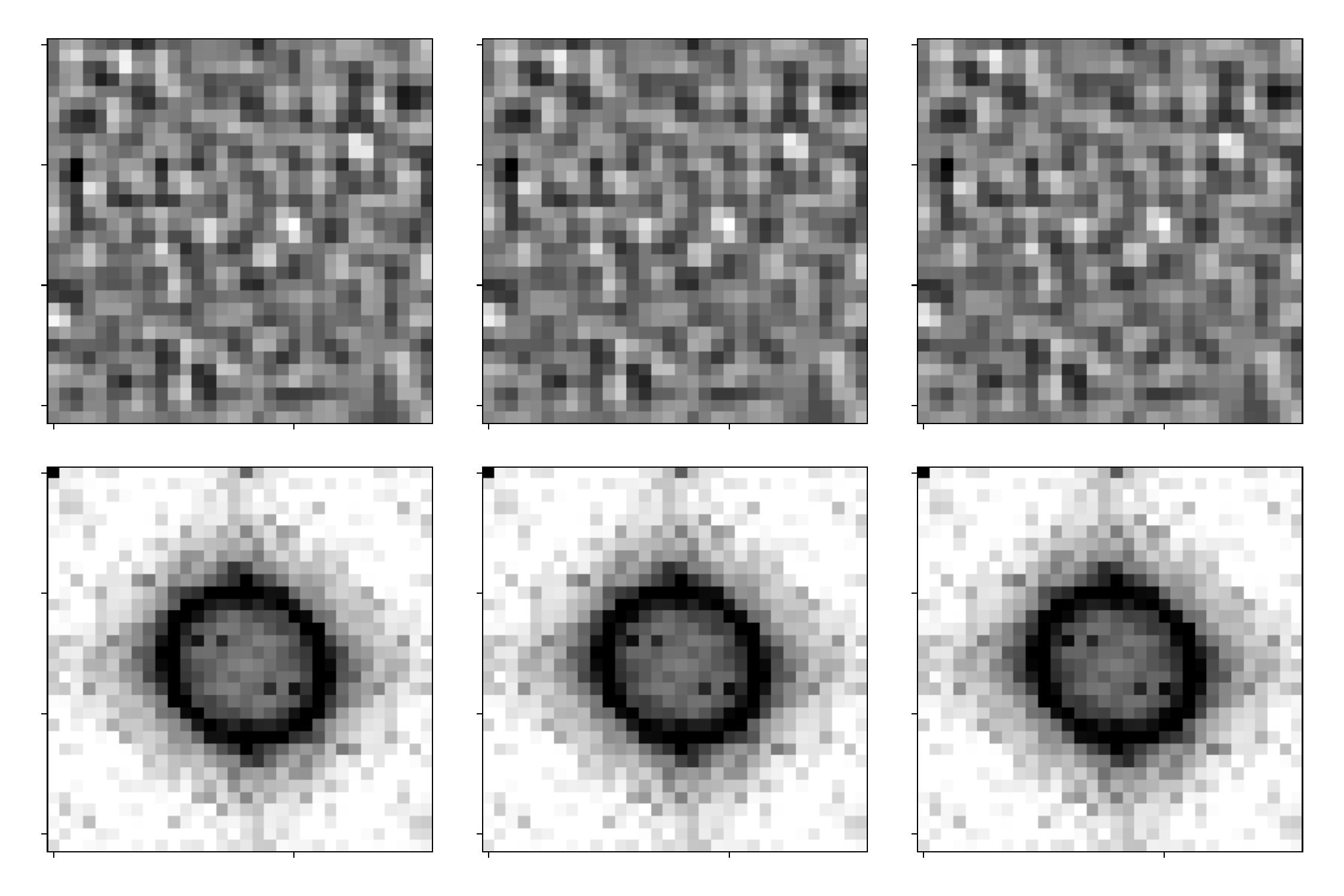}
\caption{\small $\COut = 1$}
\end{subfigure}
\begin{subfigure}[b]{0.49  \textwidth}
\centering
\includegraphics[width=\textwidth]{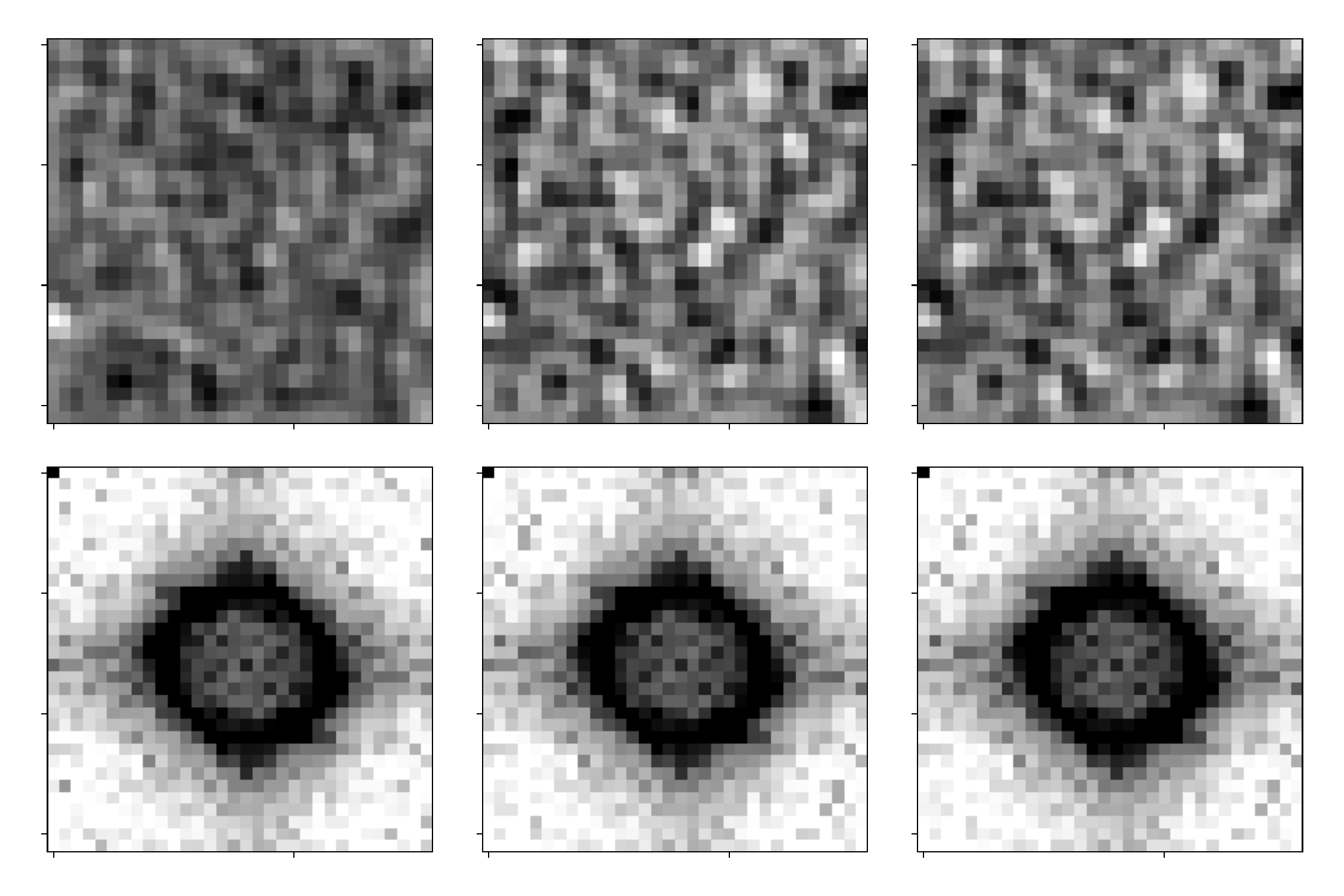}
\caption{\small $\COut = 2$}
\end{subfigure}
\begin{subfigure}[b]{0.49  \textwidth}
\centering
\includegraphics[width=\textwidth]{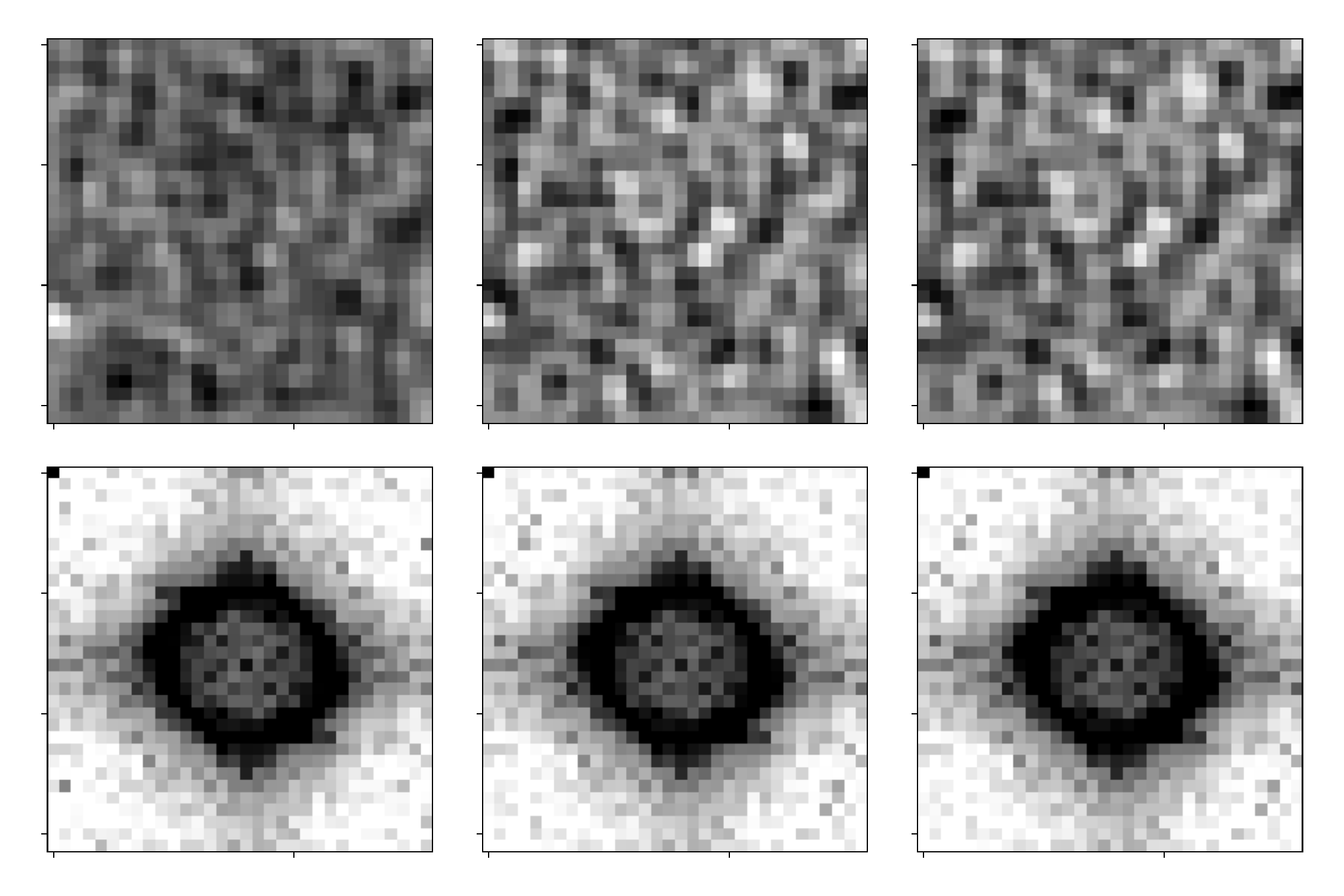}
\caption{\small $\COut = 3$}
\end{subfigure}
\begin{subfigure}[b]{0.49  \textwidth}
\centering
\includegraphics[width=\textwidth]{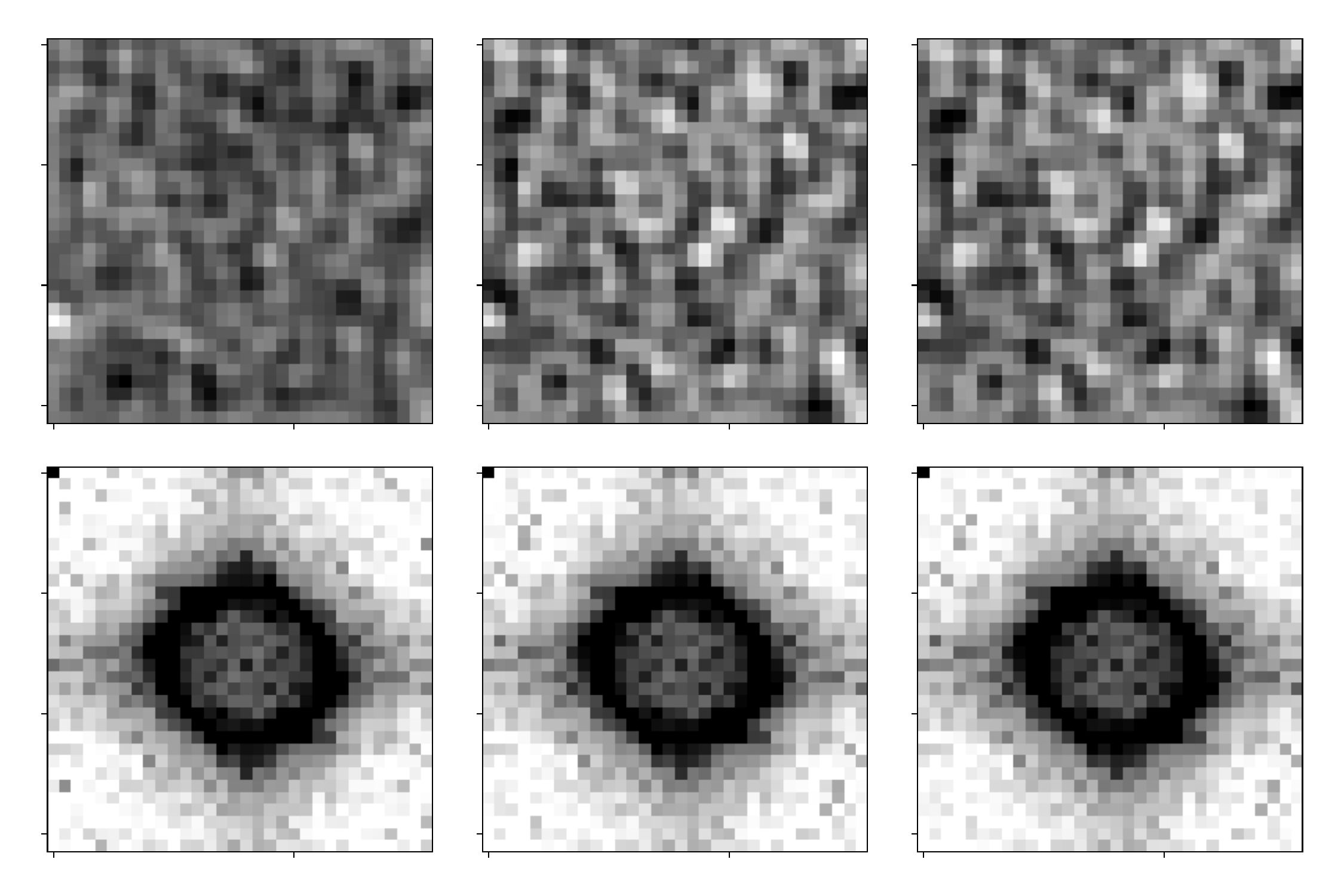}
\caption{\small $\COut = 4$}
\end{subfigure}
\begin{subfigure}[b]{ \textwidth}
\centering
\includegraphics[width=0.49 \textwidth]{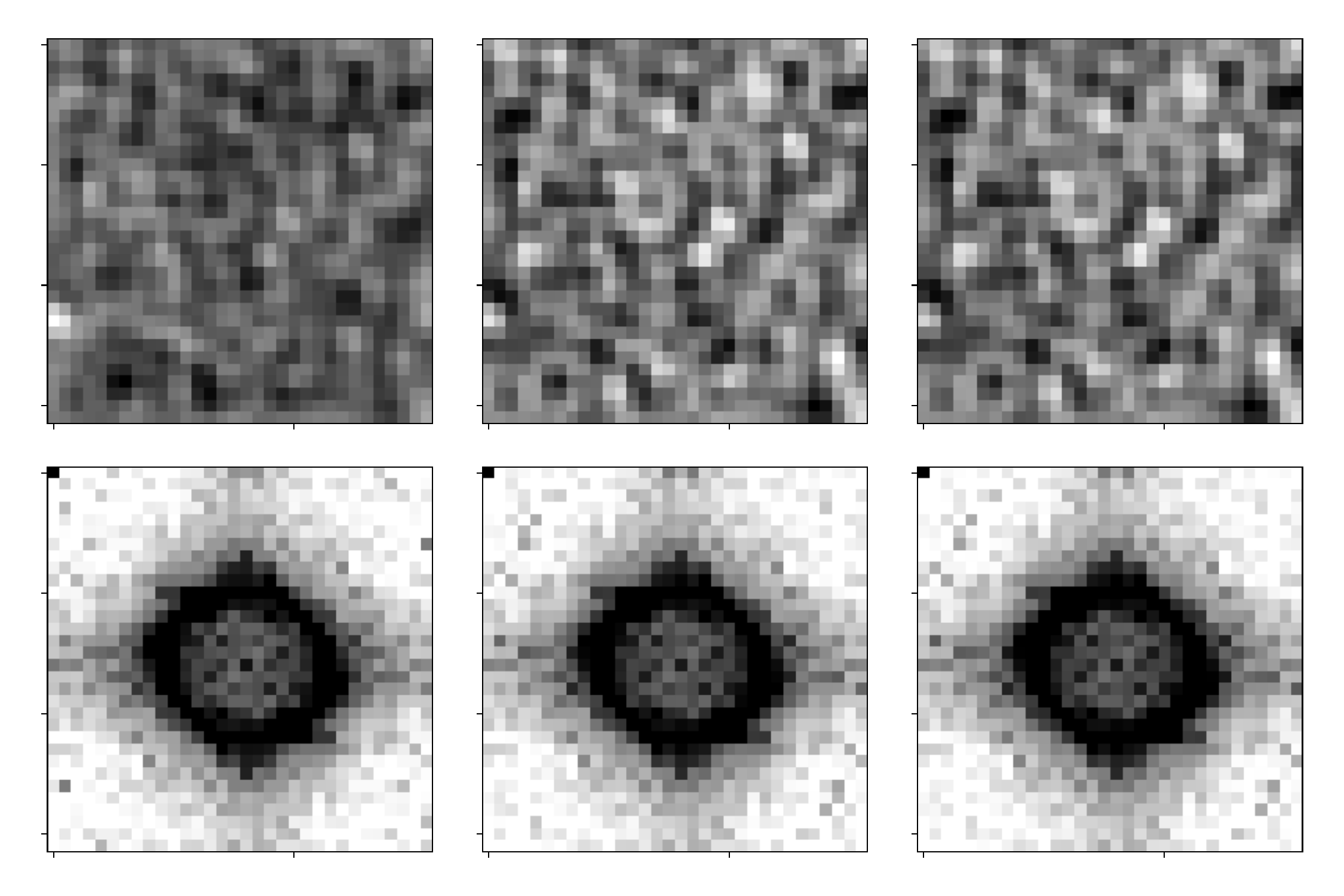}
\caption{\small $\COut = 8$}
\end{subfigure}
\caption{Linear predictors learned by  two layer linear convolutional network on CIFAR-10 task. The sub-figures depict predictors learned by using gradient descent on the  exponential loss for overparameterized networks with $\COut \in \left\{1, 2, 3, 4, 8\right\}$ and kernel size $\KerSize = (3, 3)$. The top row in each sub--figure  is the signal domain representation $\ParVecFn{(\U,\V)}$, and the bottom row is the Fourier domain representation $\hat{\ParVecFn}{(\U,\V)}$. }
\label{fig:outputchannelCIFAR3}
\end{figure}

\begin{figure}
\centering
\begin{subfigure}[b]{0.49 \textwidth}
\centering
\includegraphics[width=\textwidth]{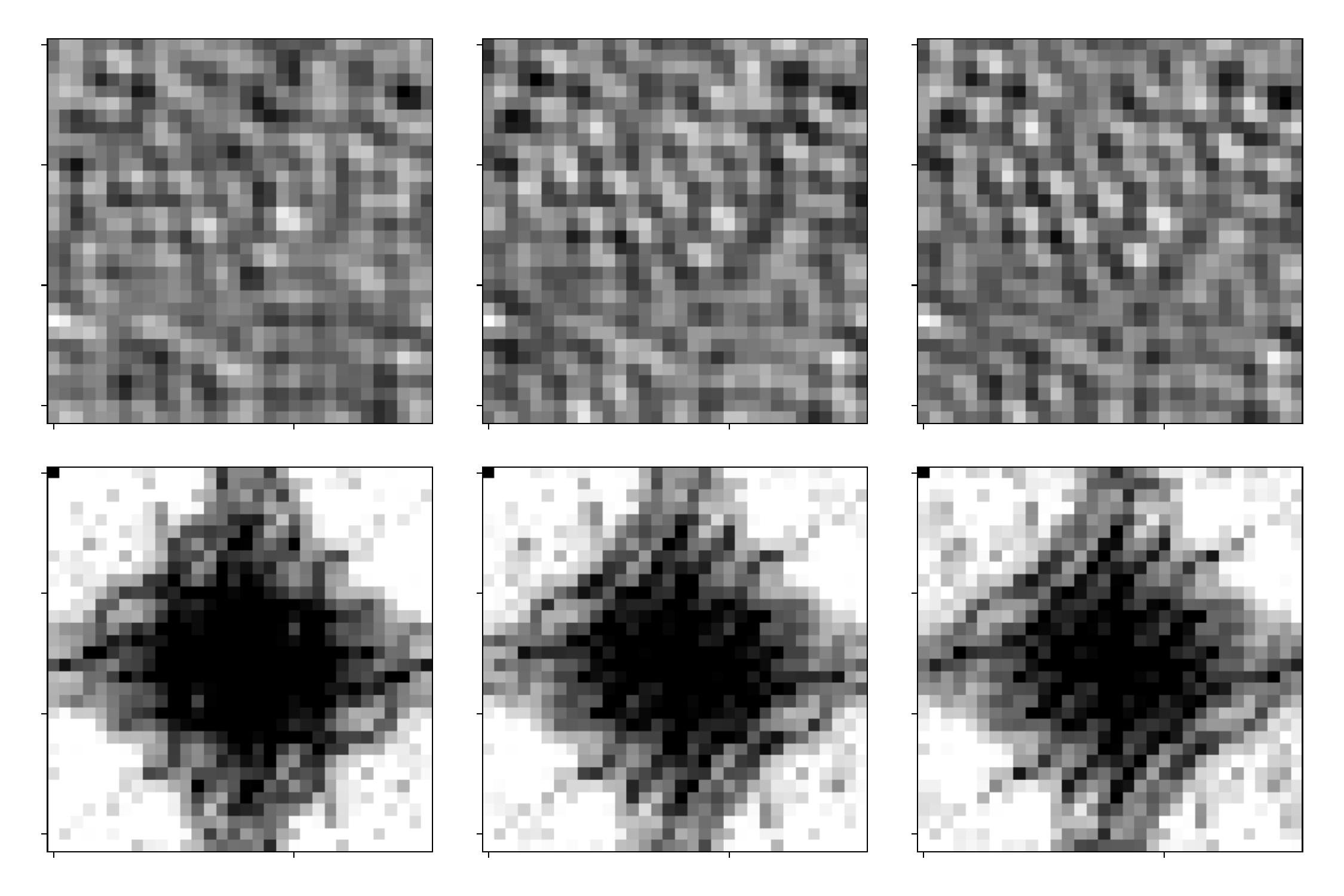}
\caption{\small $\COut = 1$}
\end{subfigure}
\begin{subfigure}[b]{0.49  \textwidth}
\centering
\includegraphics[width=\textwidth]{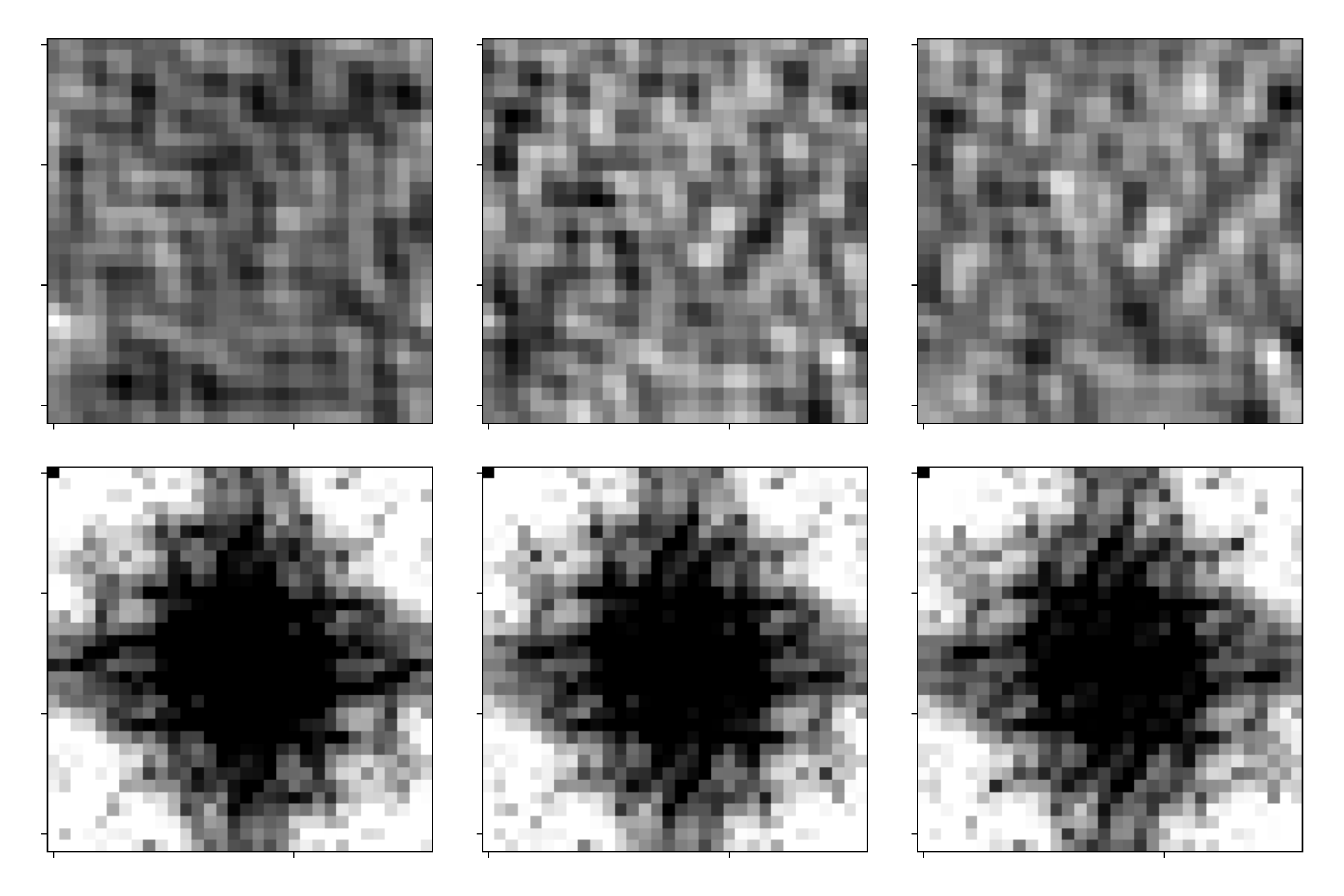}
\caption{\small $\COut = 2$}
\end{subfigure}
\begin{subfigure}[b]{0.49  \textwidth}
\centering
\includegraphics[width=\textwidth]{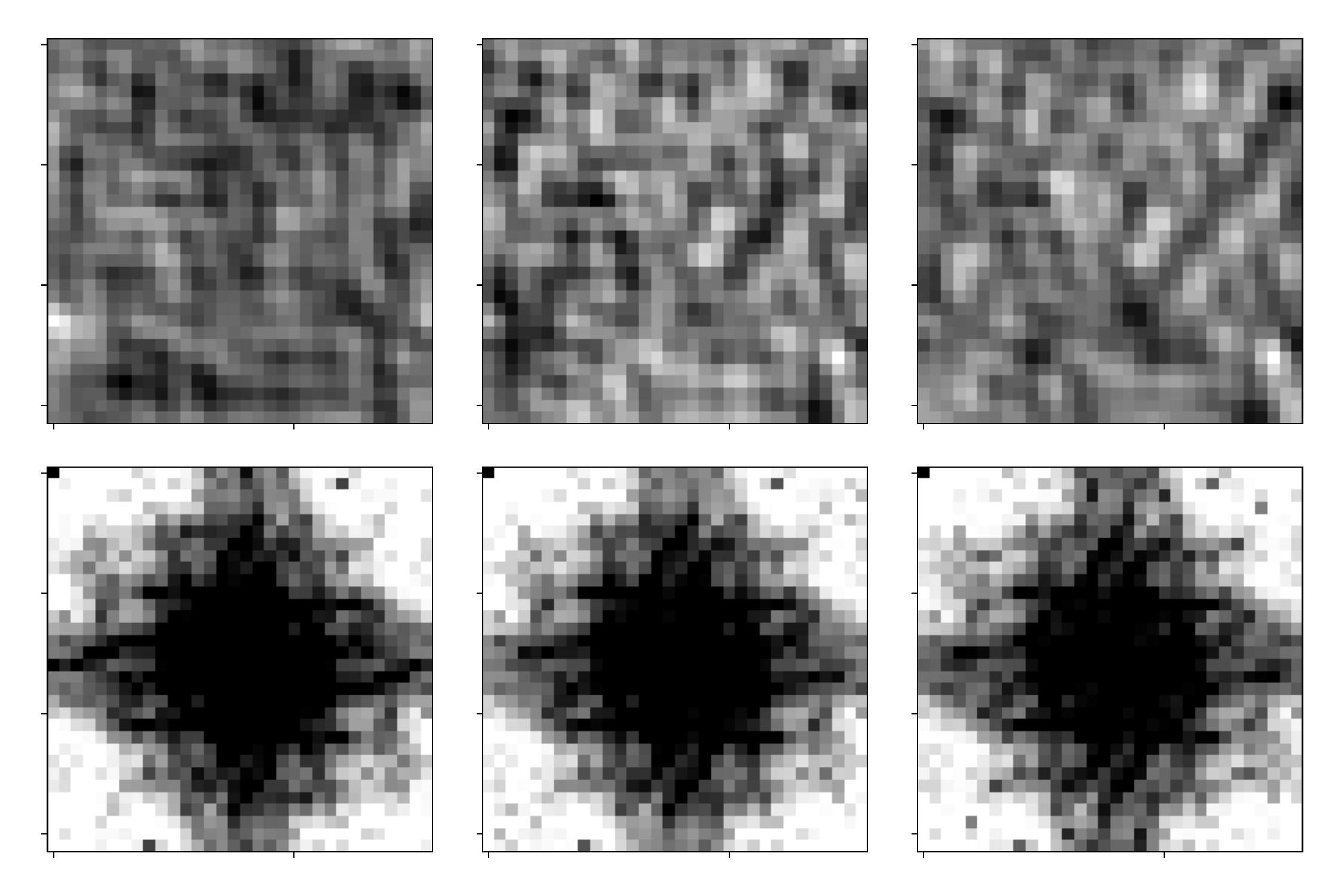}
\caption{\small $\COut = 3$}
\end{subfigure}
\begin{subfigure}[b]{0.49  \textwidth}
\centering
\includegraphics[width=\textwidth]{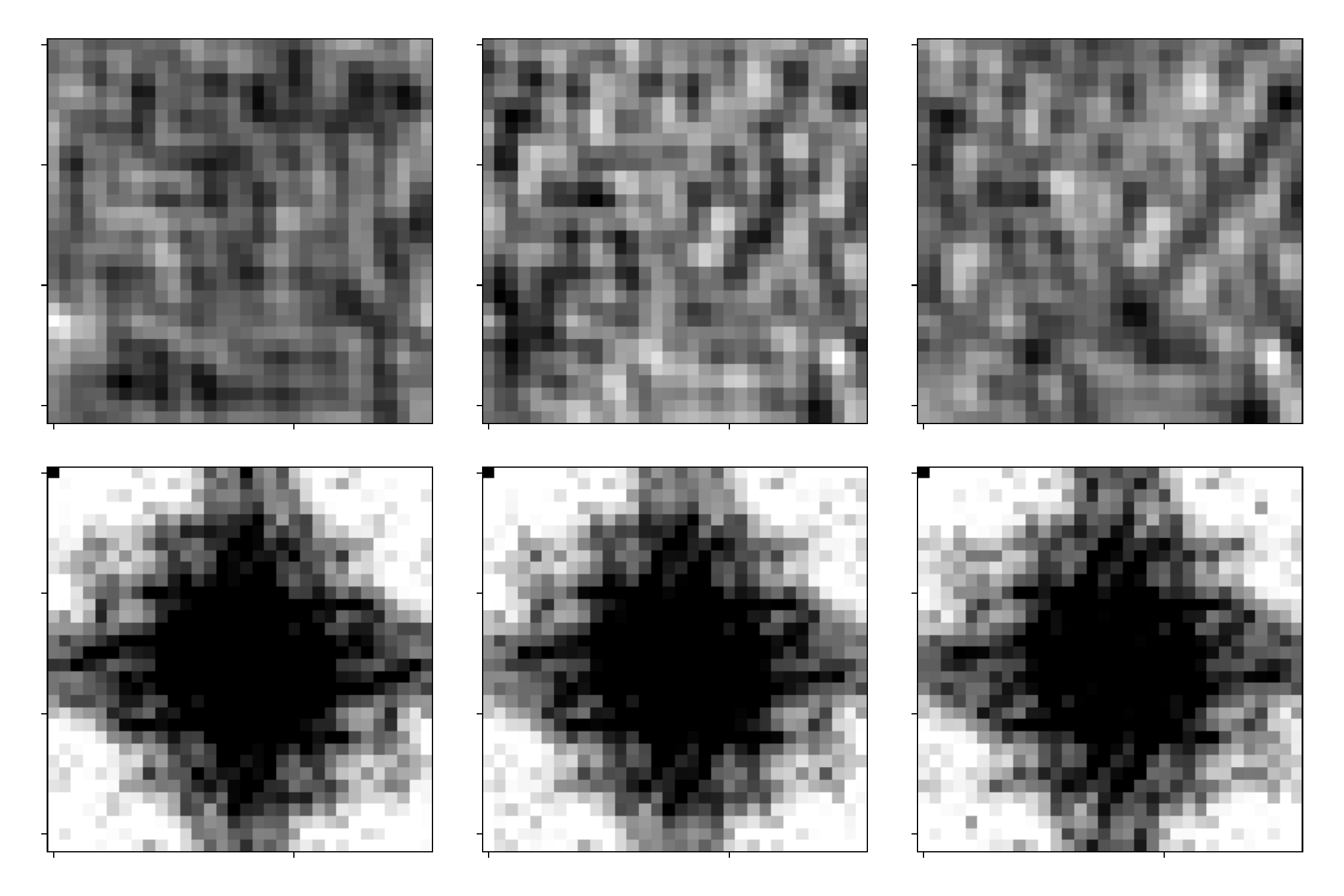}
\caption{\small $\COut = 4$}
\end{subfigure}
\begin{subfigure}[b]{ \textwidth}
\centering
\includegraphics[width=0.49 \textwidth]{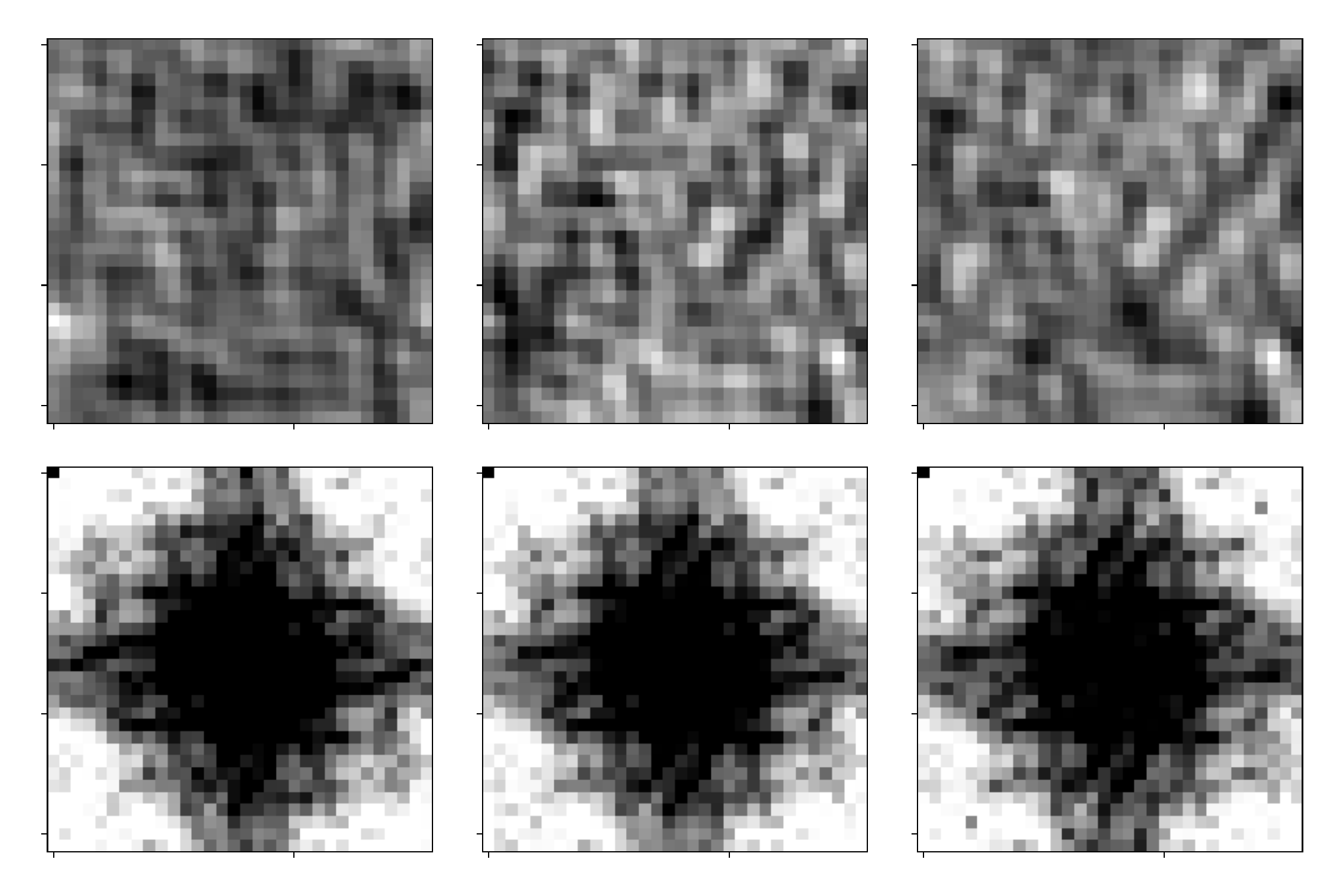}
\caption{\small $\COut = 8$}
\end{subfigure}
\caption{Linear predictors learned by  two layer linear convolutional network on CIFAR-10 task. The sub-figures depict predictors learned by using gradient descent on the  exponential loss for overparameterized networks with $\COut \in \left\{1, 2, 3, 4, 8\right\}$ and kernel size $\KerSize = (8, 8)$. The top row in each sub--figure  is the signal domain representation $\ParVecFn{(\U,\V)}$, and the bottom row is the Fourier domain representation $\hat{\ParVecFn}{(\U,\V)}$. }
\label{fig:outputchannelCIFAR8}
\end{figure}

\subsubsection{Convolutions with zero padding}

We additionally rerun the same setup as in the previous section with zero padding rather than circular padding. Table \ref{tab:outputchannelsCIFARzero} shows $\RkCinCoutHat{\ParVecFn(\U,\V)}{\KerSize}{\COut}{3}$ across different values of $\KerSize$ and $\COut$.  Although convolutions with zero padding goes beyond the scope of our theoretical results, we show that Hypothesis \ref{hyp:outputchannels} nonetheless still holds. 
\begin{table}[ht]
    \centering\small
 \begin{tabular}{c|c | c | c | c}
$\COut$ & $\KerSize = (1,1)$ &   $\KerSize = (3,3)$ &   $\KerSize = (8,8)$  &  $\KerSize = (20,20)$  \\ \hline
1 & 246.04 & 225.64 & 217.14 & 177.07 \\
2 & 245.98 & 191.28 & 186.32 & 152.58   \\
3 & 245.94 & 191.34 & 182.18 & 154.84  \\
4 & 245.98 & 191.20 & 182.42 & 152.48   \\
8 & 245.34 & 191.38 & 180.28 & 150.88  \\
    \end{tabular}
    \caption{$\RkCinCoutHat{\ParVecFn(\U,\V)}{\KerSize}{\COut}{3} = \|\U\|^2+\|\V\|^2$ of the predictor learned by gradient descent on linear convolutional networks \textit{with zero padding} with different number of output channels $\COut$ and kernel sizes $\KerSize$ on the CIFAR-10 task. } 
    \label{tab:outputchannelsCIFARzero}
\end{table}

\subsection{Varying kernel sizes}

While the number of output channels has little influence on the induced regularizer of the learned predictors, we show that the kernel size can have significant impact, which aligns with our theoretical findings.  

\subsubsection{Effect of kernel size on MNIST} 

Our theoretical findings in Section \ref{subsec:tightness} suggest that the induced regularizer interpolates between $\ell_2$ and $\ell_1$ norms in the Fourier domain. The 
$\ell_2$ regularization of $\RkCout{\ParVec}{1}{1}$ does not induce sparse solutions, while the $\ell_1$ regularization of $\RkCout{\ParVec}{\Dim}{1}$ promotes sparsity in the Fourier basis. This would suggest the following: \textit{larger kernel sizes induce sparsity in the frequency domain}. 

\paragraph{Explicit optimal solutions for $\KerSize = (1, 1)$ and $\KerSize = (\Dim, \Dim)$. }

First, to illustrate in the extreme cases of $\KerSize = 1$ and $\KerSize = \Dim$, we explicitly compute the $\RkCoutOp{\KerSize}{\COut}$ margin predictor: 
\[\min_{\w}\RkCout{\w}{\KerSize}{\COut}\st \forall_n y_n\innerprod{\w,\x_n}\ge1\] on the dataset using the closed-form solutions for the induced regularizer in these special cases. In Figure \ref{fig:mnistcvx}, we show resulting optimal solutions for $\KerSize = \Dim$ (a minimum $\ell_1$ solution) and $\KerSize = 1$  (a minimum $\ell_2$ solution). While the solution for $\KerSize = (1, 1)$ exhibits no sparsity in the frequency domain, the solution for $\KerSize = (\Dim, \Dim)$ exhibits significant sparsity.  

The corresponding values of $\RkCoutOp{\KerSize}{\COut}$ are 9.32 for $\KerSize = \Dim$ and 2.10 for $\KerSize = 1$. We note that the induced regularizer do not exactly match those computed on the $\ParVecFn(\Matrix{U}, \Matrix{V})$ from gradient descent---this is because the convergence of these values can be quite slow. Moreover, for $\KerSize = \Dim$, the difference in the predictors likely stems from the minimum $\ell_1$-norm solution being non-unique. We nonetheless show that the qualitative findings apply to gradient descent, despite the fact that the limiting values have not been reached.

\paragraph{Extension to gradient descent.}
Consider networks with one output channel $\COut=1$ and compute $\ParVecFn(\Matrix{U}, \Matrix{V})$ learned by gradient descent for networks with different kernel sizes in Figure \ref{fig:channels}-(a). Notice in the frequency domain plots that the  predictor learned with kernel size $\KerSize = (1,1)$ is not sparse, the predictor learned with $\KerSize = (3,3)$ already starts to exhibit some sparsity, and the linear predictor learned with $\KerSize = (28,28)$ is highly sparse in the frequency domain.

Since sparsity in the frequency domain promotes a patterned structure in the signal domain, we explore the qualitative behavior of large kernel sizes in the signal domain in more depth. To do this, we construct an augmented version of the dataset with $112 \times 112$ dimensional images where the top-left $28 \times 28$ region is the original image, while the remaining space is all $0$s. Figure \ref{fig:mnistaug} shows the  linear predictors learned by running gradient descent on single output channel networks with different kernel sizes. As $\KerSize$ increases, the nonzero region of the predictor becomes larger, eventually encompassing the full $112 \times 112$ dimensional space. For large kernel sizes, we can visually see that the predictors are composed of repetitions of a pattern. This is suggestive of a restricted form of periodic translation invariance, where the shift size aligns with the size of the patterns.

\begin{figure*}
\centering
\includegraphics[scale=0.38]{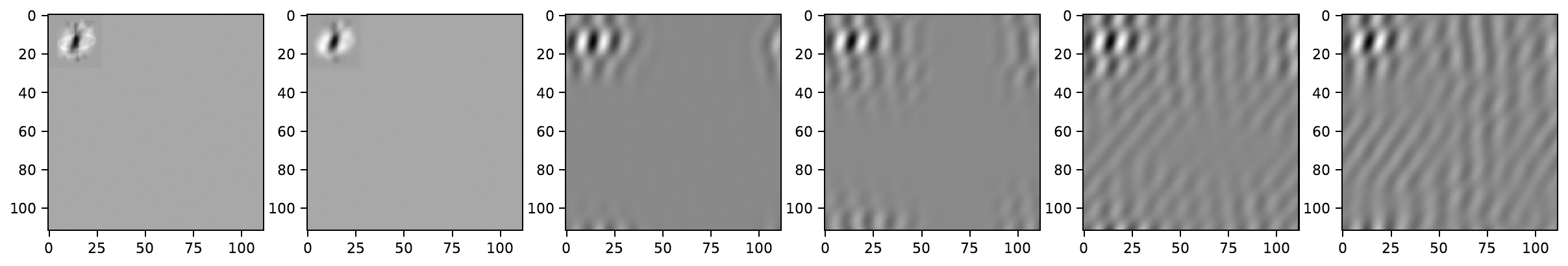}
\caption{Linear predictors  learned by gradient descent on single output channel networks  over an  augmented input space with kernel sizes $\KerSize\in\crl{(1, 1), (3, 3), (27, 27), (45, 45), (65, 65), (84, 84)}$ (left to right). The input images from the MNIST dataset  are augmented by padding with zeros to obtain an image of size $112\times 112$ with the signal present only in the top-left $28\times28$ block.}
\label{fig:mnistaug}
\end{figure*}

\subsubsection{Effect of kernel size on CIFAR-10} 
We now examine the role of kernel size for multi-channel networks on CIFAR-10. Our theoretical findings in Section \ref{sec:mult-input-channel} suggest that the induced regularizer interpolates between the nuclear norm for $\KerSize = 1$ and the $\ell_{2,1}$ norm for $\KerSize = \Dim$. This would again suggest the following: \textit{larger kernel sizes induce sparsity in the frequency domain}. The behavior across input channels, however, is more nuanced. Both of these norms favor similarities across different input channels (with the effect intuitively stronger for $\KerSize = 1$ since the nuclear norm is closely related to rank). We explore both of these effects in the following experiments. 

\paragraph{Explicit optimal solutions for $\KerSize = (\Dim, \Dim)$. }First, to illustrate in the extreme case of $\KerSize = \Dim$, we explicitly compute the $\RkCoutOp{\KerSize}{\COut}$ margin predictor: $\min_{\w}\RkCout{\w}{\KerSize}{\COut}\st \forall_n y_n\innerprod{\w,\x_n}\ge1$ on the dataset using the closed-form solutions for the induced regularizer in these special cases. In Figure \ref{fig:cifarcvx}, we show resulting optimal solutions for $\KerSize = \Dim$ (a minimum $\ell_{2,1}$ solution) along with the optimal $\ell_{1,1}$ solution. We visually see that the $\ell_{2,1}$ solution favors similarity across input channels at the expense of greater sparsity in the frequency domain.

The corresponding values of $\RkCoutOp{\Dim}{\COut}$ for the minimum $\ell_{2,1}$ norm solution is 82.85 for $\KerSize = (\Dim, \Dim)$.) As for the single-input channel case, we note that the induced regularizer do not exactly match the value of $114.85$ of the induced regularizer computed on the $\ParVecFn(\Matrix{U}, \Matrix{V})$ from gradient descent---this is because the convergence of the induced regularizer is known to be slow. We nonetheless show that the qualitative findings apply to gradient descent, despite the fact that the limiting value of induced regularizer has not been reached.

\begin{figure}
\centering
\begin{subfigure}[b]{0.49 \textwidth}
\centering
 \includegraphics[width=\textwidth]{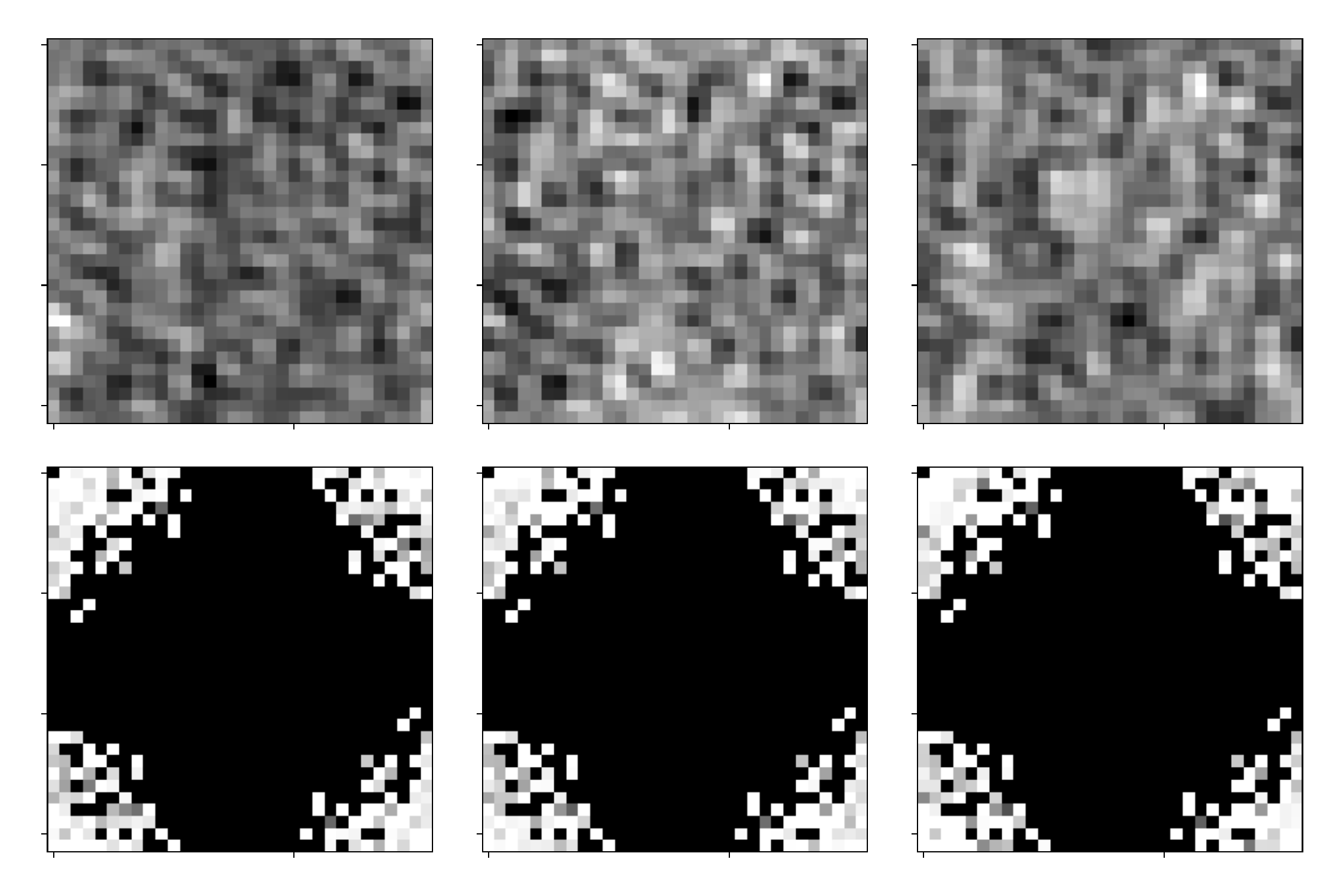}
\caption{\small $\ell_{2,1}$}
\end{subfigure}
\begin{subfigure}[b]{0.49  \textwidth}
\centering
 \includegraphics[width=\textwidth]{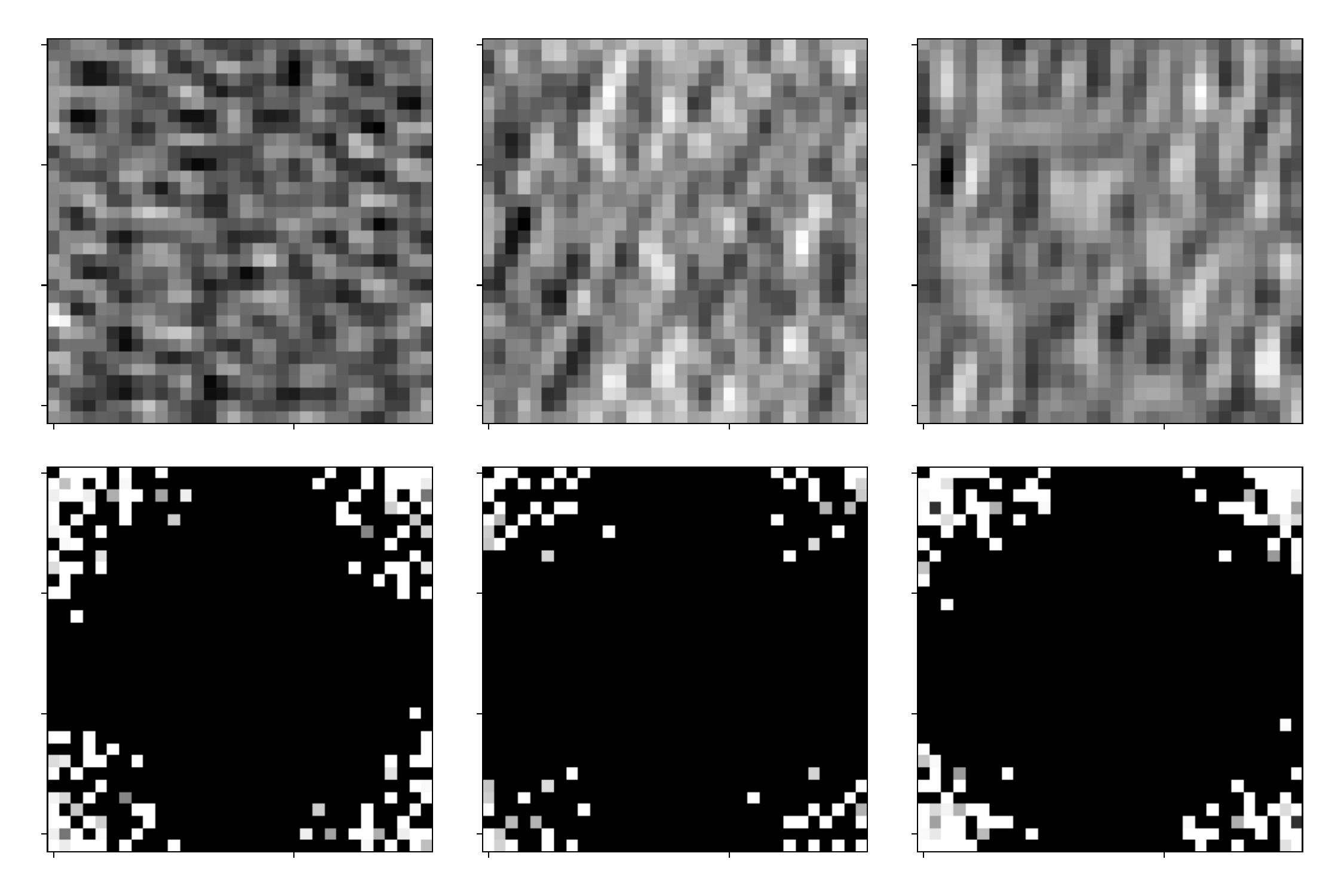}
\caption{\small $\ell_{1}$}
\end{subfigure}
\begin{subfigure}[b]{0.49  \textwidth}
\centering
\includegraphics[width=\textwidth]{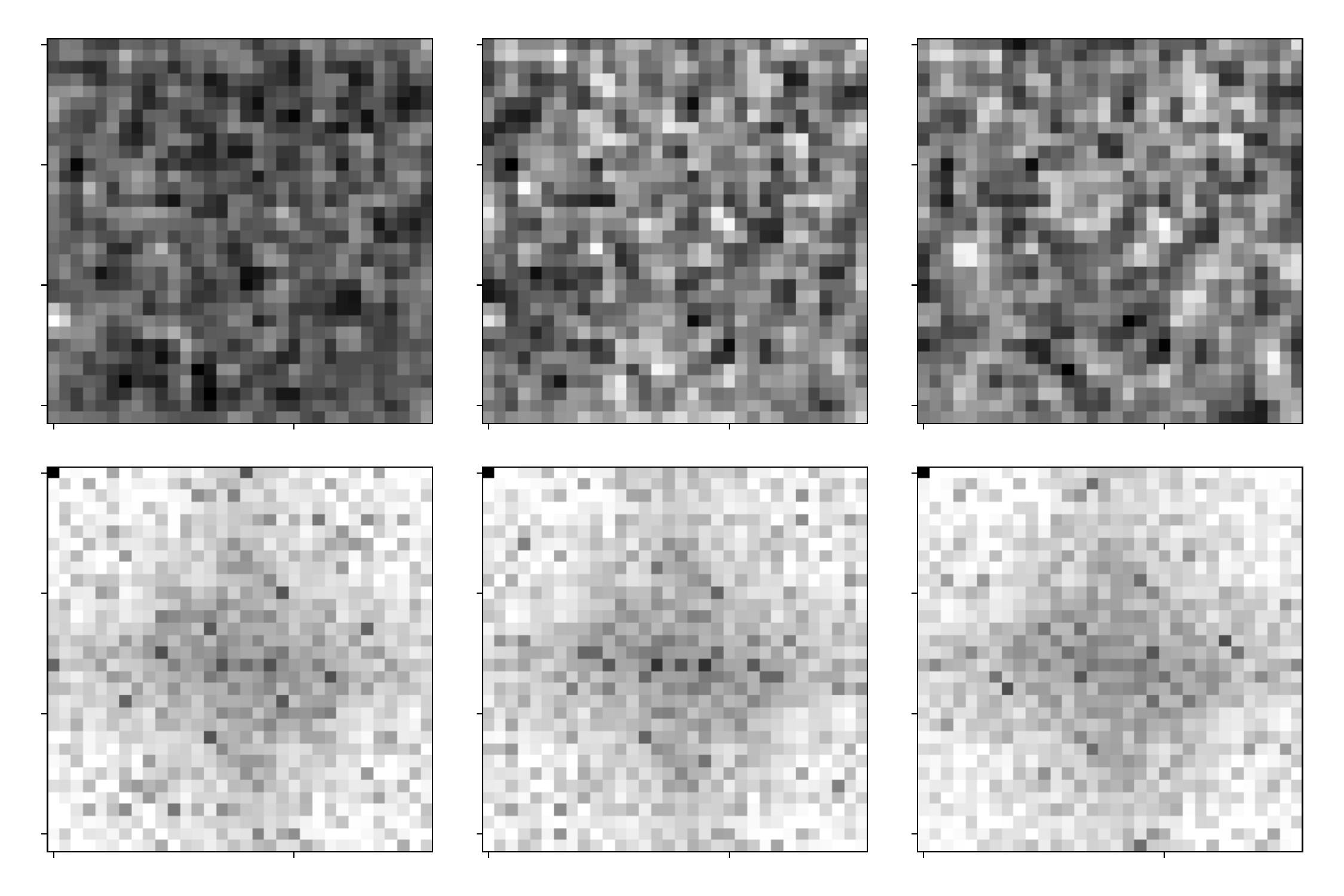}
\caption{\small $\ell_{2}$}
\end{subfigure}
\caption{Explicit $\ell_{2,1}$, $\ell_{1}$ and $\ell_{2}$ margin predictors on sampled CIFAR-10 dataset. The top row in each sub--figure  is the signal domain representation $\ParVecFn{(\U,\V)}$, and the bottom row is the Fourier domain representation $\hat{\ParVecFn}{(\U,\V)}$}
\label{fig:cifarcvx}
\end{figure}

\paragraph{Extension to gradient descent.}

\begin{figure}
\centering
% \begin{minipage}[b]{\textwidth}
\begin{subfigure}[b]{0.49 \textwidth}
\centering
\includegraphics[width=\textwidth]{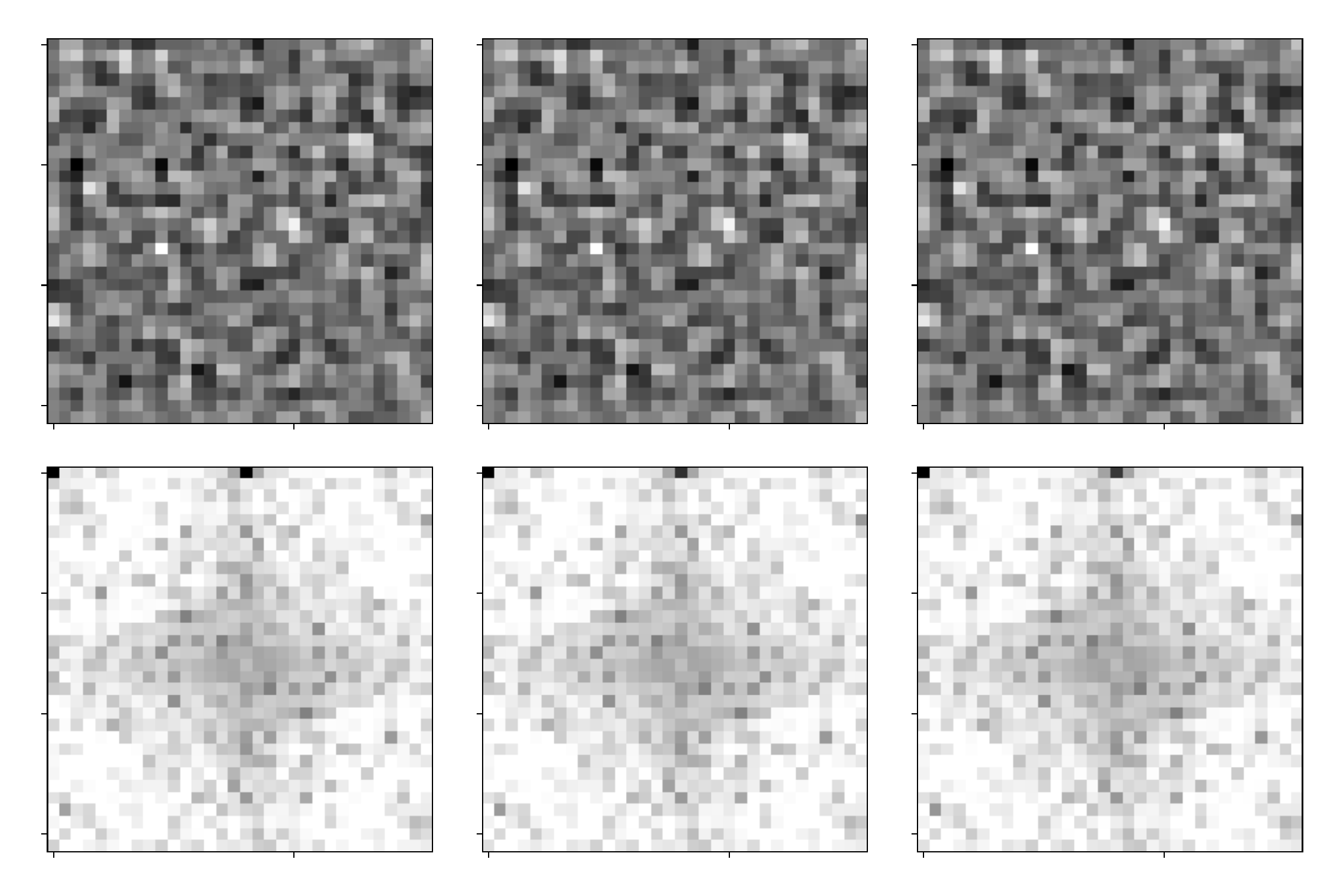}
\caption{\small $\KerSize = (1,1)$}
\end{subfigure}
\begin{subfigure}[b]{0.49  \textwidth}
\centering
\includegraphics[width=\textwidth]{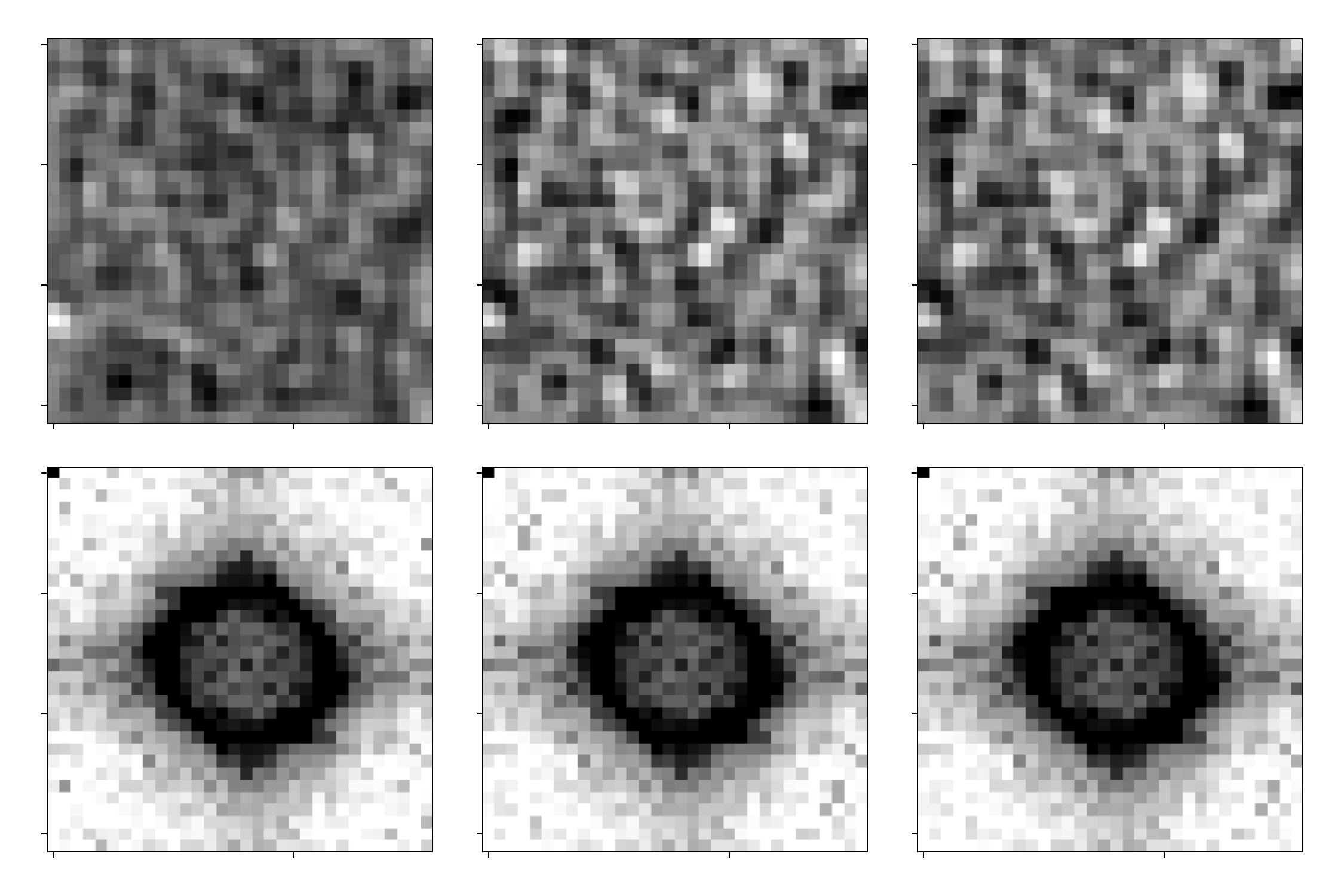}
\caption{\small $\KerSize = (3,3)$}
\end{subfigure}
\begin{subfigure}[b]{0.49  \textwidth}
\centering
\includegraphics[width=\textwidth]{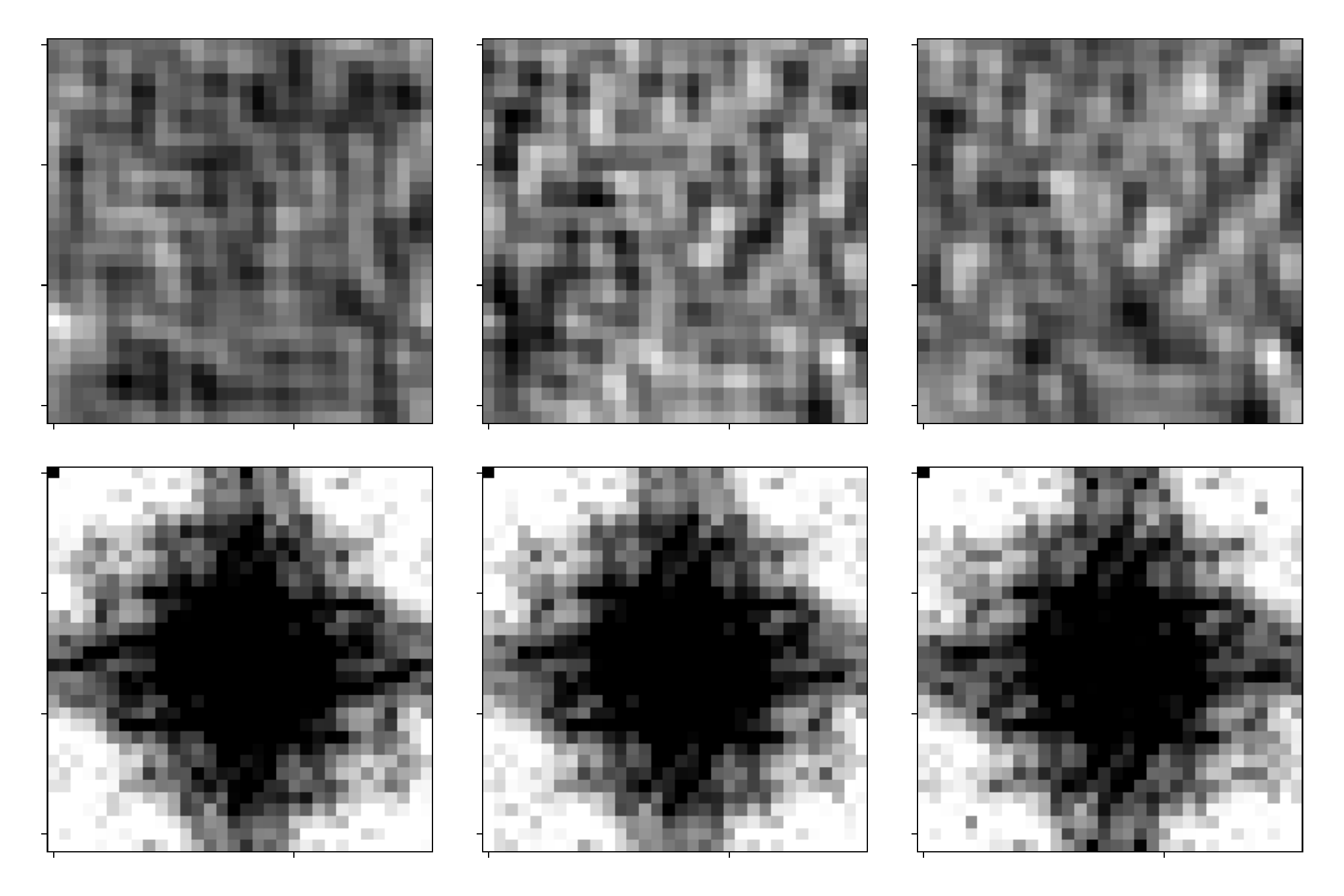}
\caption{\small $\KerSize = (8,8)$}
\end{subfigure}
\begin{subfigure}[b]{0.49  \textwidth}
\centering
\includegraphics[width=\textwidth]{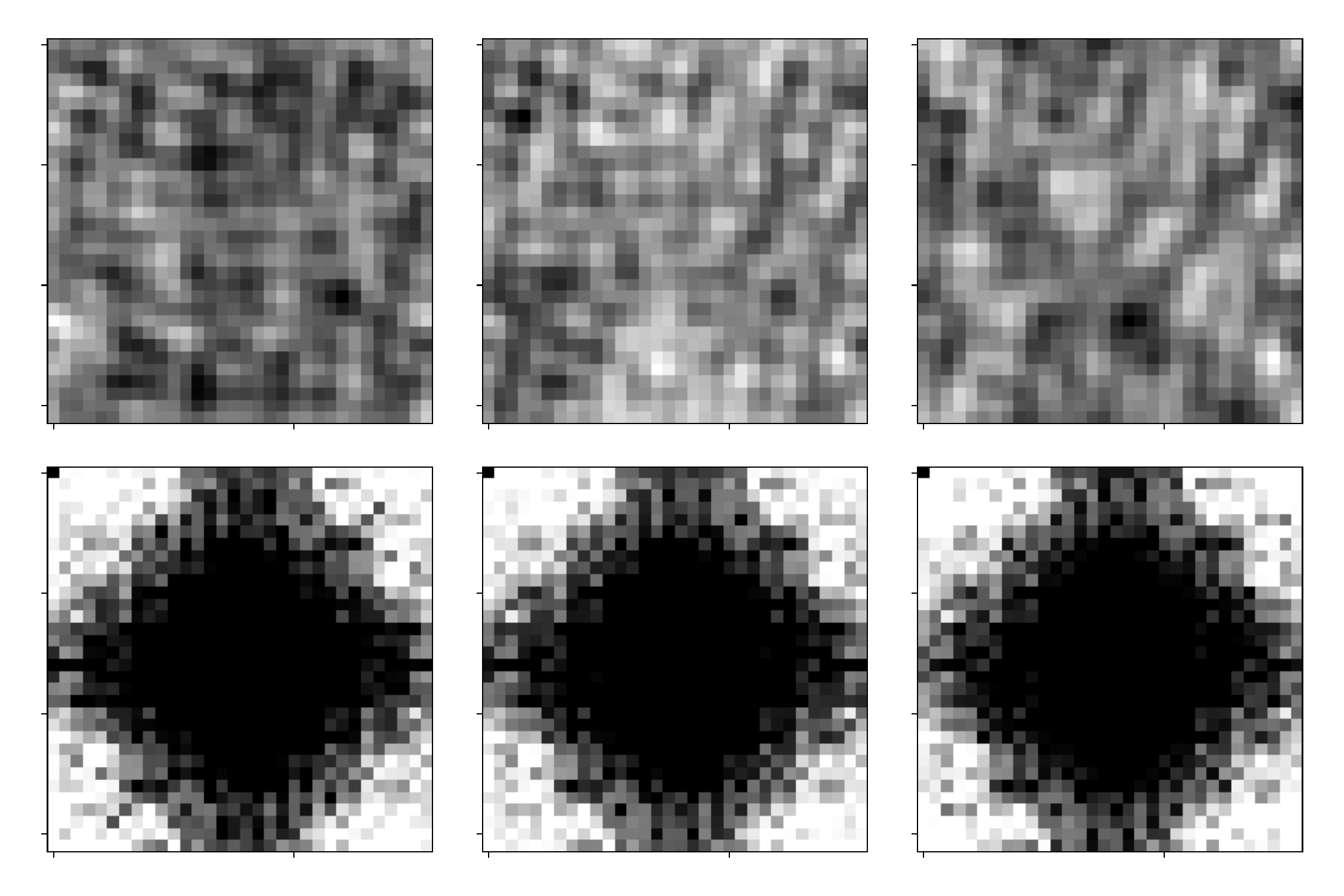}
\caption{\small $\KerSize = (20, 20)$}
\end{subfigure}
\begin{subfigure}[b]{ \textwidth}
\centering
\includegraphics[width=0.49 \textwidth]{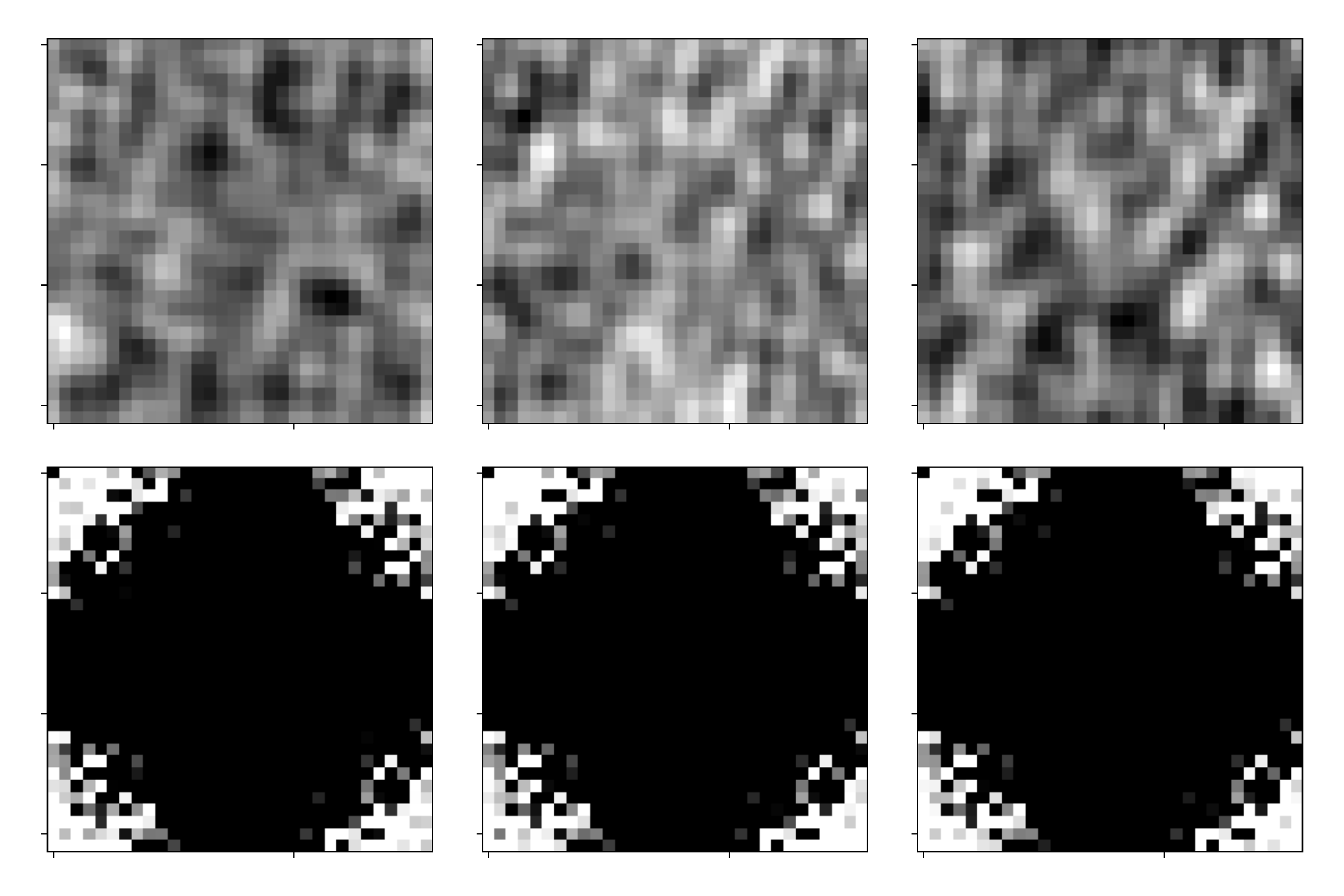}
\caption{\small $\KerSize = (32, 32)$}
\end{subfigure}
\caption{Linear predictors learned by  two layer linear convolutional network on CIFAR-10 task. The sub-figures depict predictors learned by using gradient descent on the  exponential loss for overparameterized networks with $\COut=3$ and  kernel sizes $\KerSize\in \left\{(1, 1), (3, 3), (8, 8), (20, 20), (32, 32)\right\}$. The top row in each sub--figure  is the signal domain representation $\ParVecFn{(\U,\V)}$, and the bottom row is the Fourier domain representation $\hat{\ParVecFn}{(\U,\V)}$. }
\label{fig:channelscifar}
\end{figure}

Consider networks with one output channel $\COut=3$ and compute $\ParVecFn(\Matrix{U}, \Matrix{V})$ learned by gradient descent for networks with different kernel sizes in Figure \ref{fig:channelscifar}. First, the higher kernel does indeed favor sparsity in the frequency domain, as in the single-input channel case. Nontrivial sparse structures can be observed even for $\KerSize = 3$. Next, we discuss the predictor across different input channels. Let's first focus on the extreme case of $\KerSize = 1$. For sake of comparison, we show the explicit minimum $\ell_{2}$ (Frobenius) predictor in Figure \ref{fig:cifarcvx} (note that this is \textit{not} the $\RkCoutOp{1}{\COut}$ margin predictor because the induced regularizer is related to the nuclear norm, not the $\ell_2$ norm). As expected, we see that the learned predictor has a greater degree of similarity across channels than the $\ell_2$ predictor. For other kernel sizes, Figure \ref{fig:channelscifar} also shows some degree of similarity across input channels, although the differences appear to grow as $\KerSize$ becomes larger. 

\end{document}